\documentclass[11pt]{article}

\usepackage{ragged2e}
\usepackage{lmodern}
\usepackage[utf8]{inputenc} % allow utf-8 input
\usepackage[T1]{fontenc}    % use 8-bit T1 fonts
\usepackage{amsfonts}       % blackboard math symbols
\usepackage{nicefrac}       % compact symbols for 1/2, etc.
\usepackage{microtype}      % microtypography
\usepackage{fullpage}
\usepackage{appendix}
\usepackage{mystyle}

\usepackage{setspace}
\title{Contrastive UCB: Provably Efficient Contrastive Self-Supervised Learning in Online Reinforcement Learning}

\begin{document}

% It is OKAY to include author information, even for blind
% submissions: the style file will automatically remove it for you
% unless you've provided the [accepted] option to the icml2022
% package.

% List of affiliations: The first argument should be a (short)
% identifier you will use later to specify author affiliations
% Academic affiliations should list Department, University, City, Region, Country
% Industry affiliations should list Company, City, Region, Country

% You can specify symbols, otherwise they are numbered in order.
% Ideally, you should not use this facility. Affiliations will be numbered
% in order of appearance and this is the preferred way.
% \icmlsetsymbol{equal}{*}
\author{Shuang Qiu\thanks{University of Chicago. 
Email: \texttt{qiush@umich.edu}.} 
       \quad
        Lingxiao Wang\thanks{Northwestern University.
Email: \texttt{lingxiaowang2022@u.northwestern.edu}.}    	\quad
		Chenjia Bai\thanks{Shanghai AI Laboratory. 
    Email: \texttt{baichenjia255@gmail.com}.}
	\quad    
       	Zhuoran Yang\thanks{Yale University.
    Email: \texttt{zhuoran.yang@yale.edu}.}
    \quad 
    Zhaoran Wang\thanks{
   Northwestern University. 
	Email: \texttt{zhaoranwang@gmail.com}.}
}

\maketitle

\begin{abstract}
%In view of its power in extracting feature representation, contrastive learning has been successfully integrated into the framework of reinforcement learning (RL), leading to efficient policy learning on various tasks in practice. Despite its tremendous empirical successes, the understanding of contrastive learning for RL lags rather behind. This paper considers a realistic scenario of contrastive learning for RL with low-rank transition models in an online setting, where, as opposed to the traditional linear RL setting, the representation is unknown to the learners. For the single-agent MDP, we design an upper confidence bound (UCB)-guided value iteration algorithm with solving a contrastive loss each step to simultaneously learn the optimal policy and representation by contrastive learning. We further theoretically prove that the online RL algorithm incorporating contrastive learning is sample-efficient to achieve an optimal policy and then empirically verify its superiority over other representation learning methods for RL. Moreover, extending the problem to the zero-sum Markov game setting, we design an upper/lower confidence bound (ULCB)-guided value iteration algorithm with contrastive learning and prove its efficiency in achieving a Nash equilibrium. To the best of our knowledge, this seems the first analysis for the online low-rank RL incorporating contrastive learning.
In view of its power in extracting feature representation, contrastive self-supervised learning has been successfully integrated into the practice of (deep) reinforcement learning (RL), leading to efficient policy learning in various applications. 
Despite its tremendous empirical successes, the understanding of contrastive learning for RL  remains elusive. 
To narrow such a gap, we study how RL can be empowered by contrastive learning in a class of Markov decision processes (MDPs) and Markov games (MGs) with low-rank transitions. For both models, we propose to extract the correct feature representations of the low-rank model by minimizing a contrastive loss. Moreover, under the online setting, we propose novel upper confidence bound (UCB)-type algorithms that incorporate such a contrastive loss with online RL algorithms for MDPs or MGs. 
We further theoretically prove that our algorithm recovers the true representations and simultaneously achieves sample efficiency in learning the optimal policy and Nash equilibrium in MDPs and MGs.  We also provide empirical studies to demonstrate the efficacy of 
the UCB-based contrastive learning method for RL. 
To the best of our knowledge, we provide the first provably efficient online RL algorithm that incorporates contrastive learning for representation learning. Our codes are available at \url{https://github.com/Baichenjia/Contrastive-UCB}.
\end{abstract}

\section{Introduction}

Deep reinforcement learning (DRL) has achieved great empirical successes in various real-world decision-making problems (e.g., \citet{mnih2015human,silver2016mastering,silver2017mastering,sallab2017deep,sutton2018reinforcement,silver2018general,vinyals2019grandmaster}). A key to the success of DRL is the superior representation power of the neural networks, which extracts the effective information from raw input pixel states. Nevertheless, learning such effective representation of states typically demands millions of interactions with the environment, which limits the usefulness of RL algorithms in domains where the interaction with environments is expensive or prohibitive, such as healthcare \citep{yu2021reinforcement} and autonomous driving \citep{kiran2021deep}.

To improve the sample efficiency of RL algorithms, recent works propose to learn low-dimensional representations of the states via solving auxiliary problems \citep{jaderberg2016reinforcement,hafner2019dream,hafner2019learning,gelada2019deepmdp,franccois2019combined,bellemare2019geometric,srinivas2020curl,zhang2020learning,liu2021return,yang2021representation,stooke2021decoupling}. Among the recent breakthroughs in representation learning for RL, contrastive self-supervised learning gains popularity for its superior empirical performance \citep{oord2018representation,sermanet2018time,dwibedi2018learning,anand2019unsupervised,schwarzer2020data,srinivas2020curl,liu2021return}. A typical paradigm for such contrastive RL is to construct an auxiliary contrastive loss for representation learning, add it to the loss function in RL, and deploy an RL algorithm with the learned representation being the state and action input. However, the theoretical underpinnings of such an enterprise remain elusive. To summarize, we raise the following question:

%A typical paradigm for such contrastive RL is to construct an auxiliary contrastive loss for representation learning and execute RL algorithms, with the learned representation being the state and action input. Although various empirical successes show that incorporating contrastive self-supervised learning improves the sample efficiency of RL algorithms, it is still unknown how contrastive self-supervised learning helps exploration through representation learning and, more importantly, how to utilize contrastive self-supervised learning for RL in a principled manner. To summarize, we raise the following question:

%Is the online  RL algorithm with contrastive learning sample efficient? 
\begin{center}
\emph{Can contrastive self-supervised learning provably improve the sample efficiency of RL via representation learning?}
\end{center}
To answer such a question, we face two challenges. First, in terms of the algorithm design, it remains unclear how to integrate contrastive self-supervised learning into provably efficient online exploration strategies, such as exploration with the upper confidence bound (UCB), in a principled fashion. 
Second, in terms of theoretical analysis, 
it also remains unclear how to analyze the sample complexity of such an integration of self-supervised learning and RL. 
Specifically, to establish theoretical guarantees for such an approach, we need to (i) characterize the accuracy of the representations learned by minimizing a contrastive loss computed based on adaptive data collected in RL, and 
(ii) understand how the error of representation learning affects the efficiency of exploration.  
In this work, we take an initial step towards tackling such challenges by proposing a reinforcement learning algorithm where the representations are learned via temporal contrastive self-supervised learning \citep{oord2018representation,sermanet2018time}. 
%Concretely, we first investigate the sample complexity of the online single-agent RL problem under the low-rank MDP setting, where the representations are learned via the temporal contrastive self-supervised learning that is  similar to \citep{oord2018representation,sermanet2018time}. 
Specifically, our algorithm iteratively solves a temporal contrastive loss to obtain the state-action representations and then constructs a UCB bonus based on such representations to explore in a provably efficient way. 
As for theoretical results, we prove that the proposed algorithm provably recovers the true representations under the low-rank MDP setting. Moreover, we show that our algorithm achieves a $\tilde\cO(1/\varepsilon^2)$ sample complexity for attaining the $\varepsilon$-approximate optimal value function, where $\tilde\cO(\cdot)$ hides logarithmic factors. 
Therefore, our theory shows that contrastive self-supervised learning yields accurate representation in RL, and these learned representations provably enables efficient exploration.
In addition to theoretical guarantees, we also provide numerical experiments to empirically demonstrate the efficacy of our algorithm. 
Furthermore,  we extend the algorithm and theory to the zero-sum MG under the low-rank setting, a multi-agent extension of MDPs to a competitive environment. 
Specifically, in the competitive setting, our algorithm constructs upper and lower confidence bounds (ULCB) of the value functions based on the representations learned via contrastive learning. 
We prove that the proposed approach achieves an $\tilde{\cO}(1/\varepsilon^2)$ sample complexity to attain an $\varepsilon$-approximate Nash equilibrium. 
To the best of our knowledge, we propose the first provably efficient online RL algorithms that employ contrastive learning for representation learning. 
Our major contributions are summarized as follows:

\vspace{5pt}
\noindent\textbf{Contribution.}  
Our contributions are three-fold. First, We show that contrastive self-supervised learning recovers the underlying true transition dynamics, which reveals the benefit of incorporating representation learning into RL in a provable way.
Second, we propose the first provably efficient exploration strategy incorporated with contrastive self-supervised learning. Our proposed UCB-based method is readily adapted to existing representation learning methods for RL, which then demonstrates improvements over previous empirical results as shown in our experiments.
Finally, we extend our results to the zero-sum MG, which reveals a potential direction of utilizing the contrastive self-supervised learning for multi-agent RL.

\vspace{5pt}
\noindent\textbf{Related Work.}
Our work is closely related to the line of research on RL with low-rank transition kernels, which assumes that the transition dynamics take the form of an inner product of two unknown feature vectors for the current state-action pair and the next state (see Assumption \ref{assump:low-rank} for details) \citep{jiang2017contextual,agarwal2020flambe,uehara2021representation}. 
In contrast, as a special case of the low-rank model, linear MDPs have a similar form of structures but with an extra assumption that the linear representation is known a priori \citep{du2019good,yang2019sample,jin2020provably,xie2020learning,ayoub2020model,cai2020provably,yang2020reinforcement,chenalmost,zhou2021nearly,zhou2021provably}. Our work focuses on the more challenging low-rank setting and aims to recover the unknown state-action representation via contrastive self-supervised learning. Our theory is motivated by the recent progress in low-rank MDPs \citep{agarwal2020flambe,uehara2021representation}, which show that the transition dynamics can be effectively recovered via maximum likelihood estimation (MLE). In contrast, our work recovers the representation via contrastive self-supervised learning. 
% {\color{red} 
Upon acceptance of our work, we notice a concurrent work \citep{zhang2022making} studies contrastive learning in RL on linear MDPs.
% }

There is a large amount of literature studying contrastive learning in RL empirically. To improve the sample efficiency of RL, previous empirical works leverages different types of information for representation learning, e.g.,  temporal information \citep{sermanet2018time,dwibedi2018learning,oord2018representation,anand2019unsupervised,schwarzer2020data}, local spatial structure\citep{anand2019unsupervised}, image augmentation\citep{srinivas2020curl}, and return feedback\citep{liu2021return}.  Our work follows the utilization of contrastive learning for RL to extract temporal information. Similar to our work, recent work by \citet{misra2020kinematic} shows that contrastive learning provably recovers the latent embedding under the restrictive Block MDP setting \citep{du2019provably}. In contrast, our work analyzes contrastive learning in RL under the more general low-rank setting, which includes Block MDP as a special case \citep{agarwal2020flambe} for both MDPs and MGs.

\section{Preliminaries}
In this section, we introduce the backgrounds of single-agent MDPs, zero-sum MGs, and the low-rank assumption.
 
\vspace{5pt}
\noindent\textbf{Single-Agent MDP.} An episodic single-agent MDP is defined by $(\cS, \cA, H , r, \PP)$, where $\cS$ is the state space, $\cA$ is the action space, $H$ is the length of an episode, $r = \{r_h\}_{h=1}^H$ is the reward function with $r_h: \cS\times\cA\mapsto[0,1]$, and $\PP=\{\PP_h\}_{h=1}^H$ denotes the transition model with $\PP_h(s'|s,a)$ being the probability density of an agent transitioning to $s'\in\cS$ from state $s\in \cS$ after taking action $a\in\cA$ at the step $h$. Specifically,  $\cS$ can be an infinite state space\footnote{We assume that the volume (Lebesgue measure) of the infinite state space $\cS$ satisfies $\mathrm{Vol}(\cS) \leq c$, where $\mathrm{Vol}(\cdot)$ denotes the volume of a space. WOLG, we let $c = 1$ for simplicity. } and the action space $\cA$ is assumed to be finite with the size of $|\cA|$. A deterministic policy is denoted as $\pi = \{\pi_h\}_{h=1}^H$ where $\pi_h: \cS\mapsto \cA$ is the map from the agent's state $s$ to an action $a$ at the $h$-th step.
We further denote the policy learned at the $k$-th episode by $\pi^k = \{\pi_h^k\}_{h=1}^H$. For simplicity, assume the initial state is fixed as $s_1^k = s_1$ for any episode $k$.
 
For the single-agent MDP, for any $(s,a) \in \cS\times \cA$, we define the associated Q-function and value function as $Q_h^\pi(s,a) = \EE[\sum_{h'=h}^H r_{h'}(s_{h'},a_{h'}) \given s_h=s, a_h=a,\pi, \PP]$ and $V_h^\pi(s) = \EE[\sum_{h'=h}^H r_{h'}(s_{h'},a_{h'}) \given \allowbreak s_h=s,\pi, \PP]$. Then, we further have the Bellman equation as $Q_h^\pi(s,a) = r_h(s,a) + \PP_hV_{h+1}^\pi(s,a)$ and $V_h^\pi(s) = Q_h^\pi(s,\pi_h(s))$ where, for the ease of notation, we denote $\PP_hV(s,a) = \int_{s'}\PP_h(s'|s,a)V(s')\mathrm{d}s'$ for any value function $V$. Moreover, we define the \emph{optimal policy} as $\pi^*:=\argmax_{\pi} V_1^\pi(s_1)$. We say a policy $\pi$ is an \emph{$\varepsilon$-approximate optimal policy} if 
\begin{align*}
V_1^{\pi^*}(s_1) - V_1^{\pi}(s_1) \leq \varepsilon.
\end{align*}

% \vspace{3pt}
\noindent\textbf{Zero-Sum Markov Game.} Our work further studies the zero-sum two-player Markov game that can be defined by $(\cS, \cA, \cB, H, r, \PP)$, where $\cS$ is the infinite state space with $\mathrm{Vol}(\cS) \leq 1$, $\cA$ and $\cB$ are the finite action spaces for two players with the sizes of $|\cA|$ and $|\cB|$, $H$ is the length of an episode, $r = \{r_h\}_{h=1}^H$ is the reward function with $r_h: \cS\times\cA\times \cB\mapsto[-1,1]$, and $\PP=\{\PP_h\}_{h=1}^H$ denotes the transition model with $\PP_h(s'|s,a,b)$ being the probability density of the two players transitioning to $s'\in\cS$ from state $s\in \cS$ after taking action $a\in\cA$ and $b\in\cB$ at step $h$. The policies of the two players are denoted as $\pi = \{\pi_h\}_{h=1}^H$ and $\nu = \{\nu_h\}_{h=1}^H$, where $\pi_h(a|s)$ and $\nu_h(b|s)$ are the probabilities of taking actions $a\in \cA$ or $b\in \cB$ at the state $s\in \cS$. Moreover, we denote $\sigma=\{\sigma_h\}_{h=1}^H$ as a joint policy, where $\sigma_h(a,b|s)$  is the probability of taking actions $a\in \cA$ and $b\in \cB$ at the state $s\in \cS$. Note that the actions $a$ and $b$ are not necessarily mutually independent conditioned on state $s$.  One special case of a joint policy is the product of a policy pair $\pi \times \nu$. Here we also assume the initial state is fixed as $s_1^k = s_1$ for any episode $k$. The Markov game is a multi-agent extension of the MDP model under a competitive environment.

For any $(s,a,b) \in \cS\times \cA\times \cB$ and joint policy $\sigma$, we define the Q-function and value function as $Q_h^\sigma(s,a,b) = \EE[\sum_{h'=h}^H r_{h'}(s_{h'},a_{h'},b_{h'}) \given s_h=s,  a_h=a, b_h=b , \sigma, \PP]$ and $V_h^\sigma(s) = \EE[\sum_{h'=h}^H r_{h'}(s_{h'},a_{h'},b_{h'}) \allowbreak \given s_h=s,\sigma, \PP]$. We have the Bellman equation as 
$Q_h^{\sigma}(s,a,b) \allowbreak = r_h(s,a,b) + \PP_hV_{h+1}^{\sigma}(s,a,b)$ and $V_h^{\sigma}(s) = \langle \sigma_h(\cdot,\cdot|s), Q_h^{\sigma}(s,\cdot,\cdot) \rangle$.
We denote $\PP_hV(s,a,b) = \int_{s'}\PP_h(s'|s,a,b)V(s')\mathrm{d}s'$ for any value function $V$. We say $(\pi^\dag,\nu^\dag)$ is a \emph{Nash equilibrium (NE)} if it is a solution to the max-min optimization problem $\max_\pi \min_\nu V_1^{\pi,\nu}(s_1)$. Then, $(\pi,\nu)$ is an \emph{$\varepsilon$-approximate NE} if  it satisfies
\begin{align*}
\max_{\pi'} V_1^{\pi',\nu}(s_1) - \min_{\nu'} V_1^{\pi,\nu'}(s_1) \leq \varepsilon.
\end{align*}
In addition, we denote $\mathrm{br}(\cdot)$ as the best response, which is defined as $\mathrm{br}(\nu) =\argmax_{\pi} V_1^{\pi, \nu}(s_1)$ and $\mathrm{br}(\pi) =\argmin_{\nu} V_1^{\pi, \nu}(s_1)$.

\vspace{5pt}
\noindent\textbf{Low-Rank Transition Kernel.} 
In this paper, we consider the low-rank structures with the dimension $d$ \citep{jiang2017contextual,agarwal2020flambe,uehara2021representation} for both single-agent MDPs and Markov games, in which the transition model admits the structure in the following assumption. To unify both settings, with a slight abuse of notation, we let $\cZ := \cS\times\cA$ for single-agent MDPs and $\cZ := \cS\times\cA \times \cB$ for Markov games. 
\begin{assumption}[Low-Rank Transition Kernel]\label{assump:low-rank} Assuming there exist two unknown maps $\psi^*:\cS\mapsto\RR^d$ and $\phi^*:\cZ\mapsto\RR^d$, 
%An MDP (or Markov game) is a low-rank MDP if its transition model admits the following decomposition for all $h\in [H]$, $(s,a,s')\in \cS\times\cA\times\cS$,
the true transition kernel admits the following low-rank decomposition for all $h\in [H]$, $(z,s')\in \cZ \times\cS$,
    \begin{align*}
        \PP_h(s'|z) = \psi_h^*(s')^\top\phi_h^*(z),
    \end{align*}
    where $\|\phi_h^*(z)\|_2\leq 1$ and $\|\psi_h^*(s')\|_2\leq \sqrt{d}$.
\end{assumption}

\begin{remark}
In contrast to linear MDPs \citep{jin2020provably} or linear Markov games \citep{xie2020learning} where $\phi_h^*$ is known \emph{a priori}, we adopt the more challenging setting that both $\psi_h^*$ and $\phi_h^*$ are unknown and hence should be identified via contrastive learning. Moreover, our work also extends the scenario of low-rank transition model from single-agent RL \citep{jiang2017contextual,agarwal2020flambe,uehara2021representation} to the multi-agent competitive RL.
\end{remark}

%Analogous to the low-rank MDP, this paper also studies the setting where the transition model of the zero-sum Markov game admits the following low-rank structure:
%\begin{definition}[Low-Rank Markov Game]\label{assump:low-rank-mg} Assume $\psi_h^* \in \Psi$ and $\phi_h^* \in \Phi$ where $\Psi:=\{\psi \given \psi:\cS\mapsto\RR^d \}$ and $\Phi:=\{\phi \given \phi:\cS\times\cA\times\cB\mapsto\RR^d \}$ are two function classes. A low-rank zero-sum Markov game admits the following decomposition for all $h\in [H]$, $(s,a,b,s')\in \cS\times\cA\times\cB\times\cS$,
%    \begin{align*}
%        \PP_h(s'|s,a,b) = \psi_h^*(s')^\top\phi_h^*(s,a,b),
%    \end{align*}
%    where $\|\phi_h^*(s,a,b)\|_2\leq 1$ and $\|\int_{s'}\psi_h^*(s')\mathrm{d}s'\|_2\leq \sqrt{d}$.
%\end{definition}
%Here both $\psi_h^*$ and $\phi_h^*$ are unknown to the players and will be learned by the proposed algorithm. 

% \vspace{-0.2cm}
\section{Contrastive Learning for Single-Agent MDP}
% This section first introduces the contrastive learning algorithm for single-agent MDPs. Then we present the sample complexity result.
% \vspace{-0.cm}

\subsection{Algorithm} \label{sec:alg-mdp}

 \begin{algorithm}[t]\caption{Online Contrastive RL for Single-Agent MDPs} \label{alg:contrastive_RL}
  \setstretch{1.1}
    \begin{small}   
	\begin{algorithmic}[1]
		\State {\bfseries Initialize:} $\pi_h^0(a|s) = 1/|\cA|, \forall (s,a)\in \cS\times\cA$. $\cD_h^0 = \emptyset, \forall h\in [H]$. $\delta > 0$, $\beta > 0$, and $\varepsilon > 0$.  
		\For{episode $k=1,\ldots,K$}   
		    
		   	\State Let $V_{H+1}^{k}(\cdot) = \boldsymbol 0$ and $Q_{H+1}^{k}(\cdot, \cdot) = \boldsymbol 0$
	        \State Collect bonus data $\{\tilde{\cD}_h^k = \{(\tilde{s}_h^\tau,\tilde{a}_h^\tau)\}_{\tau=1}^k\}_{h=1}^H$ and contrastive training data $\{\cD_h^k\}_{h=1}^H$ by Alg. \ref{alg:sample}.

		        \For{step $h=H, H-1,\ldots, 1$} % \COMMENT{Policy Evaluation}
                \State Obtain $\tilde{\phi}_h^k$ and $\tilde{\psi}_h^k$ by solving \eqref{eq:solve-contra-loss} with $\cD_h^k$. 
                \State Normalize $\tilde{\phi}_h^k$ and $\tilde{\psi}_h^k$ by \eqref{eq:normalization} to obtain $\hat{\phi}_h^k$ and $\hat{\psi}_h^k$.
                \State Estimate $\PP_h$ by $\hat{\PP}_h^k(\cdot|\cdot,\cdot) = \hat{\psi}_h^k(\cdot)^\top\hat{\phi}_h^k(\cdot,\cdot)$.
				\State $\hat{\Sigma}_h^k=\frac{1}{k}\sum_{\tau=1}^k\hat{\phi}^k_h(\tilde{s}_h^\tau,\tilde{a}_h^\tau)\hat{\phi}^k_h(\tilde{s}_h^\tau,\tilde{a}_h^\tau)^\top + \lambda_k I$.
					\State Bonus $\beta_h^k(\cdot,\cdot) = \min\{\gamma_k \|\hat{\phi}^k_h(\cdot,\cdot)\|_{(\hat{\Sigma}_h^k)^{-1}}, 2H\}$.
%					\State	\label{line:Q_explore}$Q_h^k(\cdot,\cdot) =  \min \{(f_h^k + r_h^k+ u_h^k) (\cdot, \cdot), H \}^+$
				\State $\overline{Q}_h^k(\cdot,\cdot) =   (r_h + \beta_h^k +\hat{\PP}_h^k\overline{V}_{h+1}^k ) (\cdot, \cdot)$.
				
				\State $\overline{V}_h^k(\cdot) =  \max_{a\in \cA} \overline{Q}_h^k(\cdot, a)$.
				\State $\pi_h^k(\cdot) =  \argmax_{a\in \cA} \overline{Q}_h^k(\cdot, a)$.
	            \EndFor 	
    \EndFor             
	\end{algorithmic}\label{alg:contrastive}
\end{small}
\end{algorithm}

\textbf{Algorithmic Framework.} We propose an online UCB-type contrastive RL algorithm, Contrastive UCB, for MDPs in Algorithm \ref{alg:contrastive}. At the $k$-th episode, we execute the learned policy from the last round to collect the datasets $\{\tilde{\cD}_h^k\}_{h=1}^H$ and $\{\cD_h^k\}_{h=1}^H$ as bonus construction data and the contrastive learning data according to the sampling strategy in Algorithm \ref{alg:sample}. Specifically, the contrastive learning sample is composed of positive and negative data points. At 
a state-action pair $(s_h, a_h)$ that is sampled independently following a certain distribution formed by the current policy and the true transition, with probability $1/2$, we collect the positive transition data point as $(s_h,a_h,s_{h+1}, 1)$ with $s_{h+1}\sim \PP_h(\cdot|s_h,a_h)$ and a label $y=1$. On the other hand, with probability $1/2$, we generate the negative transition data point as $(s_h,a_h,s_{h+1}^{-}, 0)$ with $s_{h+1}^{-}\sim \cPS(\cdot)$ and a label $y=0$, where $\cPS(\cdot)$ is a designed negative sampling distribution. Given the data sample for contrastive learning $\{\cD_h^k\}_{h=1}^H$, we propose to solve the minimization problem \eqref{eq:solve-contra-loss} at each step $h$ with $\cL_h(\psi, \phi ;\cD_h^k)$ denoting the contrastive loss defined in \eqref{eq:contra-loss} to learn the low-rank representation $\tilde{\phi}_h^k$ and $\tilde{\psi}_h^k$. More detailed implementation of data sampling and the contrastive loss will be elaborated below. According to our analysis in Section \ref{sec:mdp-theory}, the true transition kernel $\PP_h(s'|s,a)$ can be well approximated by the learned representation $\tilde{\phi}_h^k(s')^\top\tilde{\psi}_h^k(s,a)\cPS(s')$. However, such learned features are not guaranteed to satisfy the relation $\int_{s'\in \cS}\tilde{\phi}_h^k(s')^\top\tilde{\psi}_h^k(s,a)\cPS(s')\mathrm{d}s' = 1$ or $\tilde{\phi}_h^k(\cdot)^\top\tilde{\psi}_h^k(s,a)\cPS(\cdot)$ may not be a distribution over $\cS$. Thus, we further normalize learned representations by  
\begin{align}
\begin{aligned}\label{eq:normalization}
    &\hat{\psi}_h^k(s') :=\cPS(s')  \tilde{\psi}_h^k (s'),\\
    &\hat{\phi}_h^k(z) :=   \tilde{\phi}_h^k(z)\big/\textstyle \int_{s'\in\cS}\cPS(s') \tilde{\phi}_h^k(z)^\top \tilde{\psi}_h^k (s')\mathrm{d}s',
\end{aligned}
\end{align}
where $z = (s,a)$. Then, we obtain an approximated transition kernel $\hat{\PP}_h^k(\cdot|s,a) := \hat{\psi}_h^k(\cdot)^\top\hat{\phi}_h^k(s,a)$. Our analysis in Section \ref{sec:mdp-theory} shows that $\hat{\PP}_h^k(\cdot|s,a)$ lies in a probability simplex and can well approximate the true transition $\PP_h(\cdot|s,a)$.

Simultaneously, we construct the UCB bonus term $\beta_h^k$ with the learned representation $\hat{\phi}_h^k$  and the empirical covariance matrix $\hat{\Sigma}_h^k$ using the bonus construction data sampled online via Algorithm \ref{alg:sample}. Then, with the estimated transition $\hat{\PP}_h^k$ and the UCB bonus term $\beta_h^k$, we obtain a UCB estimation of the Q-function and value function in Line 11 and Line 12. The policy $\pi_h^k$ is then the greedy policy corresponding to the estimated Q-function $\overline{Q}_h^k$. 
\begin{remark}
To focus our analysis on the contrastive learning for the transition dynamics, we only consider the setting where the reward function $r_h(\cdot, \cdot)$ is known. One might further modify the proposed algorithm to the unknown reward setting under the linear reward function assumption by considering to minimize a square loss with observed rewards as the regression target to learn the parameters. The corresponding analysis would then take the statistical error of such a procedure into consideration. 
\end{remark}

\noindent\textbf{Dataset for Contrastive Learning.} 
For our algorithm, we make the following assumption for the negative sampling distribution $\cPS(\cdot)$.
\begin{assumption} [Negative Sampling Distribution] \label{assump:negative} Let $\cPS(\cdot)$ be a distribution over $\cS$. The distribution $\cPS(\cdot)$ satisfies $\inf_{s\in\cS}\cPS(s) \geq \CS> 0$ for a constant $\CS$. 
\end{assumption}

The detailed sampling scheme for the contrastive learning dataset is presented in Algorithm \ref{alg:sample} in Appendix. Here we provide a brief idea of this algorithm. Letting $d^{\pi}_h(\cdot)$ be the state distribution at step $h$ under the true transition $\PP$ and a policy $\pi$, we define two state-action distributions induced by $\pi$ and $\PP$ at step $h$ as $\tilde{d}^{\pi}_h(s,a) = d^{\pi}_h(s) \Unif(a)$ and $\breve{d}^{\pi}_h(s,a) = \tilde{d}^{\pi}_{h-1}(s',a')\PP_{h-1}(s|s',a')\Unif(a)$, where $\Unif(a) =1/|\cA|$. Then, at each round $k$, we sample the temporal data as follows: 
% \vspace{-0.2cm}
\begin{itemize}[leftmargin=*,itemsep=0pt,parsep=1pt]
\item Sample $(\tilde{s}_h^k, \tilde{a}_h^k)\sim\tilde{d}^{\pi^{k-1}}_h(\cdot,\cdot)$ for all $h\in [H]$ and $(\breve{s}_h^k, \breve{a}_h^k)\sim\breve{d}^{\pi^{k-1}}_h(\cdot,\cdot)$ for all $h\geq2$.
\item For each $(\tilde{s}_h^k, \tilde{a}_h^k)$ or $(\breve{s}_h^k, \breve{a}_h^k)$, generate a label $y\in\{0,1\}$ from a Bernoulli distribution $\Ber(1/2)$ independently.
\item Sample the next state from the true transition as $\tilde{s}_{h+1}^k \sim \PP_h(\cdot|\tilde{s}_h^k, \tilde{a}_h^k)$ or $\breve{s}_{h+1}^k \sim \PP_h(\cdot|\breve{s}_h^k, \breve{a}_h^k)$ when the associated labels are $1$ and sample negative transition data points by $\tilde{s}_{h+1}^{k,-} \sim  \cPS(\cdot)$ or $\breve{s}_{h+1}^{k,-} \sim  \cPS(\cdot)$ if labels are $0$.
\item Given the dataset $\cD_h^{k-1}$ from the last round, add the new transition data with labels, i.e., $(\tilde{s}_h^k,\tilde{a}_h^k,\tilde{s}_{h+1}^k, 1)$ or $(\tilde{s}_h^k,\tilde{a}_h^k,\tilde{s}_{h+1}^{k,-}, 0)$ and $(\breve{s}_h^k, \breve{a}_h^k,\breve{s}_{h+1}^k,1)$ or $(\breve{s}_h^k, \breve{a}_h^k,\breve{s}_{h+1}^{k,-},0)$,  into it  to compose a new set $\cD_h^k$.  
\end{itemize}

In addition, we also build a dataset $\tilde{\cD}_h^k$ via Algorithm \ref{alg:sample} for the construction of the UCB bonus term in Algorithm \ref{alg:contrastive}, where $\tilde{\cD}_h^k$ is composed of the present and historical state-action pairs sampled from $\tilde{d}^{\pi^{k'}}_h(\cdot,\cdot)$ for all $k'\in [0,k-1]$. Algorithm \ref{alg:sample} illustrates how to sample the above data by interacting with the environment in an online manner, which can also guarantee the data points are mutually independent within $\cD_h^k$ and $\tilde{\cD}_h^k$.

\vspace{5pt}
\noindent\textbf{Contrastive Loss.}
Given the dataset $\{\cD_h^k\}_{h=1}^H$ for contrastive learning, we further define the following contrastive loss for each step $h\in [H]$
\begin{align}
\begin{aligned}\label{eq:contra-loss}
    \cL_h(\psi, \phi ;\cD_h^k)&:= \EE_{\cD_h^k} \big[y\log(1+1/\psi(s')^\top\phi(z)) + (1-y)\log(1+\psi(s')^\top\phi(z))\big],
\end{aligned}
\end{align}
where $z = (s,a)$ and $\EE_{\cD_h^k}$ indicates taking average over all $(s,a,s',y)$ in the collected contrastive training dataset $\cD_h^k$. Here $\phi$ and $\psi$ are two functions lying in the function classes $\Phi$ and $\Phi$ as defined below. Letting $\cZ = \cS\times\cA$, we define:
\begin{definition}[Function Class] \label{def:func-class}
Let $\cF:=\{\psi(\cdot)^\top\phi(\cdot,\cdot) :  \psi\in\Psi, \phi\in\Phi\}$ be a function class where $\Psi:=\{\phi:\cS\mapsto\RR^d\}$ and $\Phi:=\{\psi:\cZ\mapsto\RR^d\}$ are two finite function classes. For any $\psi\in \Psi$, $\sup_{s\in\cS}\|\psi(s)\|_2 \leq \sqrt{d}/\CS$. And for any $\phi\in \Phi$, $\sup_{s\in\cS}\|\phi(z)\|_2 \leq 1$. The cardinality of $\cF$ is $|\cF| =|\Psi|\cdot |\Phi|$.
\end{definition}
The fundamental idea of designing  \eqref{eq:contra-loss} is to consider a negative log-likelihood loss for the probability $\Pr_h(y|s,a,s'):=\Big(\frac{f_h(s,a,s')}{1+f_h(s,a,s')}\Big)^{y}\Big(\frac{1}{1+f_h(s,a,s')}\Big)^{1-y}$ where $f_h(s,a,s')=\psi(s')^\top\phi(s,a)$ and $\Pr_h$ denote the associated probability at step $h$. Then \eqref{eq:contra-loss} is equivalent to $\cL_h(\psi, \phi ;\cD_h^k)=-\EE_{\cD_h^k}[\log \Pr_h(y|s,a,s')]$. Thus, to learn the contrastive feature representation, we seek to solve the following problem of contrastive loss minimization
\begin{align}\label{eq:solve-contra-loss}
 \big(\tilde{\psi}_h^k, \tilde{\phi}_h^k\big)= \argmin_{\psi\in \Psi, \phi\in \Phi}   \cL_h(\psi, \phi ;\cD_h^k).
\end{align}
According to Lemma \ref{lem:opt-contra-loss} in Appendix, letting $z=(s,a)$, the learning target of the above minimization problem is 
\begin{align}\label{eq:targt}
f^*_h(z,s') = \PP_h(s'|z)/\cPS(s').
\end{align}
Since $\PP_h(s'|z) = \psi_h^*(s')^\top\phi_h^*(z)$ with $\|\phi_h^*(z)\|_2\leq 1$ and $\|\psi_h^*(s')\|_2\leq \sqrt{d}$ as in Assumption \ref{assump:low-rank}, by Definition \ref{def:func-class}, we know $f^*_h\in \cF$, i.e., $\psi_h^*(\cdot)/\cPS(\cdot)\in \Psi$ and $\phi_h^*(\cdot) \in \Phi$.

% {\color{red}
\begin{remark}
The parameter $C^{-}_{\mathcal{S}}$ in Assumption \ref{assump:negative} captures the fundamental difficulty of contrastive learning in RL by characterizing how large the function class (Definition \ref{def:func-class}) should be to include the underlying true density ratio in \eqref{eq:targt}. Technically, it also guarantees that the problem is mathematically well-defined. In particular, the true density ratio \eqref{eq:targt} has non-zero denominator $\mathcal{P}_\mathcal{S}^-(s),\forall s\in \mathcal{S}$ if the parameter $C^{-}_{\mathcal{S}}$ is positive.
\end{remark}
% }

\begin{remark}
One can further extend the setting of the finite function class to the infinite function class setting by utilizing the covering argument as in \citet{van2000applications,uehara2021pessimistic} such that the terms depending on the cardinality of $\cF$ would be replaced by terms related to the covering number of $\cF$. We leave such an analysis under the online setting as our future work. 
\end{remark}

\subsection{Main Result for Single-Agent MDP Setting}

\begin{theorem}[Sample Complexity]\label{thm:main} Letting
$\lambda_k= c_0 d \log(H|\cF|k/\delta)$ for a sufficiently large constant $c_0>0$ and $\gamma_k=  4H\big(12\sqrt{  |\cA|d} + \sqrt{c_0} d\big)/\CS\cdot \sqrt{\log (2Hk|\cF|/\delta)}$, with probability at least $1-3\delta$, we have
\begin{align*}
&1/K\cdot \textstyle\sum_{k=1}^K \big[ V_1^{\pi^*}(s_1)- V_1^{\pi^k}(s_1)\big] \\
&\qquad \lesssim \sqrt{C\log(H|\cF|K/\delta) \log(c_0' K)/K },
\end{align*}
where $C = H^4d^4|\cA|/(\CS)^2 +  H^4d^3|\cA|^2/(\CS)^2 + H^6d^2|\cA|/(\CS)^2 + H^6d^3$ and $c_0'$ is an absolute  constant.
% if the number of episodes for executing Algorithm \ref{alg:contrastive} satisfies
% \begin{align*}
% K\geq \tilde{\cO}\left(C/\varepsilon^2\right),
% \end{align*}
% we have
% $\frac{1}{K}\sum_{k=1}^K \left[ V_1^{\pi^*}(s_1)- V_1^{\pi^k}(s_1)\right]\leq \varepsilon$, where $C = H^4d^4|\cA|/(\CS)^2 +  H^4d^3|\cA|^2/(\CS)^2+  H^6d^2|\cA|/(\CS)^2 + H^6d^3$. Here $\tilde{\cO}$ hides the logarithmic terms.
\end{theorem}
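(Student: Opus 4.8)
The plan is to graft a statistical guarantee for the contrastive representation learning onto the standard optimism-based regret analysis, with the complication that the features are \emph{learned} rather than known a priori. The backbone is UCB value iteration: show that $\overline V_h^k$ is optimistic, decompose the per-episode suboptimality by telescoping the Bellman equations along the on-policy visitation of $\pi^k$, and bound the accumulated exploration bonus via an elliptical-potential argument. The representation error enters this pipeline only through the calibration of the bonus $\beta_h^k$, so the whole proof hinges on certifying that $\beta_h^k$ is a valid upper-confidence term for a \emph{low-rank model estimated by contrastive learning}.

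First I would quantify how well $\hat\PP^k_h$ approximates $\PP_h$. By Lemma \ref{lem:opt-contra-loss} the population minimizer of the contrastive loss \eqref{eq:contra-loss} is the density ratio $f^*_h$ in \eqref{eq:targt}, and Assumption \ref{assump:low-rank} with Definition \ref{def:func-class} makes this target realizable, $f^*_h\in\cF$. Viewing \eqref{eq:solve-contra-loss} as maximum-likelihood estimation over the finite class $\cF$ on the independent labelled samples produced by Algorithm \ref{alg:sample}, a uniform-concentration/MLE argument gives a high-probability squared-Hellinger (hence total-variation) bound of order $\log(H|\cF|k/\delta)/k$, uniformly over $h,k$. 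I would convert this into $\EE_{(s,a)\sim\tilde d^{\pi^{\tau-1}}_h}\|\hat\PP^k_h(\cdot|s,a)-\PP_h(\cdot|s,a)\|_{\mathrm{TV}}^2\lesssim\xi_k$ (and likewise under $\breve d$) for all $\tau\le k$, then check that the normalization \eqref{eq:normalization} keeps $\hat\PP^k_h$ a genuine conditional density while inflating the error only by constants in $\CS$; this is where the lower bound on $\cPS$ in Assumption \ref{assump:negative} is used.

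The hard part is the optimism step, proved by backward induction on $\overline V_h^k(s)\ge V_h^{\pi^*}(s)$, which reduces to $|(\hat\PP^k_h-\PP_h)\overline V^k_{h+1}(s,a)|\le\beta^k_h(s,a)$ for every $(s,a)$. Because $\hat\phi^k_h$ is learned, the error $g(s,a):=(\hat\PP^k_h-\PP_h)V(s,a)$ is not exactly linear in $\hat\phi^k_h$, so the linear-MDP bonus calibration does not apply verbatim. The idea is that $\hat\PP^k_h V(s,a)=\langle\int\hat\psi^k_h(s')V(s')\mathrm ds',\,\hat\phi^k_h(s,a)\rangle$ is linear in $\hat\phi^k_h$, so a Cauchy--Schwarz step against the empirical covariance yields $|g(s,a)|\le\|\hat\phi^k_h(s,a)\|_{(\hat\Sigma^k_h)^{-1}}$ times a factor measuring $g$ along the accumulated feature directions; that second factor is in turn controlled by the summed training error $\sum_{\tau\le k}g(\tilde s^\tau_h,\tilde a^\tau_h)^2$ from the previous step together with the regularizer $\lambda_k$, via a self-normalized bound. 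Calibrating $\gamma_k$ as in the statement so it absorbs the $H$-range of $\overline V^k_{h+1}$, the $\sqrt{|\cA|d}$ from covering the value-induced directions, and the $1/\CS$ from normalization then closes the induction.

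Finally, optimism gives $V_1^{\pi^*}(s_1)-V_1^{\pi^k}(s_1)\le\overline V_1^k(s_1)-V_1^{\pi^k}(s_1)$; telescoping the Bellman equations along the trajectory of $\pi^k$ writes the right-hand side as the expected sum over $h$ of $\beta^k_h$ plus a martingale-difference term handled by Azuma--Hoeffding, where the fresh on-policy samples feed the next episode's covariance. Summing over $k$ and invoking the elliptical-potential lemma to bound $\sum_k\|\hat\phi^k_h(\tilde s^k_h,\tilde a^k_h)\|_{(\hat\Sigma^k_h)^{-1}}\lesssim\sqrt{Kd\log(\cdot)}$, then dividing by $K$, produces the $\sqrt{C\log(\cdot)\log(\cdot)/K}$ rate, with $C$ collecting the squared bonus coefficient $\gamma_k^2$, the dimension factor from the potential lemma, the horizon factors from summing over $h$ and the value range, and the extra $d,|\cA|$ incurred by covering the value-function class. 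I expect the bonus calibration in the optimism step---turning an averaged total-variation error for a learned low-rank model into a valid pointwise confidence bonus---to be the principal obstacle, as it is exactly where contrastive representation learning and optimistic exploration must be reconciled.
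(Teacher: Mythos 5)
Your backbone (contrastive MLE $\Rightarrow$ on-average model recovery $\Rightarrow$ optimistic value iteration $\Rightarrow$ elliptical potential) matches the paper, and your treatment of the statistical step and of the final potential-lemma summation is essentially Lemma \ref{lem:stat-err} and Lemma \ref{lem:logdet-tele}. The gap is in the step you yourself flag as the principal obstacle: you try to close a backward induction $\overline V_h^k(s)\ge V_h^{\pi^*}(s)$ by certifying the \emph{pointwise} bound $|(\hat\PP^k_h-\PP_h)\overline V^k_{h+1}(s,a)|\le\beta^k_h(s,a)$ for every $(s,a)$. This cannot be extracted from what contrastive learning gives you. Lemma \ref{lem:stat-err} only controls $\EE_{(s,a)\sim\tilde\rho^k_h}\|\hat\PP^k_h(\cdot|s,a)-\PP_h(\cdot|s,a)\|_1^2$, an average over the exploratory mixture; on state--action pairs rarely visited under $\tilde\rho^k_h$ the model can be arbitrarily wrong while $\|\hat\phi^k_h(s,a)\|_{(\hat\Sigma^k_h)^{-1}}$ is small, because the error $(\hat\PP^k_h-\PP_h)V(s,a)=\langle\hat\phi^k_h(s,a),\int\hat\psi^k_h V\rangle-\langle\phi^*_h(s,a),\int\psi^*_h V\rangle$ involves two different feature maps and is not a linear functional of $\hat\phi^k_h$ alone; your proposed self-normalized bound controls the first inner product but not its mismatch with the second. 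So the induction would fail on rarely visited states, and pointwise optimism genuinely does not hold in this setting.

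The paper circumvents this by never asserting pointwise optimism. Lemma \ref{lem:plan} gives the exact (and essentially trivial) inequality $\overline V^k_1(s_1)\ge\overline V^{\pi^*}_{k,1}(s_1)$, since $\overline V^k$ is the optimal value of the auxiliary MDP defined by $(r+\beta^k,\hat\PP^k)$; what remains is $V^{\pi^*}_1(s_1)-\overline V^{\pi^*}_{k,1}(s_1)$, a simulation-lemma difference along the roll-out of $\pi^*$ in the \emph{estimated} model (Lemma \ref{lem:diff1}). That expectation is then converted, via the one-step-back inequality of Lemma \ref{lem:expand1}, which exploits the factorization $\hat{\PP}^k_h(\cdot|z)=\hat\psi^k_h(\cdot)^\top\hat\phi^k_h(z)$, into an elliptical norm of $\hat\phi^k_{h-1}$ times the on-average model error under $\breve\rho^k_h$ --- quantities you actually control. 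The result is only \emph{near}-optimism, $V^{\pi^*}_1(s_1)-\overline V^{\pi^*}_{k,1}(s_1)\le\sqrt{|\cA|\zeta^k_1}=\tilde\cO(\sqrt{1/k})$, and this bias is simply carried into the regret, where it is harmless for the final $\tilde\cO(\sqrt{1/K})$ rate; a symmetric step-back argument under the true model (Lemmas \ref{lem:diff2} and \ref{lem:expand2}) handles the term $\overline V^{\pi^k}_{k,1}(s_1)-V^{\pi^k}_1(s_1)$. A secondary point: because the regret is a sum of exact value-function differences and Algorithm \ref{alg:sample} draws from the exact visitation distributions, no Azuma--Hoeffding martingale term appears; the only concentration events are the MLE bound of Lemma \ref{lem:recover-mle} and the covariance concentration of Lemma \ref{lem:con-inverse}.
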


Letting $\hat{\pi}$ be a policy uniformly sampled from $\{\pi^k\}_{k=1}^K$ generated by Algorithm \ref{alg:contrastive}, the above theorem indicates $\hat{\pi}$ is an $\varepsilon$-approximate optimal policy with probability at least $1-3\delta$ after executing Algorithm \ref{alg:contrastive} for $K\geq \tilde{\cO}(1/\varepsilon^2)$ episodes. Here $\tilde{\cO}$ hides logarithmic dependence on $|\cF|, H, K, 1/\delta$, and $1/\varepsilon$.

% \vspace{-0.2cm}
\section{Contrastive Learning for Markov Game}

% In this section, we first introduce the contrastive learning algorithm for zero-sum Markov games and then we present the sample complexity result for achieving an $\varepsilon$-approximate NE. The detailed analysis is deferred to Section \ref{sec:mg-theory}.

\subsection{Algorithm} \label{sec:alg-mg}

 \begin{algorithm}[t]\caption{Online Contrastive RL for Markov Games} 
  \setstretch{1.1}
    \begin{small}
	\begin{algorithmic}[1]
		\State {\bfseries Initialize:} $\sigma_h^0(a,b|s) = 1/(|\cA||\cB|), \forall (s,a,b)\in \cS\times\cA\times \cB$. $\cD_h^0 = \emptyset, \forall h\in [H]$. $\delta > 0$, $\beta > 0$, and $\varepsilon > 0$.  
		\For{episode $k=1,\ldots,K$}   
		    
		   	\State Let $V_{H+1}^{k}(\cdot) = \boldsymbol 0$ and $Q_{H+1}^{k}(\cdot, \cdot, \cdot) = \boldsymbol 0$
	        \State Collect bonus data $\{\tilde{\cD}_h^k = \{(\tilde{s}_h^\tau,\tilde{a}_h^\tau,\tilde{b}_h^\tau)\}_{\tau=1}^k\}_{h=1}^H$ and contrastive training data $\{\cD_h^k\}_{h=1}^H$ by Alg. \ref{alg:sample-mg}.

		        \For{step $h=H, H-1,\ldots, 1$} % \COMMENT{Policy Evaluation}
                \State Obtain $\tilde{\phi}_h^k$ and $\tilde{\psi}_h^k$ by solving \eqref{eq:solve-contra-loss} with $\cD_h$. 
                \State Normalize $\tilde{\phi}_h^k$ and $\tilde{\psi}_h^k$ by \eqref{eq:normalization} to obtain $\hat{\phi}_h^k$ and $\hat{\psi}_h^k$.
                \State Estimate $\PP_h$ by $\hat{\PP}_h^k(\cdot|\cdot,\cdot,\cdot) = \hat{\psi}_h^k(\cdot)^\top\hat{\phi}_h^k(\cdot,\cdot,\cdot)$.
				\State $\hat{\Sigma}_h^k=\frac{1}{k}\sum_{\tau=1}^k\hat{\phi}^k_h(\tilde{s}_h^\tau,\tilde{a}_h^\tau,\tilde{b}_h^\tau)\hat{\phi}^k_h(\tilde{s}_h^\tau,\tilde{a}_h^\tau,\tilde{b}_h^\tau)^\top + \nobreak\lambda_k I$
					\State $\beta_h^k(\cdot,\cdot,\cdot) = \min\{ \gamma_k\|\hat{\phi}^k_h(\cdot,\cdot,\cdot)\|_{(\hat{\Sigma}_h^k)^{-1}}, 2H\}$.
%					\State	\label{line:Q_explore}$Q_h^k(\cdot,\cdot) =  \min \{(f_h^k + r_h^k+ u_h^k) (\cdot, \cdot), H \}^+$
				\State $\overline{Q}_h^k(\cdot,\cdot,\cdot) =  (r_h +\hat{\PP}_h^k\overline{V}_{h+1}^k + \beta_h^k) (\cdot, \cdot,\cdot)$.
				\State $\underline{Q}_h^k(\cdot,\cdot,\cdot) =  (r_h +\hat{\PP}_h^k\underline{V}_{h+1}^k - \beta_h^k) (\cdot, \cdot,\cdot)$.		
				\State $\overline{V}_h^k(\cdot) =  \langle\sigma_h^k(\cdot,\cdot|\cdot),  \overline{Q}_h^k(\cdot,\cdot,\cdot) \rangle$.
				\State $\underline{V}_h^k(\cdot) =  \langle\sigma_h^k(\cdot,\cdot|\cdot),  \underline{Q}_h^k(\cdot,\cdot,\cdot) \rangle$.		
				\State $\sigma_h^k(\cdot,\cdot|s) = \iota_k\text{-CCE}(\overline{Q}_h^k(s, \cdot,\cdot),\underline{Q}_h^k(s, \cdot,\cdot)), \forall s$.
				\State $\pi_h^k =  \cP_1\sigma_h^k$ and $\nu_h^k =  \cP_2\sigma_h^k$.
	            \EndFor 	
    \EndFor             
	\end{algorithmic}\label{alg:contrastive-mg}
	\end{small}
\end{algorithm}

\textbf{Algorithmic Framework.} 
We propose an online algorithm, Contrastive ULCB, for contrastive learning on Markov games in Algorithm \ref{alg:contrastive-mg}. At the $k$-th round, we execute the learned joint policy $\sigma^{k-1}$ from the last round to collect the bonus construction data $\{\tilde{\cD}_h^k\}_{h=1}^H$ and the contrastive learning data $\{\cD_h^k\}_{h=1}^H$ via the sampling algorithm in Algorithm \ref{alg:sample-mg}. At a state-action pair $(s_h, a_h, b_h)$ sampled at the $h$-th step, with probability $1/2$ respectively, we collect the positive transition data point $(s_h,a_h,b_h, s_{h+1}, 1)$ with $s_{h+1}\sim \PP(\cdot|s_h, a_h, b_h)$ and the negative transition data point $(s_h,a_h,s_{h+1}^{-}, 0)$ with $s_{h+1}^{-}\sim \cPS(\cdot)$, where $\cPS(\cdot)$ is the negative sampling distribution. Given the dataset $\{\cD_h^k\}_{h=1}^H$ for contrastive learning, we define the contrastive loss $\cL_h(\psi, \phi ;\cD_h^k)$ as in \eqref{eq:contra-loss} with setting $z=(s,a,b)$. The function class $\cF$ is then defined the same as in Definition \ref{def:func-class} by setting $z=(s,a,b)$. We solve the contrastive loss minimization problem as \eqref{eq:solve-contra-loss} at each step $h$ to learn the representation $\tilde{\phi}_h^k$ and $\tilde{\psi}_h^k$. 
% \begin{align}\label{eq:solve-contra-loss-mg}
%  \big(\tilde{\psi}_h^k, \tilde{\phi}_h^k\big)= \argmin_{\psi\in \Psi, \phi\in \Phi}   \cL_h(\psi, \phi ;\cD_h^k).
% \end{align}
Since it is not guaranteed that $\tilde{\phi}_h^k(\cdot)^\top\tilde{\psi}_h^k(s,a,b)\cPS(\cdot)$ is a distribution over $\cS$, we normalize $\tilde{\phi}_h^k$ and $\tilde{\psi}_h^k$ as 
% \begin{align}
%     &\hat{\psi}_h^k(s') :=\cPS(s')  \tilde{\psi}_h^k (s'),\label{eq:normalization-mg}\\
%     &\hat{\phi}_h^k(z) :=   \tilde{\phi}_h^k(z)\big/\textstyle \int_{s'\in\cS}\cPS(s') \tilde{\phi}_h^k(z)^\top \tilde{\psi}_h^k (s')\mathrm{d}s',\nonumber
% \end{align}
\eqref{eq:normalization}
where $z = (s,a,b)$. Then we obtain an approximated transition kernel $\hat{\PP}_h^k(\cdot|s,a,b) := \hat{\psi}_h^k(\cdot)^\top\hat{\phi}_h^k(s,a,b)$. Furthermore, we use the bonus dataset to construct the empirical covariance matrix $\hat{\Sigma}_h^k$ and then the bonus term $\beta_h^k$.  The major differences between algorithms for single-agent MDPs and Markov games lie in the following two steps: \textbf{(1)} In Lines 11 and 12, we have two types of Q-functions with both addition and subtraction of bonus terms such that Algorithm \ref{alg:contrastive-mg} is an upper and lower confidence bound (ULCB)-type algorithm. \textbf{(2)} We update policies of two players by first finding an $\iota_k$-coarse correlated equilibrium (CCE) with the two Q-functions as a joint policy $\{\sigma_h^k\}_{h=1}^H$ in Line 15 and then applying marginalization to obtain the policies as in Line 16, where $\cP_1$ and $\cP_2$ denote getting marginal distributions over $\cA$ and $\cB$ respectively. In particular, the notion of an $\iota$-CCE \citep{moulin1978strategically,aumann1987correlated} is defined as follows:

\begin{definition}[$\iota$-CCE] \label{def:cce} For two payoff matrices $\overline{Q}, \underline{Q}\in \RR^{|\cA|\times|\cB|}$, a distribution $\mu$ over $\cA\times\cB$ is $\iota$-CCE if it satisfies
\begin{align*}
&\EE_{(a,b)\sim\mu}
[\overline{Q}(a, b)] \geq  \EE_{b\sim \cP_2\mu}[\overline{Q}(a', b)]-\iota, \forall a'\in \cA,\\
&\EE_{(a,b)\sim\mu}
[\underline{Q}(a, b)] \leq \EE_{a\sim \cP_1\mu}[\underline{Q}(a, b')]+\iota, \forall b'\in \cB.
\end{align*}
\end{definition}
An $\iota$-CCE may not have mutually independent marginals since the two players take actions in a correlated way. The $\iota$-CCE can be found \emph{efficiently} by the method developed in \citet{xie2020learning} for arbitrary $\iota>0$.

\vspace{5pt}
\noindent\textbf{Dataset for Contrastive Learning.} 
Summarized in Algorithm \ref{alg:sample-mg} in Appendix, the sampling algorithm for Markov games follows a similar sampling strategy to Algorithm \ref{alg:sample} with extending the action space from $\cA$ to $\cA\times\cB$. Letting $d^{\sigma}_h(s)$ be a state probability at step $h$ under $\PP$ and a joint policy $\sigma$, we define $\tilde{d}^{\sigma}_h(s,a,b) = d^{\sigma}_h(s) \Unif(a)\Unif(b)$ and $\breve{d}^{\sigma}_h(s,a,b) = \tilde{d}^{\sigma}_{h-1}(s',a',b')\PP_{h-1}(s|s',a',b')\Unif(a)\Unif(b)$, where we define $\Unif(a) =1/|\cA|$ and $\Unif(b) =1/|\cB|$. Analogously, at round $k$, we sample state-action pairs following $\tilde{d}^{\sigma^{k-1}}_h(\cdot,\cdot,\cdot)$ for all $h\in [H]$ and $\breve{d}^{\sigma^{k-1}}_h(\cdot,\cdot,\cdot)$ for all $h\geq2$ and then sample the next state from $\PP_h$ or negative sampling distribution $\cPS$ with probability $1/2$.  We also 
build a dataset for the construction of the bonus term in Algorithm \ref{alg:contrastive-mg} by sampling from $\tilde{d}^{\sigma^{k'}}_h(\cdot,\cdot,\cdot)$ for all $k'\in [0,k-1]$.

% \textbf{Contrastive Loss.}
% Similarly, we further define the following contrastive loss for each step $h\in [H]$
% \begin{align}
% \begin{aligned}\label{eq:contra-loss-mg}
% \hspace{-0.12cm}    \cL_h(\psi, \phi ;\cD_h^k)&:= \EE_{\cD_h^k} \big[y\log(1+1/\psi(s')^\top\phi(s,a,b))\\
% \hspace{-0.12cm}    &~~+ (1-y)\log(1+\psi(s')^\top\phi(s,a,b))\big],
% \end{aligned}
% \end{align}
% where $\phi$ and $\psi$ lies in the classes $\Phi$ and $\Phi$ as defined in Definition \ref{def:func-class} by letting $\cZ = \cS\times\cA\times\cB$. Then we further solve the following minimization problem
% \begin{align}\label{eq:solve-contra-loss-mg}
%  \big(\tilde{\psi}_h^k, \tilde{\phi}_h^k\big)= \argmin_{\psi\in \Psi, \phi\in \Phi}   \cL_h(\psi, \phi ;\cD_h^k).
% \end{align}
% According Lemma \ref{lem:opt-contra-loss-mg} in Appendix, the learning target of the above minimization problem is $f^*_h(s,a,b,s') = \PP_h(s'|s,a,b)/\cPS(s')$.

\subsection{Main Result for Markov Game Setting}

\begin{theorem}[Sample Complexity]\label{thm:main-mg} Letting
$\lambda_k= c_0 d \log(H|\cF|k/\delta)$ for a sufficiently large constant $c_0>0$, $\gamma_k=  4H\big(12\sqrt{  |\cA||\cB|d} + \sqrt{c_0} d\big)/\CS\cdot \sqrt{\log (2Hk|\cF|/\delta)}$, and $\iota_k \leq \cO( \sqrt{1/k})$, with probability at least $1-3\delta$, we have
\begin{align*}
&1/K\cdot\textstyle\sum_{k=1}^K \big[ V_1^{\mathrm{br}(\nu^k), \nu^k}(s_1)- V_1^{\pi^k, \mathrm{br}(\pi^k)}(s_1)\big] \lesssim \sqrt{C\log(H|\cF|K/\delta) \log(c_0' K)/K },
\end{align*}
where $C = H^4d^4|\cA||\cB|/(\CS)^2 +  H^4d^3|\cA|^2|\cB|^2/(\CS)^2+  H^6d^2|\cA||\cB|/(\CS)^2 + H^6d^3$ and $c_0'$ is an absolute constant.
% if the number of episodes for executing Algorithm \ref{alg:contrastive-mg} satisfies
% \begin{align*}
% K\geq \tilde{\cO}\left(C/\varepsilon^2\right),
% \end{align*}
% we have
% $\frac{1}{K}\sum_{k=1}^K [ V_1^{\mathrm{br}(\nu^k), \nu^k}(s_1)- V_1^{\pi^k, \mathrm{br}(\pi^k)}(s_1)]\leq \varepsilon$ where $C = H^4d^4|\cA||\cB|/(\CS)^2 +  H^4d^3|\cA|^2|\cB|^2/(\CS)^2+  H^6d^2|\cA||\cB|/(\CS)^2 + H^6d^3$. 
\end{theorem}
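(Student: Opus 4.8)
The plan is to mirror the single-agent analysis behind Theorem~\ref{thm:main} and then layer on the game-theoretic optimism argument specific to the zero-sum setting. The first step is to control the representation learned by contrastive loss minimization. Exactly as in the MDP case, since $f^*_h(z,s')=\PP_h(s'|z)/\cPS(s')$ lies in $\cF$ by Assumption~\ref{assump:low-rank} (now with $z=(s,a,b)$), the population minimizer of the loss \eqref{eq:contra-loss} recovers $f^*_h$ by Lemma~\ref{lem:opt-contra-loss}. A uniform concentration argument over the finite class $\cF$ applied to the independent data in $\cD_h^k$ then yields an integrated squared-error bound on $\tilde\phi_h^k{}^\top\tilde\psi_h^k\cPS$ under the sampling distributions $\tilde d^{\sigma^{k-1}}_h$ and $\breve d^{\sigma^{k-1}}_h$. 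After the normalization step~\eqref{eq:normalization}, $\hat\PP_h^k(\cdot|z)$ is a valid distribution and the error bound transfers to it.

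The second step is to show the bonus $\beta_h^k$ dominates the one-step model error, i.e. that with high probability and simultaneously for all $k,h$ and all value functions $\overline V_{h+1}^k,\underline V_{h+1}^k$ arising in the algorithm, $|(\hat\PP_h^k - \PP_h)V(z)| \le \beta_h^k(z)$. This follows by relating the model error to $\|\hat\phi_h^k(z)\|_{(\hat\Sigma_h^k)^{-1}}$ through the contrastive estimation error of the first step together with a self-normalized concentration bound on the covariance $\hat\Sigma_h^k$, and then choosing $\gamma_k$ large enough to absorb the resulting logarithmic and dimensional factors; the $|\cA||\cB|$ dependence enters here through the uniform-action sampling distributions.

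With bonus validity in hand, the genuinely new step is to establish the upper/lower confidence property via backward induction on $h$. Using the $\iota_k$-CCE guarantee of Line~15 together with $|(\hat\PP_h^k-\PP_h)V|\le\beta_h^k$, I would show
\begin{align*}
\overline V_h^k(s) \ge \max_{\pi'} V_h^{\pi',\nu^k}(s) - (H-h)\iota_k, \qquad \underline V_h^k(s) \le \min_{\nu'} V_h^{\pi^k,\nu'}(s) + (H-h)\iota_k,
\end{align*}
so that the per-episode duality gap is sandwiched as $V_1^{\mathrm{br}(\nu^k),\nu^k}(s_1) - V_1^{\pi^k,\mathrm{br}(\pi^k)}(s_1) \le \overline V_1^k(s_1) - \underline V_1^k(s_1) + 2H\iota_k$. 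The CCE property is precisely what converts the marginalized policies $\pi_h^k,\nu_h^k$ into valid one-sided bounds despite the correlated joint policy $\sigma_h^k$ that couples the two action spaces.

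Finally, I would telescope $\overline V_1^k(s_1) - \underline V_1^k(s_1)$ through the Bellman recursion: the gap at step $h$ is controlled by $\hat\PP_h^k(\overline V_{h+1}^k - \underline V_{h+1}^k)$ plus $2\beta_h^k$ along the executed trajectory, the model-error residuals being already absorbed by the bonus, which yields $\overline V_1^k(s_1)-\underline V_1^k(s_1) \lesssim \EE[\sum_{h=1}^H 2\beta_h^k]$. Summing over $k$, I would invoke the elliptical potential lemma to bound $\sum_{k=1}^K \|\hat\phi_h^k\|_{(\hat\Sigma_h^k)^{-1}}$ by $\tilde\cO(\sqrt{K})$, convert trajectory expectations to the sampling distributions via the law of total expectation, and use $\iota_k\le\cO(\sqrt{1/k})$ to ensure $\sum_k\iota_k=\tilde\cO(\sqrt K)$; dividing by $K$ then gives the stated $\tilde\cO(1/\sqrt K)$ rate with constant $C$. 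The main obstacle is the last two steps taken together: the elliptical potential argument must accommodate feature maps $\hat\phi_h^k$ that are themselves estimated and adapted to past data, and the CCE-based optimism must be propagated through backward induction while the accumulated $\iota_k$ and contrastive errors remain controlled. This coupling of statistical representation error with the approximate equilibrium computation is where the analysis is most delicate.
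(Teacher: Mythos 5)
Your skeleton (contrastive recovery, the $2H\iota_k$ planning error from the CCE step, a bonus-driven bound on $\overline V_1^k-\underline V_1^k$, and an elliptical potential argument) matches the paper's five-term decomposition in spirit, but two of your central steps are carried out with the linear-MDP playbook and would fail in the low-rank setting. First, your Step 2 claims a \emph{pointwise} bonus-domination $|(\hat\PP_h^k-\PP_h)V(z)|\le\beta_h^k(z)$ for all $z$, and your Step 3 uses it to derive exact optimism $\overline V_h^k(s)\ge\max_{\pi'}V_h^{\pi',\nu^k}(s)-(H-h)\iota_k$ at every state. This cannot be established here: the contrastive/MLE guarantee (Lemma \ref{lem:stat-err-mg}) only controls $\EE_{z\sim\tilde\rho_h^k}\|\hat\PP_h^k(\cdot|z)-\PP_h(\cdot|z)\|_1^2$, an \emph{on-average} error under the roll-in distribution, with no uniform control over individual $z$ since the features themselves are estimated. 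The paper therefore never proves pointwise validity; it bounds terms $(i)$ and $(v)$ in expectation via the ``one-step back'' inequalities (Lemmas \ref{lem:diff1-mg} and \ref{lem:expand1-mg}), which is exactly why the result is only \emph{near}-optimism/pessimism with an irreducible additive bias $\sqrt{|\cA||\cB|\zeta_1^k}$ and why the importance-weighting factors $|\cA||\cB|$ enter. Your induction, as stated, has no mechanism to produce or absorb this bias.

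Second, in your final step you apply the elliptical potential lemma to $\sum_{k}\|\hat\phi_h^k\|_{(\hat\Sigma_h^k)^{-1}}$ with the estimated, episode-varying features, and you explicitly flag this as the main obstacle without resolving it. The resolution in the paper is the pair of expansion lemmas: Lemma \ref{lem:expand2-mg} rewrites $\EE_{d_h^{\sigma^k,\PP}}[\beta_h^k]$ and the residual $6H^2\EE_{d_h^{\sigma^k,\PP}}\|\hat\PP_h^k(\cdot|z)-\PP_h(\cdot|z)\|_1$ as a product of (a) an average of the squared quantity under the exploratory mixture $\tilde\rho_h^k$ or $\breve\rho_h^k$ (controlled by Lemma \ref{lem:stat-err-mg} and the covariance concentration of Lemma \ref{lem:con-inverse}) and (b) the elliptical norm $\EE_{d_{h-1}^{\sigma^k,\PP}}\|\phi^*_{h-1}\|_{\Sigma_{\rho^k_{h-1},\phi^*_{h-1}}^{-1}}$ of the \emph{fixed true} feature with respect to the mixture covariance $\Sigma_{\rho^k_h,\phi^*_h}=k\,\EE_{\rho^k_h}[\phi^*_h\phi_h^{*\top}]+\lambda_k I$. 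Only then does Lemma \ref{lem:logdet-tele} telescope, giving $\tilde\cO(\sqrt{dK})$. Without this transfer from $\hat\phi_h^k$ to $\phi^*_h$ (and from trajectory distributions to the uniform-action mixtures), the potential argument and the claimed constant $C$ do not go through.
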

This theorem further implies a PAC bound for learning an approximate NE \citep{xie2020learning}. Specifically, Theorem \ref{thm:main-mg} implies that there exists $k_0\in [K]$ such that $(\pi^{k_0}, \nu^{k_0})$ is an $\varepsilon$-approximate NE with probability at least $1-3\delta$ after executing Algorithm \ref{alg:contrastive-mg} for $K\geq \tilde{\cO}(1/\varepsilon^2)$ episodes, i.e., letting $k_0:=\min_{k\in [K]} [V_1^{\mathrm{br}(\nu^k), \nu^k}(s_1)- V_1^{\pi^k, \mathrm{br}(\pi^k)}(s_1)]$, we then have
\begin{align*}
&V_1^{\mathrm{br}(\nu^{k_0}), \nu^{k_0}}(s_1)- V_1^{\pi^{k_0}, \mathrm{br}(\pi^{k_0})}(s_1) \\
&\quad \leq 1/K \cdot  \textstyle \sum_{k=1}^K [ V_1^{\mathrm{br}(\nu^k), \nu^k}(s_1)- V_1^{\pi^k, \mathrm{br}(\pi^k)}(s_1)]\leq \varepsilon
\end{align*}
with probability at least $1-3\delta$.

%\begin{theorem}[Sample Complexity]\label{thm:main-fmdp}
%
%\end{theorem}

% \vspace{-0.2cm}
\section{Theoretical Analysis}
% \vspace{-0.1cm}

This section provides the analysis of the transition kernel recovery via contrastive learning and the proofs of the main results for single-agent MDPs and zero-sum MGs. Our theoretical analysis integrates contrastive self-supervised learning for transition recovery and low-rank MDPs in a unified manner. Part of our analysis is motivated by the recent work  \citep{uehara2021representation} for learning the low-rank MDPs. In contrast to this work, our paper analyzes the representation recovery via contrastive learning under the online setting. In addition, we consider an episodic setting distinct from the infinite-horizon setting in the aforementioned work. On the other hand, the existing work on low-rank MDPs only focuses on a single-agent setting. Our analysis further considers a Markov game setting where a natural challenge of non-stationarity arises due to competitive policies of multiple players. We develop the first representation learning analysis for Markov games based on the proposed ULCB algorithm.

We first define several notations for our analysis. Recall that we have defined $d^{\pi}_h$, $\tilde{d}^{\pi}_h$, and $\breve{d}^{\pi}_h$ as in Section \ref{sec:alg-mdp}. Then, we subsequently define $\rho^k_h(s,a) := 1/k\cdot\sum_{k'=0}^{k-1} d^{\pi^{k'}}_h(s,a)$, $\tilde{\rho}^k_h(s,a) := 1/k\cdot\sum_{k'=0}^{k-1} \tilde{d}^{\pi^{k'}}_h(s,a)$, and $\breve{\rho}^k_h(s,a) := 1/k\cdot\sum_{k'=0}^{k-1} \breve{d}^{\pi^{k'}}_h(s,a)$, which are the averaged distributions across $k$ episodes for the corresponding state-action distributions. In addition, for any $\rho$ and $\phi$, we define the associated covariance matrix $\Sigma_{\rho, \phi} := k\cdot \EE_{(s,a)\sim\rho^k_h(\cdot,\cdot)}\left[\phi(s,a)\phi(s,a)^\top \right]+\lambda_k I$. On the other hand, for zero-sum MGs, in Section \ref{sec:alg-mg}, we have defined $d^{\sigma}_h$, $\tilde{d}^{\sigma}_h$, and $\breve{d}^{\sigma}_h$ for any joint policy $\sigma$. Then, we can analogously define $\rho^k_h$, $\tilde{\rho}^k_h$, $\breve{\rho}^k_h$, and $\Sigma_{\rho, \phi}$ for MGs by extending action spaces from $\cA$ to $\cA\times\cB$. We summarize these notations in a table in Section \ref{sec:tab_notation}.  Moreover, for abbreviation, letting $z=(s,a)$ for MDPs and $z=(s,a,b)$ for MGs and $\tilde{\rho}_h^k$, $\breve{\rho}_h^k$ be corresponding distributions, we define 
\begin{align}
\begin{aligned} \label{eq:def-P-diff}
&\zeta_h^k:=\EE_{z\sim \tilde{\rho}_h^k}[\|\PP_1(\cdot|z) - \hat{\PP}^k_1(\cdot|z) \|_1^2],\\
&\xi_h^k:=\EE_{z\sim\breve{\rho}^k_h}[ \|\PP_h(\cdot|z) - \hat{\PP}^k_h(\cdot|z) \|_1^2].
\end{aligned}
\end{align}

% \vspace{-0.1cm}
\subsection{Analysis for Single-Agent MDP}  \label{sec:mdp-theory}

Based on the above definitions and notations, we have the following lemma to show the transition recovery via contrastive learning.
\begin{lemma}[Transition Recovery]\label{lem:stat-err} After executing Algorithm \ref{alg:contrastive} for $k$ rounds, with probability at least $1-2\delta$, 
\begin{align*}
&\zeta_h^k \leq 32d/(\CS)^2\cdot\log (2kH|\cF|/\delta)/k, \quad \forall h\geq 1,\\
&\xi_h^k \leq 32d/(\CS)^2\cdot\log (2kH|\cF|/\delta)/k, \quad \forall h\geq 2,
\end{align*}
where $\zeta_h^k$ and $\xi_h^k$ are defined as \eqref{eq:def-P-diff}.
\end{lemma}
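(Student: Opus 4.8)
The plan is to read \eqref{eq:contra-loss} as a well-specified conditional maximum-likelihood objective and then transport the resulting likelihood guarantee through the normalization \eqref{eq:normalization} into an $\ell_1$-bound on the transition. By the discussion around \eqref{eq:targt} together with Lemma \ref{lem:opt-contra-loss}, the contrastive loss equals $-\EE_{\cD_h^k}[\log\Pr_h(y|s,a,s')]$ for the Bernoulli model with success probability $\tilde f/(1+\tilde f)$, where $\tilde f=\psi(s')^\top\phi(z)$; its population minimizer is the density ratio $f^*_h=\PP_h/\cPS$, which lies in $\cF$. Hence $(\tilde\psi_h^k,\tilde\phi_h^k)$ from \eqref{eq:solve-contra-loss} is the maximum-likelihood estimator of the label law $y|(z,s')$ over a finite, realizable class, and the target object is $\EE_{z\sim\tilde\rho_h^k}[\|\PP_h(\cdot|z)-\hat\PP_h^k(\cdot|z)\|_1^2]$ (and likewise with $\breve\rho_h^k$).

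First I would prove an MLE generalization bound. The points in $\cD_h^k$ are produced across episodes $0,\dots,k-1$ by policies fixed before each draw and are mutually independent by the construction in Algorithm \ref{alg:sample}; this is the standard adaptively-collected, conditionally-independent regime. Invoking a martingale (sequential) version of the log-loss deviation inequality for finite classes, as used for low-rank MLE in \citet{agarwal2020flambe,uehara2021representation}, and taking a union bound over $h\in[H]$ and the round index, I would obtain that with high probability,
\begin{align*}
\EE_{z\sim\tilde\rho_h^k}\Big[\mathsf{H}^2\big(\hat q_h^k(\cdot,\cdot|z),\,q_h^*(\cdot,\cdot|z)\big)\Big]\lesssim \frac{\log(kH|\cF|/\delta)}{k},
\end{align*}
where $q_h^*(s',y|z)$ is the true joint law of $(s',y)$ (with $s'$ drawn from $\tfrac12\PP_h(\cdot|z)+\tfrac12\cPS$), $\hat q_h^k$ is its estimate using $\tilde f$, and $\mathsf{H}^2$ is the squared Hellinger distance. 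Here I would use the identity $\mathsf{H}^2(\hat q,q^*)=\EE_{s'\sim\frac12\PP_h+\frac12\cPS}[\mathsf{H}^2(\Ber(\hat p),\Ber(p^*))]$, valid because the two joints share the $s'$-marginal, so the conditional-MLE guarantee on $y$ is exactly a bound on the joint Hellinger error.

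Next I would convert this into transition recovery. Passing from Hellinger to total variation and computing the latter explicitly gives
\begin{align*}
\mathrm{TV}(\hat q_h^k,q_h^*)=\tfrac12\int_{\cS}\cPS(s')\,\frac{|\tilde f-f^*_h|}{1+\tilde f}\,\mathrm{d}s',
\end{align*}
so that, lower-bounding $1/(1+\tilde f)\ge 1/(1+\sqrt d/\CS)$ via the radii $\|\psi\|_2\le\sqrt d/\CS$ and $\|\phi\|_2\le1$ of Definition \ref{def:func-class}, I get $\|\cPS\tilde f-\PP_h\|_1=\int\cPS|\tilde f-f^*_h|\le 2(1+\sqrt d/\CS)\,\mathrm{TV}(\hat q_h^k,q_h^*)$. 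Squaring, combining with the displayed Hellinger bound and $\mathrm{TV}^2\le \mathsf{H}^2$, produces the $d/(\CS)^2$ factor (and, after tracking constants, the numerical $32$). The normalization \eqref{eq:normalization} is then cheap: since $\hat\PP_h^k=\cPS\tilde f/Z$ with $Z=\int\cPS\tilde f$ and $|Z-1|\le\|\cPS\tilde f-\PP_h\|_1$, one gets $\|\hat\PP_h^k-\PP_h\|_1\le 2\|\cPS\tilde f-\PP_h\|_1$. Finally, because the $z$-marginal in the expectation above is exactly $\tilde\rho_h^k$ (and, for the one-step-ahead samples present only when $h\ge2$, $\breve\rho_h^k$), specializing yields the stated bounds on $\zeta_h^k$ and $\xi_h^k$, the $1-2\delta$ confidence absorbing the union over the two data streams (the $\tilde d$- and $\breve d$-sampled points), which also explains the ranges $h\ge1$ versus $h\ge2$.

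I expect the crux to be the adaptive MLE concentration together with the sharp bookkeeping in the conversion chain: the log-loss deviation bound must be the sequential (martingale) version to accommodate the policy-dependent sampling, and the weight $1/(1+\tilde f)$ must be tracked precisely—bounded below through the class radius $\sqrt d/\CS$—so as to land on the exact $d/(\CS)^2$ dependence rather than a looser power of $d$. The normalization step and the Hellinger-to-$\ell_1$ passages are routine by comparison.
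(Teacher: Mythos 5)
Your proposal is correct and follows essentially the same route as the paper's proof: reinterpret the contrastive loss as maximum likelihood for the joint law of $(y,s')$ given $(s,a)$, invoke the finite-class MLE concentration of \citet{agarwal2020flambe} (the paper's Lemma \ref{lem:recover-mle}) with a union bound over steps and over the two data streams drawn from $\tilde{d}_h$ and $\breve{d}_h$, convert the resulting divergence bound into a bound on $\|\PP_h(\cdot|z)-\cPS(\cdot)\tilde{f}_h(z,\cdot)\|_1$ via the explicit form of the density ratio and the class radius $\sqrt{d}/\CS$, and absorb the normalization \eqref{eq:normalization} at the cost of a factor of $2$. The only cosmetic difference is that you route the argument through squared Hellinger before total variation, whereas the paper's lemma states the guarantee in total variation directly; the conversion chain and the resulting $d/(\CS)^2$ dependence are identical.
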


This lemma indicates that via the contrastive learning step in Algorithm \ref{alg:contrastive}, we can successfully learn a correct representation and recover the transition model. Next, we give the proof sketch of this lemma. 

\vspace{5pt}
\noindent\textbf{Proof Sketch of Lemma \ref{lem:stat-err}}.
Letting $\Pr_h^f(y|s,a,s')$ be defined as in Section \ref{sec:alg-mdp}, we have $\Pr{}_h^f(y,s'|s,a)  = \Pr{}_h^f(y|s,a,s')\Pr{}_h(s'|s,a)$ with defining $f_h(s,a,s') := \psi(s')^\top\phi(s,a)$. Furthermore,
we can  calculate that $ \Pr{}_h(s'|s,a) =  \frac{1}{2} [\PP_h(s'|s,a) + \cPS(s')] \geq \frac{1}{2}\CS > 0$ by Assumption \ref{assump:negative}. Thus, the contrastive loss minimization \eqref{eq:solve-contra-loss} is equivalent to $\max_{\phi_h, \psi_h} \EE_{\cD_h^k} \log \Pr{}_h^f(y|s,a,s')$, which further equals $ \max_{\phi_h, \psi_h} \EE_{\cD_h^k}  \log \Pr{}_h^f(y,s'|s,a)$, since $\Pr_h(s'|s,a)$ is only  determined by $\PP_h(s'|s,a)$ and $\cPS(s')$ and is independent of $f_h$. Denoting the solution as $ \hat{f}_h^k(s,a,s') = \tilde{\psi}^k_h(s')^\top \tilde{\phi}^k_h(s,a)$.
%Without specification here, we let $\{\Gamma_h^{k'}\}_{k'=1}^k$ be an equivalent sequence of distributions for sampling dataset $\{\cD_h^k\}_{h=1}^H$. We will further give detailed description on such a distribution in \hyperref[sec:proof-thm-main]{Proof of Theorem \ref{thm:main}} based on our design of the sampling algorithm,  i.e., Algorithm \ref{alg:sample}. 
%Now we characterize the upper bound of $\zeta_h^k$ and $\xi_h^k$ as defined in \eqref{eq:tran-err-def}. 
With Algorithm \ref{alg:sample}, further by the MLE guarantee in Lemma \ref{lem:recover-mle}, we can show
with high probability,
\begin{align*}
&\EE_{(s,a)\sim \tilde{\rho}_h^k(\cdot,\cdot)} \|\Pr{}_h^{\hat{f}^k}(\cdot,\cdot|s,a) - \Pr{}_h^{f^*}(\cdot,\cdot|s,a) \|_{\TV}^2 \leq \epsilon_k, \\
&\EE_{(s,a)\sim \breve{\rho}_h^k(\cdot,\cdot)} \|\Pr{}_h^{\hat{f}^k}(\cdot,\cdot|s,a) - \Pr{}_h^{f^*}(\cdot,\cdot|s,a) \|_{\TV}^2 \allowbreak \leq \epsilon_k,    
\end{align*} 
where $f^*_h$ is defined in \eqref{eq:targt} and $\epsilon_k:=2\log (2kH|\cF|/\delta)/k$.

Next, we show the recovery error bound of the transition model based on $\hat{f}_h^k$. By expanding $\Pr{}_h^{\hat{f}^k}(\cdot,\cdot|s,a)  - \allowbreak \Pr{}_h^{f^*}(\cdot,\cdot|s,a)$ and making use of $\CS > 0$, we further obtain 
\begin{align*}
&\EE_{(s,a)\sim \tilde{\rho}_h^k(\cdot,\cdot)} \|\PP_h(\cdot|s,a)-\cPS(\cdot) \tilde{\phi}_h^k(s,a)^\top \tilde{\psi}_h^k (\cdot)\|_{\TV}^2 \leq 4d\epsilon_k/(\CS)^2,\\
&\EE_{(s,a)\sim \breve{\rho}_h^k(\cdot,\cdot)} \|\PP_h(\cdot|s,a)-\cPS(\cdot) \tilde{\phi}_h^k(s,a)^\top \tilde{\psi}_h^k (\cdot)\|_{\TV}^2  \leq 4d\epsilon_k/(\CS)^2. 
\end{align*}

Now we define $\hat{g}_h^k(s,a,s') := \cPS(s') \tilde{\phi}_h^k(s,a)^\top \tilde{\psi}_h^k (s')$.
Since that $\int_{s'\in\cS}\hat{g}_h^k(s,a,s')\mathrm{d}s'$ may not be guaranteed to be $1$ though $\hat{g}_h^k(s,a,\cdot)$ is close to the true transition model $\PP_h(\cdot|s,a)$, to obtain a distribution approximator of the transition model $\PP_h$, we further normalize $\hat{g}_h^k(s,a,s')$ and define 
\begin{align*}
\hat{\PP}_h^k(s'|s,a) := \hat{g}_h^k(s,a,s')/\|\hat{g}_h^k(s,a,\cdot)\|_1 = \hat{\psi}_h^k(s')^\top \hat{\phi}_h^k(s,a),    
\end{align*}
which is equivalent to \eqref{eq:normalization}. 
% Next, we will give the upper bound of the approximation error $\zeta_h^k:=\mathbb{E}_{(s,a)\sim \tilde{\rho}_h^k(\cdot,\cdot)}\|\hat{\PP}_h^k(\cdot|s,a)- \PP_h(\cdot|s,a)\|_{\TV}^2$. 
By the definitions of the approximation errors $\zeta_h^k:=\mathbb{E}_{(s,a)\sim \tilde{\rho}_h^k(\cdot,\cdot)}\|\hat{\PP}_h^k(\cdot|s,a)- \PP_h(\cdot|s,a)\|_{\TV}^2$ and $\xi_h^k:=\EE_{(s,a)\sim\breve{\rho}^k_h(\cdot,\cdot)}[ \|\PP_h(\cdot|s,a) - \hat{\PP}^k_h(\cdot|s,a) \|_1^2]$,
we can further prove that 
\begin{align*}
&\zeta_h^k \leq 4\EE_{(s,a)\sim \tilde{\rho}_h^k(\cdot,\cdot)} \|\PP_h(\cdot|s,a)-\cPS(\cdot) \tilde{\phi}_h^k(s,a)^\top \tilde{\psi}_h^k (\cdot)\|_{\TV}^2 \leq 16d\epsilon_k/(\CS)^2,\\
&\xi_h^k \leq 4\EE_{(s,a)\sim \breve{\rho}_h^k(\cdot,\cdot)} \|\PP_h(\cdot|s,a)-\cPS(\cdot) \tilde{\phi}_h^k(s,a)^\top \tilde{\psi}_h^k (\cdot)\|_{\TV}^2 \leq 16d\epsilon_k/(\CS)^2.
\end{align*}
Plugging in $\epsilon_k=2\log (2kH|\cF|/\delta)/k$ gives the desired results. Please see Appendix \ref{sec:proof-stat-err} for a detailed proof.

Based on Lemma \ref{lem:stat-err}, we give the analysis of Theorem \ref{thm:main}.

\vspace{5pt}
\noindent\textbf{Proof Sketch of Theorem \ref{thm:main}}. 
We first define that $\overline{V}_{k,h}^{\pi}$ is the value function on an auxiliary MDP defined by $\hat{\PP}^k$ and $r+\beta^k$.
Then we can decompose $V^{\pi^*}_1(s_1) - V^{\pi^k}_1(s_1)$ as 
\begin{align}
\begin{aligned}\label{eq:decomp-mdp-init-sketch}
&V^{\pi^*}_1(s_1) - V^{\pi^k}_1(s_1)= V^{\pi^*}_1(s_1) - \overline{V}^{\pi^*}_{k,1}(s_1)\\
&\quad\quad+ \overline{V}^{\pi^*}_{k,1}(s_1)- V^k_1(s_1) + V^k_1(s_1) - V^{\pi^k}_1(s_1) \\
&\quad\leq  \underbrace{V^{\pi^*}_1(s_1) - \overline{V}^{\pi^*}_{k,1}(s_1)}_{(i)} +\underbrace{\overline{V}_{k,1}^{\pi^k}(s_1) - V^{\pi^k}_1(s_1)}_{(ii)}, 
\end{aligned}
\end{align}
where the first inequality is by Lemma \ref{lem:plan} that $\overline{V}^{\pi^*}_{k,1}(s_1)\leq V^k_1(s_1)$ due to the value iteration step in Algorithm \ref{alg:contrastive}. Moreover, by the definition of $\overline{V}^k_h$ above, we known $\overline{V}^k_h = \overline{V}_{k,h}^{\pi^k}$ for any $h\in [H]$. Thus, we need to bound $(i)$ and $(ii)$.

To bound term $(i)$, by Lemma \ref{lem:diff1} and Lemma \ref{lem:expand1}, we have
\begin{align*}
(i) =  V_1^{\pi^*}(s_1) - V_1^{\pi^*}(s_1)  &\leq \sqrt{|\cA|  \zeta_1^k },
\end{align*}
which indicates a near-optimism \citep{uehara2021representation} with a bias $\sqrt{|\cA|  \zeta_1^k }\leq\tilde{\cO}(\sqrt{1/k})$ according to Lemma \ref{lem:stat-err}. This is guaranteed by adding a UCB bonus to the Q-function.

Term $(ii)$ basically reflects the model difference between the defined auxiliary MDP and the true MDP under the learned policy $\pi^k$. By Lemma \ref{lem:diff2} and Lemma \ref{lem:expand2}, we have that $(ii) \leq  [\sqrt{3d|\cA|\gamma_k^2  / k } +3H^2\sqrt{|\cA|  \zeta_1^k }] + \sum_{h=1}^{H-1} [\sqrt{3d  |\cA| \gamma_k^2  + 4H^2\lambda_k  d}+3H^2\sqrt{k  |\cA|  \zeta_{h+1}^k + 4\lambda_k  d}] \allowbreak\cdot \EE_{(s,a)\sim d^{\pi^k, \PP}_h}\|\phi^*_h(s,a)\|_{\Sigma_{\rho^k_h, \phi^*_h}^{-1}}$. In fact, we can bound the term $\sum_{k=1}^K\EE_{(s,a)\sim d^{\pi^k, \PP}_h}\|\phi^*_h(s,a)\|_{\Sigma_{\rho^k_h, \phi^*_h}^{-1}}\leq \tilde{O}(\sqrt{dK})$ by Lemma \ref{lem:logdet-tele}. According to Lemma \ref{lem:stat-err}, with high probability, we can bound $\zeta_h^k$ and $\xi_h^k$. Then, $\frac{1}{K}\sum_{k=1}^K (ii)\leq \tilde{O}(1/\sqrt{K})$ with polynomial dependence on $|\cA|, H, d$ by setting parameters as in Theorem \ref{thm:main}.

By \eqref{eq:decomp-mdp-init-sketch}, we have $\frac{1}{K} [V^{\pi^*}_1(s_1) - V^{\pi^k}_1(s_1)] \leq  \frac{1}{K}\sum_{k=1}^K [(i) + (ii)]$. Then, plugging in the upper bounds for terms $(i)$ and $(ii)$, setting the parameters $\gamma_k$ and $\lambda_k$ as in Theorem \ref{thm:main}, we obtain the desired bound.
Please see Appendix \ref{sec:proof-thm-main} for a detailed proof.

\subsection{Analysis for Markov Game} \label{sec:mg-theory}
We further have a transition recovery lemma for Algorithm \ref{alg:contrastive-mg} similar to Lemma \ref{lem:stat-err}. 

\begin{lemma}[Transition Recovery]\label{lem:stat-err-mg} After executing Algorithm \ref{alg:contrastive-mg} for $k$ rounds, with probability at least $1-2\delta$, 
\begin{align*}
&\zeta_h^k \leq 32d/(\CS)^2\cdot\log (2kH|\cF|/\delta)/k, \quad \forall h\geq 1,\\
&\xi_h^k \leq 32d/(\CS)^2\cdot\log (2kH|\cF|/\delta)/k, \quad \forall h\geq 2,
\end{align*}
where $\zeta_h^k$ and $\xi_h^k$ are defined as \eqref{eq:def-P-diff}.
\end{lemma}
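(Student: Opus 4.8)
The plan is to repeat the proof of Lemma~\ref{lem:stat-err} almost verbatim, the only changes being that the input $z=(s,a)$ is replaced by the joint input $z=(s,a,b)$ and that all sampling distributions are the Markov-game versions $\tilde{d}^{\sigma}_h$, $\breve{d}^{\sigma}_h$, $\tilde{\rho}^k_h$, $\breve{\rho}^k_h$ produced by Algorithm~\ref{alg:sample-mg}. The first step is to confirm realizability in this setting: since $\PP_h(s'|z)=\psi_h^*(s')^\top\phi_h^*(z)$ with $\|\phi_h^*(z)\|_2\le 1$ and $\|\psi_h^*(s')\|_2\le\sqrt d$ (Assumption~\ref{assump:low-rank}), the density-ratio target $f^*_h(z,s')=\PP_h(s'|z)/\cPS(s')$ from \eqref{eq:targt} again lies in the class $\cF$ of Definition~\ref{def:func-class} with $z=(s,a,b)$, because $\psi_h^*(\cdot)/\cPS(\cdot)\in\Psi$ (using $\cPS(\cdot)\ge\CS$ from Assumption~\ref{assump:negative}) and $\phi_h^*\in\Phi$.

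Next I would reduce the contrastive minimization \eqref{eq:solve-contra-loss} to a maximum-likelihood problem exactly as in the MDP sketch. Writing $f_h(z,s')=\psi(s')^\top\phi(z)$ and factoring $\Pr{}_h^f(y,s'|z)=\Pr{}_h^f(y|z,s')\,\Pr{}_h(s'|z)$, the marginal $\Pr{}_h(s'|z)=\frac12[\PP_h(s'|z)+\cPS(s')]\ge\frac12\CS>0$ is independent of $f$, so minimizing the loss in \eqref{eq:contra-loss} is equivalent to maximizing $\EE_{\cD_h^k}\log\Pr{}_h^f(y,s'|z)$, i.e. an MLE for the joint law of $(y,s')$ given $z$. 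I would then invoke the MLE guarantee Lemma~\ref{lem:recover-mle}. The point to check is that Algorithm~\ref{alg:sample-mg} still produces mutually independent samples inside each $\cD_h^k$ drawn from $\tilde{d}^{\sigma^{k-1}}_h$ and $\breve{d}^{\sigma^{k-1}}_h$; granting this, the lemma yields, with probability at least $1-2\delta$, $\EE_{z\sim\tilde{\rho}_h^k}\|\Pr{}_h^{\hat{f}^k}(\cdot,\cdot|z)-\Pr{}_h^{f^*}(\cdot,\cdot|z)\|_{\TV}^2\le\epsilon_k$ and the analogous bound under $\breve{\rho}_h^k$, with $\epsilon_k=2\log(2kH|\cF|/\delta)/k$ and $\hat{f}_h^k(z,s')=\tilde\psi_h^k(s')^\top\tilde\phi_h^k(z)$.

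Finally I would convert the total-variation bound on the joint $(y,s')$ distribution into the transition-recovery bound. Expanding $\Pr{}_h^{\hat{f}^k}-\Pr{}_h^{f^*}$ and again using $\Pr{}_h(s'|z)\ge\frac12\CS$ transfers the error to $\EE_{z\sim\tilde{\rho}_h^k}\|\PP_h(\cdot|z)-\cPS(\cdot)\tilde\phi_h^k(z)^\top\tilde\psi_h^k(\cdot)\|_{\TV}^2\le 4d\epsilon_k/(\CS)^2$, and likewise under $\breve{\rho}_h^k$, where the factor $d$ comes from the norm bounds in Definition~\ref{def:func-class}. Setting $\hat g_h^k(z,s'):=\cPS(s')\tilde\phi_h^k(z)^\top\tilde\psi_h^k(s')$ and normalizing to $\hat\PP_h^k(s'|z)=\hat g_h^k(z,s')/\|\hat g_h^k(z,\cdot)\|_1$ (which matches \eqref{eq:normalization}), a standard normalization and triangle-inequality step inflates the error by at most a factor of $4$, giving $\zeta_h^k,\xi_h^k\le 16d\epsilon_k/(\CS)^2$ as in \eqref{eq:def-P-diff}; plugging in $\epsilon_k$ yields the claimed bound $32d/(\CS)^2\cdot\log(2kH|\cF|/\delta)/k$.

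The only genuinely new ingredient relative to the MDP case, and the step I expect to require the most care, is the non-stationarity of the Markov-game sampling: the behavior policy $\sigma^{k-1}$ is an $\iota_{k-1}$-CCE (Definition~\ref{def:cce}) rather than a deterministic greedy policy, and the two players act in a correlated way, so $\tilde{d}^{\sigma}_h$ and $\breve{d}^{\sigma}_h$ must be shown to be well defined for a correlated joint policy. The crux is therefore to verify that this correlation does not break the within-dataset independence required by Lemma~\ref{lem:recover-mle}. Once Algorithm~\ref{alg:sample-mg} is shown to deliver independent draws from these distributions, the MLE analysis is agnostic to how $\sigma^{k-1}$ was computed, and the remainder of the argument is identical to that of Lemma~\ref{lem:stat-err}.
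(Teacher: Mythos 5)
Your proposal is correct and follows essentially the same route as the paper's proof: reduce the contrastive loss to an MLE over the joint law of $(y,s')$ given $z=(s,a,b)$ using $\Pr_h(s'|z)=\frac12[\PP_h(s'|z)+\cPS(s')]\ge\frac12\CS$, apply Lemma \ref{lem:recover-mle} to the data from Algorithm \ref{alg:sample-mg}, transfer the TV bound to $\|\PP_h(\cdot|z)-\cPS(\cdot)\tilde\phi_h^k(z)^\top\tilde\psi_h^k(\cdot)\|_{\TV}^2$, and lose a factor of $4$ in the normalization step, exactly matching the constants $8\to16\to32$. The paper itself treats this lemma as a verbatim extension of Lemma \ref{lem:stat-err} with $\cA$ replaced by $\cA\times\cB$, and your added observation about the correlated joint policy $\sigma^{k-1}$ is handled the same way there: the MLE guarantee only needs the conditional sampling structure of Algorithm \ref{alg:sample-mg}, not any property of how $\sigma^{k-1}$ was computed.
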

The proof idea for Lemma \ref{lem:stat-err-mg} is nearly identical to the one for Lemma \ref{lem:stat-err} with extending the action space from $\cA$ to $\cA\times \cB$. We defer the proof to Appendix \ref{sec:proof-stat-err-mg}. Based on Lemma \ref{lem:stat-err-mg}, we further give the analysis of Theorem \ref{thm:main-mg}.

\vspace{5pt}
\noindent\textbf{Proof Sketch of Theorem \ref{thm:main-mg}.} We define two auxiliary MGs respectively by reward function $r+\beta^k$ and transition model $\hat{\PP}^k$, and $r-\beta^k$, $\hat{\PP}^k$. 
Then, for any joint policy $\sigma$, let $\overline{V}_{k,h}^{\sigma}$ and $\underline{V}_{k,h}^{\sigma}$ be the associated value functions on the two auxiliary MGs respectively. Recall that $\overline{V}_h^k$ and $\underline{V}_h^k$ are generated by Algorithm \ref{alg:contrastive-mg}. 
We then decompose $V_1^{\mathrm{br}(\nu^k), \nu^k}(s_1) -  V_1^{\pi^k, \mathrm{br}(\pi^k)}(s_1)$ as follows
\begin{align}
%\begin{aligned} 
&    V_1^{\mathrm{br}(\nu^k), \nu^k}(s_1) -  V_1^{\pi^k, \mathrm{br}(\pi^k)}(s_1) =  \underbrace{V_1^{\sigma_\nu^k}(s_1) - \overline{V}_{k,1}^{\sigma_\nu^k}(s_1)}_{(i)} \nonumber \\[-0.3\baselineskip]
&~~\qquad + \underbrace{\overline{V}_{k,1}^{\sigma_\nu^k}(s_1) - \overline{V}_1^k(s_1)}_{(ii)} + \underbrace{\overline{V}_1^k(s_1) - \underline{V}_1^k(s_1)}_{(iii)} \label{eq:decomp-mg-init-sketch} \\[-0.3\baselineskip]
&~~\qquad + \underbrace{\underline{V}_1^k(s_1)-  \underline{V}_{k,1}^{\sigma_\pi^k}(s_1)}_{(iv)} + \underbrace{\underline{V}_{k,1}^{\sigma_\pi^k}(s_1) - V_1^{\sigma_\pi^k}(s_1)}_{(v)}. \nonumber
%\end{aligned}
\end{align}
Here we let $\sigma_\nu^k:=(\mathrm{br}(\nu^k), \nu^k)$ and $\sigma_\pi^k := (\pi^k, \mathrm{br}(\pi^k))$ for abbreviation. Terms $(ii)$ and $(iv)$ depict the planning error on the two auxiliary MGs, which is guaranteed to be small by finding $\iota_k$-CCE in Algorithm \ref{alg:contrastive-mg}. Thus, by Lemma \ref{lem:plan-mg}, we have
\begin{align*}
(ii)\leq H\iota_k, \quad  (iv)\leq H\iota_k,
\end{align*}
which can be controlled by setting a proper value to $\iota_k$ as in Theorem \ref{thm:main-mg}.

Moreover, by Lemma \ref{lem:diff1-mg} and Lemma \ref{lem:expand1-mg}, we obtain
\begin{align*}
(i) \leq \sqrt{|\cA||\cB|  \zeta_1^k }, \quad (v) \leq \sqrt{|\cA||\cB|  \zeta_1^k }, 
\end{align*}
which is guaranteed by the design of ULCB-type Q-functions with the bonus term in our algorithm. Thus we obtain the near-optimism and near-pessimism properties for terms $(i)$ and $(v)$ respectively.

Term $(iii)$ is the model difference between the two auxiliary MGs under the learned joint policy $\sigma^k$. By Lemma \ref{lem:diff2-mg} and Lemma \ref{lem:expand2-mg}, we have that $(iii) \leq  [2\sqrt{3d|\cA|\gamma_k^2  / k } +6H^2\sqrt{|\cA|  \zeta_1^k }] + \sum_{h=1}^{H-1} [2\sqrt{3d  |\cA| \gamma_k^2  + 4H^2\lambda_k  d}+6H^2\sqrt{k  |\cA|  \zeta_{h+1}^k + 4\lambda_k  d}] \allowbreak\cdot \EE_{d^{\sigma^k, \PP}_h}\|\phi^*_h\|_{\Sigma_{\rho^k_h, \phi^*_h}^{-1}}$. Furthermore, we obtain that $\sum_{k=1}^K\EE_{d^{\sigma^k, \PP}_h}\|\phi^*_h\|_{\Sigma_{\rho^k_h, \phi^*_h}^{-1}}\leq \tilde{O}(\sqrt{dK})$  by Lemma \ref{lem:logdet-tele}. According to Lemma \ref{lem:stat-err-mg} for the contrastive learning, with high probability, we can bound $\frac{1}{K}\sum_{k=1}^K (iii)\leq \tilde{O}(1/\sqrt{K})$ under the same conditions in Theorem \ref{thm:main}.  

According to \eqref{eq:decomp-mg-init-sketch}, we have $\frac{1}{K} \sum_{k=1}^K [V_1^{\mathrm{br}(\nu^k), \nu^k}(s_1) -  V_1^{\pi^k, \mathrm{br}(\pi^k)}(s_1)] \leq  \frac{1}{K}\sum_{k=1}^K [(i) + (ii) + (iii) + (iv) + (v)]$. Thus, plugging in the above upper bounds for terms $(i), (ii), (iii), (iv)$, and $(v)$, setting the parameters $\iota_k$, $\gamma_k$ and, $\lambda_k$ as in Theorem \ref{thm:main}, we get the desired result.
Please see Appendix \ref{sec:proof-thm-main-mg} for a detailed proof.

\section{Proof of Concept Experiments}

% \section{Experiments}
% {\color{red}
% \st{In this section, we present the experimental justification of our proposed UCB-type exploration method.} 

In this section, we present the experimental justification of the UCB-based exploration in practice inspired by our theory. %The codes are available at \url{https://github.com/Baichenjia/Contrastive-UCB}.
% }
\subsection{Implementation of Bonus}

%Our goal is to examine whether our proposed contrastive UCB enhances the exploration of RL algorithms. To this end, we adopt SPR \citep{spr-2021} for the representation learning and further implement our proposed contrastive UCB based on the learned representation for the exploration. 

% \vskip4pt
\textbf{Representation Learning with SPR.} Our goal is to examine whether the proposed UCB bonus practically enhances the exploration of deep RL algorithms with contrastive learning. To this end, we adopt the SPR method \citep{spr-2021}, 
% {\color{red} 
the state-of-the-art RL approach with contrastive learning on the benchmark Atari 100K \citep{kaiser2019model}. SPR utilizes the temporal information and learns the representation via maximizing the similarity between the future state representations and the corresponding predicted next state representations based on the observed state and action sequences.
% } 
The representation learning under the framework of SPR is different from the proposed representation learning from the following aspects:
% }
% The representation learning under the framework of SPR empirically makes our proposed method more practical from the following aspects:
% } 
\textbf{(1)} SPR considers multi-step consistency in addition to the one-step prediction of our proposed contrastive objective, namely, SPR incorporates the information of multiple steps ahead of $(s_h, a_h)$ in the representation $\hat\phi(s_h, a_h)$.
%More specifically, for a given trajectory $\{s_h, a_h, \ldots, s_{h+k-1}, a_{h+k-1}, s_{h+k}\}$, SPR maximizes the similarity between the predicted representation $\tilde z_{h+t}$ and the corresponding representation $\hat z_{h+t}$ of $s_{h+t}$ based on $s_h$ and $a_h, \ldots, a_{h+t-1}$ for all $t\in[k]$. 
Although representation learning with one-step prediction is sufficient according to our theory, such a multi-step approach further enhances the temporal consistency of the learned representation empirically. Similar techniques also arise in various empirical studies \citep{infonce-2018, guo2018neural}. \textbf{(2)} SPR utilizes the cosine similarity to maximize the similarity of state-action representations and the embeddings of the corresponding next states. 
% {\color{red}
% \st{,which is different from the contrastive learning objective. }
% }
We remark that we adopt the architecture of SPR as an empirical simplification to our proposed contrastive objective, which does not require explicit negative sampling and the corresponding parameter tuning \citep{spr-2021}. 
% {\color{red}
This leads to better computational efficiency and avoidance of defining an improper negative sampling distribution.
% }
In addition, we remark that the representations obtained from SPR contain sufficient temporal information of the transition dynamics required for exploration, as shown in our experiments.

\vspace{5pt}
\noindent\textbf{Architecture and UCB Bonus.} 
% {\color{red} 
% \st{In our experiments, we adopt the same architecture as SPR.}  
In our experiments, we adopt the same architecture as SPR. We further construct the UCB bonus based on SPR and propose the SPR-UCB method.
% } 
%Both our experiment and SPR adopt data-efficient Rainbow \citep{rainbow-2019} as the architecture of the Q-network. 
In particular, we adopt the same hyper-parameters as that of SPR \citep{spr-2021}. Meanwhile, we adopt the last layer of the Q-network as our learned representation $\widehat{\phi}$
% {\color{red} 
which is linear in the estimated Q-function.
% } 
%We further utilize the learned representation to compute the UCB-bonus.
%, which we introduce in the sequel.
In the training stage, we update the empirical covariance matrix $\widehat{\Sigma}_h^k\in\mathbb{R}^{d\times d}$ by adding the feature covariance $\widehat{\phi}(s_h^k,a_h^k)\widehat{\phi}(s_h^k,a_h^k)^\top$ over the sampled transition tuples $\{(s_h^k,a_h^k, s_h^{k+1})\}_{h\in[H]}$ from the replay buffer, where $\widehat{\phi}\in\mathbb{R}^{d\times 1}$ is the learned representation from the Q-network of SPR. %{\color{red} 
The transition data is sampled from the interaction history.
%}
The bonus for the state-action pair $(s,a)$ is calculated by $\beta^k(s,a)=\gamma_k\cdot[\widehat{\phi}(s,a)^\top (\widehat{\Sigma}_h^k)^{-1} \widehat{\phi}(s,a)]^{\frac{1}{2}}$, where we set the hyperparameter $\gamma_k = 1$ for all iterations $k\in[K]$. 
%We remark that in our experiment, the dimension of representation is $d=51$, which does incur much additional computation cost in inverting the covariance matrix in the training loop. 
Upon computing the bonus for each state-action pair of the sampled transition tuples from the replay buffer, we follow our proposed update in Algorithm \ref{alg:contrastive_RL} and add the bonus on the target of Q-functions in fitting the Q-network. 

\begin{figure}[t]
\centering
\includegraphics[width=0.73\columnwidth]{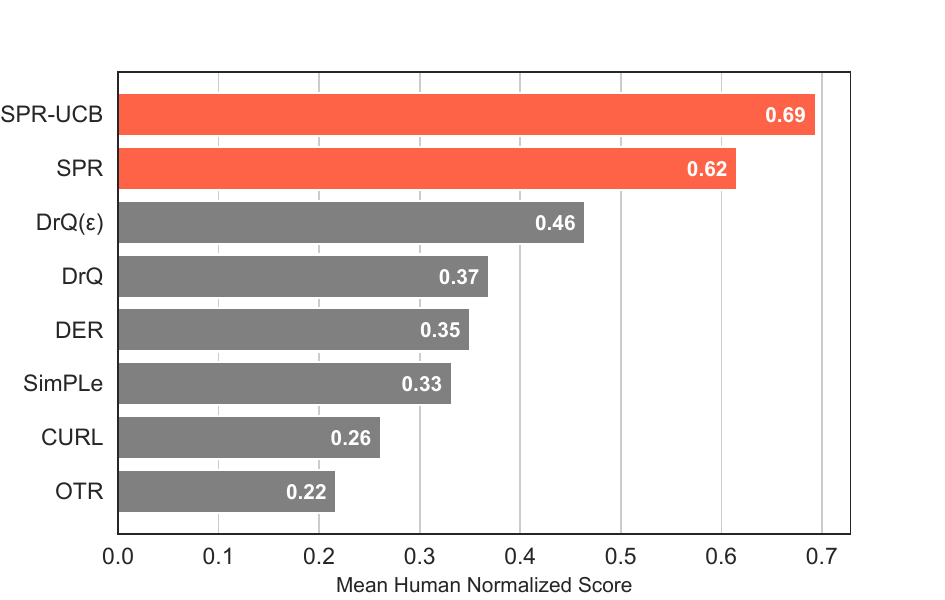}
\caption{Mean human-normalized score in Atari-100K benchmark. The results of baseline algorithms are adopted from \citet{agarwal2021deep}. We observe that SPR-UCB outperforms SPR and other baseline algorithms.}
\label{fig:barfig}
\end{figure}

\subsection{Environments and Baselines}
In our experiments, we use Atari 100K \citep{kaiser2019model} benchmark for evaluation, which contains 26 Atari games from various domains. The benchmark Atari 100K only allows the agent to interact with the environment for 100K steps. Such a setup aims to test the sample efficiency of RL algorithms.
%(i.e., 400K frames with $4$ frameskip), which are much more scarce than the typical Atari settings (e.g., 200M frames for DQN). 

We compare the SPR-UCB method with several baselines in Atari 100K benchmark, including \textbf{(1)} SimPLe \citep{kaiser2019model}, which learns a environment model based on the video prediction task and trains a policy under the learned model; \textbf{(2)} DER \citep{rainbow-2019} and \textbf{(3)} OTR \citep{kielak2020do}, which improve Rainbow \citep{rainbow-2019} to perform sample-efficient model-free RL; \textbf{(4)} CURL \citep{curl-2020}, which incorporates contrastive learning based on data augmentation; \textbf{(5)} DrQ \citep{drq-2021}, which directly utilizes data augmentation based on the image observations; and \textbf{(6)} SPR \citep{spr-2021}, which learns temporal consistent representation for model-free RL. 
% {\color{red}
% \st{We name our proposed algorithm \textbf{(7)} SPR-UCB in the comparison.}} 
For all methods, we calculate the human normalized score by $\frac{\rm agent\:score-random\:score}{\rm human\:score-random\:score}$. In our experiments, we run the proposed SPR-UCB over 10 different random seeds. 

\subsection{Result Comparison}

We illustrate the aggregated mean of human normalized scores among all tasks in Figure \ref{fig:barfig}. We report the score for each task in Appendix \ref{app:exp}. In our experiments, we observe that \textbf{(1)} Both SPR and SPR-UCB outperform baselines that do not learn temporal consistent representations significantly, including DER, OTR, SimPLe, CURL, and DrQ.  \textbf{(2)} By incorporating the UCB bonus, SPR-UCB outperforms SPR. In addition, we remark that SPR-UCB outperforms SPR significantly in challenging environments including \emph{Boxing}, \emph{Freeway}, \emph{Frostbite}, \emph{KungfuMaster}, \emph{PrivateEye}, and \emph{RoadRunner}. Please see Appendix \ref{app:exp} for the details.

\vspace{-0.1cm}
\section{Conclusion}
\vspace{-0.1cm}
We study contrastive-learning empowered RL for MDPs and MGs with low-rank transitions. We propose novel online RL algorithms that incorporate such a contrastive loss with temporal information for MDPs or MGs. We further theoretically prove that our algorithms recover the true representations and simultaneously achieve sample efficiency in learning the optimal policy and Nash equilibrium in MDPs and MGs respectively. We also provide empirical studies to demonstrate the efficacy of %{\color{red} \st{our algorithm.} 
the UCB-based contrastive learning method for RL. 
%} 
To the best of our knowledge, we provide the first provably efficient online RL algorithm that incorporates contrastive learning for representation learning.

\vspace{-0.2cm}
\section*{Acknowledgements}

The authors would like to thank all reviewers for their valuable comments. The authors would also like to thank Sirui Zheng for helpful discussions. The contribution from Chenjia Bai was made during his time as a visiting student at the University of Toronto (Vector Institute for Artificial Intelligence), working with Animesh Garg. The theory, methods, and codes developed in this paper are shared publicly without any proprietary or other restrictions.

% \newpage
\bibliography{example_paper}
\bibliographystyle{icml2022}

%%%%%%%%%%%%%%%%%%%%%%%%%%%%%%%%%%%%%%%%%%%%%%%%%%%%%%%%%%%%%%%%%%%%%%%%%%%%%%%
%%%%%%%%%%%%%%%%%%%%%%%%%%%%%%%%%%%%%%%%%%%%%%%%%%%%%%%%%%%%%%%%%%%%%%%%%%%%%%%
% DELETE THIS PART. DO NOT PLACE CONTENT AFTER THE REFERENCES!
%%%%%%%%%%%%%%%%%%%%%%%%%%%%%%%%%%%%%%%%%%%%%%%%%%%%%%%%%%%%%%%%%%%%%%%%%%%%%%%
%%%%%%%%%%%%%%%%%%%%%%%%%%%%%%%%%%%%%%%%%%%%%%%%%%%%%%%%%%%%%%%%%%%%%%%%%%%%%%%
% \appendix
\newpage
\begin{appendices}
	\onecolumn
\vspace{1em}
%\centerline{ {\LARGE Supplementary Material} }
\renewcommand{\thesection}{\Alph{section}}

\vspace{0.5em}

\section{Sampling Algorithms} \label{sec:omitted}

% \vspace{-0.6cm}
\begin{algorithm}[h]\caption{Contrastive Data Sampling for Single-Agent MDPs} 
   \begin{footnotesize}
%   \setstretch{0.6}
	\begin{algorithmic}[1]
% 		\FUNCTION{\texttt{Sample}} 
    	        \For{step $h=1,\ldots, H-1$}
    	            \State Sample $(\tilde{s}_h^k,\tilde{a}_h^k)\sim \tilde{d}_h^{\pi^{k-1}}(\cdot,\cdot)$, $\tilde{s}_{h+1}^k \sim \PP_h(\cdot|\tilde{s}_h^k,\tilde{a}_h^k)$
    	            
    	            \State Let $\breve{s}_{h+1}^k = \tilde{s}_{h+1}^k$. Sample $\breve{a}_{h+1}^k \sim \Unif(\cA)$, $\breve{s}_{h+2}^k \sim \PP_{h+1}(\cdot|\breve{s}_{h+1}^k,\breve{a}_{h+1}^k)$, and $y_h^k\sim \Ber(1/2)$  
                    \State $\tilde{\cD}_h^k = \tilde{\cD}_h^{k-1} \cup \{(\tilde{s}_h^k,\tilde{a}_h^k)\}$.
        	        \If{$y_h^k=1$} 
        	        \State $\cD_h^k  = \cD_h^{k-1} \cup \{(\tilde{s}_h^k,\tilde{a}_h^k,\tilde{s}_{h+1}^k, 1)\}$.
        	        \State $\cD_{h+1}^k  = \cD_{h+1}^{k-1} \cup \{(\breve{s}_{h+1}^k,\breve{a}_{h+1}^k,\breve{s}_{h+2}^k, 1)\}$.
        	        \ElsIf{$y_h^k=0$}
        	        \State Sample negative transition $\tilde{s}_{h+1}^{k,-},\breve{s}_{h+2}^{k,-}\sim\cPS(\cdot)$.
        	        \State $\cD_h^k  = \cD_h^{k-1} \cup \{(\tilde{s}_h^k,\tilde{a}_h^k,\tilde{s}_{h+1}^{k,-}, 0)\}$.
        	        \State $\cD_{h+1}^k  = \cD_{h+1}^{k-1} \cup \{(\breve{s}_{h+1}^k,\breve{a}_{h+1}^k,\breve{s}_{h+2}^{k,-}, 0)\}$.
        	        \EndIf
    		            
    	        \EndFor
    	\State $(\tilde{s}_H^k,\tilde{a}_H^k)\sim \tilde{d}_H^{\pi^{k-1}}(\cdot,\cdot)$, $\tilde{s}_{H+1}^k \sim \PP_h(\cdot|\tilde{s}_H^k,\tilde{a}_H^k)$, and $y_H^k\sim \Ber(1/2)$  
                    \State $\tilde{\cD}_H^k = \tilde{\cD}_H^{k-1} \cup \{(\tilde{s}_H^k,\tilde{a}_H^k)\}$.	        
 					\If{$y_H^k=1$} 
        	        \State $\cD_H^k  = \cD_H^{k-1} \cup \{(\tilde{s}_H^k,\tilde{a}_H^k,\tilde{s}_{H+1}^k, 1)\}$.
        	        \ElsIf{$y_H^k=0$}
        	        \State Sample negative transition $\tilde{s}_{H+1}^{k,-}\sim\cPS(\cdot)$.
        	        \State $\cD_H^k  = \cD_H^{k-1} \cup \{(\tilde{s}_H^k,\tilde{a}_H^k,\tilde{s}_{H+1}^{k,-}, 0)\}$.
        	        \EndIf
        \State {\bfseries return} $\{\cD_h^k\}_{h=1}^H$ and $\{\tilde{\cD}_h^k\}_{h=1}^H$.      
    % \ENDFUNCTION
	\end{algorithmic}\label{alg:sample}
	\end{footnotesize}
\end{algorithm}

 \begin{algorithm}[!h]\caption{Contrastive Data Sampling for Markov Games} 
%   \setstretch{0.6}
   \begin{footnotesize}
	\begin{algorithmic}[1]
% 		\FUNCTION{\texttt{Sample}} 
    	       	        \For{step $h=1,\ldots, H-1$}
    	            \State Sample $(\tilde{s}_h^k,\tilde{a}_h^k,\tilde{b}_h^k)\sim \tilde{d}_h^{\pi^{k-1}}(\cdot,\cdot,\cdot)$, $\tilde{s}_{h+1}^k \sim \PP_h(\cdot|\tilde{s}_h^k,\tilde{a}_h^k,\tilde{b}_h^k)$
    	            
    	            \State Let $\breve{s}_{h+1}^k = \tilde{s}_{h+1}^k$. Sample $\breve{a}_{h+1}^k \sim \Unif(\cA)$,$\breve{b}_{h+1}^k \sim \Unif(\cB)$, $\breve{s}_{h+2}^k \sim \PP_{h+1}(\cdot|\breve{s}_{h+1}^k,\breve{a}_{h+1}^k,\breve{b}_{h+1}^k)$, $y_h^k\sim \Ber(1/2)$  
                    \State $\tilde{\cD}_h^k = \tilde{\cD}_h^{k-1} \cup \{(\tilde{s}_h^k,\tilde{a}_h^k, \tilde{b}_h^k)\}$.
        	        \If{$y_h^k=1$} 
        	        \State $\cD_h^k  = \cD_h^{k-1} \cup \{(\tilde{s}_h^k,\tilde{a}_h^k,\tilde{b}_h^k,\tilde{s}_{h+1}^k, 1)\}$.
        	        \State $\cD_{h+1}^k  = \cD_{h+1}^{k-1} \cup \{(\breve{s}_{h+1}^k,\breve{a}_{h+1}^k,\breve{b}_{h+1}^k,\breve{s}_{h+2}^k, 1)\}$.
        	        \ElsIf{$y_h^k=0$}
        	        \State Sample negative transition $\tilde{s}_{h+1}^{k,-},\breve{s}_{h+2}^{k,-}\sim\cPS(\cdot)$.
        	        \State $\cD_h^k  = \cD_h^{k-1} \cup \{(\tilde{s}_h^k,\tilde{a}_h^k,\tilde{b}_h^k,\tilde{s}_{h+1}^{k,-}, 0)\}$.
        	        \State $\cD_{h+1}^k  = \cD_{h+1}^{k-1} \cup \{(\breve{s}_{h+1}^k,\breve{a}_{h+1}^k,\breve{b}_{h+1}^k,\breve{s}_{h+2}^{k,-}, 0)\}$.
        	        \EndIf
    		            
    	        \EndFor
    	\State Sample $(\tilde{s}_H^k,\tilde{a}_H^k,\tilde{b}_H^k)\sim \tilde{d}_H^{\pi^{k-1}}(\cdot,\cdot,\cdot)$, $\tilde{s}_{H+1}^k \sim \PP_h(\cdot|\tilde{s}_H^k,\tilde{a}_H^k,\tilde{b}_H^k)$, and $y_H^k\sim \Ber(1/2)$  
                    \State $\tilde{\cD}_H^k = \tilde{\cD}_H^{k-1} \cup \{(\tilde{s}_H^k,\tilde{a}_H^k,\tilde{b}_H^k)\}$.	        
 					\If{$y_H^k=1$} 
        	        \State $\cD_H^k  = \cD_H^{k-1} \cup \{(\tilde{s}_H^k,\tilde{a}_H^k,\tilde{b}_H^k,\tilde{s}_{H+1}^k, 1)\}$.
        	        \ElsIf{$y_H^k=0$}
        	        \State Sample negative transition $\tilde{s}_{H+1}^{k,-}\sim\cPS(\cdot)$.
        	        \State $\cD_H^k  = \cD_H^{k-1} \cup \{(\tilde{s}_H^k,\tilde{a}_H^k,\tilde{b}_H^k,\tilde{s}_{H+1}^{k,-}, 0)\}$.
        	        \EndIf
        \State {\bfseries return} $\{\cD_h^k\}_{h=1}^H$ and $\{\tilde{\cD}_h^k\}_{h=1}^H$.      
    % \ENDFUNCTION
	\end{algorithmic}\label{alg:sample-mg}
	\end{footnotesize}
\end{algorithm}
\setlength{\textfloatsep}{0pt}
\vspace{-0.5cm}

\section{Notation} \label{sec:tab_notation}
We present the following table of notations. We denote by $\sigma$ an arbitrary joint policy. If the joint policy $\sigma$ is equivalent to a product of two separate policies for each player, i.e., $\sigma(a,b|s) = \pi(a|s)\times \nu(b|s)$, then we can replace $\sigma$ by $\pi, \nu$.

% \vspace{-0.5cm}
\begin{table}[!h]
\caption{Table of Notation}
\vspace{0.2cm}
\centering
\renewcommand*{\arraystretch}{1.2}
\begin{footnotesize}
\begin{tabular}{ >{\centering\arraybackslash}m{2.2cm} | >{\centering\arraybackslash}m{13.5cm} } 
\hline\hline
Notation & Meaning \\ 
\hline

$d^{\pi}_h(s)$ & state probability at step $h$ under the true transition $\PP$ and a policy $\pi$\\ 

$d^{\pi}_h(s,a)$ & state-action probability at step $h$ under the true transition $\PP$ and a policy $\pi$\\ 

$\tilde{d}^{\pi}_h(s,a)$ & $\tilde{d}^{\pi}_h(s,a) := d^{\pi}_h(s)\Unif(a)$ \\ 

$\breve{d}^{\pi}_h(s,a)$ & $\breve{d}^{\pi}_h(s,a) := \tilde{d}^{\pi}_{h-1}(s',a')\PP_{h-1}(s|s',a')\Unif(a)$ \\

$\rho^k_h(s,a)$ & $\rho^k_h(s,a) := 1/k\cdot\sum_{k'=0}^{k-1} d^{\pi^{k'}}_h(s,a)$ \\

$\tilde{\rho}^k_h(s,a)$ & $\tilde{\rho}^k_h(s,a) := 1/k\cdot\sum_{k'=0}^{k-1} \tilde{d}^{\pi^{k'}}_h(s,a)$ \\

$\breve{\rho}^k_h(s,a)$ & $\breve{\rho}^k_h(s,a) := 1/k\cdot\sum_{k'=0}^{k-1} \breve{d}^{\pi^{k'}}_h(s,a)$ \\

$\Sigma_{\rho, \phi}$ & covariance matrix defined as $k\cdot \EE_{(s,a)\sim\rho^k_h(\cdot,\cdot)}\left[\phi(s,a)\phi(s,a)^\top \right]+\lambda_k I$ for any $\rho$ and $\phi$\\

\hline
$d^{\sigma}_h(s)$ & state probability at step $h$ under the true transition $\PP$ and a joint policy $\sigma$\\ 

$d^{\sigma}_h(s,a,b)$ & state-action probability at step $h$ under the true transition $\PP$ and a joint policy $\sigma$\\ 

$\tilde{d}^{\sigma}_h(s,a,b)$ & $\tilde{d}^{\sigma}_h(s,a,b) := d^{\sigma}_h(s)\Unif(a)\Unif(b)$ \\ 

$\breve{d}^{\sigma}_h(s,a,b)$ & $\breve{d}^{\sigma}_h(s,a,b) := \tilde{d}^{\sigma}_{h-1}(s',a',b')\PP_{h-1}(s|s',a',b')\Unif(a)\Unif(b)$ \\

$\rho^k_h(s,a,b)$ & $\rho^k_h(s,a,b) := 1/k\cdot\sum_{k'=0}^{k-1} d^{\sigma^{k'}}_h(s,a,b)$ \\

$\tilde{\rho}^k_h(s,a,b)$ & $\tilde{\rho}^k_h(s,a,b) := 1/k\cdot\sum_{k'=0}^{k-1} \tilde{d}^{\sigma^{k'}}_h(s,a,b)$ \\

$\breve{\rho}^k_h(s,a,b)$ & $\breve{\rho}^k_h(s,a,b) := 1/k\cdot\sum_{k'=0}^{k-1} \breve{d}^{\sigma^{k'}}_h(s,a,b)$ \\

$\Sigma_{\rho, \phi}$ & covariance matrix defined as $k\cdot \EE_{(s,a,b)\sim\rho(\cdot,\cdot,\cdot)}\left[\phi(s,a,b)\phi(s,a,b)^\top \right]+\lambda_k I$\\
\hline
$V_h^\pi, Q_h^\pi$ & value and Q-functions at step $h$ under the policy $\pi$ and the true transition and reward  $\PP, r$\\ 

$\overline{V}_h^k, \overline{Q}_h^k$ & value and Q-functions generated in Lines 11 and 12 of Algorithm \ref{alg:contrastive}\\ 

$\overline{V}_{k,h}^\pi, \overline{Q}_{k,h}^\pi$ & value and Q-functions at step $h$ on the auxiliary MDP defined by $r+\beta^k$ and $\hat{\PP}^k$ \\ 

\hline
$V_h^\sigma, Q_h^\sigma$ & value and Q-functions at step $h$ under the joint policy $\sigma$ and the true transition and reward  $\PP, r$\\ 

$\overline{V}_h^k, \overline{Q}_h^k$ & value and Q-functions generated in Lines 11 and 13 of Algorithm \ref{alg:contrastive-mg}\\ 

$\underline{V}_h^k, \underline{Q}_h^k$ & value and Q-functions generated in Lines 12 and 14 of Algorithm \ref{alg:contrastive-mg}\\ 

$\overline{V}_{k,h}^\sigma, \overline{Q}_{k,h}^\sigma$ & value and Q-functions at step $h$ on the auxiliary MG defined by $r+\beta^k$ and $\hat{\PP}^k$ \\ 

$\underline{V}_{k,h}^\sigma, \underline{Q}_{k,h}^\sigma$ & value and Q-functions at step $h$ on the auxiliary MG defined by $r-\beta^k$ and $\hat{\PP}^k$ \\ 

\hline
$\Unif(\cA), \Unif(\cB)$ & uniform distribution over spaces $\cA$ or $\cB$\\ 

$\Unif(a), \Unif(b)$ & probabilities for the above distributions: $\Unif(a) = 1/|\cA|$ and $\Unif(b)= 1/|\cB|$\\ 

$\|\cdot\|_{\TV}$ & total variation distance \\ 

$\|\cdot\|_1$ & define $\|f\|_1 := \int_x |f(x)| \mathrm{d}x$ \\ 
\hline \hline
\end{tabular}
\end{footnotesize}
\label{tab:notation}
\end{table}

Moreover, in this appendix, we make s simplification to our notation, which is 
\begin{align*}
    \EE_{p}\left\|\phi\right\|_{\Sigma^{-1}}:=\EE_{z\sim p(\cdot,\cdot)}\left\|\phi(z)\right\|_{\Sigma^{-1}},
\end{align*}
where $z = (s,a)$ for the single-agent MDP setting and $z=(s,a,b)$ for the Markov game setting. Moreover, $p$ denotes some distribution for $z$ and $\phi(z)\in \RR^d$ is some representation of $z$. And $\Sigma$ denotes some invertible covariance matrix.

\section{Theoretical Analysis for Single-Agent MDP}

\subsection{Lemmas}

\begin{lemma}[Learning Target of Contrastive Loss]\label{lem:opt-contra-loss} For any $(s,a)\in \cS\times\cA$ that is reachable under certain sampling strategy, the learning target of the contrastive loss in \eqref{eq:contra-loss} is
\begin{align*}
    f_h^*(s,a,s') = \frac{\PP_h(s'|s,a)}{\cPS(s')}.
\end{align*}

\end{lemma}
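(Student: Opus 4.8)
The plan is to compute the population minimizer of the contrastive loss and show it coincides with $\PP_h(s'|s,a)/\cPS(s')$. First I would pass from the empirical loss $\cL_h(\psi,\phi;\cD_h^k)$ to its population counterpart using the data-generating mechanism of Algorithm \ref{alg:sample}: conditioned on a pair $(s,a)$ that is reachable under the sampling distribution, a label $y\sim\Ber(1/2)$ is drawn, and then $s'\sim\PP_h(\cdot|s,a)$ when $y=1$ while $s'\sim\cPS(\cdot)$ when $y=0$. As recorded in Section \ref{sec:alg-mdp}, writing $f_h(s,a,s')=\psi(s')^\top\phi(s,a)$, the loss equals the negative log-likelihood $-\EE[\log\Pr_h^f(y|s,a,s')]$, where $\Pr_h^f(1|s,a,s')=f_h/(1+f_h)$ and $\Pr_h^f(0|s,a,s')=1/(1+f_h)$.

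Next I would rewrite this population loss as an expectation over $(s,a,s')$ of a pointwise cross-entropy. Writing the joint law of $(s',y)$ given $(s,a)$ and marginalizing out $y$, the $s'$-marginal is $m(s'|s,a)=\frac{1}{2}\PP_h(s'|s,a)+\frac{1}{2}\cPS(s')$, which is strictly positive by Assumption \ref{assump:negative}, and Bayes' rule gives the posterior $\Pr(y=1|s,a,s')=\PP_h(s'|s,a)/(\PP_h(s'|s,a)+\cPS(s'))$. The population loss then takes the form $\EE_{(s,a)}\int m(s'|s,a)\,\mathrm{CE}\big(\Pr(\cdot|s,a,s'),\Pr_h^f(\cdot|s,a,s')\big)\,\mathrm{d} s'$, a nonnegatively weighted integral of the per-point cross-entropy between the Bayes posterior and the modeled Bernoulli posterior.

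By Gibbs' inequality (equivalently, nonnegativity of the KL divergence), each per-point cross-entropy is minimized exactly when the modeled posterior matches the Bayes posterior, so the integrand is minimized pointwise on the support where $m(s'|s,a)>0$ by enforcing $\Pr_h^f(1|s,a,s')=\Pr(y=1|s,a,s')$. Solving $f/(1+f)=\PP_h(s'|s,a)/(\PP_h(s'|s,a)+\cPS(s'))$ yields $f=\PP_h(s'|s,a)/\cPS(s')$, which is precisely the claimed target $f_h^*$. Finally I would check that this pointwise minimizer is attainable within the product class $\{\psi(\cdot)^\top\phi(\cdot,\cdot)\}$: by Assumption \ref{assump:low-rank}, $f_h^*(s,a,s')=\big(\psi_h^*(s')/\cPS(s')\big)^\top\phi_h^*(s,a)$, and the norm bounds $\|\psi_h^*(s')\|_2\leq\sqrt{d}$ and $\cPS\geq\CS>0$ give $\|\psi_h^*(s')/\cPS(s')\|_2\leq\sqrt{d}/\CS$, so $\psi_h^*/\cPS\in\Psi$ and $\phi_h^*\in\Phi$ by Definition \ref{def:func-class}; hence the unconstrained pointwise minimizer already lies in $\cF$ and is the global minimizer.

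The main obstacle is not the algebra but justifying the reduction to a pointwise minimization. I must verify that $m(s'|s,a)>0$ everywhere on the support (which follows from $\cPS\geq\CS>0$), so that dividing by the marginal, forming the conditional cross-entropy, and invoking Gibbs' inequality are all legitimate, and that restricting from arbitrary $f$ to the product class $\psi^\top\phi$ does not raise the minimum — exactly the content of the realizability check in the last step.
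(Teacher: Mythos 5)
Your proposal is correct and its computational core coincides with the paper's proof: both apply Bayes' rule with $\Pr{}_h(y)=1/2$ to obtain the posterior $\Pr{}_h(y=1\,|\,s,a,s')=\PP_h(s'|s,a)/\big(\PP_h(s'|s,a)+\cPS(s')\big)$ and then solve $f/(1+f)$ equal to this quantity to get $f_h^*=\PP_h(s'|s,a)/\cPS(s')$, with reachability ensuring the common factor $\Pr{}_h(s,a)$ cancels. The additional layers you supply --- the Gibbs'-inequality argument that matching the Bayes posterior is the population minimizer of the cross-entropy, and the realizability check that $\psi_h^*/\cPS\in\Psi$ and $\phi_h^*\in\Phi$ --- are sound and simply make explicit what the paper leaves implicit (the paper identifies the target purely via Bayes' rule in this lemma and records realizability separately in the discussion following \eqref{eq:targt}).
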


\begin{proof}
For any $h\in [H]$, we let $\Pr{}_h$ to denote the probability for some event at the $h$-th step of an MDP. Our contrastive loss in \eqref{eq:contra-loss} implicitly assumes
\begin{align*}
    \Pr{}_h(y|s,a,s') = \left(\frac{f^*_h(s,a,s')}{1+f^*_h(s,a,s')}\right)^y\left(\frac{1}{1+f^*_h(s,a,s')}\right)^{1-y}.
\end{align*}
On the other hand, by Bayes' rule, we know $\Pr{}_h(y|s,a,s')$ can be rewritten as
\begin{align*}
\Pr{}_h(y|s,a,s') = \frac{\Pr{}_h(s,a,s'|y)\Pr{}_h(y)}{\sum_{y\in \{0,1\}}\Pr{}_h(s,a,s'|y)\Pr{}_h(y)} = \frac{\Pr{}_h(s,a,s'|y)}{\Pr{}_h(s,a)\PP_h(s'|s,a) +\Pr{}_h(s,a)\cPS(s')},
\end{align*}
where the last equation uses the fact that $\Pr{}_h(y) = 1/2$ for any $y\in\{0, 1\}$ at the $h$-th step according to our sampling algorithm. In the last equality, we also have
\begin{align*}
&\Pr{}_h(s,a,s'|y=1) = \Pr{}_h(s,a|y=1) \Pr{}_h(s'|y=1,s,a)= \Pr{}_h(s,a) \PP_h(s'|s,a),\\
&\Pr{}_h(s,a,s'|y=0) = \Pr{}_h(s,a|y=0) \Pr{}_h(s'|y=0,s,a)= \Pr{}_h(s,a) \cPS(s'),
\end{align*} 
where we use $\Pr{}_h(s,a|y=1) = \Pr{}_h(s,a|y=0) =\Pr{}_h(s,a)$ since $(s,a)$ and $y$ are independent at each step,  and also $\Pr{}_h(s'|y=1,s,a)= \PP_h(s'|s,a)$ as well as $\Pr{}_h(s'|y=0,s,a)= \cPS(s')$.  

Therefore, combining the above results, when $y=1$ at the $h$-th step,  we obtain
\begin{align*}
    \frac{f_h^*(s,a,s')}{1+f_h^*(s,a,s')} = \frac{ \Pr{}_h(s,a)\PP_h(s'|s,a)}{\Pr{}_h(s,a)\PP_h(s'|s,a) +\Pr{}_h(s,a)\cPS(s')},
\end{align*}
which further gives
\begin{align*}
    f_h^*(s,a,s') = \frac{\PP_h(s'|s,a)}{\cPS(s')},
\end{align*}
since$(s,a)$ is reachable under the sampling algorithm, namely $\Pr{}_h(s,a) > 0$. Equivalently, when $y=0$, we get the same result. This completes the proof.
\end{proof}

%\subsection{Proof of Lemma \ref{lem:diff1}} \label{proof-lem-diff1} 

\begin{lemma}%[Value Difference under Estimated Transition]
\label{lem:diff1} Let $\pi^*:=\argmax_\pi V_1^\pi(s_1)$ be the optimal policy and $\overline{V}_{k,1}^\pi$ be the value function under any policy $\pi$ associated with an MDP defined by the reward function $r + \beta^k$ and the estimated transition $\hat{\PP}^k$ with $\beta^k$ and $\hat{\PP}^k$ obtained at episode $k$ of Algorithm \ref{alg:contrastive}. We have the decomposition of the difference between the following two value functions as
\begin{align*}
&V_1^{\pi^*}(s_1) - \overline{V}_{k,1}^{\pi^*}(s_1)  = \EE \left[ \sum_{h=1}^H \left(-\beta^k_h(s_h,a_h) + (\PP_h - \hat{\PP}^k_h )V^{\pi^*}_{h+1}(s_h,a_h)\right) \Bigggiven \pi^*, \hat{\PP}^k \right].    
\end{align*}
\end{lemma}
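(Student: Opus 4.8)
The plan is to establish this value-difference decomposition via a standard telescoping argument, but one that carefully accounts for the fact that the two value functions live on \emph{different} MDPs: $V_1^{\pi^*}$ is defined under the true transition $\PP$ and reward $r$, whereas $\overline{V}_{k,1}^{\pi^*}$ is defined under the estimated transition $\hat{\PP}^k$ and the augmented reward $r + \beta^k$. Both value functions are evaluated under the \emph{same} policy $\pi^*$, which is what makes the clean telescoping possible. The key identity I would exploit is the performance-difference-style expansion: for each step $h$, I compare $Q_h^{\pi^*}(s_h, a_h)$ (true MDP) with $\overline{Q}_{k,h}^{\pi^*}(s_h, a_h)$ (auxiliary MDP), where both use their respective Bellman equations $Q_h^{\pi^*} = r_h + \PP_h V_{h+1}^{\pi^*}$ and $\overline{Q}_{k,h}^{\pi^*} = r_h + \beta_h^k + \hat{\PP}_h^k \overline{V}_{k,h+1}^{\pi^*}$.

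First I would write, for each $h$, the one-step difference
\begin{align*}
V_h^{\pi^*}(s_h) - \overline{V}_{k,h}^{\pi^*}(s_h) &= Q_h^{\pi^*}(s_h, \pi_h^*(s_h)) - \overline{Q}_{k,h}^{\pi^*}(s_h, \pi_h^*(s_h)),
\end{align*}
using that both value functions are obtained by acting according to $\pi^*$ (and $\pi^*$ is deterministic, so $V_h(s) = Q_h(s, \pi_h^*(s))$ on both MDPs). Then, substituting the two Bellman equations and letting $a_h = \pi_h^*(s_h)$, the reward terms $r_h(s_h,a_h)$ cancel and I obtain
\begin{align*}
V_h^{\pi^*}(s_h) - \overline{V}_{k,h}^{\pi^*}(s_h) &= -\beta_h^k(s_h,a_h) + \PP_h V_{h+1}^{\pi^*}(s_h,a_h) - \hat{\PP}_h^k \overline{V}_{k,h+1}^{\pi^*}(s_h,a_h).
\end{align*}
The crucial algebraic move is to add and subtract $\hat{\PP}_h^k V_{h+1}^{\pi^*}(s_h,a_h)$ so that the last two terms split into a transition-estimation error $(\PP_h - \hat{\PP}_h^k) V_{h+1}^{\pi^*}(s_h,a_h)$ plus a recursive term $\hat{\PP}_h^k \big(V_{h+1}^{\pi^*} - \overline{V}_{k,h+1}^{\pi^*}\big)(s_h,a_h)$.

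The final step is to unroll this recursion from $h=1$ to $h=H$. Because the recursive remainder $\hat{\PP}_h^k(V_{h+1}^{\pi^*} - \overline{V}_{k,h+1}^{\pi^*})$ propagates the next-step difference forward under the \emph{estimated} transition $\hat{\PP}^k$, the telescoping naturally produces an expectation over trajectories generated by $\pi^*$ and $\hat{\PP}^k$ — which is exactly the conditioning $\EE[\,\cdot \mid \pi^*, \hat{\PP}^k]$ appearing in the statement. Using the terminal condition $V_{H+1}^{\pi^*} = \overline{V}_{k,H+1}^{\pi^*} = \boldsymbol{0}$ closes the recursion and yields the claimed sum of per-step terms $\sum_{h=1}^H (-\beta_h^k(s_h,a_h) + (\PP_h - \hat{\PP}_h^k)V_{h+1}^{\pi^*}(s_h,a_h))$. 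I do not anticipate a genuine obstacle here, since this is an identity rather than an inequality; the only point requiring care is bookkeeping the two distinct MDPs and ensuring the expansion is carried out under the estimated dynamics $\hat{\PP}^k$ (so that the recursion's expectation is taken along $\hat{\PP}^k$-trajectories), rather than accidentally mixing in the true transition $\PP$ inside the recursive term.
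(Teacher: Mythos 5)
Your proposal is correct and follows essentially the same route as the paper's proof: apply the Bellman equations on both MDPs under the common policy $\pi^*$, cancel $r_h$, add and subtract $\hat{\PP}_h^k V_{h+1}^{\pi^*}$ to isolate the error term $(\PP_h - \hat{\PP}_h^k)V_{h+1}^{\pi^*}$ from the recursive remainder $\hat{\PP}_h^k(V_{h+1}^{\pi^*} - \overline{V}_{k,h+1}^{\pi^*})$, and unroll under the estimated dynamics with the terminal condition $V_{H+1}^{\pi^*} = \overline{V}_{k,H+1}^{\pi^*} = 0$. You also correctly identify the one subtle point — that the recursion propagates under $\hat{\PP}^k$, which is exactly why the expectation is conditioned on $(\pi^*, \hat{\PP}^k)$ — so no gaps here.
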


\begin{proof} 
We consider two MDPs defined by $(\cS,\cA, H, r, \PP)$ and $(\cS,\cA, H, r+\beta, \PP')$ where  $\PP$ and $\PP'$ are any transition models and $r$ and $\beta$ are arbitrary reward function and bonus term. Then, for any deterministic policy $\pi$, we let $Q^\pi_h$ and $V^\pi_h$ be the associated Q-function and value function at the $h$-th step for the MDP defined by $(\cS,\cA, H, r, \PP)$, and  $\tilde{Q}^\pi_h$ and $\tilde{V}^\pi_h$ be the associated Q-function and value function at the $h$-th step for the MDP defined by $(\cS,\cA, H, r+\beta, \PP')$. Then, we have for any $(s_h, a_h)\in \cS\times\cA$,
\begin{align*}
%\begin{aligned}
    & Q_h^\pi(s_h,a_h) - \tilde{Q}_h^\pi(s_h,a_h)\\
    &\qquad =  -\beta_h(s_h,a_h) + \PP_h V^{\pi}_{h+1}(s_h,a_h)- \PP'_h \tilde{V}^{\pi}_{h+1}(s_h,a_h) \\
    &\qquad = -\beta_h(s_h,a_h) + \PP_h V^{\pi}_{h+1}(s_h,a_h)- \PP'_h V^{\pi}_{h+1}(s_h,a_h) + \PP'_h V^{\pi}_{h+1}(s_h,a_h) - \PP'_h\tilde{V}^{\pi}_{h+1}(s_h,a_h)  \\
    &\qquad = -\beta_h(s_h,a_h) + (\PP_h - \PP'_h )V^{\pi}_{h+1}(s_h,a_h) + \PP'_h[V^{\pi}_{h+1}(s_h,a_h) - \tilde{V}^{\pi}_{h+1}(s_h,a_h)],
%\end{aligned}
\end{align*}
where we use the Bellman equation for the above equalities. Thus, further by the Bellman equation and the above result, we have
\begin{align*}
%\begin{aligned}
&V_h^\pi(s_h) - \tilde{V}_h^\pi(s_h) \\
&\qquad = Q_h^\pi(s_h,\pi_h(s_h)) - \tilde{Q}_h^\pi(s_h,\pi_h(s_h))\\
&\qquad =  -b_h(s_h,\pi_h(s_h)) + (\PP_h - \PP'_h )V^{\pi}_{h+1}(s_h,\pi_h(s_h)) + \PP'_h[V^{\pi}_{h+1}(s_h,\pi_h(s_h)) - \tilde{V}^{\pi}_{h+1}(s_h,\pi_h(s_h))]. 
%\end{aligned}
\end{align*}
By the fact that $V_{H+1}^\pi(s) = \tilde{V}_{H+1}^\pi(s) = 0$ for any $s\in\cS$ and $\pi$, recursively applying the above relation, we have
\begin{align*}
&V_1^\pi(s_1) - \tilde{V}_1^\pi(s_1)  = \EE \left[ \sum_{h=1}^H \left(-\beta_h(s_h,a_h) + (\PP_h - \PP'_h )V^{\pi}_{h+1}(s_h,a_h)\right) \Bigggiven \pi, \PP' \right].     
\end{align*}
Note that the above results can be straightforwardly extended to any randomized policy $\pi = \{\pi_h\}_{h=1}^H$ with $\pi_h: \cS\times\cA \mapsto [0,1]$. 

For any episode $k$, setting $\PP', \pi, \beta$ to be $\hat{\PP}^k,\pi^*,\beta^k$ defined in Algorithm \ref{alg:contrastive} and $\PP, r$ to be the true transition model and reward function, by the above equation and the definition of $V_h^\pi$ and $\overline{V}_h^\pi$, we obtain
\begin{align*}
&V_1^{\pi^*}(s_1) - \overline{V}_{k,1}^{\pi^*}(s_1)  = \EE \left[ \sum_{h=1}^H \left(-\beta^k_h(s_h,a_h) + (\PP_h - \hat{\PP}^k_h )V^{\pi^*}_{h+1}(s_h,a_h)\right) \Bigggiven \pi^*, \hat{\PP}^k \right].    
\end{align*}
This completes the proof.
\end{proof}

%\subsection{Proof of Lemma \ref{lem:diff2}}
\begin{lemma}%[Value Difference under True Transition]
\label{lem:diff2}  Let $\pi^k$ be the learned policy at episode $k$ of Algorithm \ref{alg:contrastive} and $\overline{V}_{k,1}^\pi$ be the value function under any policy $\pi$ associated with an MDP defined by the reward function $r + \beta^k$ and the estimated transition $\hat{\PP}^k$ with $\beta^k$ and $\hat{\PP}^k$ obtained at episode $k$ of Algorithm \ref{alg:contrastive}. We have the decomposition of the difference between the following two value functions as
\begin{align*}
&V_1^{\pi^k}(s_1) - \overline{V}_{k,1}^{\pi^k}(s_1)  = \EE \left[ \sum_{h=1}^H \left(-\beta^k_h(s_h,a_h) + (\PP_h - \hat{\PP}^k_h )\overline{V}^{\pi^k}_{h+1}(s_h,a_h)\right) \Bigggiven \pi^k, \PP \right].    
\end{align*}
\end{lemma}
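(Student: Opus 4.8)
The plan is to mirror the value-difference decomposition used in the proof of Lemma~\ref{lem:diff1}, but to expand the one-step Bellman gap in the opposite direction, so that the trajectory rollout is governed by the \emph{true} transition $\PP$ rather than the estimated transition $\hat{\PP}^k$. Concretely, I again work with two generic MDPs $(\cS,\cA,H,r,\PP)$ and $(\cS,\cA,H,r+\beta,\PP')$, writing $Q^\pi_h,V^\pi_h$ for the first and $\tilde{Q}^\pi_h,\tilde{V}^\pi_h$ for the second, and at the end I instantiate $\PP'=\hat{\PP}^k$, $\beta=\beta^k$, $\pi=\pi^k$, so that $\tilde{V}^\pi_h=\overline{V}^{\pi^k}_{k,h}$.

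First I would expand the Q-gap via the Bellman equations, but add and subtract $\PP_h\tilde{V}^\pi_{h+1}$ (rather than $\PP'_h V^\pi_{h+1}$ as in Lemma~\ref{lem:diff1}):
\begin{align*}
Q_h^\pi(s_h,a_h) - \tilde{Q}_h^\pi(s_h,a_h)
&= -\beta_h(s_h,a_h) + \PP_h V^\pi_{h+1}(s_h,a_h) - \PP'_h \tilde{V}^\pi_{h+1}(s_h,a_h) \\
&= -\beta_h(s_h,a_h) + \PP_h\big[V^\pi_{h+1} - \tilde{V}^\pi_{h+1}\big](s_h,a_h) + (\PP_h-\PP'_h)\tilde{V}^\pi_{h+1}(s_h,a_h).
\end{align*}
The crucial point is that the recursive residual is now $\PP_h[V^\pi_{h+1}-\tilde{V}^\pi_{h+1}]$, carrying the true kernel $\PP$, while the per-step discrepancy term $(\PP_h-\PP'_h)\tilde{V}^\pi_{h+1}$ involves the \emph{auxiliary} value $\tilde{V}^\pi$ instead of $V^\pi$.

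Next I would pass to the value functions through $V_h^\pi(s_h)=Q_h^\pi(s_h,\pi_h(s_h))$ and its auxiliary counterpart, obtaining at $(s_h,\pi_h(s_h))$
\begin{align*}
V_h^\pi(s_h)-\tilde{V}_h^\pi(s_h) = -\beta_h + \PP_h[V^\pi_{h+1}-\tilde{V}^\pi_{h+1}] + (\PP_h-\PP'_h)\tilde{V}^\pi_{h+1},
\end{align*}
and then telescope from $h=1$ to $H$ using the terminal condition $V^\pi_{H+1}=\tilde{V}^\pi_{H+1}=0$. Because every recursion step applies $\PP_h$, unrolling produces an expectation over trajectories generated by $\pi$ under the \emph{true} dynamics $\PP$, giving
\begin{align*}
V_1^\pi(s_1)-\tilde{V}_1^\pi(s_1) = \EE\Big[\textstyle\sum_{h=1}^H \big(-\beta_h(s_h,a_h)+(\PP_h-\PP'_h)\tilde{V}^\pi_{h+1}(s_h,a_h)\big)\,\Big|\,\pi,\PP\Big].
\end{align*}
Instantiating $\PP'=\hat{\PP}^k$, $\beta=\beta^k$, $\pi=\pi^k$ and recalling $\tilde{V}^{\pi^k}_{k,h}=\overline{V}^{\pi^k}_{k,h}$ then yields the claimed identity.

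There is essentially no deep obstacle here; the only subtlety, and the sole difference from Lemma~\ref{lem:diff1}, is choosing the direction of the add--subtract step so that $\PP$ (not $\hat{\PP}^k$) governs the rollout. This choice is exactly what is needed downstream: in bounding term $(ii)$ of Theorem~\ref{thm:main}, the rollout must be under the true transition so that the induced visitation measures match $d^{\pi^k}_h$ and the contrastive-recovery bounds of Lemma~\ref{lem:stat-err} (stated for distributions induced by $\PP$) can be applied. I would also note, as in Lemma~\ref{lem:diff1}, that the identity extends verbatim from deterministic to randomized policies by replacing the evaluation at $\pi_h(s_h)$ with an expectation over $a_h\sim\pi_h(\cdot\given s_h)$.
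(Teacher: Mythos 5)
Your proposal is correct and matches the paper's own proof essentially verbatim: the same generic two-MDP setup, the same add-and-subtract of $\PP_h\tilde{V}^\pi_{h+1}$ so that the recursion unrolls under the true kernel $\PP$ while the per-step discrepancy carries the auxiliary value $\tilde{V}^\pi_{h+1}$, followed by the same telescoping and instantiation. No gaps.
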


\begin{proof} Similar to 
\hyperref[lem:diff1]{Proof of Lemma \ref{lem:diff1}}, 
we consider two arbitrary MDPs defined by $(\cS,\cA, H, r, \PP)$ and $(\cS,\cA, H, r+\beta, \PP')$. For any deterministic policy $\pi$, let $Q^\pi_h$ and $V^\pi_h$ be the associated Q-function and value function at the $h$-th step for the MDP defined by $(\cS,\cA, H, r, \PP)$, and  $\tilde{Q}^\pi_h$ and $\tilde{V}^\pi_h$ be the associated Q-function and value function at the $h$-th step for the MDP defined by $(\cS,\cA, H, r+\beta, \PP')$. For any $(s_h, a_h)\in \cS\times\cA$, by Bellman equation, we have
\begin{align*}
%\begin{aligned}
    & Q_h^\pi(s_h,a_h) - \tilde{Q}_h^\pi(s_h,a_h)\\
    &\qquad =  -\beta_h(s_h,a_h) + \PP_h V^{\pi}_{h+1}(s_h,a_h)- \PP'_h\tilde{V}^{\pi}_{h+1}(s_h,a_h) \\
    &\qquad = -\beta_h(s_h,a_h) + \PP_h V^{\pi}_{h+1}(s_h,a_h)- \PP_h\tilde{V}^{\pi}_{h+1}(s_h,a_h)+ \PP_h \tilde{V}^{\pi}_{h+1}(s_h,a_h)- \PP'_h\tilde{V}^{\pi}_{h+1}(s_h,a_h) \\
    &\qquad = -\beta_h(s_h,a_h) + \PP_h[V^{\pi}_{h+1}(s_h,a_h) - \tilde{V}^{\pi}_{h+1}(s_h,a_h)] + (\PP_h - \PP'_h )\tilde{V}^{\pi}_{h+1}(s_h,a_h). 
%\end{aligned}
\end{align*}
Then, further by the Bellman equation and the above result, we have
\begin{align*}
%\begin{aligned}
&V_h^\pi(s_h) - \tilde{V}_h^\pi(s_h) \\
&\qquad = Q_h^\pi(s_h,\pi_h(s_h)) - \tilde{Q}_h^\pi(s_h,\pi_h(s_h))\\
&\qquad = -\beta_h(s_h,\pi_h(s_h)) + \PP_h[V^{\pi}_{h+1}(s_h,\pi_h(s_h)) - \tilde{V}^{\pi}_{h+1}(s_h,\pi_h(s_h))] + (\PP_h - \PP'_h )\tilde{V}^{\pi}_{h+1}(s_h,\pi_h(s_h))]. 
%\end{aligned}
\end{align*}
By the fact that $V_{H+1}^\pi(s) = \tilde{V}_{H+1}^\pi(s) = 0$ for any $s\in\cS$ and $\pi$, recursively applying the above relation, we have
\begin{align*}
&V_1^\pi(s_1) - \tilde{V}_1^\pi(s_1)  = \EE \left[ \sum_{h=1}^H \left(-\beta_h(s_h,a_h) + (\PP_h - \PP'_h )\tilde{V}^{\pi}_{h+1}(s_h,a_h)\right) \Bigggiven \pi, \PP \right].
\end{align*}
The above results can be straightforwardly extended to any randomized policy $\pi = \{\pi_h\}_{h=1}^H$ with $\pi_h: \cS\times\cA \mapsto [0,1]$. 

For any episode $k$, setting $\PP', \pi, \beta$ to be $\hat{\PP}^k,\pi^k,\beta^k$ defined in Algorithm \ref{alg:contrastive} and $\PP, r$ to be the true transition model and reward function, by the above equation and the definition of $V_h^\pi$ and $\overline{V}_h^\pi$, we obtain
\begin{align*}
&V_1^{\pi^k}(s_1) - \overline{V}_{k,1}^{\pi^k}(s_1)  = \EE \left[ \sum_{h=1}^H \left(-\beta^k_h(s_h,a_h) + (\PP_h - \hat{\PP}^k_h )\overline{V}^{\pi^k}_{h+1}(s_h,a_h)\right) \Bigggiven \pi^k, \PP \right].    
\end{align*}
This completes the proof.
\end{proof}

%\subsection{Proof of Lemma \ref{lem:expand1}} \label{sec:pf_lem_expand1}
\begin{lemma}\label{lem:expand1}  Let $\hat{\PP}^k$ be the estimated transition obtained at episode $k$ of Algorithm \ref{alg:contrastive}. Define $\zeta^k_{h-1}:=\EE_{(s'',a'')\sim\tilde{\rho}^k_{h-1}(\cdot,\cdot)}\allowbreak \|\hat{\PP}^k_{h-1}(\cdot|s'',a'') -\PP_{h-1}(\cdot|s'',a'')\|_1^2$ for all $h\geq 2$, $\tilde{\rho}^k_h(\cdot,\cdot) := \frac{1}{k}\sum_{k'=0}^{k-1} \tilde{d}^{\pi^{k'}}_h(\cdot,\cdot)$ for all $h\geq 1$ with $\tilde{\rho}_1^k(s_1,a) =\Unif(a)$, and $\breve{\rho}^k_h(\cdot,\cdot) := \frac{1}{k}\sum_{k'=0}^{k-1} \breve{d}^{\pi^{k'}}_h(\cdot,\cdot)$ for all $h\geq 2$.  Then for any function $g:\cS\times\cA\mapsto [0,B]$ and policy $\pi$, we have for any $h \geq 2$, the following inequality holds
\begin{align*}
%\begin{aligned}
&\left|\EE_{(s,a)\sim d^{\pi, \hat{\PP}^k}_h(\cdot,\cdot)}[g(s,a)]\right| \\
&\qquad \leq \sqrt{2k B^2 \zeta^k_{h-1} + 2k  |\cA|\cdot  \EE_{(s,a)\sim\breve{\rho}^k_h(\cdot,\cdot)}[ g(s,a)^2] + \lambda_k B^2 d/(\CS)^2} \cdot \EE_{ d^{\pi, \hat{\PP}^k}_{h-1}}\left\|\hat{\phi}^k_{h-1}\right\|_{\Sigma_{\tilde{\rho}^k_{h-1}, \hat{\phi}^k_{h-1}}^{-1}}.
%\end{aligned}
\end{align*}
Moreover, for $h=1$, we have
\begin{align*}
\left|\EE_{(s,a)\sim d^{\pi, \hat{\PP}^k}_1(\cdot,\cdot)}[g(s,a)]\right| = \sqrt{g(s_1,\pi_1(s_1))^2} \leq \sqrt{|\cA|  \EE_{a\sim \tilde{\rho}_1^k(s_1,\cdot)}[g(s_1,a)^2] },
\end{align*} 
where $\tilde{\rho}_1^k(s_1,a) =\Unif(a)$.
\end{lemma}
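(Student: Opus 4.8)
The plan is to unroll the occupancy measure by one step and then exploit the low-rank factorization of $\hat{\PP}^k$. First I would introduce $\bar g(s):=\sum_{a}\pi_h(a|s)\,g(s,a)\in[0,B]$ and use the Markov property of the occupancy measure under $\hat{\PP}^k$, namely $d^{\pi,\hat{\PP}^k}_h(s,a)=d^{\pi,\hat{\PP}^k}_h(s)\pi_h(a|s)$ together with $d^{\pi,\hat{\PP}^k}_h(s)=\int_{s',a'}d^{\pi,\hat{\PP}^k}_{h-1}(s',a')\hat{\PP}^k_{h-1}(s|s',a')$, to obtain
\[
\EE_{(s,a)\sim d^{\pi,\hat{\PP}^k}_h}[g(s,a)]=\EE_{(s',a')\sim d^{\pi,\hat{\PP}^k}_{h-1}}\big[\hat{\PP}^k_{h-1}\bar g(s',a')\big].
\]
Substituting $\hat{\PP}^k_{h-1}(s|s',a')=\hat{\psi}^k_{h-1}(s)^\top\hat{\phi}^k_{h-1}(s',a')$ and defining the vector $w:=\int_{\cS}\hat{\psi}^k_{h-1}(s)\bar g(s)\,\mathrm{d}s\in\RR^d$, the right-hand side collapses to a single inner product $w^\top\EE_{d^{\pi,\hat{\PP}^k}_{h-1}}[\hat{\phi}^k_{h-1}]$ in feature space.

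Next I would apply Cauchy--Schwarz with respect to $\Sigma:=\Sigma_{\tilde{\rho}^k_{h-1},\hat{\phi}^k_{h-1}}$, giving $|w^\top\EE[\hat{\phi}^k_{h-1}]|\le\|w\|_{\Sigma}\,\|\EE[\hat{\phi}^k_{h-1}]\|_{\Sigma^{-1}}$, and then Jensen's inequality to replace $\|\EE[\hat{\phi}^k_{h-1}]\|_{\Sigma^{-1}}$ by $\EE_{d^{\pi,\hat{\PP}^k}_{h-1}}\|\hat{\phi}^k_{h-1}\|_{\Sigma^{-1}}$, which already yields the second factor of the claim. It then remains to bound $\|w\|_{\Sigma}^2=k\,\EE_{(s'',a'')\sim\tilde{\rho}^k_{h-1}}[(\hat{\PP}^k_{h-1}\bar g)^2]+\lambda_k\|w\|_2^2$, where I use the identity $w^\top\hat{\phi}^k_{h-1}(s'',a'')=\hat{\PP}^k_{h-1}\bar g(s'',a'')$ that holds by construction of $w$.

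For the quadratic form I would split $\hat{\PP}^k_{h-1}\bar g=(\hat{\PP}^k_{h-1}-\PP_{h-1})\bar g+\PP_{h-1}\bar g$ and apply $(x+y)^2\le 2x^2+2y^2$. The first piece is controlled by H\"older, $|(\hat{\PP}^k_{h-1}-\PP_{h-1})\bar g|\le B\,\|\hat{\PP}^k_{h-1}(\cdot|s'',a'')-\PP_{h-1}(\cdot|s'',a'')\|_1$, whose squared expectation under $\tilde{\rho}^k_{h-1}$ is exactly $B^2\zeta^k_{h-1}$. The second piece is the delicate one: two applications of Jensen (first over $\PP_{h-1}(\cdot|s'',a'')$, then over $\pi_h(\cdot|s)$) together with $\pi_h(a|s)\le 1$ give $(\PP_{h-1}\bar g(s'',a''))^2\le|\cA|\int_s\PP_{h-1}(s|s'',a'')\sum_a\Unif(a)g(s,a)^2\,\mathrm{d}s$, after which the definition $\breve{\rho}^k_h(s,a)=[\int_{s'',a''}\tilde{\rho}^k_{h-1}(s'',a'')\PP_{h-1}(s|s'',a'')\,\mathrm{d}s'']\Unif(a)$ collapses the $\tilde{\rho}^k_{h-1}$-expectation into $|\cA|\,\EE_{\breve{\rho}^k_h}[g^2]$. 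For $\lambda_k\|w\|_2^2$ I bound $\|w\|_2\le\int_{\cS}\|\hat{\psi}^k_{h-1}(s)\|_2\,|\bar g(s)|\,\mathrm{d}s$ and invoke the normalization $\hat{\psi}^k_{h-1}=\cPS\tilde{\psi}^k_{h-1}$ of \eqref{eq:normalization} with $\|\tilde{\psi}^k_{h-1}\|_2\le\sqrt d/\CS$ (Definition \ref{def:func-class}) and $\int_{\cS}\cPS=1$ to get $\|w\|_2^2\le B^2d/(\CS)^2$; summing the three contributions produces the stated radical. The main obstacle is exactly this bookkeeping for the $\PP_{h-1}\bar g$ term, where the two Jensen steps must be aligned with the precise definition of the auxiliary distribution $\breve{\rho}^k_h$ to extract the factor $|\cA|$ and match $\EE_{\breve{\rho}^k_h}[g^2]$. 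Finally, the $h=1$ case is immediate: the fixed initial state gives $\EE_{d^{\pi,\hat{\PP}^k}_1}[g]=g(s_1,\pi_1(s_1))$, and $g(s_1,\pi_1(s_1))^2\le\sum_a g(s_1,a)^2=|\cA|\,\EE_{a\sim\Unif}[g(s_1,a)^2]$ since $\tilde{\rho}^k_1(s_1,\cdot)=\Unif$.
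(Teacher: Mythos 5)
Your proposal is correct and follows essentially the same route as the paper's proof: one-step unrolling of the occupancy measure, the low-rank substitution reducing everything to an inner product with $w=\int_{\cS}\hat{\psi}^k_{h-1}\bar g$, Cauchy--Schwarz in the $\Sigma_{\tilde{\rho}^k_{h-1},\hat{\phi}^k_{h-1}}$ geometry, the split $(\hat{\PP}-\PP)\bar g+\PP\bar g$ with $(x+y)^2\le 2x^2+2y^2$, the Jensen/uniform-policy step yielding the $|\cA|\,\EE_{\breve{\rho}^k_h}[g^2]$ term, and the $\|w\|_2^2\le B^2d/(\CS)^2$ bound for the $\lambda_k$ contribution. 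The only cosmetic differences are that you treat randomized policies directly via $\bar g$ (the paper does the deterministic case and remarks it extends) and that you apply Cauchy--Schwarz to the expectation of $\hat\phi^k_{h-1}$ followed by Jensen rather than inside the expectation, which are equivalent.
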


\begin{proof}
For any function $g:\cS\times\cA\mapsto [0,B]$ and any deterministic policy $\pi$, under the estimated transition model $\hat{\PP}^k$ at the episode $k$, for any $h\geq 2$, we have
\begin{align}
\begin{aligned} \label{eq:step-back-mdp1}
&\left|\EE_{(s,a)\sim d^{\pi, \hat{\PP}^k}_h(\cdot,\cdot)}[g(s,a)]\right| \\
&\qquad= \left|\EE_{(s',a')\sim d^{\pi, \hat{\PP}^k}_{h-1}(\cdot,\cdot), s\sim \hat{\PP}^k_{h-1}(\cdot|s',a')}[g(s,\pi_h(s))]\right| \\
&\qquad= \left|\EE_{(s',a')\sim d^{\pi, \hat{\PP}^k}_{h-1}(\cdot,\cdot)} \left[\hat{\phi}^k_{h-1}(s',a')^\top \int_{\cS} \hat{\psi}^k_{h-1}(s)  g(s,\pi_h(s))\mathrm{d} s\right]\right|\\
&\qquad\leq  \EE_{d^{\pi, \hat{\PP}^k}_{h-1}}\left\|\hat{\phi}^k_{h-1}\right\|_{\Sigma_{\tilde{\rho}^k_{h-1}, \hat{\phi}^k_{h-1}}^{-1}} \cdot \left\| \int_{\cS} \hat{\psi}^k_{h-1}(s) g(s,\pi_h(s))\mathrm{d} s\right\|_{\Sigma_{\tilde{\rho}^k_{h-1}, \hat{\phi}^k_{h-1}}},
\end{aligned}
\end{align}
where the inequality is due to the Cauchy-Schwarz inequality. Hereafter, we define the covariance matrix $\Sigma_{\tilde{\rho}^k_{h-1}, \hat{\phi}^k_{h-1}} :=k \EE_{(s,a)\sim\tilde{\rho}^k_{h-1}}[\hat{\phi}^k_{h-1}(s,a)\hat{\phi}^k_{h-1}(s,a)^\top] + \lambda_k I$ with $\tilde{\rho}^k_{h-1}(s,a) = \frac{1}{k}\sum_{k'=0}^{k-1} \tilde{d}^{\pi^{k'}}_{h-1}(s,a)$.

Next, we can bound
\begin{align}
\begin{aligned}\label{eq:step-back-mdp2}
&\left\| \int_{\cS} \hat{\psi}^k_{h-1}(s) g(s,\pi_h(s))\mathrm{d} s\right\|_{\Sigma_{\tilde{\rho}^k_{h-1},\hat{\phi}^k_{h-1}}}^2\\
&\qquad = k\left(\int_{\cS} \hat{\psi}^k_{h-1}(s) g(s,\pi_h(s))\mathrm{d}s\right)^\top  \EE_{\tilde{\rho}^k_{h-1}}\left[\hat{\phi}^k_{h-1}(\hat{\phi}^k_{h-1})^\top \right]  \left(\int_{\cS} \hat{\psi}^k_{h-1}(s) g(s,\pi_h(s))\mathrm{d}s\right)\\
&\qquad \quad + \lambda_k\left(\int_{\cS} \hat{\psi}^k_{h-1}(s) g(s,\pi_h(s))\mathrm{d}s\right)^\top \left(\int_{\cS} \hat{\psi}^k_{h-1}(s) g(s,\pi_h(s))\mathrm{d}s\right)\\
&\qquad = k \EE_{(s'',a'')\sim\tilde{\rho}^k_{h-1}(\cdot,\cdot)}\left[\int_{\cS} \hat{\phi}^k_{h-1}(s'',a'')^\top \hat{\psi}^k_{h-1}(s) g(s,\pi_h(s))\mathrm{d}s  \right] \\
&\qquad \quad + \lambda_k\left(\int_{\cS} \hat{\psi}^k_{h-1}(s) g(s,\pi_h(s))\mathrm{d}s\right)^\top \left(\int_{\cS} \hat{\psi}^k_{h-1}(s) g(s,\pi_h(s))\mathrm{d}s\right)\\
&\qquad \leq k \EE_{(s'',a'')\sim\tilde{\rho}^k_{h-1}(\cdot,\cdot)}\left[\int_{\cS} \hat{\phi}^k_{h-1}(s'',a'')^\top \hat{\psi}^k_{h-1}(s) g(s,\pi_h(s))\mathrm{d}s  \right]^2  + \lambda_k B^2 d/(\CS)^2,
\end{aligned}
\end{align}
where the last inequality is due to 
\begin{align*}
\left(\int_{\cS} \hat{\psi}^k_{h-1}(s) g(s,\pi_h(s))\mathrm{d}s\right)^\top \left(\int_{\cS} \hat{\psi}^k_{h-1}(s) g(s,\pi_h(s))\mathrm{d}s\right) \leq B^2 \left|\int_{\cS} \hat{\psi}^k_{h-1}(s) \mathrm{d}s\right|_2^2\leq B^2 d/(\CS)^2,
\end{align*}
since $\|\int_{\cS} \hat{\psi}_{h-1}^k(s) \mathrm{d}s\|_2^2 :=\|\int_{\cS}  \cPS(s)  \tilde{\psi}_{h-1}^k (s)\mathrm{d}s\|_2^2 \leq \|\int_{\cS}  \tilde{\psi}_{h-1}^k (s)\mathrm{d}s\|_2^2 \leq (\int_{\cS} \|  \tilde{\psi}_{h-1}^k (s)\|_2\mathrm{d}s)^2 \leq d/(\CS)^2$ according to the definition of the function class in Definition \ref{def:func-class} and the assumption that all states are normalized such that $\mathrm{Vol}(\cS)\leq 1$.  Moreover, we have
\begin{align}
\begin{aligned}\label{eq:step-back-mdp3}
&k \EE_{(s'',a'')\sim\tilde{\rho}^k_{h-1}(\cdot,\cdot)}\left[\int_{\cS} \hat{\phi}^k_{h-1}(s'',a'')^\top \hat{\psi}^k_{h-1}(s) g(s,\pi_h(s))\mathrm{d}s  \right]^2 \\
&\qquad \leq  2k \EE_{(s'',a'')\sim\tilde{\rho}^k_{h-1}(\cdot,\cdot)}\left[\int_{\cS} \left(\hat{\PP}^k_{h-1}(s|s'',a'') -\PP_{h-1}(s|s'',a'') \right) g(s,\pi_h(s))\mathrm{d}s  \right]^2 \\
&\qquad \quad + 2k \EE_{(s'',a'')\sim\tilde{\rho}^k_{h-1}(\cdot,\cdot)}\left[\int_{\cS} \PP_{h-1}(s|s'',a'') g(s,\pi_h(s))\mathrm{d}s  \right]^2\\
&\qquad \leq  2k B^2 \zeta^k_{h-1}  + 2k \EE_{(s'',a'')\sim\tilde{\rho}^k_{h-1}(\cdot,\cdot)}\left[\int_{\cS} \PP_{h-1}(s|s'',a'') g(s,\pi_h(s))\mathrm{d}s  \right]^2\\
&\qquad \leq  2k B^2 \zeta^k_{h-1}  + 2k \EE_{(s'',a'')\sim\tilde{\rho}^k_{h-1}(\cdot,\cdot), s\sim \PP_{h-1}(\cdot|s'',a'')  }[g(s,\pi_h(s))^2]\\
&\qquad \leq  2k B^2 \zeta^k_{h-1}  + 2k \frac{1}{\Unif(a)} \EE_{(s,a)\sim\breve{\rho}^k_h(\cdot,\cdot)}[ g(s,a)^2]\\
&\qquad =  2k B^2 \zeta^k_{h-1} + 2k  |\cA|\cdot  \EE_{(s,a)\sim\breve{\rho}^k_h(\cdot,\cdot)}[ g(s,a)^2],
\end{aligned}
\end{align}
where the first inequality is due to $(x+y)^2\leq 2x^2 + 2y^2$, the second inequality is due to $\EE_{(s'',a'')\sim\tilde{\rho}^k_{h-1}(\cdot,\cdot)}\allowbreak [\int_{\cS} (\hat{\PP}^k_{h-1}(s|s'',a'') -\PP_{h-1}(s|s'',a'') ) g(s,\pi_h(s))\mathrm{d}s  ]^2 \leq B^2\EE_{(s'',a'')\sim\tilde{\rho}^k_{h-1}(\cdot,\cdot)}\allowbreak \|\hat{\PP}^k_{h-1}(\cdot|s'',a'') -\PP_{h-1}(\cdot|s'',a'')\|_1^2 \allowbreak= B^2 \zeta^k_{h-1}$ with $\zeta^k_{h-1}:=\EE_{(s'',a'')\sim\tilde{\rho}^k_{h-1}(\cdot,\cdot)}\allowbreak \|\hat{\PP}^k_{h-1}(\cdot|s'',a'') -\PP_{h-1}(\cdot|s'',a'')\|_1^2$, the third inequality is by Jensen's inequality, and the fourth inequality is due to $g(s,\pi_h(s))^2 \leq \sum_{a\in \cA } g(s,a)^2 = 1/\Unif(a) \cdot \EE_{a\sim\Unif(\cA)}[g(s,a)^2]$ and $\breve{\rho}^k_h(s,a):=\tilde{\rho}^k_{h-1}(s',a')\PP_{h-1}(s|s',a')\Unif(a)$ for all $h\geq 2$.

Combining \eqref{eq:step-back-mdp1},\eqref{eq:step-back-mdp2}, and \eqref{eq:step-back-mdp3}, we have for any $h \geq 2$,
\begin{align*}
%\begin{aligned}
   &\left|\EE_{(s,a)\sim d^{\pi, \hat{\PP}^k}_h(\cdot,\cdot)}[g(s,a)]\right| \\
&\qquad \leq \sqrt{2k B^2 \zeta^k_{h-1} + 2k  |\cA|\cdot  \EE_{(s,a)\sim\breve{\rho}^k_h(\cdot,\cdot)}[ g(s,a)^2] + \lambda_k B^2 d/(\CS)^2} \cdot \EE_{ d^{\pi, \hat{\PP}^k}_{h-1}}\left\|\hat{\phi}^k_{h-1}\right\|_{\Sigma_{\tilde{\rho}^k_{h-1}, \hat{\phi}^k_{h-1}}^{-1}}.
%\end{aligned}
\end{align*}
For $h=1$, we have
\begin{align*}
\left|\EE_{(s,a)\sim d^{\pi, \hat{\PP}^k}_1(\cdot,\cdot)}[g(s,a)]\right| = \sqrt{g(s_1,\pi_1(s_1))^2} \leq \sqrt{|\cA|  \EE_{a\sim \tilde{\rho}_1^k(s_1,\cdot)}[g(s_1,a)^2] },
\end{align*} 
where we let $\tilde{\rho}_1^k(s_1,a) =\Unif(a)$.
Note that the above derivations also hold for any randomized policy $\pi$.  The proof is completed.
\end{proof}

%\subsection{Proof of Lemma \ref{lem:expand2}} \label{sec:pf_lem_expand2}
\begin{lemma}\label{lem:expand2} Define $\tilde{\rho}^k_h(\cdot,\cdot) := \frac{1}{k}\sum_{k'=0}^{k-1} \tilde{d}^{\pi^{k'}}_h(\cdot,\cdot)$ for all $h\geq 1$ with $\tilde{\rho}_1^k(s_1,a) =\Unif(a)$ and $\rho^k_h(\cdot,\cdot) := \frac{1}{k}\sum_{k'=0}^{k-1} d^{\pi^{k'}}_h(\cdot,\cdot)$ for all $h\geq 2$.  Then for any function $g:\cS\times\cA\mapsto [0,B]$ and policy $\pi$, we have for any $h \geq 2$, the following inequality holds
\begin{align*}
%\begin{aligned}
   &\left|\EE_{(s,a)\sim d^{\pi, \PP}_h(\cdot,\cdot)}[g(s,a)]\right| \\
&\qquad\leq \sqrt{k  |\cA|\cdot  \EE_{(s,a)\sim\tilde{\rho}^k_h(\cdot,\cdot)}[ g(s,a)^2] + \lambda_k B^2 d} \cdot \EE_{d^{\pi, \PP}_{h-1}}\left\|\phi^*_{h-1}\right\|_{\Sigma_{\rho^k_{h-1}, \phi^*_{h-1}}^{-1}}.
%\end{aligned}
\end{align*}
Moreover, for $h=1$, we have
\begin{align*}
\left|\EE_{(s,a)\sim d^{\pi, \PP}_1(\cdot,\cdot)}[g(s,a)]\right| &\leq \sqrt{g(s_1,\pi_1(s_1))^2} \leq \sqrt{|\cA|  \EE_{a\sim \tilde{\rho}_1^k(s_1,\cdot)}[g(_1s,a)^2] }.
\end{align*} 
\end{lemma}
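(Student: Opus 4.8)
The plan is to mirror the one-step-back argument used in the proof of Lemma \ref{lem:expand1}, but now with the \emph{true} transition $\PP$ and \emph{true} feature $\phi^*$ in place of the estimated $\hat{\PP}^k$ and $\hat{\phi}^k$. Because every quantity is expressed through the exact low-rank factorization from Assumption \ref{assump:low-rank}, no statistical-error term $\zeta^k_{h-1}$ appears and the argument becomes purely deterministic. I would first dispose of the base case $h=1$: since the initial state is fixed at $s_1$, the occupancy $d^{\pi,\PP}_1$ is a point mass at $(s_1,\pi_1(s_1))$, so $|\EE_{d^{\pi,\PP}_1}[g]| = \sqrt{g(s_1,\pi_1(s_1))^2}$, and the claimed bound follows from $g(s_1,\pi_1(s_1))^2 \le \sum_{a\in\cA} g(s_1,a)^2 = |\cA|\,\EE_{a\sim\Unif(\cA)}[g(s_1,a)^2]$ together with $\tilde\rho_1^k(s_1,a)=\Unif(a)$.

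For $h\ge 2$, I would first write $\EE_{(s,a)\sim d^{\pi,\PP}_h}[g(s,a)] = \EE_{(s',a')\sim d^{\pi,\PP}_{h-1}}[\,\phi^*_{h-1}(s',a')^\top \!\int_\cS \psi^*_{h-1}(s)\,g(s,\pi_h(s))\,\mathrm ds\,]$, using $\PP_{h-1}(s|s',a')=\psi^*_{h-1}(s)^\top\phi^*_{h-1}(s',a')$. Then, abbreviating $v:=\int_\cS \psi^*_{h-1}(s)\,g(s,\pi_h(s))\,\mathrm ds$ and applying the generalized Cauchy--Schwarz inequality with the covariance matrix $\Sigma:=\Sigma_{\rho^k_{h-1},\phi^*_{h-1}}$, I would split the expression into $\EE_{d^{\pi,\PP}_{h-1}}\|\phi^*_{h-1}\|_{\Sigma^{-1}}\cdot\|v\|_{\Sigma}$, leaving the norm $\|v\|_\Sigma$ to be controlled.

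The crux is bounding $\|v\|_\Sigma^2$, which by the definition of $\Sigma_{\rho^k_{h-1},\phi^*_{h-1}}$ splits into $k\,\EE_{(s'',a'')\sim\rho^k_{h-1}}[(\phi^*_{h-1}(s'',a'')^\top v)^2]$ plus $\lambda_k\|v\|_2^2$. Using the factorization again, $\phi^*_{h-1}(s'',a'')^\top v = \int_\cS \PP_{h-1}(s|s'',a'')\,g(s,\pi_h(s))\,\mathrm ds$, so Jensen's inequality turns the first term into $k\,\EE_{(s'',a'')\sim\rho^k_{h-1},\,s\sim\PP_{h-1}(\cdot|s'',a'')}[g(s,\pi_h(s))^2]$. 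Marginalizing the inner transition collapses $\rho^k_{h-1}$ followed by $\PP_{h-1}$ into the averaged state distribution $\tfrac1k\sum_{k'}d^{\pi^{k'}}_h(s)$; bounding $g(s,\pi_h(s))^2 \le |\cA|\,\EE_{a\sim\Unif(\cA)}[g(s,a)^2]$ and recalling $\tilde\rho^k_h(s,a)=\big(\tfrac1k\sum_{k'}d^{\pi^{k'}}_h(s)\big)\Unif(a)$ yields exactly $k|\cA|\,\EE_{\tilde\rho^k_h}[g^2]$. For the regularization term I would invoke the Assumption \ref{assump:low-rank} bound $\|\psi^*_{h-1}(s)\|_2\le\sqrt d$ together with $\mathrm{Vol}(\cS)\le1$ to get $\|v\|_2^2 \le B^2\|\int_\cS\psi^*_{h-1}\|_2^2 \le B^2 d$, contributing $\lambda_k B^2 d$. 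Combining the two pieces and taking square roots gives the stated inequality.

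I expect the main obstacle to be the careful bookkeeping in the marginalization step: the covariance is taken over the actual on-policy state-action occupancy $\rho^k_{h-1}$ (whose action marginal follows $\pi^{k'}$), whereas the right-hand side is an expectation under the uniform-action distribution $\tilde\rho^k_h$. Reconciling these requires tracking that $\PP_{h-1}$ converts the step-$(h-1)$ state marginal into the step-$h$ state marginal, while the factor $|\cA|$ precisely compensates for replacing the policy action by a uniform action through $g(s,\pi_h(s))^2\le|\cA|\,\EE_{a\sim\Unif(\cA)}[g(s,a)^2]$. The extension to a randomized $\pi$ is routine, replacing $g(s,\pi_h(s))$ by $\EE_{a\sim\pi_h(\cdot|s)}[g(s,a)]$ and invoking Jensen once more.
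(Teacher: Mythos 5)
Your proposal is correct and follows essentially the same route as the paper's proof: the same one-step-back expansion through the low-rank factorization, the same Cauchy--Schwarz split against $\Sigma_{\rho^k_{h-1},\phi^*_{h-1}}$, the same Jensen-plus-uniform-action bound yielding the $k|\cA|\,\EE_{\tilde\rho^k_h}[g^2]$ term, and the same $\lambda_k B^2 d$ bound on the regularization piece. The marginalization bookkeeping you flag as the crux is handled in the paper exactly as you describe, via the identity $\tilde\rho^k_h(s,a)=\rho^k_{h-1}(s',a')\PP_{h-1}(s|s',a')\Unif(a)$.
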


\begin{proof}
For any function $g:\cS\times\cA\mapsto [0,B]$ and any deterministic policy $\pi$, under the true transition model $\PP$, for any $h\geq 2$, we have
\begin{align}
\begin{aligned}\label{eq:step-back2-mdp1}
&\left|\EE_{(s,a)\sim d^{\pi, \PP}_h(\cdot,\cdot)}[g(s,a)]\right| \\
&\qquad= \left|\EE_{(s',a')\sim d^{\pi, \PP}_{h-1}(\cdot,\cdot), s\sim \PP_{h-1}(\cdot|s',a')}[g(s, \pi_h(s))]\right| \\
&\qquad= \left|\EE_{(s',a')\sim d^{\pi, \PP}_{h-1}(\cdot,\cdot)} \left[\phi^*_{h-1}(s',a')^\top \int_{\cS} \psi^*_{h-1}(s) g(s,\pi_h(s))\mathrm{d} s\right]\right|\\
&\qquad\leq  \EE_{(s',a')\sim d^{\pi, \PP}_{h-1}(\cdot,\cdot)}\left\|\phi^*_{h-1}(s',a')\right\|_{\Sigma_{\rho^k_{h-1}, \phi^*_{h-1}}^{-1}} \left\| \int_{\cS} \psi^*_{h-1}(s) g(s,\pi_h(s))\mathrm{d} s\right\|_{\Sigma_{\rho^k_{h-1}, \phi^*_{h-1}}},
\end{aligned}
\end{align}
where the inequality is due to the Cauchy-Schwarz inequality. Here, we define the covariance matrix $\Sigma_{\rho^k_{h-1}, \phi^*_{h-1}} :=k \EE_{(s,a)\sim\rho^k_{h-1}}[\phi^*_{h-1}(s,a)\phi^*_{h-1}(s,a)^\top] + \lambda_k I$ with $\rho^k_{h-1}(s,a) = \frac{1}{k}\sum_{k'=0}^{k-1} d^{\pi^{k'}}_{h-1}(s,a)$.

Next, we have
\begin{align}
\begin{aligned}\label{eq:step-back2-mdp2}
&\left\| \int_{\cS} \psi^*_{h-1}(s) g(s,\pi_h(s))\mathrm{d} s\right\|_{\Sigma_{\rho^k_{h-1},\phi^*_{h-1}}}^2\\
&\qquad = k\left(\int_{\cS} \psi^*_{h-1}(s) g(s,\pi_h(s))\mathrm{d}s\right)^\top  \EE_{\rho^k_{h-1}}\left[\phi^*_{h-1}(\phi^*_{h-1})^\top \right]  \left(\int_{\cS} \psi^*_{h-1}(s) g(s,\pi_h(s))\mathrm{d}s\right)\\
&\qquad \quad + \lambda_k\left(\int_{\cS} \psi^*_{h-1}(s) g(s,\pi_h(s))\mathrm{d}s\right)^\top \left(\int_{\cS} \psi^*_{h-1}(s) g(s,\pi_h(s))\mathrm{d}s\right)\\
&\qquad = k \EE_{(s'',a'')\sim\rho^k_{h-1}(\cdot,\cdot)}\left[\int_{\cS} \phi^*_{h-1}(s'',a'')^\top \psi^*_{h-1}(s) g(s,\pi_h(s))\mathrm{d}s  \right] \\
&\qquad \quad + \lambda_k\left(\int_{\cS} \psi^*_{h-1}(s) g(s,\pi_h(s))\mathrm{d}s\right)^\top \left(\int_{\cS} \psi^*_{h-1}(s) g(s,\pi_h(s))\mathrm{d}s\right)\\
&\qquad \leq k \EE_{(s'',a'')\sim\rho^k_{h-1}(\cdot,\cdot)}\left[\int_{\cS} \phi^*_{h-1}(s'',a'')^\top \psi^*_{h-1}(s) g(s,\pi_h(s))\mathrm{d}s  \right]^2  + \lambda_k B^2 d,
\end{aligned}
\end{align}
where, by Assumption \ref{assump:low-rank}, the last inequality is due to 
\begin{align*}
\left(\int_{\cS} \psi^*_{h-1}(s) g(s,\pi_h(s))\mathrm{d}s\right)^\top \left(\int_{\cS} \psi^*_{h-1}(s) g(s,\pi_h(s))\mathrm{d}s\right) \leq B^2 \left|\int_{\cS} \psi^*_{h-1}(s) \mathrm{d}s\right|_2^2\leq B^2 d.
\end{align*}
Furthermore, we have
\begin{align}
\begin{aligned}\label{eq:step-back2-mdp3}
&k \EE_{(s'',a'')\sim\rho^k_{h-1}(\cdot,\cdot)}\left[\int_{\cS} \phi^*_{h-1}(s'',a'')^\top \psi^*_{h-1}(s) g(s,\pi_h(s))\mathrm{d}s  \right]^2 \\
&\qquad = k \EE_{(s'',a'')\sim\rho^k_{h-1}(\cdot,\cdot)}\left[\int_{\cS} \PP_{h-1}(s|s'',a'') g(s,\pi_h(s))\mathrm{d}s  \right]^2\\
&\qquad \leq  k \EE_{(s'',a'')\sim\rho^k_{h-1}(\cdot,\cdot), s\sim \PP_{h-1}(\cdot|s'',a'') }[g(s,\pi_h(s))^2]\\
&\qquad \leq k \frac{1}{\Unif(a)} \EE_{(s,a)\sim\tilde{\rho}^k_h(\cdot,\cdot)}[ g(s,a)^2] =  k  |\cA|\cdot  \EE_{(s,a)\sim\tilde{\rho}^k_h(\cdot,\cdot)}[ g(s,a)^2],
\end{aligned}
\end{align}
where the first inequality is due to Jensen's inequality and the second inequality is by $g(s,\pi_h(s))^2 \leq \sum_{a\in \cA } g(s,a)^2 = 1/\Unif(a) \cdot \EE_{a\sim\Unif(\cA)}[g(s,a)^2]$ and $\tilde{\rho}^k_h(s,a):=\rho^k_{h-1}(s',a')\PP_{h-1}(s|s',a')\Unif(a)$ for all $h\geq 2$.

Combining \eqref{eq:step-back2-mdp1},  \eqref{eq:step-back2-mdp2}, and \eqref{eq:step-back2-mdp3}, we have for any $h \geq 2$,
\begin{align*}
%\begin{aligned}
   &\left|\EE_{(s,a)\sim d^{\pi, \PP}_h(\cdot,\cdot)}[g(s,a)]\right| \\
&\qquad\leq \sqrt{k  |\cA|\cdot  \EE_{(s,a)\sim\tilde{\rho}^k_h(\cdot,\cdot)}[ g(s,a)^2] + \lambda_k B^2 d} \cdot \EE_{d^{\pi, \PP}_{h-1}}\left\|\phi^*_{h-1}\right\|_{\Sigma_{\rho^k_{h-1}, \phi^*_{h-1}}^{-1}}.
%\end{aligned}
\end{align*}
For $h=1$, we have
\begin{align*}
\left|\EE_{(s,a)\sim d^{\pi, \PP}_1(\cdot,\cdot)}[g(s,a)]\right| &\leq \sqrt{g(s_1,\pi_1(s_1))^2} \leq \sqrt{|\cA|  \EE_{a\sim \tilde{\rho}_1^k(s_1,\cdot)}[g(_1s,a)^2] },
\end{align*} 
where we define $\tilde{\rho}_1^k(s_1,a) =\Unif(a)$. The above derivations also hold for any randomized policy $\pi$. The proof is completed.
\end{proof}

%\subsection{Proof of Lemma \ref{lem:plan}} 
\begin{lemma}\label{lem:plan} Let $\pi^*:=\argmax_{\pi} V_1^\pi(s_1)$, $\overline{V}_1^k(s_1)$ be the value function updated in Algorithm \ref{alg:contrastive}, and $\overline{V}_{k,1}^\pi$ be the value function under any policy $\pi$ associated with an MDP defined by the reward function $r + \beta^k$ and the estimated transition $\hat{\PP}^k$ with $\beta^k$ and $\hat{\PP}^k$ obtained at episode $k$ of Algorithm \ref{alg:contrastive}. Then we have
\begin{align*}
\overline{V}_1^k(s_1) \geq \overline{V}_{k,1}^{\pi^*}(s_1).
\end{align*}
\end{lemma}

\begin{proof}
We prove this lemma by induction. First, we have $\overline{V}_{H+1}^k(s) = \overline{V}_{k,H+1}^\pi(s) = 0$ for any $s\in \cS$ and any (randomized) policy $\pi$ such that the Bellman equation is written as 
$Q_h^\pi(s,a) = r_h(s,a) + \PP_hV_{h+1}^\pi(s,a)$ and $V_h^\pi(s) = \EE_{a\sim\pi(\cdot|s)}[Q_h^\pi(s,a)]$. Here, we aim to prove this lemma holds for any policy $\pi$, we slightly abuse the notation $\pi$ and let $\pi_h(a|s)$ be the probability of taking action $a$ under the state $s$. Next, we assume the following inequality holds 
\begin{align*}
\overline{V}_{h+1}^k(s) \geq \overline{V}_{k, h+1}^\pi(s) .
\end{align*}
Then, with the above inequality, by the Bellman equation, we have
\begin{align}
\begin{aligned}
\label{eq:opt-mdp-1}
&\overline{Q}_h^k(s,a) - \overline{Q}_{k,h}^\pi(s,a)\\
&\qquad =  r_h(s,a) +  \beta_h^k(s,a) + \hat{\PP}_h^k\overline{V}_{h+1}^k(s,a) - r_h(s,a) -  \beta_h^k(s,a) - \hat{\PP}_h^k\overline{V}_{k,h+1}^\pi(s,a) \\
&\qquad =  \hat{\PP}_h^k\overline{V}_{h+1}^k(s) - \hat{\PP}_h^k\overline{V}_{k,h+1}^\pi(s)\geq 0.
\end{aligned}
\end{align}
Then, we have
\begin{align*}
\overline{V}_h^k(s) &=\max_{a\in \cA} \overline{Q}_h^k(s,a)\\
&\geq \max_{a\in \cA} \overline{Q}_{k,h}^\pi(s,a)\\
&\geq \EE_{a\sim\pi_h(\cdot|s)}[\overline{Q}_{k,h}^\pi(s,a)]  = \overline{V}_{k,h}^\pi(s) ,
\end{align*}
where the first inequality is by \eqref{eq:opt-mdp-1} and the second inequality is due to the fact that $\max_i \vb_i \geq \langle \vb, \db\rangle$ when $\vb$ is any vector and $\db$ is a vector in a probability simplex satisfying $\sum_i \db_i = 1$ and $\db_i\geq 0$. Thus, we obtain for any policy $\pi$,
\begin{align*}
\overline{V}_1^k(s_1) \geq \overline{V}_{k,1}^\pi(s_1),
\end{align*}
which further implies
\begin{align*}
\overline{V}_1^k(s_1) \geq \overline{V}_{k,1}^{\pi^*}(s_1).
\end{align*}
This completes the proof.
\end{proof}

\subsection{Proof of Lemma \ref{lem:stat-err}} \label{sec:proof-stat-err}

\begin{proof}%[Proof of Lemma \ref{lem:stat-err}] 
For any function $f_h\in \cF$, we let $\Pr_h^f(y|s,a,s')$ denote the conditional probability characterized by the function $f_h$ at the step $h$, which is
\begin{align*}
\Pr{}_h^f(y|s,a,s') =  \left(\frac{f_h(s,a,s')}{1+f_h(s,a,s')}\right)^y\left(\frac{1}{1+f_h(s,a,s')}\right)^{1-y}.   
\end{align*}
Furthermore, we have
\begin{align*}
\Pr{}_h^f(y,s'|s,a)  = \Pr{}_h^f(y|s,a,s')\Pr{}_h(s'|s,a) = \left(\frac{f_h(s,a,s')\Pr{}_h(s'|s,a)}{1+f_h(s,a,s')}\right)^y\left(\frac{\Pr{}_h(s'|s,a)}{1+f_h(s,a,s')}\right)^{1-y},
\end{align*}
where we have
\begin{align}
\begin{aligned}\label{eq:tran-lower}
    \Pr{}_h(s'|s,a) &= \Pr{}_h(y=1| s,a) \Pr{}_h(s'|y=1, s,a) + \Pr{}_h(y = 0| s,a) \Pr{}_h(s'|y=0, s,a) \\
    &= \Pr{}_h(y=1) \Pr{}_h(s'|y=1, s,a) + \Pr{}_h(y = 0) \Pr{}_h(s'|y=0, s,a) \\
    &= \frac{1}{2} [\PP_h(s'|s,a) + \cPS(s')] \geq \frac{1}{2}\CS > 0,
\end{aligned}
\end{align}
since we assume $P_\cS^-(s')\geq \CS$.

Thus, we have the equivalency of solving the following two problems with $f_h(s,a,s')=\phi_h(s,a)^\top \psi_h(s')$, which is
\begin{align}\label{eq:equi-solution}
\max_{\phi_h\in \Phi, \psi_h\in \Psi} \sum_{(s,a,s',y)\in \cD_h^k} \log \Pr{}_h^f(y|s,a,s') = \max_{\phi_h, \psi_h} \sum_{(s,a,s',y)\in \cD_h^k} \log \Pr{}_h^f(y,s'|s,a),
\end{align}
since the conditional probability $\Pr_h(s'|s,a)$ is only  determined by $\PP_h(s'|s,a)$ and $\cPS(s')$ and is independent of $f_h$ as shown in \eqref{eq:tran-lower}. We denote the solution of \eqref{eq:equi-solution} as $\tilde{\phi}_h^k$ and $\tilde{\psi}_h^k$ such that 
\begin{align*}
\hat{f}_h^k(s,a,s') = \tilde{\psi}^k_h(s')^\top \tilde{\phi}^k_h(s,a).
\end{align*} 
%Without specification here, we let $\{\Gamma_h^{k'}\}_{k'=1}^k$ be an equivalent sequence of distributions for sampling dataset $\{\cD_h^k\}_{h=1}^H$. We will further give detailed description on such a distribution in \hyperref[sec:proof-thm-main]{Proof of Theorem \ref{thm:main}} based on our design of the sampling algorithm,  i.e., Algorithm \ref{alg:sample}. 
%Now we characterize the upper bound of $\zeta_h^k$ and $\xi_h^k$ as defined in \eqref{eq:tran-err-def}. 
According to Algorithm \ref{alg:sample}, we know that for each $h\geq 2$, at each episode $k'\in [k]$, the data $(s,a)$ is sampled from both $\tilde{d}_h^{\pi^{k'}}(\cdot,\cdot)$ and $\breve{d}_h^{\pi^{k'}}(\cdot,\cdot)$. Therefore, further with Lemma \ref{lem:recover-mle}, by solving the contrastive loss in \eqref{eq:contra-loss} or equivalently as in \eqref{eq:equi-solution}, with probability at least $1-\delta$, for all $h\geq 2$, we have
\begin{align*}
\sum_{k'=1}^k  \Bigg[ &\EE_{(s,a)\sim \tilde{d}_h^{\pi^{k'}}(\cdot,\cdot)} \left\|\Pr{}_h^{\hat{f}^k}(\cdot,\cdot|s,a) - \Pr{}_h^{f^*}(\cdot,\cdot|s,a) \right\|_{\TV}^2 \\
&+ \EE_{(s,a)\sim \breve{d}_h^{\pi^{k'}}(\cdot,\cdot)} \left\|\Pr{}_h^{\hat{f}^k}(\cdot,\cdot|s,a) - \Pr{}_h^{f^*}(\cdot,\cdot|s,a) \right\|_{\TV}^2 \Bigg] \leq 2\log (2kH|\cF|/\delta),
\end{align*}
where the factor $2H$ inside $\log$ is due to the data being sampled from two distributions and applying union bound for all $h\geq 2$. The above inequality is equivalent to 
\begin{align}
\begin{aligned}\label{eq:ave-mle-bound1} 
&\EE_{(s,a)\sim \tilde{\rho}_h^k(\cdot,\cdot)} \left\|\Pr{}_h^{\hat{f}^k}(\cdot,\cdot|s,a) - \Pr{}_h^{f^*}(\cdot,\cdot|s,a) \right\|_{\TV}^2 \\
&\qquad + \EE_{(s,a)\sim \breve{\rho}_h^k(\cdot,\cdot)} \left\|\Pr{}_h^{\hat{f}^k}(\cdot,\cdot|s,a) - \Pr{}_h^{f^*}(\cdot,\cdot|s,a) \right\|_{\TV}^2  \leq 2\log (2kH|\cF|/\delta)/k, \quad \forall h\geq 2,
\end{aligned}
\end{align}
where we use the fact that $\tilde{\rho}^k_h(s,a) = \frac{1}{k}\sum_{k'=0}^{k-1} \tilde{d}^{\pi^{k'}}_h(s,a)$ and $\breve{\rho}^k_h(s,a) = \frac{1}{k}\sum_{k'=0}^{k-1} \breve{d}^{\pi^{k'}}_h(s,a)$. On the other hand, for $h=1$, the data is only sampled from $\tilde{d}^{\pi^{k'}}_1(\cdot,\cdot)$ for any $k'\in [k]$. Therefore,  we have
\begin{align*}
\sum_{k'=1}^k  \Bigg[ &\EE_{(s,a)\sim \tilde{d}_1^{\pi^{k'}}(\cdot,\cdot)} \left\|\Pr{}_1^{\hat{f}^k}(\cdot,\cdot|s,a) - \Pr{}_1^{f^*}(\cdot,\cdot|s,a) \right\|_{\TV}^2  \Bigg] \leq 2\log (2k|\cF|/\delta),
\end{align*}
which, analogously, gives
\begin{align} \label{eq:ave-mle-bound2} 
&\EE_{(s,a)\sim \tilde{\rho}_1^k(\cdot,\cdot)} \left\|\Pr{}_1^{\hat{f}^k}(\cdot,\cdot|s,a) - \Pr{}_1^{f^*}(\cdot,\cdot|s,a) \right\|_{\TV}^2  \leq 2\log (2k|\cF|/\delta)/k.
\end{align}
Thus, by \eqref{eq:ave-mle-bound1} and \eqref{eq:ave-mle-bound2},   with probability at least $1-2\delta$, we have
\begin{align}
\begin{aligned}\label{eq:ave-mle-bound} 
%&\EE_{(s,a)\sim \tilde{\rho}_1^k(\cdot,\cdot)} \left\|\Pr{}_1^{\hat{f}^k}(\cdot,\cdot|s,a) - \Pr{}_1^{f^*}(\cdot,\cdot|s,a) \right\|_{\TV}^2 \leq 2\log (2k|\cF|/\delta)/k, \\
&\EE_{(s,a)\sim \tilde{\rho}_h^k(\cdot,\cdot)} \left\|\Pr{}_h^{\hat{f}^k}(\cdot,\cdot|s,a) - \Pr{}_h^{f^*}(\cdot,\cdot|s,a) \right\|_{\TV}^2 \leq 2\log (2kH|\cF|/\delta)/k, \quad \forall h\geq 1,\\
&\EE_{(s,a)\sim \breve{\rho}_h^k(\cdot,\cdot)} \left\|\Pr{}_h^{\hat{f}^k}(\cdot,\cdot|s,a) - \Pr{}_h^{f^*}(\cdot,\cdot|s,a) \right\|_{\TV}^2  \leq 2\log (2kH|\cF|/\delta)/k, \quad \forall h\geq 2,
\end{aligned}
\end{align}
Next, we show the recovery error bound of the transition model based on \eqref{eq:ave-mle-bound}. We have
\begin{align*}
 &\left\|\Pr{}_h^{\hat{f}^k}(\cdot,\cdot|s,a) - \Pr{}_h^{f^*}(\cdot,\cdot|s,a) \right \|_{\TV}^2\\
 &\quad = \left( \left\|\Pr{}_h^{\hat{f}^k}(y=0,\cdot|s,a) - \Pr{}_h^{f^*}(y=0,\cdot|s,a) \right \|_{\TV} + \left\|\Pr{}_h^{\hat{f}^k}(y=1,\cdot|s,a) - \Pr{}_h^{f^*}(y=1,\cdot|s,a) \right \|_{\TV}   \right)^2 \\
&\quad = 4\left\|\frac{\Pr{}_h(\cdot|s,a)}{1+\hat{f}_h^k(s,a,\cdot)} - \frac{\Pr{}_h(\cdot|s,a)}{1+f_h^*(s,a,\cdot)} \right \|_{\TV}^2\\
&\quad = 2 \left[\int_{s'\in \cS}\frac{\Pr{}_h(s'|s,a)\cdot |f_h^*(s,a,s')-\hat{f}_h^k(s,a,s')|}{[1+\hat{f}_h^k(s,a,s')]\cdot[1+f_h^*(s,a,s')]}\mathrm{d}s' \right]^2,
\end{align*}
where $f^*(s,a,s') = \frac{\PP(s'|s,a)}{\cPS(s')}  ~~\text{with}~~ \cPS(s') \geq \CS,~~\forall s'\in\cS$ and the second equation is due to $\|\Pr{}_h^{\hat{f}^k}(y=0,\cdot|s,a) - \Pr{}_h^{f^*}(y=0,\cdot|s,a)  \|_{\TV} = \|\Pr{}_h^{\hat{f}^k}(y=1,\cdot|s,a) - \Pr{}_h^{f^*}(y=1,\cdot|s,a)  \|_{\TV} = \Big\|\frac{\Pr{}_h(\cdot|s,a)}{1+\hat{f}_h^k(s,a,\cdot)} - \frac{\Pr{}_h(\cdot|s,a)}{1+f_h^*(s,a,\cdot)} \Big \|_{\TV}$. 
Moreover, according to Lemma \ref{lem:opt-contra-loss} and \eqref{eq:tran-lower}, we have 
\begin{align*}
&\frac{\Pr{}_h(s'|s,a)\cdot |f_h^*(s,a,s')-\hat{f}_h^k(s,a,s')|}{[1+\hat{f}_h^k(s,a,s')]\cdot[1+f_h^*(s,a,s')]}\\
&\qquad =\frac{1/2 \cdot [\PP_h(s'|s,a) + \cPS(s')]\cdot |\PP_h(s'|s,a)/\cPS(s')-\hat{f}_h^k(s,a,s')|}{[1+\hat{f}_h^k(s,a,s')]\cdot[1+\PP_h(s'|s,a)/\cPS(s')]}\\
&\qquad =\frac{1/2 \cdot |\PP_h(s'|s,a)-\cPS(s') \hat{f}_h^k(s,a,s')|}{1+\hat{f}_h^k(s,a,s')} \geq  \frac{|\PP_h(s'|s,a)-\cPS(s') \hat{f}_h^k(s,a,s')|}{4\sqrt{d}/\CS},
\end{align*}
where the inequality is due to $[1+\hat{f}_h^k(s,a,s')]\leq (1+\sqrt{d}/\CS)\leq 2\sqrt{d}/\CS$ since $\hat{f}_h^k(s,a,s')\leq \sqrt{d}/\CS$ with $d\geq 1$ and $0<\CS\leq 1$. Thus, the above results further give
\begin{align*}
 \frac{(\CS)^2}{8d}\left[\int_{s'\in\cS}\left|\PP_h(s'|s,a)-\cPS(s') \hat{f}_h^k(s,a,s')\right| \mathrm{d}s' \right]^2 \leq \left\|\Pr{}_h^{\hat{f}^k}(\cdot,\cdot|s,a) - \Pr{}_h^{f^*}(\cdot,\cdot|s,a) \right \|_{\TV}^2.
\end{align*}
Therefore, combining this inequality with \eqref{eq:ave-mle-bound}, we obtain that for all $h\geq 2$,
\begin{align}
%\begin{aligned}
    &\mathbb{E}_{(s,a)\sim \tilde{\rho}_h^k(\cdot,\cdot)} \left\|\PP_h(\cdot|s,a)-\cPS(\cdot) \tilde{\phi}_h^k(s,a)^\top \tilde{\psi}_h^k (\cdot) \right\|_{\TV}^2 \nonumber  \\ 
    &\qquad = 1/2\cdot\mathbb{E}_{(s,a)\sim \tilde{\rho}_h^k(\cdot,\cdot)} \left[\int_{s'\in\cS}\left|\PP_h(s'|s,a)-\cPS(s') \tilde{\phi}_h^k(s,a)^\top \tilde{\psi}_h^k (s')\right| \mathrm{d}s' \right]^2 \nonumber \\
    &\qquad\leq 8d/(\CS)^2\cdot\log (2kH|\cF|/\delta)/k. \label{eq:init-P-diff-1}
%\end{aligned}
\end{align}
Similarly, we can obtain
\begin{align}
\begin{aligned}\label{eq:init-P-diff-2}
&\EE_{(s,a)\sim \tilde{\rho}_1^k(\cdot,\cdot)} \left\|\PP_h(\cdot|s,a)-\cPS(\cdot) \tilde{\phi}_h^k(s,a)^\top \tilde{\psi}_h^k (\cdot)\right\|_{\TV}^2 \leq 8d/(\CS)^2\cdot\log (2k|\cF|/\delta)/k, \\
&\EE_{(s,a)\sim \breve{\rho}_h^k(\cdot,\cdot)} \left\|\PP_h(\cdot|s,a)-\cPS(\cdot) \tilde{\phi}_h^k(s,a)^\top \tilde{\psi}_h^k (\cdot)\right\|_{\TV}^2  \leq 8d/(\CS)^2\cdot\log (2kH|\cF|/\delta)/k, \quad \forall h\geq 2.
\end{aligned}
\end{align}
Now we define 
\begin{align*}
\hat{g}_h^k(s,a,s') := \cPS(s') \tilde{\phi}_h^k(s,a)^\top \tilde{\psi}_h^k (s').
\end{align*} 
Note that $\int_{s'\in\cS}\hat{g}_h^k(s,a,s')\mathrm{d}s'$ may not be guaranteed to be $1$ though $\hat{g}_h^k(s,a,\cdot)$ is close to the true transition model $\PP_h(\cdot|s,a)$ according to \eqref{eq:init-P-diff-1} and \eqref{eq:init-P-diff-2}. Therefore, to obtain an approximator of the transition model $\PP_h$ lying on a probability simplex, we should further normalize $\hat{g}_h^k(s,a,s')$. Thus, we define for all $(s,a,s')\in \cS\times\cA\times\cS$,
\begin{align*}
\hat{\PP}_h^k(s'|s,a) := \frac{\hat{g}_h^k(s,a,s')}{\|\hat{g}_h^k(s,a,\cdot)\|_1}=  \frac{\hat{g}_h^k(s,a,s')}{\int_{s'\in\cS}\hat{g}_h^k(s,a,s')\mathrm{d}s'} = \frac{\cPS(s') \tilde{\phi}_h^k(s,a)^\top \tilde{\psi}_h^k (s')}{\int_{s'\in\cS}\cPS(s') \tilde{\phi}_h^k(s,a)^\top \tilde{\psi}_h^k (s')\mathrm{d}s'}.
\end{align*}
We further let 
\begin{align*}
    \hat{\phi}_h^k(s,a) :=  \tilde{\phi}_h^k(s,a)\big/\textstyle \int_{s'\in\cS}\cPS(s') \tilde{\phi}_h^k(s,a)^\top \tilde{\psi}_h^k (s')\mathrm{d}s'   , \qquad \hat{\psi}_h^k(s') :=\cPS(s')  \tilde{\psi}_h^k (s'),
\end{align*}
such that
\begin{align*}
    \hat{\PP}_h^k(s'|s,a) = \hat{\psi}_h^k(s')^\top \hat{\phi}_h^k(s,a).
\end{align*}
Next, based on the above definitions and results, we will give the upper bound of the approximation error $\mathbb{E}_{(s,a)\sim \tilde{\rho}_h^k(\cdot,\cdot)}\|\hat{\PP}_h^k(\cdot|s,a)- \PP_h(\cdot|s,a)\|_{\TV}^2$. We have
\begin{align}
\begin{aligned}\label{eq:P-diff0}
&\mathbb{E}_{(s,a)\sim \tilde{\rho}_h^k(\cdot,\cdot)}\|\hat{\PP}_h^k(\cdot|s,a)- \PP_h(\cdot|s,a)\|_{\TV}^2 \\
&\qquad \leq 2\mathbb{E}_{(s,a)\sim \tilde{\rho}_h^k(\cdot,\cdot)}\|\hat{\PP}_h^k(\cdot|s,a)- \hat{g}_h^k(s,a,\cdot) \|_{\TV}^2 + 2\mathbb{E}_{(s,a)\sim \tilde{\rho}_h^k(\cdot,\cdot)}\|\hat{g}_h^k(s,a,\cdot) - \PP_h(\cdot|s,a)\|_{\TV}^2  \\
&\qquad \leq 2\mathbb{E}_{(s,a)\sim \tilde{\rho}_h^k(\cdot,\cdot)}\|\hat{\PP}_h^k(\cdot|s,a)- \hat{g}_h^k(s,a,\cdot) \|_{\TV}^2 + 16d/(\CS)^2\cdot\log (2kH|\cF|/\delta)/k,  
\end{aligned}
\end{align}
where the first inequality is by $(x+y)^2\leq 2x^2 + 2y^2$ and the last inequality is by \eqref{eq:init-P-diff-1}. Moreover, we have
\begin{align*}
    &\mathbb{E}_{(s,a)\sim \tilde{\rho}_h^k(\cdot,\cdot)}\|\hat{\PP}_h^k(\cdot|s,a)- \hat{g}_h^k(s,a,\cdot) \|_{\TV}^2\\
    &\qquad = \mathbb{E}_{(s,a)\sim \tilde{\rho}_h^k(\cdot,\cdot)}\left\|\frac{\hat{g}_h^k(s,a,s')}{\|\hat{g}_h^k(s,a,\cdot)\|_1}- \hat{g}_h^k(s,a,\cdot)\right\|_{\TV}^2\\
    &\qquad = \frac{1}{4}\mathbb{E}_{(s,a)\sim \tilde{\rho}_h^k(\cdot,\cdot)}\left(\|\hat{g}_h^k(s,a,\cdot)\|_1- 1\right)^2\\
    &\qquad \leq \frac{1}{4}\mathbb{E}_{(s,a)\sim \tilde{\rho}_h^k(\cdot,\cdot)}\left(\|\hat{g}_h^k(s,a,\cdot)-\PP_h(\cdot|s,a)\|_1 + \|\PP_h(\cdot|s,a)\|_1- 1\right)^2\\
    &\qquad \leq \frac{1}{4}\mathbb{E}_{(s,a)\sim\tilde{\rho}_h^k(\cdot,\cdot)}\|\hat{g}_h^k(s,a,\cdot)-\PP_h(\cdot|s,a)\|_1^2 \\ &\qquad = \mathbb{E}_{(s,a)\sim \tilde{\rho}_h^k(\cdot,\cdot)}\|\hat{g}_h^k(s,a,\cdot)-\PP_h(\cdot|s,a)\|_{\TV}^2\leq 8d/(\CS)^2\cdot\log (2kH|\cF|/\delta)/k.
\end{align*}
Combining the above inequality with \eqref{eq:P-diff0}, we eventually obtain
\begin{align*}
%\begin{aligned}
\mathbb{E}_{(s,a)\sim \tilde{\rho}_h^k(\cdot,\cdot)}\|\hat{\PP}_h^k(\cdot|s,a)- \PP_h(\cdot|s,a)\|_{\TV}^2 \leq 32d/(\CS)^2\cdot\log (2kH|\cF|/\delta)/k, \quad \forall h\geq 1.
%\end{aligned}
\end{align*}
Thus, we similarly have
\begin{align*}
%&\mathbb{E}_{(s,a)\sim\tilde{\rho}_1^k(\cdot,\cdot)}\|\hat{\PP}_1^k(\cdot|s,a)- \PP_1(\cdot|s,a)\|_{\TV}^2 \leq 32d\log (2k|\cF|/\delta)/k,\\
&\mathbb{E}_{(s,a)\sim\breve{\rho}_h^k(\cdot,\cdot)}\|\hat{\PP}_h^k(\cdot|s,a)- \PP_h(\cdot|s,a)\|_{\TV}^2 \leq 32d/(\CS)^2\cdot\log (2kH|\cF|/\delta)/k, \quad \forall h\geq 2.
\end{align*}
The above three inequalities hold with probability at least $1-2\delta$. This completes the proof.
\end{proof}

\subsection{Proof of Theorem \ref{thm:main}}\label{sec:proof-thm-main}
\begin{proof} We first decompose the term $V^{\pi^*}_1(s_1) - V^{\pi^k}_1(s_1)$ as follows
\begin{align}
\begin{aligned}\label{eq:decomp-mdp-init}
V^{\pi^*}_1(s_1) - V^{\pi^k}_1(s_1)& = V^{\pi^*}_1(s_1) - \overline{V}^{\pi^*}_{k,1}(s_1)+ \overline{V}^{\pi^*}_{k,1}(s_1)- V^k_1(s_1) + V^k_1(s_1) - V^{\pi^k}_1(s_1) \\
& \leq  V^{\pi^*}_1(s_1) - \overline{V}^{\pi^*}_{k,1}(s_1) +\overline{V}^k_1(s_1) - V^{\pi^k}_1(s_1)\\
& = \underbrace{V^{\pi^*}_1(s_1) - \overline{V}^{\pi^*}_{k,1}(s_1)}_{(i)} +\underbrace{\overline{V}_{k,1}^{\pi^k}(s_1) - V^{\pi^k}_1(s_1)}_{(ii)},
\end{aligned}
\end{align}
where the first inequality is by the result of Lemma \ref{lem:plan} that $\overline{V}^{\pi^*}_{k,1}(s_1)\leq V^k_1(s_1)$ and the second equation is by the definition of $\overline{V}^k_h$ as in Algorithm \ref{alg:contrastive} such that $\overline{V}^k_h = \overline{V}_{k,h}^{\pi^k}$ for any $h\in [H]$. Thus, to bound the term $V^{\pi^*}_1(s_1) - V^{\pi^k}_1(s_1)$, we only need to bound the terms $(i)$ and $(ii)$ as in \eqref{eq:decomp-mdp-init}.

To bound term $(i)$, by Lemma \ref{lem:diff1}, we have
\begin{align}
\begin{aligned}\label{eq:term-i-decomp}
   (i)= V_1^{\pi^*}(s_1) - \overline{V}^{\pi^*}_{k,1}(s_1)&= \EE \left[ \sum_{h=1}^H \left(-\beta^k_h(s_h,a_h) + (\PP_h - \hat{\PP}^k_h )V^{\pi^*}_{h+1}(s_h,a_h)\right) \Bigggiven \pi^*, \hat{\PP}^k \right]\\
    &\leq  \EE \left[ \sum_{h=1}^H \left(-\beta^k_h(s_h,a_h) + H \|\PP_h(\cdot|s_h,a_h) - \hat{\PP}^k_h(\cdot|s_h,a_h) \|_1\right) \Bigggiven \pi^*, \hat{\PP}^k \right], 
\end{aligned}
\end{align}
where the first inequality is by the fact $\sup_{s\in\cS} V^\pi_{h+1}(s) \leq H$. Next, we bound the term $\EE [ \sum_{h=1}^H  H\cdot \allowbreak \|\PP_h(\cdot|s_h,a_h) - \hat{\PP}^k_h(\cdot|s_h,a_h) \|_1 \given \pi^*, \hat{\PP}^k ]$. Note that for the term $\|\PP_h(\cdot|s_h,a_h) - \hat{\PP}^k_h(\cdot|s_h,a_h) \|_1$, we first have a trivial bound that $\|\PP_h(\cdot|s_h,a_h) - \hat{\PP}^k_h(\cdot|s_h,a_h) \|_1 \leq \|\PP_h(\cdot|s_h,a_h)\|_1 + \|\hat{\PP}^k_h(\cdot|s_h,a_h) \|_1 = 2$. Moreover, according to Lemma \ref{lem:expand1}, we have
\begin{align*}
&\EE \left[ \sum_{h=1}^H \|\PP_h(\cdot|s_h,a_h) - \hat{\PP}^k_h(\cdot|s_h,a_h) \|_1 \Bigggiven \pi^*, \hat{\PP}^k \right] \\
&\qquad = \sum_{h=1}^H \EE_{(s_h, a_h)\sim d_h^{\pi^*, \hat{\PP}^k}(\cdot,\cdot)} [  \|\PP_h(\cdot|s_h,a_h) - \hat{\PP}^k_h(\cdot|s_h,a_h) \|_1] \\
% & = \sum_{h=2}^H \sqrt{8k  \zeta^k_{h-1} + 2k  |\cA|  \EE_{(s,a)\sim\breve{\rho}^k_h(\cdot,\cdot)}[ \|\PP_h(\cdot|s,a) - \hat{\PP}^k_h(\cdot|s,a) \|_1^2] + 4\lambda_k  d/(\CS)^2} \cdot \EE_{(s,a)\sim d^{\pi^*, \hat{\PP}^k}_{h-1}(\cdot,\cdot)}\left\|\hat{\phi}^k_{h-1}(s,a)\right\|_{\Sigma_{\tilde{\rho}^k_{h-1}, \hat{\phi}^k_{h-1}}^{-1}} \\
% & \quad + \sqrt{|\cA|  \EE_{a\sim \tilde{\rho}_1^k(s_1,\cdot)}[\|\PP_1(\cdot|s_1,a) - \hat{\PP}^k_1(\cdot|s_1,a) \|_1^2] }\\
&\qquad= \sum_{h=2}^H \sqrt{8k  \zeta^k_{h-1} + 2k  |\cA|\xi_h^k + 4\lambda_k  d/(\CS)^2} \cdot \EE_{ d^{\pi^*, \hat{\PP}^k}_{h-1}}\left\|\hat{\phi}^k_{h-1}\right\|_{\Sigma_{\tilde{\rho}^k_{h-1}, \hat{\phi}^k_{h-1}}^{-1}}  + \sqrt{|\cA|  \zeta_1^k },
\end{align*}
where the last equation is by the below definitions for all $(h,k)\in [H]\times[K]$,
\begin{align}
\begin{aligned} \label{eq:tran-err-def}
&\zeta_h^k:=\EE_{(s,a)\sim \tilde{\rho}_h^k(\cdot,\cdot)}[\|\PP_h(\cdot|s
,a) - \hat{\PP}^k_1(\cdot|s,a) \|_h^2],\\
&\xi_h^k:=\EE_{(s,a)\sim\breve{\rho}^k_h(\cdot,\cdot)}[ \|\PP_h(\cdot|s,a) - \hat{\PP}^k_h(\cdot|s,a) \|_1^2],
\end{aligned}
\end{align}
whose upper bound will be characterized later. Therefore, the above results imply that
\begin{align*}
&\EE \left[ \sum_{h=1}^H H \|\PP_h(\cdot|s_h,a_h) - \hat{\PP}^k_h(\cdot|s_h,a_h) \|_1 \Bigggiven \pi^*, \hat{\PP}^k \right] \\
& \leq \min\bigg\{ H\sqrt{|\cA|  \zeta_1^k } + \sum_{h=2}^H H\sqrt{8k  \zeta^k_{h-1} + 2k  |\cA|\xi_h^k + 4\lambda_k  d/(\CS)^2}\cdot \EE_{d^{\pi^*, \hat{\PP}^k}_{h-1}}\left\|\hat{\phi}^k_{h-1}\right\|_{\Sigma_{\tilde{\rho}^k_{h-1}, \hat{\phi}^k_{h-1}}^{-1}}  , ~~ 2H^2\bigg\}.
\end{align*}
On the other hand, we further bound the term $\EE [ \sum_{h=1}^H -\beta^k_h(s_h,a_h)  \given \pi^*, \hat{\PP}^k ]$ in \eqref{eq:term-i-decomp}. We have 
\begin{align*}
\EE \left[ \sum_{h=1}^H -\beta^k_h(s_h,a_h)  \bigggiven \pi^*, \hat{\PP}^k \right]&  =\EE \left[ \sum_{h=1}^H - \min\{\gamma_k \|\hat{\phi}^k_h(s_h,a_h)\|_{(\hat{\Sigma}_h^k)^{-1}}, 2H\} \bigggiven \pi^*, \hat{\PP}^k \right]\\
&  \leq\EE \left[ \sum_{h=1}^H - \min\left\{\frac{3}{5}\gamma_k \|\hat{\phi}^k_h(s_h,a_h)\|_{\Sigma_{\tilde{\rho}^k_h, \hat{\phi}^k_h}^{-1}}, 2H\right\} \bigggiven \pi^*, \hat{\PP}^k \right]\\
&  = - \min\left\{\frac{3}{5}\gamma_k \sum_{h=1}^H \EE_{d^{\pi^*, \hat{\PP}^k}_h}\|\hat{\phi}^k_h\|_{\Sigma_{\tilde{\rho}^k_h, \hat{\phi}^k_h}^{-1}}, 2H^2\right\} \\
&  \leq - \min\left\{\frac{3}{5}\gamma_k \sum_{h=1}^{H-1} \EE_{d^{\pi^*, \hat{\PP}^k}_h}\|\hat{\phi}^k_h\|_{\Sigma_{\tilde{\rho}^k_h, \hat{\phi}^k_h}^{-1}}, 2H^2\right\} ,
\end{align*} 
when  $\lambda_k\geq c_0 d \log(H|\Phi|k/\delta)$ with probability at least $1-\delta$. The first inequality is obtained by applying  Lemma \ref{lem:con-inverse} for all $h\in [H]$. Thus, plugging in the above results into \eqref{eq:term-i-decomp}, for a sufficient large $c_0$, setting 
\begin{align}
\lambda_k= c_0 d \log(H|\Phi|k/\delta), \qquad
\gamma_k= \frac{5}{3}H\sqrt{8k  \zeta^k_{h-1} + 2k  |\cA|\xi_h^k + 4\lambda_k  d/(\CS)^2}, \label{eq:extra-condition}
\end{align}
we have that
\begin{align}
(i) =  V_1^{\pi^*}(s_1) - V_1^{\pi^*}(s_1)  &\leq \sqrt{|\cA|  \zeta_1^k }, \label{eq:near-optim}
\end{align}
where the inequality is due to  $\min\{x+y,2H^2\}-\min\{y, 2H^2\} \leq x, \forall x,y\geq 0$. The above inequality \eqref{eq:near-optim} looks similar to the optimism in linear MDPs \citep{jin2020provably} but has an additional positive bias $\sqrt{|\cA|  \zeta_1^k }$ which depends on $\sqrt{1/k}$. \citet{uehara2021representation} refers to such a biased optimism as \emph{near-optimism}. In our work, we prove that our algorithm for the low-rank MDP in an episodic setting also leads to near-optimism. 

Next, we show the upper bound of the term $(ii)$ in \eqref{eq:decomp-mdp-init}. By Lemma \ref{lem:diff2}, we have
\begin{align}
\begin{aligned}\label{eq:term-ii-decomp}
(ii) = \overline{V}_{k,1}^{\pi^k}(s_1) -V_1^{\pi^k}(s_1)  &= \EE \left[ \sum_{h=1}^H \left(\beta^k_h(s_h,a_h) - (\PP_h - \hat{\PP}^k_h )\overline{V}^{\pi^k}_{h+1}(s_h,a_h)\right) \Bigggiven \pi^k, \PP \right]\\
&\leq \EE \left[ \sum_{h=1}^H \left(\beta^k_h(s_h,a_h) + 3H^2\|\PP_h(\cdot|s_h,a_h) - \hat{\PP}^k_h(\cdot|s_h,a_h) \|_1\right) \Bigggiven \pi^k, \PP \right]\\
&= \sum_{h=1}^H \EE_{(s,a)\sim d_h^{\pi^k, \PP}(\cdot,\cdot)}  \left(\beta^k_h(s,a) + 3H^2\|\PP_h(\cdot|s,a) - \hat{\PP}^k_h(\cdot|s,a) \|_1\right) .    
\end{aligned}
\end{align}
where the first inequality is due to $\sup_{s\in\cS,a\in\cA}(r_h+\beta_h^k)(s,a)\leq 1 + 2H\leq 3H$ such that $\sup_{s\in\cS}\overline{V}^\pi_h(s) \leq 3H^2, \forall h\in [H]$ and the last equation is by the definition of $d_h^{\pi^k, \PP}$. Then, we need to separately bound the two terms in the last equation above. By Lemma \ref{lem:expand2}, since $\sup_{s\in\cS, a\in\cA}\beta_h^k(s,a)\leq 2H$ according to the definition of $\beta_h^k$ in Algorithm \ref{alg:contrastive}, we have
\begin{align*}
%\begin{aligned}
&\sum_{h=1}^H \EE_{(s,a)\sim d_h^{\pi^k, \PP}(\cdot,\cdot)} [ \beta^k_h(s,a) ]\\
&\leq   \sqrt{|\cA|  \EE_{a\sim \tilde{\rho}_1^k(s_1,\cdot)}[\beta^k_1(s_1,a)^2] } + \sum_{h=2}^H \sqrt{k  |\cA|  \EE_{(s,a)\sim\tilde{\rho}^k_h(\cdot,\cdot)}[ \beta^k_h(s,a)^2] + 4H^2\lambda_k  d} ~ \EE_{ d^{\pi^k, \PP}_{h-1}}\left\|\phi^*_{h-1}\right\|_{\Sigma_{\rho^k_{h-1}, \phi^*_{h-1}}^{-1}}\\
&\leq   \sqrt{|\cA|\gamma_k^2  \EE_{a\sim \tilde{\rho}_1^k(s_1,\cdot)} \|\hat{\phi}^k_1(s_1,a)\|_{(\hat{\Sigma}_1^k)^{-1}}^2 } \\
&\quad + \sum_{h=2}^H \sqrt{k  |\cA| \gamma_k^2 \EE_{\tilde{\rho}^k_h} \|\hat{\phi}^k_h\|_{(\hat{\Sigma}_h^k)^{-1}}^2 + 4H^2\lambda_k  d} ~ \EE_{ d^{\pi^k, \PP}_{h-1}}\left\|\phi^*_{h-1}\right\|_{\Sigma_{\rho^k_{h-1}, \phi^*_{h-1}}^{-1}}
,
%\end{aligned}
\end{align*}
where the second inequality is due to $\beta_h^k(s,a) \leq \|\hat{\phi}^k_h(s,a)\|_{(\hat{\Sigma}_h^k)^{-1}}$. Furthermore, we have that with $\lambda_k \geq c_0 d\log(H|\Phi|k/\delta)$, with probability at least $1-\delta$, for all $h\in [H]$, 
\begin{align*}
\EE_{(s,a)\sim\tilde{\rho}^k_h(\cdot,\cdot)} \|\hat{\phi}^k_h(s,a)\|_{(\hat{\Sigma}_h^k)^{-1}}^2 &\leq 3\EE_{(s,a)\sim\tilde{\rho}^k_h(\cdot,\cdot)} \|\hat{\phi}^k_h(s,a)\|_{\Sigma_{\tilde{\rho}^k_h, \hat{\phi}^k_h}^{-1}}^2\\
&=3\EE_{\tilde{\rho}^k_h}\left[ \hat{\phi}^k_h{}^\top \left(k\EE_{\tilde{\rho}^k_h}[ \hat{\phi}^k_h (\hat{\phi}^k_h)^\top] + \lambda_k I \right)^{-1}\hat{\phi}^k_h \right]\\
&=\frac{3}{k}\tr\left\{ k\EE_{\tilde{\rho}^k_h}[ \hat{\phi}^k_h\hat{\phi}^k_h{}^\top] \left(k\EE_{\tilde{\rho}^k_h}[ \hat{\phi}^k_h (\hat{\phi}^k_h)^\top] + \lambda_k I \right)^{-1}  \right\}\\
&\leq \frac{3}{k}\tr(I) = \frac{3d}{k},
\end{align*}
where the first inequality is by Lemma \ref{lem:con-inverse} and $\EE_{\tilde{\rho}^k_h}$ is short for $\EE_{(s,a)\sim\tilde{\rho}^k_h(\cdot,\cdot)}$. Thus, combining the above results, we have the following inequality holds with probability at least $1-\delta$,
\begin{align}
\begin{aligned}\label{eq:term-ii-decomp-1}
&\sum_{h=1}^H \EE_{(s,a)\sim d_h^{\pi^k, \PP}(\cdot,\cdot)} [ \beta^k_h(s,a) ]\\
&\qquad \leq   \sqrt{3d|\cA|\gamma_k^2  / k } + \sum_{h=2}^H \sqrt{3d  |\cA| \gamma_k^2  + 4H^2\lambda_k  d} ~ \EE_{d^{\pi^k, \PP}_{h-1}}\left\|\phi^*_{h-1}\right\|_{\Sigma_{\rho^k_{h-1}, \phi^*_{h-1}}^{-1}}.
\end{aligned}
\end{align}
In addition, further by Lemma \ref{lem:expand2}, due to $\|\PP_h(\cdot|s,a) - \hat{\PP}^k_h(\cdot|s,a) \|_1 \leq \|\PP_h(\cdot|s,a)\|_1 + \|\hat{\PP}^k_h(\cdot|s,a) \|_1\leq 2$, we have
\begin{align}
\begin{aligned}\label{eq:term-ii-decomp-2}
&\sum_{h=1}^H \EE_{(s,a)\sim d_h^{\pi^k, \PP}(\cdot,\cdot)} [ \|\PP_h(\cdot|s,a) - \hat{\PP}^k_h(\cdot|s,a) \|_1 ]\\
% &\qquad \leq   \sqrt{|\cA|  \EE_{a\sim \tilde{\rho}_1^k(s_1,\cdot)}[\|\PP_h(\cdot|s,a) - \hat{\PP}^k_h(\cdot|s,a) \|_1^2] } \\
% &\qquad \quad + \sum_{h=2}^H \sqrt{k  |\cA|  \EE_{(s,a)\sim\tilde{\rho}^k_h(\cdot,\cdot)}[ \|\PP_h(\cdot|s,a) - \hat{\PP}^k_h(\cdot|s,a) \|_1^2] + 4\lambda_k  d} ~ \EE_{(s,a)\sim d^{\pi^k, \PP}_{h-1}(\cdot,\cdot)}\left\|\phi^*_{h-1}(s,a)\right\|_{\Sigma_{\rho^k_{h-1}, \phi^*_{h-1}}^{-1}}\\
&\qquad =   \sqrt{|\cA|  \zeta_1^k }  + \sum_{h=2}^H \sqrt{k  |\cA|  \zeta_h^k + 4\lambda_k  d} ~ \EE_{d^{\pi^k, \PP}_{h-1}}\left\|\phi^*_{h-1}\right\|_{\Sigma_{\rho^k_{h-1}, \phi^*_{h-1}}^{-1}}.
\end{aligned}
\end{align}
Therefore, combining \eqref{eq:term-ii-decomp}, \eqref{eq:term-ii-decomp-1}, and \eqref{eq:term-ii-decomp-2}, we obtain 
\begin{align}
(ii) &\leq  \left[\sqrt{3d|\cA|\gamma_k^2  / k } +3H^2\sqrt{|\cA|  \zeta_1^k }\right] \label{eq:term-ii-bound} \\
&\quad + \sum_{h=2}^H \left[\sqrt{3d  |\cA| \gamma_k^2  + 4H^2\lambda_k  d}+3H^2\sqrt{k  |\cA|  \zeta_h^k + 4\lambda_k  d}\right] \EE_{d^{\pi^k, \PP}_{h-1}}\left\|\phi^*_{h-1}\right\|_{\Sigma_{\rho^k_{h-1}, \phi^*_{h-1}}^{-1}}. \nonumber
\end{align}
Now we characterize the upper bound of $\zeta_h^k$ and $\xi_h^k$ as defined in \eqref{eq:tran-err-def}.  According to Lemma \ref{lem:stat-err}, we have with probability at least $1-2\delta$,
\begin{align}
\begin{aligned}\label{eq:stat-err-bound}
&\zeta_h^k \leq  32d\log (2kH|\cF|/\delta)/k, \quad \forall h\geq 1,\\
& \xi_h^k  \leq 32d\log (2kH|\cF|/\delta)/k, \quad \forall h\geq 2,%\\
%&\zeta_1^k  \leq 32d\log (2k|\cF|/\delta)/k. 
\end{aligned}
\end{align}
Plugging \eqref{eq:stat-err-bound} and \eqref{eq:extra-condition} into \eqref{eq:near-optim} and \eqref{eq:term-ii-bound}, we obtain
\begin{align*}
&(i) =  V_1^{\pi^*}(s_1) - V_1^{\pi^*}(s_1) \lesssim \sqrt{d|\cA| \log (KH|\cF|/\delta)/k }, \\
&(ii) =  \overline{V}_{k,1}^{\pi^k}(s_1) -V_1^{\pi^k}(s_1)\lesssim \sqrt{C_1 \log(H|\cF|K/\delta)/k}  +  \sqrt{(C_1 +C_2)\log(H|\cF|K/\delta)}\sum_{h=1}^{H-1} \EE_{ d^{\pi^k, \PP}_h}\left\|\phi^*_h\right\|_{\Sigma_{\rho^k_h, \phi^*_h}^{-1}}.
\end{align*}
where we let $C_1 = H^2d^3|\cA|/(\CS)^2 +  H^2d^2|\cA|^2/(\CS)^2 + H^4d|\cA|/(\CS)^2 $ and $C_2 = H^4d^2$. Further by \eqref{eq:decomp-mdp-init}, we have
\begin{align*}
\frac{1}{K}\sum_{k=1}^K \left[ V_1^{\pi^*}(s_1)- V_1^{\pi^k}(s_1)\right]&\lesssim    \sqrt{(C_1 +C_2)\log(H|\cF|K/\delta)}/K\cdot \sum_{h=1}^{H-1} \sum_{k=1}^K\EE_{d^{\pi^k, \PP}_h}\left\|\phi^*_h\right\|_{\Sigma_{\rho^k_h, \phi^*_h}^{-1}}\\
&\quad +\sqrt{C_1 \log(H|\cF|K/\delta)/K}.
\end{align*}
Moreover, we have
\begin{align*}
&\frac{1}{K}\sum_{k=1}^K\EE_{(s,a)\sim d^{\pi^k, \PP}_h(\cdot,\cdot)}\left\|\phi^*_h(s,a)\right\|_{\Sigma_{\rho^k_h, \phi^*_h}^{-1}} \\
&\qquad\leq \sqrt{\frac{1}{K}\sum_{k=1}^K\EE_{(s,a)\sim d^{\pi^k, \PP}_h(\cdot,\cdot)}\left\|\phi^*_h(s,a)\right\|^2_{\Sigma_{\rho^k_h, \phi^*_h}^{-1}}}\\
&\qquad= \sqrt{\frac{1}{K}\sum_{k=1}^K\tr\left(\EE_{(s,a)\sim d^{\pi^k, \PP}_h(\cdot,\cdot)}\left(\phi^*_h(s,a)\phi^*_h(s,a)^\top\right)\Sigma_{\rho^k_h, \phi^*_h}^{-1}\right) }\\
&\qquad\leq \sqrt{d \log(1+kd/\lambda_k)/K} \leq \sqrt{d \log(1+c_1 K)/K}.
\end{align*}
where the first inequality is by Jensen's inequality and the second inequality is by Lemma \ref{lem:logdet-tele} with $c_1$ being some absolute constant. Thus, we have
\begin{align*}
&\frac{1}{K}\sum_{k=1}^K \left[ V_1^{\pi^*}(s_1)- V_1^{\pi^k}(s_1)\right]\\
&\qquad \lesssim    \sqrt{(C_1 +C_2)\log(H|\cF|K/\delta) H^2d \log(1+c_1 K)/K }+\sqrt{C_1 \log(H|\cF|K/\delta)/K} \\
&\qquad \lesssim \sqrt{H^2d(C_1 +C_2)\log(H|\cF|K/\delta)  \log(1+c_1 K)/K }.
\end{align*}
Taking union bound for all events in this proof, due to $|\cF|\geq |\Phi|$, setting 
\begin{align*}
\lambda_k= c_0 d \log(H|\cF|k/\delta), \quad
\gamma_k=  4H \big( 12\sqrt{  |\cA|d} + \sqrt{c_0} d\big)/\CS\cdot \sqrt{\log (2Hk|\cF|/\delta)},
\end{align*}
we obtain with probability at least $1-3\delta$,
\begin{align*}
\frac{1}{K}\sum_{k=1}^K \left[ V_1^{\pi^*}(s_1)- V_1^{\pi^k}(s_1)\right]\lesssim \sqrt{C\log(H|\cF|K/\delta) \log(c_0' K)/K },
\end{align*}
where $C = H^4d^4|\cA|/(\CS)^2 +  H^4d^3|\cA|^2/(\CS)^2 + H^6d^2|\cA|/(\CS)^2 + H^6d^3$ and $c_0,c_0'$ are absolute constants.
This completes the proof.
% when
% \begin{align*}
% K\geq \tilde{\cO}\left(C/\varepsilon^2\right),
% \end{align*}
% we obtain with probability at least $1-3\delta$, there is
% \begin{align*}
% \frac{1}{K}\sum_{k=1}^K \left[ V_1^{\pi^*}(s_1)- V_1^{\pi^k}(s_1)\right]\leq \varepsilon,
% \end{align*}
% where $C = H^4d^4|\cA|/(\CS)^2 +  H^4d^3|\cA|^2/(\CS)^2 + H^6d^2|\cA|/(\CS)^2 + H^6d^3$ and $\tilde{\cO}$ hides logarithmic dependence on $H,|\cF|,K,1/\delta, 1/\varepsilon$ and some numerical constant factors.
% This completes the proof.
\end{proof}

\section{Theoretical Analysis for Markov Game}

\subsection{Lemmas}

\begin{lemma}[Learning Target of Contrastive Loss]\label{lem:opt-contra-loss-mg} For any $(s,a,b)\in \cS\times\cA\times\cB$ that is reachable under certain sampling strategy, the learning target of the contrastive loss in \eqref{eq:contra-loss} with setting $z=(s,a,b)$ is
\begin{align*}
    f_h^*(s,a,b,s') = \frac{\PP_h(s'|s,a,b)}{\cPS(s')}.
\end{align*}

\end{lemma}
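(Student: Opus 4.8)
The plan is to mirror the proof of Lemma~\ref{lem:opt-contra-loss} essentially verbatim, with the sole change that the state--action pair $(s,a)$ is replaced throughout by the state--action triple $(s,a,b)\in\cS\times\cA\times\cB$. This substitution is legitimate because the contrastive loss \eqref{eq:contra-loss} for Markov games is defined with $z=(s,a,b)$, and the sampling scheme in Algorithm~\ref{alg:sample-mg} parallels that of Algorithm~\ref{alg:sample} after extending the action space from $\cA$ to $\cA\times\cB$.

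First I would recall that the contrastive loss \eqref{eq:contra-loss}, via the probabilistic model for $\Pr_h(y|s,a,b,s')$ described in Section~\ref{sec:alg-mg}, implicitly posits
\begin{align*}
\Pr{}_h(y|s,a,b,s') = \left(\frac{f^*_h(s,a,b,s')}{1+f^*_h(s,a,b,s')}\right)^y\left(\frac{1}{1+f^*_h(s,a,b,s')}\right)^{1-y}.
\end{align*}
Next I would compute the very same conditional probability through Bayes' rule. Using the facts that the label is sampled as $y\sim\Ber(1/2)$ independently of $(s,a,b)$ in Algorithm~\ref{alg:sample-mg}---so that $\Pr_h(y)=1/2$ and $\Pr_h(s,a,b\,|\,y)=\Pr_h(s,a,b)$---together with $\Pr_h(s'|y=1,s,a,b)=\PP_h(s'|s,a,b)$ for positive transitions and $\Pr_h(s'|y=0,s,a,b)=\cPS(s')$ for negative transitions, I obtain
\begin{align*}
\Pr{}_h(y|s,a,b,s') = \frac{\Pr{}_h(s,a,b,s'|y)}{\Pr{}_h(s,a,b)\PP_h(s'|s,a,b)+\Pr{}_h(s,a,b)\cPS(s')}.
\end{align*}

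Finally, equating the two expressions for $\Pr_h(y|s,a,b,s')$ in the case $y=1$ (the case $y=0$ yields the same conclusion) gives
\begin{align*}
\frac{f^*_h(s,a,b,s')}{1+f^*_h(s,a,b,s')} = \frac{\Pr{}_h(s,a,b)\PP_h(s'|s,a,b)}{\Pr{}_h(s,a,b)\PP_h(s'|s,a,b)+\Pr{}_h(s,a,b)\cPS(s')},
\end{align*}
after which cancelling the common factor $\Pr_h(s,a,b)$---which is strictly positive precisely because $(s,a,b)$ is reachable under the sampling strategy---and solving for $f^*_h$ yields the claimed density ratio $f^*_h(s,a,b,s')=\PP_h(s'|s,a,b)/\cPS(s')$. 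The argument involves no genuine obstacle; the only point requiring care is verifying that Algorithm~\ref{alg:sample-mg} indeed preserves the independence of the label $y$ from $(s,a,b)$ and keeps $\Pr_h(y)=1/2$ at each step, which is exactly the structural property that guarantees the Bayes computation carries over unchanged from the single-agent setting.
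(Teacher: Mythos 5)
Your proposal matches the paper's own proof essentially verbatim: both apply Bayes' rule to $\Pr_h(y\,|\,s,a,b,s')$, use the facts that $y\sim\Ber(1/2)$ independently of $(s,a,b)$ and that the positive/negative next-state distributions are $\PP_h(\cdot|s,a,b)$ and $\cPS(\cdot)$, and then cancel the strictly positive factor $\Pr_h(s,a,b)$ to solve for $f_h^*$. The argument is correct and is exactly the paper's single-agent proof carried over with $(s,a)$ replaced by $(s,a,b)$.
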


\begin{proof}
For any $h\in [H]$, we let $\Pr{}_h$ to denote the probability for some event at the $h$-th step of a Markov game. The contrastive loss in \eqref{eq:contra-loss} with setting $z=(s,a,b)$ implicitly assumes
\begin{align*}
    \Pr{}_h(y|s,a,b,s') = \left(\frac{f^*_h(s,a,b,s')}{1+f^*_h(s,a,b,s')}\right)^y\left(\frac{1}{1+f^*_h(s,a,b,s')}\right)^{1-y}.
\end{align*}
In addition, by Bayes' rule, we also have
\begin{align*}
\Pr{}_h(y|s,a,b,s') = \frac{\Pr{}_h(s,a,b,s'|y)\Pr{}_h(y)}{\sum_{y\in \{0,1\}}\Pr{}_h(s,a,b,s'|y)\Pr{}_h(y)} = \frac{\Pr{}_h(s,a,b,s'|y)}{\Pr{}_h(s,a,b)\PP_h(s'|s,a,b) +\Pr{}_h(s,a,b)\cPS(s')},
\end{align*}
where we use $\Pr{}_h(y) = 1/2$ for any $y\in\{0, 1\}$ according to Algorithm \ref{alg:sample-mg}. In the last equality, we also have
\begin{align*}
&\Pr{}_h(s,a,b,s'|y=1) = \Pr{}_h(s,a,b|y=1) \Pr{}_h(s'|y=1,s,a,b)= \Pr{}_h(s,a,b) \PP_h(s'|s,a,b),\\
&\Pr{}_h(s,a,b,s'|y=0) = \Pr{}_h(s,a,b|y=0) \Pr{}_h(s'|y=0,s,a,b)= \Pr{}_h(s,a,b) \cPS(s'),
\end{align*} 
where we use $\Pr{}_h(s,a,b|y=1) = \Pr{}_h(s,a,b|y=0) =\Pr{}_h(s,a,b)$ and also $\Pr{}_h(s'|y=1,s,a,b)= \PP_h(s'|s,a,b)$, $\Pr{}_h(s'|y=0,s,a,b)= \cPS(s')$.  

Combining the above results, when $y=1$ at the $h$-th step,  we obtain
\begin{align*}
    \frac{f_h^*(s,a,b,s')}{1+f_h^*(s,a,b,s')} = \frac{ \Pr{}_h(s,a,b)\PP_h(s'|s,a,b)}{\Pr{}_h(s,a,b)\PP_h(s'|s,a,b) +\Pr{}_h(s,a,b)\cPS(s')},
\end{align*}
which gives
\begin{align*}
    f_h^*(s,a,b,s') = \frac{\PP_h(s'|s,a,b)}{\cPS(s')}.
\end{align*}
Equivalently, when $y=0$, we get the same result. This completes the proof.
\end{proof}

%\subsection{Proof of Lemma \ref{lem:diff1-mg}} \label{proof-lem-diff1-mg} 
\begin{lemma}%[Value Difference under Estimated Transition]
\label{lem:diff1-mg} Suppose the policies $\pi^k$, $\nu^k$, the estimated transition $\hat{\PP}^k$, and the bonus $\beta^k$ are obtained at episode $k$ of Algorithm \ref{alg:contrastive-mg}. Let $\mathrm{br}(\cdot)$ denote the best response policy given the opponent's policy. Moreover, $\underline{V}_{k,1}^\sigma(s_1)$ denotes the value function under any joint policy $\sigma$ for the zero-sum Markov game defined by the reward function $r-\beta^k$ and $\hat{\PP}^k$ while $\overline{V}_{k,1}^\sigma(s_1)$ denotes the value function for the zero-sum Markov game defined by $r+\beta^k$ and $\hat{\PP}^k$. Then, we have the following value function differences decomposed as
\begin{small}
\begin{align*}
&V_1^{\mathrm{br}(\nu^k), \nu^k}(s_1) - \overline{V}_{k,1}^{\mathrm{br}(\nu^k), \nu^k}(s_1)   = \EE \left[ \sum_{h=1}^H \left(-\beta^k_h(s_h,a_h,b_h) + (\PP_h - \hat{\PP}^k_h )V^{\mathrm{br}(\nu^k), \nu^k}_{h+1}(s_h,a_h,b_h)\right) \Bigggiven\mathrm{br}(\nu^k), \nu^k, \hat{\PP}^k \right],\\    
&\underline{V}_{k,1}^{\pi^k, \mathrm{br}(\pi^k)}(s_1)-V_1^{\pi^k, \mathrm{br}(\pi^k)}(s_1)    = \EE \left[ \sum_{h=1}^H \left(-\beta^k_h(s_h,a_h,b_h) - (\PP_h - \hat{\PP}^k_h )V^{\pi^k, \mathrm{br}(\pi^k)}_{h+1}(s_h,a_h,b_h)\right) \Bigggiven \pi^k, \mathrm{br}(\pi^k), \hat{\PP}^k \right].
\end{align*}
\end{small}
\end{lemma}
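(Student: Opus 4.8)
The plan is to follow the template of the proof of Lemma~\ref{lem:diff1}, replacing the deterministic single-agent policy evaluation by its joint-policy counterpart and treating $\mathrm{br}(\nu^k)$ and $\mathrm{br}(\pi^k)$ simply as fixed policies, since this lemma is a policy-evaluation difference identity that involves no optimization. Throughout, for a fixed joint policy $\sigma$ I would use the Bellman equation $Q_h^\sigma(s,a,b)=r_h(s,a,b)+\PP_h V_{h+1}^\sigma(s,a,b)$ together with the joint-policy value relation $V_h^\sigma(s)=\langle \sigma_h(\cdot,\cdot\,|\,s),\,Q_h^\sigma(s,\cdot,\cdot)\rangle$, which is the MG analog of the step $V_h^\pi(s)=Q_h^\pi(s,\pi_h(s))$ used in Lemma~\ref{lem:diff1}. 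The whole argument goes through for any (possibly randomized, product) joint policy, exactly as noted there.

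For the first identity, set $\sigma=(\mathrm{br}(\nu^k),\nu^k)$ and compare the true game $(r,\PP)$ with the auxiliary game $(r+\beta^k,\hat\PP^k)$, whose value under $\sigma$ is $\overline V_{k,h}^\sigma$. I would expand the one-step $Q$-difference and add and subtract $\hat\PP^k_h V_{h+1}^\sigma$, i.e.
\begin{align*}
Q_h^\sigma-\overline Q_{k,h}^\sigma
&=-\beta^k_h+\PP_hV_{h+1}^\sigma-\hat\PP^k_h\overline V_{k,h+1}^\sigma\\
&=-\beta^k_h+(\PP_h-\hat\PP^k_h)V_{h+1}^\sigma+\hat\PP^k_h\big[V_{h+1}^\sigma-\overline V_{k,h+1}^\sigma\big],
\end{align*}
where the key point is that applying $\hat\PP^k_h$ to the \emph{true} value $V_{h+1}^\sigma$ keeps the true value in the transition-difference term while leaving $\hat\PP^k$ in the residual. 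Converting $Q$-differences to $V$-differences via the inner product above and unrolling from $h=1$ to $H$ with $V_{H+1}^\sigma=\overline V_{k,H+1}^\sigma=0$, the residual propagates the estimated transition, so the resulting expectation is conditioned on $\hat\PP^k$, yielding the first claimed decomposition.

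For the second identity, take $\sigma=(\pi^k,\mathrm{br}(\pi^k))$ and the auxiliary game $(r-\beta^k,\hat\PP^k)$ with value $\underline V_{k,h}^\sigma$, and compute $\underline Q_{k,h}^\sigma-Q_h^\sigma$. Adding and subtracting the same quantity $\hat\PP^k_h V_{h+1}^\sigma$ gives
\begin{align*}
\underline Q_{k,h}^\sigma-Q_h^\sigma
&=-\beta^k_h+\hat\PP^k_h\underline V_{k,h+1}^\sigma-\PP_hV_{h+1}^\sigma\\
&=-\beta^k_h-(\PP_h-\hat\PP^k_h)V_{h+1}^\sigma+\hat\PP^k_h\big[\underline V_{k,h+1}^\sigma-V_{h+1}^\sigma\big],
\end{align*}
so the transition-difference term now carries a minus sign (from $-\beta^k$ and the reversed order of subtraction), while the residual again propagates $\hat\PP^k$. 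Unrolling exactly as before produces the second decomposition, with the expectation conditioned on $\hat\PP^k$.

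The computation is routine; the only point demanding care is the bookkeeping of the add--subtract step. One must apply the estimated transition $\hat\PP^k_h$ to the true value $V_{h+1}^\sigma$ (rather than to the auxiliary value) so that (i) the transition-difference term retains the true value function and (ii) the residual consistently carries $\hat\PP^k$, which is what forces the final expectation to be taken under $\hat\PP^k$ rather than $\PP$. Tracking the two sign changes in the second identity --- the subtracted bonus $r-\beta^k$ and the reversed difference $\underline V_{k}-V$ --- is the one spot where an error could creep in.
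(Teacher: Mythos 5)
Your proposal is correct and follows essentially the same route as the paper: expand the one-step $Q$-difference, add and subtract $\hat\PP^k_h V_{h+1}^\sigma$ so that the true value sits in the transition-difference term while the residual propagates under $\hat\PP^k$, convert to $V$-differences via $\langle\sigma_h(\cdot,\cdot|s),\cdot\rangle$, and unroll. The only cosmetic difference is that the paper obtains the second identity by substituting $-\beta^k$ into the general template and then negating, whereas you compute $\underline{Q}_{k,h}^\sigma - Q_h^\sigma$ directly; these are equivalent.
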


\begin{proof}
Consider two zero-sum Markov games defined by $(\cS,\cA, \cB, H, r, \PP)$ and $(\cS,\cA, \cB, H, r+\beta, \PP')$ where  $\PP$ and $\PP'$ are any transition models and $r$ and $\beta$ are arbitrary reward function and bonus term. Then, for any joint policy $\sigma$, we let $Q^\sigma_h$ and $V^\sigma_h$ be the associated Q-function and value function at the $h$-th step for the Markov game defined by $(\cS,\cA, \cB, H, r, \PP)$, and  $\tilde{Q}^\sigma_h$ and $\tilde{V}^\sigma_h$ be the associated Q-function and value function at the $h$-th step for the Markov game defined by $(\cS,\cA, \cB, H, r+\beta, \PP')$. Then, by Bellman equation, we have for any $(s_h, a_h, b_h)\in \cS\times\cA\times \cB$,
\begin{align*}
%\begin{aligned}
    & Q_h^\sigma(s_h,a_h,b_h) - \tilde{Q}_h^\sigma(s_h,a_h,b_h)\\
    &\qquad =  -\beta_h(s_h,a_h,b_h) + \PP_h V^{\sigma}_{h+1}(s_h,a_h,b_h)- \PP'_h\tilde{V}^{\sigma}_{h+1}(s_h,a_h,b_h) \\
    &\qquad = -\beta_h(s_h,a_h,b_h) + \PP_h V^{\sigma}_{h+1}(s_h,a_h,b_h)- \PP'_h\tilde{V}^{\sigma}_{h+1}(s_h,a_h,b_h) \\
    &\qquad = -\beta_h(s_h,a_h,b_h) + (\PP_h - \PP'_h )V^{\sigma}_{h+1}(s_h,a_h,b_h) + \PP'_h[V^{\sigma}_{h+1}(s_h,a_h,b_h) - \tilde{V}^{\sigma}_{h+1}(s_h,a_h,b_h)].
%\end{aligned}
\end{align*}
Further by the Bellman equation and the above result, we have
\begin{align*}
%\begin{aligned}
&V_h^\sigma(s_h) - \tilde{V}_h^\sigma(s_h) \\
&\qquad = \langle \sigma_h(\cdot,\cdot|s_h), Q_h^\sigma(s_h,\cdot,\cdot) - \tilde{Q}_h^\sigma(s_h,\cdot,\cdot)\rangle    \\
&\qquad = \langle \sigma_h(\cdot,\cdot|s_h), -\beta_h(s_h,\cdot,\cdot) + (\PP_h - \PP'_h )V^{\sigma}_{h+1}(s_h,\cdot,\cdot) + \PP'_h[V^{\sigma}_{h+1}(s_h,\cdot,\cdot) - \tilde{V}^{\sigma}_{h+1}(s_h,\cdot,\cdot)]\rangle. 
%\end{aligned}
\end{align*}
Since $V_{H+1}^\sigma(s) = \tilde{V}_{H+1}^\sigma(s) = 0$ for any $s\in\cS$ and $\sigma$, recursively applying the above relation, we have
\begin{align*}
&V_1^\sigma(s_1) - \tilde{V}_1^\sigma(s_1)  = \EE \left[ \sum_{h=1}^H \left(-\beta_h(s_h,a_h,b_h) + (\PP_h - \PP'_h )V^{\sigma}_{h+1}(s_h,a_h,b_h)\right) \Bigggiven \sigma, \PP' \right].     
\end{align*}
For any episode $k$, setting $\PP', \sigma, \beta$ to be $\hat{\PP}^k,(\mathrm{br}(\nu^k), \nu^k),\beta^k$ defined in Algorithm \ref{alg:contrastive-mg} and $\PP, r$ to be the true transition model and reward function, by the above equality, according to the definition of $V_h^\sigma$ and $\overline{V}_{k,h}^\sigma$, we obtain
\begin{align*}
&V_1^{\mathrm{br}(\nu^k), \nu^k}(s_1) - \overline{V}_{k,1}^{\mathrm{br}(\nu^k), \nu^k}(s_1)   \\
&\qquad = \EE \left[ \sum_{h=1}^H \left(-\beta^k_h(s_h,a_h,b_h) + (\PP_h - \hat{\PP}^k_h )V^{\mathrm{br}(\nu^k), \nu^k}_{h+1}(s_h,a_h,b_h)\right) \Bigggiven\mathrm{br}(\nu^k), \nu^k, \hat{\PP}^k \right].    
\end{align*}
Moreover, setting $\PP', \sigma, \beta$ to be $\hat{\PP}^k,(\pi^k, \mathrm{br}(\pi^k)),-\beta^k$ defined in Algorithm \ref{alg:contrastive-mg} and $\PP, r$ to be the true transition model and reward function, by the definition of $V_h^\sigma$ and $\underline{V}_h^\sigma$, we obtain
\begin{align*}
& V_1^{\pi^k, \mathrm{br}(\pi^k)}(s_1)- \underline{V}_{k,1}^{\pi^k, \mathrm{br}(\pi^k)}(s_1)    \\
&\qquad = \EE \left[ \sum_{h=1}^H \left(\beta^k_h(s_h,a_h,b_h) + (\PP_h - \hat{\PP}^k_h )V^{\pi^k, \mathrm{br}(\pi^k)}_{h+1}(s_h,a_h,b_h)\right) \Bigggiven \pi^k, \mathrm{br}(\pi^k), \hat{\PP}^k \right],
\end{align*}
which leads to
\begin{align*}
&\underline{V}_{k,1}^{\pi^k, \mathrm{br}(\pi^k)}(s_1)-V_1^{\pi^k, \mathrm{br}(\pi^k)}(s_1) \\
&\qquad = \EE \left[ \sum_{h=1}^H \left(-\beta^k_h(s_h,a_h,b_h) - (\PP_h - \hat{\PP}^k_h )V^{\pi^k, \mathrm{br}(\pi^k)}_{h+1}(s_h,a_h,b_h)\right) \Bigggiven \pi^k, \mathrm{br}(\pi^k), \hat{\PP}^k \right].
\end{align*}
This completes the proof.
\end{proof}

%\subsection{Proof of Lemma \ref{lem:diff2-mg}}
\begin{lemma}%[Value Difference]
\label{lem:diff2-mg} Suppose the joint policy $\sigma^k$, the estimated transition $\hat{\PP}^k$, and the bonus $\beta^k$ are obtained at episode $k$ of Algorithm \ref{alg:contrastive-mg}. Moreover, $\overline{V}_1^k(s_1)$ and $\underline{V}_1^k(s_1)$ are the estimated value functions based on UCB and LCB obtained at episode $k$ of Algorithm \ref{alg:contrastive-mg}. Then, their difference can be decomposed as
\begin{align*}
&\overline{V}_1^k(s_1) - \underline{V}_1^k(s_1)  =   \EE \left[ \sum_{h=1}^H 2 \beta^k_h(s_h,a_h,b_h) + (\hat{\PP}_h^k - \PP_h)  \big(\overline{V}^k_{h+1} - \underline{V}^k_{h+1} \big) (s_h,a_h,b_h)\Bigggiven \sigma^k, \PP \right].
\end{align*}
\end{lemma}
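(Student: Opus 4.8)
The plan is to mirror the telescoping arguments already used in the proofs of Lemma \ref{lem:diff1} and Lemma \ref{lem:diff2}, except that here the two value functions being compared, $\overline{V}^k_h$ and $\underline{V}^k_h$, are both generated by Algorithm \ref{alg:contrastive-mg} on the \emph{same} estimated kernel $\hat{\PP}^k$ and the \emph{same} reward $r_h$, differing only in the sign of the bonus. Write $\Delta_h(s) := \overline{V}^k_h(s) - \underline{V}^k_h(s)$. The first step is to subtract the two $Q$-updates in Lines 11 and 12: the reward terms cancel and the two bonus terms combine, giving, for any $(s_h,a_h,b_h)$,
\[
\overline{Q}^k_h(s_h,a_h,b_h) - \underline{Q}^k_h(s_h,a_h,b_h) = 2\beta^k_h(s_h,a_h,b_h) + \hat{\PP}^k_h \Delta_{h+1}(s_h,a_h,b_h).
\]

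Next I would pass from the $Q$-gap to the $V$-gap. Since both value functions are obtained by contracting their respective $Q$-functions against the same joint policy via $\overline{V}^k_h(\cdot) = \langle \sigma^k_h(\cdot,\cdot|\cdot), \overline{Q}^k_h(\cdot,\cdot,\cdot)\rangle$ and the analogous rule for $\underline{V}^k_h$, taking the inner product with $\sigma^k_h$ yields
\[
\Delta_h(s_h) = \big\langle \sigma^k_h(\cdot,\cdot|s_h),\; 2\beta^k_h(s_h,\cdot,\cdot) + \hat{\PP}^k_h \Delta_{h+1}(s_h,\cdot,\cdot)\big\rangle.
\]
To turn the estimated-transition term into an expectation under the true dynamics, I would add and subtract $\PP_h\Delta_{h+1}$, writing $\hat{\PP}^k_h \Delta_{h+1} = \PP_h \Delta_{h+1} + (\hat{\PP}^k_h - \PP_h)\Delta_{h+1}$. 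The contribution $\langle \sigma^k_h(\cdot,\cdot|s_h), \PP_h \Delta_{h+1}(s_h,\cdot,\cdot)\rangle$ equals $\EE[\Delta_{h+1}(s_{h+1}) \mid s_h,\sigma^k,\PP]$, i.e. exactly one step of propagation under the true kernel $\PP$ and the learned joint policy $\sigma^k$, which is what sets up the recursion under the conditioning $(\sigma^k,\PP)$ appearing in the statement.

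Finally I would unroll this one-step identity along the horizon. Because $\overline{V}^k_{H+1} = \underline{V}^k_{H+1} = \boldsymbol 0$ by the initialization in Line 3, we have $\Delta_{H+1}\equiv 0$, so iterating from $h=1$ and collecting the per-step residuals $2\beta^k_h + (\hat{\PP}^k_h - \PP_h)\Delta_{h+1}$ under the trajectory distribution induced by $(\sigma^k,\PP)$ produces the claimed decomposition, with $\Delta_{h+1} = \overline{V}^k_{h+1} - \underline{V}^k_{h+1}$. I do not expect a substantive obstacle: this is a direct variant of Lemmas \ref{lem:diff1}--\ref{lem:diff2}, and the only points needing care are the sign bookkeeping on the $\pm\beta^k$ bonus (which is precisely what yields the factor $2\beta^k_h$ rather than a single bonus) and keeping the inner product against $\sigma^k_h$ consistent, since both $\overline{Q}^k_h$ and $\underline{Q}^k_h$ are contracted against the \emph{same} joint policy so that the correlation structure of the $\iota_k$-CCE does not interfere with the cancellation.
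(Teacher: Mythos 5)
Your proposal is correct and follows essentially the same route as the paper's proof: subtract the two $Q$-updates so the rewards cancel and the bonuses combine into $2\beta^k_h$, contract against the common joint policy $\sigma^k_h$, insert $\pm\PP_h(\overline{V}^k_{h+1}-\underline{V}^k_{h+1})$ to propagate under the true kernel, and unroll using $\overline{V}^k_{H+1}=\underline{V}^k_{H+1}=\boldsymbol 0$. Your remark that both $Q$-functions are contracted against the \emph{same} $\sigma^k_h$ (so the CCE structure does not interfere) is exactly the point the paper relies on implicitly.
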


\begin{proof} For the episode $k$, we consider two Markov games defined by $(\cS,\cA, \cB, H, r+\beta^k, \hat{\PP}^k)$ and $(\cS,\cA, \cB, H, r-\beta^k, \hat{\PP}^k)$. Then, for the joint policy $\sigma^k$, by Algorithm \ref{alg:contrastive-mg}, we have for any $(s_h, a_h, b_h)\in \cS\times\cA\times \cB$,
\begin{align*}
%\begin{aligned} 
    & \overline{Q}_h^k(s_h,a_h,b_h) - \underline{Q}_h^k(s_h,a_h,b_h) \\
    &\qquad=   2\beta^k_h(s_h,a_h,b_h) + \hat{\PP}_h^k \big(\overline{V}^k_{h+1}- \underline{V}^k_{h+1} \big)(s_h,a_h,b_h) \\
    &\qquad=   2\beta^k_h(s_h,a_h,b_h) + (\hat{\PP}_h^k - \PP_h)  \big(\overline{V}^k_{h+1} - \underline{V}^k_{h+1} \big) (s_h,a_h,b_h) + \PP_h \big(\overline{V}^k_{h+1}- \underline{V}^k_{h+1} \big)(s_h,a_h,b_h). 
%\end{aligned}
\end{align*}
Then, we have
\begin{align*}
%\begin{aligned}
\overline{V}_h^k(s_h) - \underline{V}_h^k(s_h)&= \langle \sigma_h^k(\cdot,\cdot|s_h), \overline{Q}_h^k(s_h,\cdot,\cdot) - \underline{Q}_h^k(s_h,\cdot,\cdot)\rangle    \\
&= 2\left\langle \sigma_h^k(\cdot,\cdot|s_h), \beta^k_h(s_h,a_h,b_h)  \right\rangle + \Big\langle \sigma_h^k(\cdot,\cdot|s_h), (\hat{\PP}_h^k - \PP_h)  \big(\overline{V}^k_{h+1} - \underline{V}^k_{h+1} \big) (s_h,\cdot,\cdot) \Big\rangle \\
& \quad + \Big\langle \sigma_h^k(\cdot,\cdot|s_h), \PP_h \big(\overline{V}^k_{h+1}- \underline{V}^k_{h+1} \big)(s_h,\cdot,\cdot)	\Big\rangle.
%\end{aligned}
\end{align*}
By the fact that $\overline{V}_{H+1}^k(s) = \underline{V}_{H+1}^k(s) = 0$ for any $s\in\cS$, recursively applying the above relation, we have
\begin{align*}
&\overline{V}_1^k(s_1) - \underline{V}_1^k(s_1)  =   \EE \left[ \sum_{h=1}^H 2 \beta^k_h(s_h,a_h,b_h) + (\hat{\PP}_h^k - \PP_h)  \big(\overline{V}^k_{h+1} - \underline{V}^k_{h+1} \big) (s_h,a_h,b_h)\Bigggiven \sigma^k, \PP \right].
\end{align*}
This completes the proof.
\end{proof}

%\subsection{Proof of Lemma \ref{lem:expand1-mg}} \label{sec:pf_lem_expand1-mg}

\begin{lemma}\label{lem:expand1-mg}
Suppose that $\hat{\PP}^k$ is the estimated transition obtained at episode $k$ of Algorithm \ref{alg:contrastive-mg}. We define $\zeta^k_{h-1}:=\EE_{(s'',a'',b'')\sim\tilde{\rho}^k_{h-1}(\cdot,\cdot,\cdot)} \|\hat{\PP}^k_{h-1}(\cdot|s'',a'',b'') -\PP_{h-1}(\cdot|s'',a'',b'')\|_1^2$ for all $h\geq 2$, $\tilde{\rho}^k_h(\cdot,\cdot,\cdot) := \frac{1}{k}\sum_{k'=0}^{k-1} \tilde{d}^{\sigma^{k'}}_h(\cdot,\cdot,\cdot)$ for all $h\geq 1$ with $\tilde{\rho}_1^k(s_1,a,b) =\Unif(a)\Unif(b)$, and $\breve{\rho}^k_h(\cdot,\cdot,\cdot) := \frac{1}{k}\sum_{k'=0}^{k-1} \breve{d}^{\sigma^{k'}}_h(\cdot,\cdot,\cdot)$ for all $h\geq 2$.  Then for any function $g:\cS\times\cA\times\cB\mapsto [0,B]$ and joint policy $\sigma$, we have for any $h \geq 2$, the following inequality holds 
\begin{align*}
%\begin{aligned}
&\left|\EE_{(s,a,b)\sim d^{\sigma, \hat{\PP}^k}_h(\cdot,\cdot,\cdot)}[g(s,a,b)]\right| \\
&\qquad \leq \sqrt{2k B^2 \zeta^k_{h-1} + 2k  |\cA||\cB|\cdot  \EE_{(s,a,b)\sim\breve{\rho}^k_h(\cdot,\cdot,\cdot)}[ g(s,a,b)^2] + \lambda_k B^2 d/(\CS)^2} \cdot \EE_{d^{\sigma, \hat{\PP}^k}_{h-1}}\left\|\hat{\phi}^k_{h-1}\right\|_{\Sigma_{\tilde{\rho}^k_{h-1}, \hat{\phi}^k_{h-1}}^{-1}}.
%\end{aligned}
\end{align*}
In addition, for $h=1$, we have
\begin{align*}
\left|\EE_{(s,a,b)\sim d^{\sigma, \PP}_1(\cdot,\cdot,\cdot)}[g(s,a,b)]\right| &= \sqrt{\EE_{(a,b)\sim \sigma_1(\cdot,\cdot|s_1)}[g(s_1,a,b)^2]} \leq \sqrt{|\cA||\cB|  \EE_{(a,b)\sim \tilde{\rho}_1^k(s_1,\cdot,\cdot)}[g(s_1,a,b)^2] }.
\end{align*} 
\end{lemma}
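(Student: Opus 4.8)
The plan is to mirror the proof of Lemma \ref{lem:expand1} verbatim, extending the action space from $\cA$ to $\cA\times\cB$ and replacing the deterministic evaluation $g(s,\pi_h(s))$ by the correlated-policy inner product $\bar g(s):=\langle\sigma_h(\cdot,\cdot|s),g(s,\cdot,\cdot)\rangle$. For $h\geq 2$ I would first step back one stage under the estimated transition, writing $\EE_{d^{\sigma,\hat{\PP}^k}_h}[g]=\EE_{(s',a',b')\sim d^{\sigma,\hat{\PP}^k}_{h-1},\,s\sim\hat{\PP}^k_{h-1}}[\bar g(s)]$, and then invoke the low-rank form $\hat{\PP}^k_{h-1}(s\mid z')=\hat{\psi}^k_{h-1}(s)^\top\hat{\phi}^k_{h-1}(z')$ (with $z'=(s',a',b')$) produced by the normalization step \eqref{eq:normalization}. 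This lets me pull the expectation into an inner product $\EE_{d^{\sigma,\hat{\PP}^k}_{h-1}}[\hat{\phi}^k_{h-1}(z')^\top\int_\cS\hat{\psi}^k_{h-1}(s)\bar g(s)\mathrm{d}s]$, and applying Cauchy--Schwarz with respect to the covariance matrix $\Sigma_{\tilde{\rho}^k_{h-1},\hat{\phi}^k_{h-1}}$ splits this into the factor $\EE_{d^{\sigma,\hat{\PP}^k}_{h-1}}\|\hat{\phi}^k_{h-1}\|_{\Sigma_{\tilde{\rho}^k_{h-1},\hat{\phi}^k_{h-1}}^{-1}}$ times $\|\int_\cS\hat{\psi}^k_{h-1}(s)\bar g(s)\mathrm{d}s\|_{\Sigma_{\tilde{\rho}^k_{h-1},\hat{\phi}^k_{h-1}}}$.

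The next step is to bound the second factor by expanding the $\Sigma$-norm into its covariance part $k\,\EE_{\tilde{\rho}^k_{h-1}}[(\int_\cS\hat{\phi}^k_{h-1}(z'')^\top\hat{\psi}^k_{h-1}(s)\bar g(s)\mathrm{d}s)^2]$ plus the regularization part $\lambda_k\|\int_\cS\hat{\psi}^k_{h-1}(s)\bar g(s)\mathrm{d}s\|_2^2$. The regularization part is controlled by $\bar g\leq B$ together with the function-class bound $\sup_s\|\hat\psi^k_{h-1}(s)\|_2\le\sqrt d/\CS$ (Definition \ref{def:func-class}) and $\mathrm{Vol}(\cS)\le1$, yielding $\lambda_k B^2 d/(\CS)^2$. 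For the covariance part I would write $\hat{\phi}^\top\hat{\psi}=\hat{\PP}^k_{h-1}=(\hat{\PP}^k_{h-1}-\PP_{h-1})+\PP_{h-1}$ and use $(x+y)^2\le 2x^2+2y^2$: the difference term becomes $2kB^2\zeta^k_{h-1}$ via the $\ell_1$ bound in the definition of $\zeta^k_{h-1}$, while the true-transition term is handled by Jensen to pull the square inside the transition expectation, giving $2k\,\EE_{\tilde{\rho}^k_{h-1},\,s\sim\PP_{h-1}}[\bar g(s)^2]$. Finally I bound $\bar g(s)^2=\langle\sigma_h(\cdot,\cdot|s),g(s,\cdot,\cdot)\rangle^2\le\sum_{a,b}g(s,a,b)^2=|\cA||\cB|\,\EE_{(a,b)\sim\Unif(\cA)\times\Unif(\cB)}[g^2]$ (Jensen plus $\sigma_h\le1$), and combine this with $\breve{\rho}^k_h(s,a,b)=\tilde{\rho}^k_{h-1}(s',a',b')\PP_{h-1}(s\mid s',a',b')\Unif(a)\Unif(b)$ to obtain the term $2k|\cA||\cB|\,\EE_{\breve{\rho}^k_h}[g^2]$. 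Assembling the three pieces under the square root gives the $h\geq2$ inequality.

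For the base case $h=1$, the initial state is fixed at $s_1$, so $\EE_{d^{\sigma,\PP}_1}[g]=\EE_{(a,b)\sim\sigma_1(\cdot,\cdot|s_1)}[g(s_1,a,b)]$, and Jensen's inequality bounds this by $\sqrt{\EE_{(a,b)\sim\sigma_1(\cdot,\cdot|s_1)}[g(s_1,a,b)^2]}$, which in turn is at most $\sqrt{|\cA||\cB|\,\EE_{(a,b)\sim\tilde{\rho}_1^k(s_1,\cdot,\cdot)}[g(s_1,a,b)^2]}$ since $\sigma_1\le1$ and $\tilde{\rho}_1^k(s_1,a,b)=\Unif(a)\Unif(b)=1/(|\cA||\cB|)$. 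The analysis extends to any randomized joint policy without change. The calculations are routine, so the only genuinely new wrinkle relative to the single-agent argument is the correlated-action inner product: the key observation making everything go through is that $\langle\sigma_h,g\rangle^2\le|\cA||\cB|\,\EE_{\Unif\times\Unif}[g^2]$ regardless of correlation between the two players' actions, so I expect the step where one converts $\bar g(s)^2$ into a uniform average over $\cA\times\cB$ to be the point that must be stated with some care, while the covariance-norm expansion and the $\zeta^k_{h-1}$ split are entirely parallel to Lemma \ref{lem:expand1}.
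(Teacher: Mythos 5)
Your proposal is correct and follows essentially the same route as the paper's proof: step back one stage under $\hat{\PP}^k_{h-1}$, use the low-rank factorization and Cauchy--Schwarz with respect to $\Sigma_{\tilde{\rho}^k_{h-1},\hat{\phi}^k_{h-1}}$, split the covariance term via $\hat{\PP}^k_{h-1}=(\hat{\PP}^k_{h-1}-\PP_{h-1})+\PP_{h-1}$ into the $\zeta^k_{h-1}$ piece and the $|\cA||\cB|\,\EE_{\breve{\rho}^k_h}[g^2]$ piece, and bound the regularization term by $\lambda_k B^2 d/(\CS)^2$. The only cosmetic difference is that you bound $\langle\sigma_h,g\rangle^2\le\sum_{a,b}g^2$ pointwise before integrating over the transition, whereas the paper first applies Jensen to reach $\EE_{(a,b)\sim\sigma_h}[g^2]$ and then changes measure to $\Unif(\cA)\times\Unif(\cB)$ with ratio at most $|\cA||\cB|$; these yield the identical bound.
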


\begin{proof}
For any function $g:\cS\times\cA\times\cB\mapsto [0, B]$ and any joint policy $\sigma$, under the estimated transition model $\hat{\PP}^k$ at the $k$-th episode, for any $h\geq 2$, we have
% \begin{small}
\begin{align}
% \begin{aligned}
&\left|\EE_{(s,a,b)\sim d^{\sigma, \hat{\PP}^k}_h(\cdot,\cdot,\cdot)}[g(s,a,b)]\right|\nonumber \\
&\quad= \left|\EE_{(s',a',b')\sim d^{\sigma, \hat{\PP}^k}_{h-1}(\cdot,\cdot,\cdot), s\sim \hat{\PP}^k_{h-1}(\cdot|s',a',b'), (a,b)\sim \sigma_h(\cdot,\cdot|s)}[g(s,a,b)]\right| \nonumber\\
&\quad= \left|\EE_{(s',a',b')\sim d^{\sigma, \hat{\PP}^k}_{h-1}(\cdot,\cdot,\cdot)} \left[\hat{\phi}^k_{h-1}(s',a',b')^\top \int_{\cS} \hat{\psi}^k_{h-1}(s) \sum_{a\in \cA, b\in \cB}\sigma_h(a,b|s) g(s,a,b)\mathrm{d} s\right]\right| \nonumber\\
&\quad\leq  \EE_{d^{\sigma, \hat{\PP}^k}_{h-1}}\left\|\hat{\phi}^k_{h-1}\right\|_{\Sigma_{\tilde{\rho}^k_{h-1}, \hat{\phi}^k_{h-1}}^{-1}} \cdot  \left\| \int_{\cS} \hat{\psi}^k_{h-1}(s) \sum_{a\in \cA, b\in \cB}\sigma_h(a,b|s) g(s,a,b)\mathrm{d} s\right\|_{\Sigma_{\tilde{\rho}^k_{h-1}, \hat{\phi}^k_{h-1}}}, \label{eq:step-back-mg1}
% \end{aligned}
\end{align}
% \end{small}
where the inequality is due to the Cauchy-Schwarz inequality. We define the covariance matrix as $\Sigma_{\tilde{\rho}^k_{h-1}, \hat{\phi}^k_{h-1}} :=k \EE_{(s,a,b)\sim\tilde{\rho}^k_{h-1}(\cdot,\cdot,\cdot)}[\hat{\phi}^k_{h-1}(s,a,b)\hat{\phi}^k_{h-1}(s,a,b)^\top] + \lambda_k I$ with $\tilde{\rho}^k_{h-1}(s,a,b) = \frac{1}{k}\sum_{k'=0}^{k-1} \tilde{d}^{\sigma^{k'}}_{h-1}(s,a,b)$.
Moreover, we have
\begin{footnotesize}
\begin{align}
%\begin{aligned}
&\left\| \int_{\cS} \hat{\psi}^k_{h-1}(s) \sum_{a\in \cA,b\in \cB}\sigma_h(a,b|s) g(s,a,b)\mathrm{d} s\right\|_{\Sigma_{\tilde{\rho}^k_{h-1},\hat{\phi}^k_{h-1}}}^2 \nonumber\\
& = k\left(\int_{\cS} \hat{\psi}^k_{h-1}(s) \sum_{a\in \cA,b\in \cB}\sigma_h(a,b|s) g(s,a,b)\mathrm{d}s\right)^\top  \EE_{\tilde{\rho}^k_{h-1}}\left[\hat{\phi}^k_{h-1}(\hat{\phi}^k_{h-1})^\top \right]  \left(\int_{\cS} \hat{\psi}^k_{h-1}(s) \sum_{a\in \cA,b\in \cB}\sigma_h(a,b|s) g(s,a,b)\mathrm{d}s\right)\nonumber\\
& \quad + \lambda_k\left(\int_{\cS} \hat{\psi}^k_{h-1}(s) \sum_{a\in \cA,b\in \cB}\sigma_h(a,b|s) g(s,a,b)\mathrm{d}s\right)^\top \left(\int_{\cS} \hat{\psi}^k_{h-1}(s) \sum_{a\in \cA,b\in \cB}\sigma_h(a,b|s) g(s,a,b)\mathrm{d}s\right)\nonumber\\
&= k \EE_{(s'',a'',b'')\sim\tilde{\rho}^k_{h-1}(\cdot,\cdot,\cdot)}\left[\int_{\cS} \hat{\phi}^k_{h-1}(s'',a'',b'')^\top \hat{\psi}^k_{h-1}(s) \sum_{a\in \cA,b\in \cB}\sigma_h(a,b|s) g(s,a,b)\mathrm{d}s  \right] \nonumber\\
&\quad + \lambda_k\left(\int_{\cS} \hat{\psi}^k_{h-1}(s) \sum_{a\in \cA,b\in \cB}\sigma_h(a,b|s) g(s,a,b)\mathrm{d}s\right)^\top \left(\int_{\cS} \hat{\psi}^k_{h-1}(s) \sum_{a\in \cA,b\in \cB}\sigma_h(a,b|s) g(s,a,b)\mathrm{d}s\right)\nonumber\\
&\leq k \EE_{(s'',a'',b'')\sim\tilde{\rho}^k_{h-1}(\cdot,\cdot,\cdot)}\left[\int_{\cS} \hat{\phi}^k_{h-1}(s'',a'',b'')^\top \hat{\psi}^k_{h-1}(s) \sum_{a\in \cA,b\in \cB}\sigma_h(a,b|s) g(s,a,b)\mathrm{d}s  \right]^2  + \lambda_k B^2 d/(\CS)^2, \label{eq:step-back-mg2}
%\end{aligned}
\end{align}
\end{footnotesize}
where the last inequality is by
\begin{align*}
\bigg(\int_{\cS} \hat{\psi}^k_{h-1}(s) \sum_{a\in \cA,b\in \cB}\sigma_h(a,b|s) g(s,a,b)\mathrm{d}s\bigg)^\top \bigg(\int_{\cS} \hat{\psi}^k_{h-1}(s) \sum_{a\in \cA,b\in \cB}\sigma_h(a,b|s) g(s,a,b)\mathrm{d}s\bigg)\leq B^2 d/(\CS)^2,
\end{align*}
since $0\leq g(s,a,b)\leq B$ and $\|\int_{\cS} \hat{\psi}_{h-1}^k(s) \mathrm{d}s\|_2^2 :=\|\int_{\cS}  \cPS(s)  \tilde{\psi}_{h-1}^k (s)\mathrm{d}s\|_2^2 \leq \|\int_{\cS}  \tilde{\psi}_{h-1}^k (s)\mathrm{d}s\|_2^2 \leq (\int_{\cS} \|  \tilde{\psi}_{h-1}^k (s)\|_2\mathrm{d}s)^2 \leq d/(\CS)^2$ according to the definition of the function class in Definition \ref{def:func-class} and the assumption that all states are normalized such that $\mathrm{Vol}(\cS)\leq 1$. In addition, we have
\begin{small}
\begin{align}
% \begin{aligned}
&k \EE_{(s'',a'',b'')\sim\tilde{\rho}^k_{h-1}(\cdot,\cdot,\cdot)}\bigg[\int_{\cS} \hat{\phi}^k_{h-1}(s'',a'',b'')^\top \hat{\psi}^k_{h-1}(s) \sum_{a\in \cA,b\in \cB}\sigma_h(a,b|s) g(s,a,b)\mathrm{d}s  \bigg]^2 \nonumber\\
&\quad \leq  2k \EE_{(s'',a'',b'')\sim\tilde{\rho}^k_{h-1}(\cdot,\cdot,\cdot)}\bigg[\int_{\cS} \left(\hat{\PP}^k_{h-1}(s|s'',a'',b'') -\PP_{h-1}(s|s'',a'',b'') \right) \sum_{a\in \cA,b\in \cB}\sigma_h(a,b|s) g(s,a,b)\mathrm{d}s  \bigg]^2 \nonumber\\
&\quad \quad + 2k \EE_{(s'',a'',b'')\sim\tilde{\rho}^k_{h-1}(\cdot,\cdot,\cdot)}\bigg[\int_{\cS} \PP_{h-1}(s|s'',a'',b'') \sum_{a\in \cA,b\in \cB}\sigma_h(a,b|s) g(s,a,b)\mathrm{d}s  \bigg]^2\nonumber\\
&\quad \leq  2k B^2 \zeta^k_{h-1}  + 2k \EE_{(s'',a'',b'')\sim\tilde{\rho}^k_{h-1}(\cdot,\cdot,\cdot)}\bigg[\int_{\cS} \PP_{h-1}(s|s'',a'',b'') \sum_{a\in \cA,b\in \cB}\sigma_h(a,b|s) g(s,a,b)\mathrm{d}s  \bigg]^2\nonumber\\
&\quad \leq  2k B^2 \zeta^k_{h-1}  + 2k \EE_{(s'',a'',b'')\sim\tilde{\rho}^k_{h-1}(\cdot,\cdot,\cdot), s\sim \PP_{h-1}(\cdot|s'',a'',b''), (a,b)\sim \sigma_h(\cdot,\cdot|s) }[g(s,a,b)^2]\nonumber\\
&\quad \leq  2k B^2 \zeta^k_{h-1}  + 2k \sup_{a\in\cA,b\in\cB,s\in \cS} \frac{\sigma_h(a,b|s)}{\Unif(a)\Unif(b)} \EE_{(s,a,b)\sim\breve{\rho}^k_h(\cdot,\cdot,\cdot)}[ g(s,a,b)^2] \nonumber\\
&\quad =  2k B^2 \zeta^k_{h-1} + 2k  |\cA||\cB|\cdot  \EE_{(s,a,b)\sim\breve{\rho}^k_h(\cdot,\cdot,\cdot)}[ g(s,a,b)^2], \label{eq:step-back-mg3}
% \end{aligned}
\end{align}
\end{small}
where the first inequality is by $(a+b)^2\leq 2a^2 + 2b^2$, the second inequality is by $\EE_{(s'',a'',b'')\sim\tilde{\rho}^k_{h-1}(\cdot,\cdot,\cdot)} \allowbreak [\int_{\cS}  (\hat{\PP}^k_{h-1}(s|s'',a'',b'') -\PP_{h-1}(s|s'',a'',b'') ) \sum_{a\in \cA,b\in \cB}\sigma_h(a,b|s) g(s,a,b)\mathrm{d}s  ]^2 \leq B^2\EE_{(s'',a'',b'')\sim\tilde{\rho}^k_{h-1}(\cdot,\cdot,\cdot)} \allowbreak \|\hat{\PP}^k_{h-1}(\cdot|s'',a'',b'') -\PP_{h-1}(\cdot|s'',a'',b'')\|_1^2\leq B^2 \zeta^k_{h-1}$, the third inequality is by Jensen's inequality, and the fourth inequality is by substituting the joint policy $\sigma$ with the uniform distribution.

Combining \eqref{eq:step-back-mg1},\eqref{eq:step-back-mg2}, and \eqref{eq:step-back-mg3}, we have for any $h \geq 2$,
\begin{align*}
%\begin{aligned}
   &\left|\EE_{(s,a,b)\sim d^{\sigma, \hat{\PP}^k}_h(\cdot,\cdot,\cdot)}[g(s,a,b)]\right| \\
&\leq \sqrt{2k B^2 \zeta^k_{h-1} + 2k  |\cA||\cB|\cdot  \EE_{(s,a,b)\sim\breve{\rho}^k_h(\cdot,\cdot,\cdot)}[ g(s,a,b)^2] + \lambda_k B^2 d/(\CS)^2} \cdot \EE_{d^{\sigma, \hat{\PP}^k}_{h-1}}\left\|\hat{\phi}^k_{h-1}\right\|_{\Sigma_{\tilde{\rho}^k_{h-1}, \hat{\phi}^k_{h-1}}^{-1}}.
%\end{aligned}
\end{align*}
On the other hand, for $h=1$, we have
\begin{align*}
\left|\EE_{(s,a,b)\sim d^{\sigma, \PP}_1(\cdot,\cdot,\cdot)}[g(s,a,b)]\right| &= \sqrt{\EE_{(a,b)\sim \sigma_1(\cdot,\cdot|s_1)}[g(s_1,a,b)^2]} \leq \sqrt{|\cA||\cB|  \EE_{(a,b)\sim \tilde{\rho}_1^k(s_1,\cdot,\cdot)}[g(s_1,a,b)^2] },
\end{align*} 
where we let $\tilde{\rho}_1^k(s_1,a,b) =\Unif(a)\Unif(b)$ and the last inequality is by $\EE_{(a,b)\sim \sigma_1(\cdot,\cdot|s_1)}[g(s_1,a,b)^2]\leq \max_{a,b} \frac{\sigma_1(a,b|s_1)}{\Unif(a)\Unif(b)}  \EE_{(a,b)\sim \tilde{\rho}_1^k(s_1,\cdot,\cdot)}[g(s_1,a,b)^2]$. The proof is completed.
\end{proof}

%\subsection{Proof of Lemma \ref{lem:expand2-mg}} \label{sec:pf_lem_expand2-mg}

\begin{lemma}\label{lem:expand2-mg} Suppose that $\hat{\PP}^k$ is the estimated transition obtained at episode $k$ of Algorithm \ref{alg:contrastive-mg}. We define $\tilde{\rho}^k_h(\cdot,\cdot,\cdot) := \frac{1}{k}\sum_{k'=0}^{k-1} \tilde{d}^{\sigma^{k'}}_h(\cdot,\cdot,\cdot)$ for all $h\geq 1$ with $\tilde{\rho}_1^k(s_1,a,b) =\Unif(a)\Unif(b)$ and $\rho^k_h(\cdot,\cdot,\cdot) := \frac{1}{k}\sum_{k'=0}^{k-1} d^{\sigma^{k'}}_h(\cdot,\cdot,\cdot)$ for all $h\geq 2$.  Then for any function $g:\cS\times\cA\times\cB\mapsto [0,B]$ and joint policy $\sigma$, we have for any $h \geq 2$, the following inequality holds 
\begin{align*}
%\begin{aligned}
   &\left|\EE_{(s,a,b)\sim d^{\sigma, \PP}_h(\cdot,\cdot,\cdot)}[g(s,a,b)]\right| \\
&\qquad\leq \sqrt{k  |\cA||\cB|\cdot  \EE_{(s,a,b)\sim\tilde{\rho}^k_h(\cdot,\cdot,\cdot)}[ g(s,a,b)^2] + \lambda_k B^2 d} \cdot \EE_{(s',a',b')\sim d^{\sigma, \PP}_{h-1}(\cdot,\cdot,\cdot)}\left\|\phi^*_{h-1}(s',a',b')\right\|_{\Sigma_{\rho^k_{h-1}, \phi^*_{h-1}}^{-1}}.
%\end{aligned}
\end{align*}
In addition, for $h=1$, we have
\begin{align*}
\left|\EE_{(s,a,b)\sim d^{\sigma, \PP}_1(\cdot,\cdot,\cdot)}[g(s,a,b)]\right| &\leq \sqrt{\EE_{(a,b)\sim \sigma_1(\cdot,\cdot|s_1)}[g(s_1,a,b)^2]} \leq \sqrt{|\cA||\cB|  \EE_{(a,b)\sim \tilde{\rho}_1^k(s_1,\cdot,\cdot)}[g(s_1,a,b)^2] }.
\end{align*} 
\end{lemma}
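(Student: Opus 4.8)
The statement to prove is Lemma \ref{lem:expand2-mg}, the Markov-game analogue of Lemma \ref{lem:expand1-mg} but now using the \emph{true} transition $\PP$ together with the true feature $\phi^*$, rather than the estimated transition and learned feature. The plan is to mirror the single-agent proof of Lemma \ref{lem:expand2} almost verbatim, replacing the action space $\cA$ by the product space $\cA\times\cB$, the policy $\pi$ by the joint policy $\sigma$, and the state-action pair $(s,a)$ by the triple $(s,a,b)$. The whole argument is a one-step backward expansion: for $h\ge 2$, I first write the expectation under $d^{\sigma,\PP}_h$ as an expectation over $(s',a',b')\sim d^{\sigma,\PP}_{h-1}$ followed by a draw $s\sim\PP_{h-1}(\cdot\mid s',a',b')$ and $(a,b)\sim\sigma_h(\cdot,\cdot\mid s)$, and then use Assumption \ref{assump:low-rank} to factor $\PP_{h-1}(s\mid s',a',b')=\psi^*_{h-1}(s)^\top\phi^*_{h-1}(s',a',b')$. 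This exposes the inner product structure so that Cauchy–Schwarz with respect to the covariance matrix $\Sigma_{\rho^k_{h-1},\phi^*_{h-1}}$ applies.

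Concretely, the first key step is the identity
\[
\EE_{(s,a,b)\sim d^{\sigma,\PP}_h}[g(s,a,b)]
=\EE_{d^{\sigma,\PP}_{h-1}}\Bigl[\phi^*_{h-1}(s',a',b')^\top \int_{\cS}\psi^*_{h-1}(s)\textstyle\sum_{a,b}\sigma_h(a,b\mid s)g(s,a,b)\,\mathrm{d}s\Bigr],
\]
after which Cauchy–Schwarz in the $\Sigma_{\rho^k_{h-1},\phi^*_{h-1}}$ inner product bounds this by $\EE_{d^{\sigma,\PP}_{h-1}}\|\phi^*_{h-1}\|_{\Sigma^{-1}_{\rho^k_{h-1},\phi^*_{h-1}}}$ times the $\Sigma$-norm of the integral vector. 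The second key step bounds that $\Sigma$-norm: its $\lambda_k$-component is controlled by $\|\int_{\cS}\psi^*_{h-1}(s)\,\mathrm{d}s\|_2^2\le d$ (using $\|\psi^*_{h-1}(s)\|_2\le\sqrt d$ from Assumption \ref{assump:low-rank} together with $\mathrm{Vol}(\cS)\le 1$) scaled by $B^2$, giving the $\lambda_k B^2 d$ term; the leading component is $k\,\EE_{\rho^k_{h-1}}[\,(\phi^{*\top}\!\int\psi^* g)^2\,]$, which I collapse via the low-rank identity back to $k\,\EE_{\rho^k_{h-1}}[(\int_{\cS}\PP_{h-1}(s\mid\cdot)\sum_{a,b}\sigma_h g\,\mathrm{d}s)^2]$. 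The third step applies Jensen's inequality and then changes measure from the joint policy $\sigma_h$ to the uniform distribution, paying the factor $\sup_{a,b,s}\sigma_h(a,b\mid s)/(\Unif(a)\Unif(b))\le |\cA||\cB|$ and converting the law into $\tilde{\rho}^k_h$ via the definition $\tilde{\rho}^k_h(s,a,b):=\rho^k_{h-1}(s',a',b')\PP_{h-1}(s\mid s',a',b')\Unif(a)\Unif(b)$. Combining these three displays yields the claimed bound for $h\ge 2$.

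For the base case $h=1$, the state is deterministically $s_1$, so $\EE_{d^{\sigma,\PP}_1}[g]=\EE_{(a,b)\sim\sigma_1(\cdot,\cdot\mid s_1)}[g(s_1,a,b)]$, and I bound $|\EE[g]|\le(\EE[g^2])^{1/2}$ by Jensen, then change measure from $\sigma_1$ to $\tilde{\rho}^k_1(s_1,\cdot,\cdot)=\Unif(a)\Unif(b)$ at cost $|\cA||\cB|$. I expect essentially no genuine obstacle here, since the result is a direct two-player lift of the already-proved Lemma \ref{lem:expand2}; the only point demanding care is bookkeeping the product-measure change of variables so that the $|\cA||\cB|$ factor and the $\tilde{\rho}^k_h$ identity line up correctly, and confirming the $\lambda_k B^2 d$ constant uses the clean bound $\|\psi^*\|_2\le\sqrt d$ (as opposed to the $\sqrt d/\CS$ bound used for the learned features $\hat\psi$ in Lemma \ref{lem:expand1-mg}). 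I would close by noting that all derivations hold verbatim for a randomized joint policy $\sigma$.
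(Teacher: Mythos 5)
Your proposal matches the paper's own proof essentially step for step: the same one-step backward expansion via the low-rank factorization $\PP_{h-1}(s\mid s',a',b')=\psi^*_{h-1}(s)^\top\phi^*_{h-1}(s',a',b')$, the same Cauchy--Schwarz in the $\Sigma_{\rho^k_{h-1},\phi^*_{h-1}}$ inner product, the same split of the $\Sigma$-norm into the $\lambda_k B^2 d$ piece (correctly using $\|\psi^*\|_2\le\sqrt d$ rather than $\sqrt d/\CS$) and the $k$-weighted piece handled by Jensen plus the change of measure to $\Unif(\cA)\times\Unif(\cB)$ costing $|\cA||\cB|$, and the same treatment of the $h=1$ base case. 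No gaps; this is the paper's argument.
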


\begin{proof}
For any function $g:\cS\times\cA\times\cB\mapsto \RR$ and any joint policy $\sigma$, under the true transition model $\PP$, for any $h\geq 2$, we have
\begin{align}
\begin{aligned}\label{eq:step-back2-mg1}
&\left|\EE_{(s,a,b)\sim d^{\sigma, \PP}_h(\cdot,\cdot,\cdot)}[g(s,a,b)]\right| \\
&\quad= \left|\EE_{(s',a',b')\sim d^{\sigma, \PP}_{h-1}(\cdot,\cdot,\cdot), s\sim \PP_{h-1}(\cdot|s',a'), (a,b)\sim \sigma_h(\cdot,\cdot|s)}[g(s,a,b)]\right| \\
&\quad= \bigg|\EE_{(s',a',b')\sim d^{\sigma, \PP}_{h-1}(\cdot,\cdot,\cdot)} \bigg[\phi^*_{h-1}(s',a',b')^\top \int_{\cS} \psi^*_{h-1}(s) \sum_{a\in \cA,b\in \cB}\sigma_h(a,b|s) g(s,a,b)\mathrm{d} s\bigg]\bigg|\\
&\quad\leq  \EE_{ d^{\sigma, \PP}_{h-1}}\left\|\phi^*_{h-1}\right\|_{\Sigma_{\rho^k_{h-1}, \phi^*_{h-1}}^{-1}} \cdot \bigg\| \int_{\cS} \psi^*_{h-1}(s) \sum_{a\in \cA,b\in \cB}\sigma_h(a,b|s) g(s,a,b)\mathrm{d} s\bigg\|_{\Sigma_{\rho^k_{h-1}, \phi^*_{h-1}}},
\end{aligned}
\end{align}
where the inequality is by Cauchy-Schwarz inequality. We define the covariance matrix $\Sigma_{\rho^k_{h-1}, \phi^*_{h-1}} :=k \EE_{(s,a,b)\sim\rho^k_{h-1}}[\phi^*_{h-1}(s,a,b)\phi^*_{h-1}(s,a,b)^\top] + \lambda_k I$ with $\rho^k_{h-1}(s,a,b) = \frac{1}{k}\sum_{k'=0}^{k-1} d^{\pi^{k'}}_{h-1}(s,a,b)$.

Next, we have
\begin{footnotesize}
\begin{align}
%\begin{aligned} 
&\left\| \int_{\cS} \psi^*_{h-1}(s) \sum_{a\in \cA,b\in \cB}\sigma_h(a,b|s) g(s,a,b)\mathrm{d} s\right\|_{\Sigma_{\rho^k_{h-1},\phi^*_{h-1}}}^2\nonumber\\
& = k\left(\int_{\cS} \psi^*_{h-1}(s) \sum_{a\in \cA,b\in \cB}\sigma_h(a,b|s) g(s,a,b)\mathrm{d}s\right)^\top  \EE_{\rho^k_{h-1}}\left[\phi^*_{h-1}(\phi^*_{h-1})^\top \right]  \left(\int_{\cS} \psi^*_{h-1}(s) \sum_{a\in \cA,b\in \cB}\sigma_h(a,b|s) g(s,a,b)\mathrm{d}s\right)\nonumber\\
& \quad + \lambda_k\left(\int_{\cS} \psi^*_{h-1}(s) \sum_{a\in \cA,b\in \cB}\sigma_h(a,b|s) g(s,a,b)\mathrm{d}s\right)^\top \left(\int_{\cS} \psi^*_{h-1}(s) \sum_{a\in \cA,b\in \cB}\sigma_h(a,b|s) g(s,a,b)\mathrm{d}s\right)\nonumber\\
& = k \EE_{(s'',a'',b'')\sim\rho^k_{h-1}(\cdot,\cdot,\cdot)}\left[\int_{\cS} \phi^*_{h-1}(s'',a'',b'')^\top \psi^*_{h-1}(s) \sum_{a\in \cA,b\in \cB}\sigma_h(a,b|s) g(s,a,b)\mathrm{d}s  \right] \nonumber \\
& \quad + \lambda_k\left(\int_{\cS} \psi^*_{h-1}(s) \sum_{a\in \cA,b\in \cB}\sigma_h(a,b|s) g(s,a,b)\mathrm{d}s\right)^\top \left(\int_{\cS} \psi^*_{h-1}(s) \sum_{a\in \cA,b\in \cB}\sigma_h(a,b|s) g(s,a,b)\mathrm{d}s\right)\nonumber\\
& \leq k \EE_{(s'',a'',b'')\sim\rho^k_{h-1}(\cdot,\cdot,\cdot)}\left[\int_{\cS} \phi^*_{h-1}(s'',a'',b'')^\top \psi^*_{h-1}(s) \sum_{a\in \cA,b\in \cB}\sigma_h(a,b|s) g(s,a,b)\mathrm{d}s  \right]^2  + \lambda_k B^2 d, \label{eq:step-back2-mg2}
%\end{aligned}
\end{align}
\end{footnotesize}
where, by Assumption \ref{assump:low-rank}, the last inequality is due to 
\begin{align*}
&\bigg(\int_{\cS} \psi^*_{h-1}(s) \sum_{a\in \cA,b\in \cB}\sigma_h(a,b|s) g(s,a,b)\mathrm{d}s\bigg)^\top \bigg(\int_{\cS} \psi^*_{h-1}(s) \sum_{a\in \cA,b\in \cB}\sigma_h(a,b|s) g(s,a,b)\mathrm{d}s\bigg) \\
&\qquad \leq B^2 \left|\int_{\cS} \psi^*_{h-1}(s) \mathrm{d}s\right|_2^2\leq B^2 d.
\end{align*}
Moreover, we have
\begin{align}
\begin{aligned}\label{eq:step-back2-mg3}
&k \EE_{(s'',a'',b'')\sim\rho^k_{h-1}(\cdot,\cdot,\cdot)}\bigg[\int_{\cS} \phi^*_{h-1}(s'',a'',b'')^\top \psi^*_{h-1}(s) \sum_{a\in \cA,b\in \cB}\sigma_h(a,b|s) g(s,a,b)\mathrm{d}s  \bigg]^2 \\
&\qquad = k \EE_{(s'',a'',b'')\sim\rho^k_{h-1}(\cdot,\cdot,\cdot)}\bigg[\int_{\cS} \PP_{h-1}(s|s'',a'',b'') \sum_{a\in \cA,b\in \cB}\sigma_h(a,b|s) g(s,a,b)\mathrm{d}s  \bigg]^2\\
&\qquad \leq  k \EE_{(s'',a'',b'')\sim\rho^k_{h-1}(\cdot,\cdot,\cdot), s\sim \PP_{h-1}(\cdot|s'',a'',b''), (a,b)\sim \sigma_h(\cdot,\cdot|s) }[g(s,a,b)^2]\\
&\qquad \leq k \sup_{a\in\cA,b\in\cA,s\in \cS} \frac{\sigma_h(a,b|s)}{\Unif(a)\Unif(b)} \EE_{(s,a,b)\sim\tilde{\rho}^k_h(\cdot,\cdot,\cdot)}[ g(s,a,b)^2]\\
&\qquad =  k  |\cA||\cB|\cdot  \EE_{(s,a,b)\sim\tilde{\rho}^k_h(\cdot,\cdot,\cdot)}[ g(s,a,b)^2],
\end{aligned}
\end{align}
where the first inequality is due to Jensen's inequality and the second inequality is by substituting the joint policy $\sigma$ with the uniform distribution and $\tilde{\rho}^k_h(s,a,b):=\rho^k_{h-1}(s',a',b')\PP_{h-1}(s|s',a',b')\Unif(a)\Unif(b)$ for all $h\geq 2$.
Combining \eqref{eq:step-back2-mg1},\eqref{eq:step-back2-mg2}, and \eqref{eq:step-back2-mg3}, we have for any $h \geq 2$,
\begin{align*}
%\begin{aligned}
   &\left|\EE_{(s,a,b)\sim d^{\sigma, \PP}_h(\cdot,\cdot,\cdot)}[g(s,a,b)]\right| \\
&\qquad\leq \sqrt{k  |\cA||\cB|\cdot  \EE_{(s,a,b)\sim\tilde{\rho}^k_h(\cdot,\cdot,\cdot)}[ g(s,a,b)^2] + \lambda_k B^2 d} \cdot \EE_{d^{\sigma, \PP}_{h-1}}\left\|\phi^*_{h-1}\right\|_{\Sigma_{\rho^k_{h-1}, \phi^*_{h-1}}^{-1}}.
%\end{aligned}
\end{align*}

For $h=1$, we have
\begin{align*}
\left|\EE_{(s,a,b)\sim d^{\sigma, \PP}_1(\cdot,\cdot,\cdot)}[g(s,a,b)]\right| &\leq \sqrt{\EE_{(a,b)\sim \sigma_1(\cdot,\cdot|s_1)}[g(s_1,a,b)^2]} \leq \sqrt{|\cA||\cB|  \EE_{(a,b)\sim \tilde{\rho}_1^k(s_1,\cdot,\cdot)}[g(s_1,a,b)^2] },
\end{align*} 
where we let $\tilde{\rho}_1^k(s_1,a,b) =\Unif(a)\Unif(b)$ and the last inequality is by $\EE_{(a,b)\sim \sigma_1(\cdot,\cdot|s_1)}[g(s_1,a,b)^2]\leq \max_{a,b} \frac{\sigma_1(a,b|s_1)}{\Unif(a)\Unif(b)}  \EE_{(a,b)\sim \tilde{\rho}_1^k(s_1,\cdot,\cdot)}[g(s_1,a,b)^2]$. The proof is completed.
\end{proof}

%\subsection{Proof of Lemma \ref{lem:plan-mg}}

\begin{lemma}\label{lem:plan-mg} Suppose at the $k$-th episode of Algorithm \ref{alg:contrastive-mg}, 
$\pi^k,\nu^k$ are learned policies , $\iota_k$ is the CCE learning accuracy, and $\overline{V}_1^k(s_1)$ and $\underline{V}_1^k(s_1)$ are the value functions updated as in the algorithm. Moreover, for any joint policy $\sigma$, $\overline{V}_{k,1}^{\sigma}(s_1)$ is the value function associated with the Markov game defined by the reward function $r + \beta^k$ and the estimated transition $\hat{\PP}^k$ while $\underline{V}_{k,1}^{\sigma}(s_1)$ is the value function associated with the Markov game defined by the reward function $r - \beta^k$ and the estimated transition $\hat{\PP}^k$. Then we have
\begin{align*}
&\overline{V}_{k,1}^{\mathrm{br}(\nu^k), \nu^k}(s_1) \leq \overline{V}_1^k(s_1)+H\iota_k, \qquad \underline{V}_{k,1}^{\pi^k, \mathrm{br}(\pi^k)}(s_1) \geq \underline{V}_1^k(s_1)-H\iota_k.
\end{align*}
\end{lemma}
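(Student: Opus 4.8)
The plan is to prove both inequalities by backward induction on $h$, mirroring the single-agent argument of Lemma \ref{lem:plan} but replacing the greedy $\max$ operator with the $\iota_k$-CCE guarantee of Definition \ref{def:cce}. I would first reduce each best-response quantity to an optimal-value quantity. Since $\mathrm{br}(\nu^k)$ is merely one particular player-1 policy, monotonicity gives $\overline{V}_{k,1}^{\mathrm{br}(\nu^k),\nu^k}(s_1) \le \max_\pi \overline{V}_{k,1}^{\pi,\nu^k}(s_1)$, the right-hand side being evaluated in the auxiliary MG with reward $r+\beta^k$ and transition $\hat{\PP}^k$. Because player 2's policy is frozen to the marginal $\nu^k = \cP_2\sigma^k$ and the two players act independently under the product policy $(\pi,\nu^k)$, the inner maximization collapses into an ordinary single-agent MDP for player 1. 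Writing $\tilde{V}_h(s) := \max_\pi \overline{V}_{k,h}^{\pi,\nu^k}(s)$, this value obeys the Bellman optimality recursion $\tilde{V}_h(s) = \max_{a}\EE_{b\sim\nu_h^k(\cdot|s)}[(r_h+\beta_h^k)(s,a,b) + \hat{\PP}_h^k\tilde{V}_{h+1}(s,a,b)]$, so it suffices to show $\tilde{V}_1(s_1)\le \overline{V}_1^k(s_1) + H\iota_k$.

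The induction hypothesis I would carry is $\tilde{V}_{h+1}(s) \le \overline{V}_{h+1}^k(s) + (H-h)\iota_k$ for all $s$, with base case $\tilde{V}_{H+1}=\overline{V}_{H+1}^k = 0$. In the inductive step I would first use that $\hat{\PP}_h^k$ is a genuine probability kernel (guaranteed by the normalization \eqref{eq:normalization} established in Lemma \ref{lem:stat-err}): it integrates to one, hence is monotone and commutes with adding a constant, so applying it to the hypothesis yields $\hat{\PP}_h^k\tilde{V}_{h+1}(s,a,b) \le \hat{\PP}_h^k\overline{V}_{h+1}^k(s,a,b) + (H-h)\iota_k$. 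Substituting and recognizing $\overline{Q}_h^k(s,a,b) = (r_h+\beta_h^k + \hat{\PP}_h^k\overline{V}_{h+1}^k)(s,a,b)$ gives $\tilde{V}_h(s) \le \max_{a'}\EE_{b\sim\cP_2\sigma_h^k(\cdot|s)}[\overline{Q}_h^k(s,a',b)] + (H-h)\iota_k$. Here the CCE property enters: the first inequality of Definition \ref{def:cce} applied to the payoff matrix $\overline{Q}_h^k(s,\cdot,\cdot)$ bounds $\EE_{b\sim\cP_2\sigma_h^k}[\overline{Q}_h^k(s,a',b)] \le \EE_{(a,b)\sim\sigma_h^k}[\overline{Q}_h^k(s,a,b)] + \iota_k = \overline{V}_h^k(s)+\iota_k$ for every $a'$, so taking the max over $a'$ collects exactly one extra $\iota_k$ and closes the induction. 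Evaluating at $h=1$ and chaining with the monotonicity reduction yields the first claim.

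For the pessimistic inequality I would run the identical argument on the lower auxiliary MG ($r-\beta^k$, $\hat{\PP}^k$), using $\underline{V}_{k,1}^{\pi^k,\mathrm{br}(\pi^k)}(s_1) \ge \min_\nu \underline{V}_{k,1}^{\pi^k,\nu}(s_1)$, the minimizing Bellman recursion for player 2 against the frozen $\pi^k=\cP_1\sigma^k$, and the second CCE inequality of Definition \ref{def:cce} with the reversed sign, accumulating $-H\iota_k$ instead of $+H\iota_k$. I expect the main conceptual obstacle to be bookkeeping the two reductions cleanly: (i) the replacement of the true-game best response by the auxiliary-game optimal value via monotonicity, and (ii) the collapse of the frozen-opponent Markov game into a single-agent MDP, which is exactly what lets the scalar $\max$ (resp.\ $\min$) in the Bellman recursion align with the per-state CCE guarantee. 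Everything else is a routine accumulation of the $\iota_k$ slack, one unit per step, giving the total $H\iota_k$.
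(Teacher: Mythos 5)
Your proof is correct and follows essentially the same route as the paper's: a backward induction on $h$ in which the $\iota_k$-CCE inequality of Definition \ref{def:cce} contributes one $\iota_k$ of slack per step, for a total of $H\iota_k$ (and symmetrically $-H\iota_k$ for the pessimistic bound). The only cosmetic difference is that you first pass to $\max_\pi \overline{V}_{k,h}^{\pi,\nu^k}$ and induct on the Bellman-optimality value of the frozen-opponent MDP, whereas the paper inducts directly on the policy-evaluation value of the fixed joint policy $(\mathrm{br}(\nu^k),\nu^k)$; both close the induction with the identical CCE step.
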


\begin{proof}
We prove this lemma by induction. For the first inequality in this lemma, we have $\overline{V}_{k,H+1}^{\mathrm{br}(\nu^k), \nu^k}(s) = \overline{V}_{H+1}^k(s) = 0$ for any $s\in \cS$. Next, we assume the following inequality holds 
\begin{align*}
\overline{V}_{k,h+1}^{\mathrm{br}(\nu^k), \nu^k}(s) \leq  \overline{V}_{h+1}^k(s) + (H-h)\iota_k.
\end{align*}
Then, with the above inequality, by the Bellman equation, we have
\begin{align}
% \begin{aligned}
&\overline{Q}_{k,h}^{\mathrm{br}(\nu^k), \nu^k}(s,a,b) - \overline{Q}_h^k(s,a,b)\nonumber\\
&\qquad =  r_h(s,a,b) +  \beta_h^k(s,a,b) + \PP_h\overline{V}_{k,h+1}^{\mathrm{br}(\nu^k), \nu^k}(s,a,b) - r_h(s,a,b) -  \beta_h^k(s,a,b) - \PP_h\overline{V}_{h+1}^k(s,a,b) \nonumber\\
&\qquad =  \PP_h\overline{V}_{k,h+1}^{\mathrm{br}(\nu^k), \nu^k}(s,a,b) - \PP_h\overline{V}_{h+1}^k(s,a,b) \leq (H-h)\iota_k.\label{eq:opt-mg-1}
% \end{aligned}
\end{align}
Then, we have
\begin{align*}
\overline{V}_{k,h}^{\mathrm{br}(\nu^k), \nu^k}(s) &=\EE_{a\sim\mathrm{br}(\nu^k)_h, b\sim\nu_h^k}\left[\overline{Q}_{k,h}^{\mathrm{br}(\nu^k), \nu^k}(s,a,b) \right]\\
&\leq \EE_{a\sim\mathrm{br}(\nu^k)_h, b\sim\nu_h^k}\left[\overline{Q}_h^k(s,a,b) \right] + (H-h)\iota_k\\
&\leq\EE_{(a,b)\sim\sigma_h^k}\left[\overline{Q}_h^k(s,a,b) \right]+(H+1-h)\iota_k  \\
&= \overline{V}_h^k(s)+(H+1-h)\iota_k,
\end{align*}
where the first inequality is by \eqref{eq:opt-mg-1} and the second inequality is by the definition of $\iota_k$-CCE as in Definition \ref{def:cce}. Thus, we obtain
\begin{align*}
\overline{V}_{k,1}^{\mathrm{br}(\nu^k), \nu^k}(s_1) \leq \overline{V}_1^k(s_1)+H\iota_k.
\end{align*}
For the second inequality in this lemma, we have $\underline{V}_{k,H+1}^{\pi^k, \mathrm{br}(\pi^k)}(s) = \underline{V}_{H+1}^k(s) = 0$. Then, we assume that
\begin{align*}
\underline{V}_{k,h+1}^{\pi^k, \mathrm{br}(\pi^k)}(s) \geq  \underline{V}_{h+1}^k(s)-(H-h)\iota_k.
\end{align*}
Then, by the Bellman equation, we have
\begin{align}
\begin{aligned}
\label{eq:opt-mg-2}
&\underline{Q}_{k,h}^{\pi^k, \mathrm{br}(\pi^k)}(s,a,b) - \underline{Q}_h^k(s,a,b)\\
&\qquad =  r_h(s,a,b) +  \beta_h^k(s,a,b) + \PP_h\underline{V}_{k,h+1}^{\pi^k, \mathrm{br}(\pi^k)}(s,a,b) - r_h(s,a,b) -  \beta_h^k(s,a,b) - \PP_h\underline{V}_{h+1}^k(s,a,b) \\
&\qquad =  \PP_h\underline{V}_{k,h+1}^{\pi^k, \mathrm{br}(\pi^k)}(s,a,b) - \PP_h\underline{V}_{h+1}^k(s,a,b) \geq -(H-h)\iota_k.
\end{aligned}
\end{align}
Then, we have
\begin{align*}
\underline{V}_{k,h}^{\pi^k, \mathrm{br}(\pi^k)}(s) &=\EE_{a\sim\pi^k_h, b\sim\mathrm{br}(\pi^k)_h}\left[\underline{Q}_{k,h}^{\pi^k, \mathrm{br}(\pi^k)}(s,a,b) \right]\\
&\geq \EE_{a\sim\pi^k_h, b\sim\mathrm{br}(\pi^k)_h}\left[\underline{Q}_h^k(s,a,b) \right] -(H-h)\iota_k\\
&\geq\EE_{(a,b)\sim\sigma_h^k}\left[\underline{Q}_h^k(s,a,b) \right] -(H+1-h)\iota_k \\
&= \underline{V}_h^k(s)-(H+1-h)\iota_k,
\end{align*}
where the first inequality is by \eqref{eq:opt-mg-2} and the second inequality is by the definition of $\iota_k$-CCE as in Definition \ref{def:cce}. Thus, we obtain
\begin{align*}
\underline{V}_{k,1}^{\pi^k, \mathrm{br}(\pi^k)}(s_1) \geq \underline{V}_1^k(s_1)-H\iota_k.
\end{align*}
This completes the proof.
\end{proof}

\subsection{Proof of Lemma \ref{lem:stat-err-mg}} \label{sec:proof-stat-err-mg}
The proof of this lemma follows from \hyperref[sec:proof-stat-err]{Proof of Lemma \ref{lem:stat-err}} by expanding the action space from $\cA$ to $\cA\times \cB$. In this subsection, we briefly present the major steps of the proof.

\begin{proof}%[Proof of Lemma \ref{lem:stat-err-mg}] 
For any function $f_h\in \cF$, we let $\Pr_h^f(y|s,a,b,s')$ denote the conditional probability characterized by the function $f_h$ at the step $h$, which is
\begin{align*}
\Pr{}_h^f(y|s,a,b,s') =  \left(\frac{f_h(s,a,b,s')}{1+f_h(s,a,b,s')}\right)^y\left(\frac{1}{1+f_h(s,a,b,s')}\right)^{1-y}.   
\end{align*}
Moreover, there is
\begin{align*}
\Pr{}_h^f(y,s'|s,a,b)  &= \Pr{}_h^f(y|s,a,b,s')\Pr{}_h(s'|s,a,b) \\
&= \left(\frac{f_h(s,a,b,s')\Pr{}_h(s'|s,a,b)}{1+f_h(s,a,b,s')}\right)^y\left(\frac{\Pr{}_h(s'|s,a,b)}{1+f_h(s,a,b,s')}\right)^{1-y},
\end{align*}
where we have
\begin{align}
    \Pr{}_h(s'|s,a,b) &= \Pr{}_h(y=1| s,a,b) \Pr{}_h(s'|y=1, s,a,b) + \Pr{}_h(y = 0| s,a,b) \Pr{}_h(s'|y=0, s,a,b) \nonumber\\
    &= \Pr{}_h(y=1) \Pr{}_h(s'|y=1, s,a,b) + \Pr{}_h(y = 0) \Pr{}_h(s'|y=0, s,a,b) \nonumber\\
    &= \frac{1}{2} [\PP_h(s'|s,a,b) + \cPS(s')] \geq \frac{1}{2}\CS > 0.\label{eq:tran-lower-mg}
\end{align}
Thus, we have the equivalency of solving the following two problems with $f_h(s,a,b,s')=\phi_h(s,a,b)^\top \psi_h(s')$, which is
\begin{align}\label{eq:equi-solution-mg}
\max_{\phi_h\in \Phi, \psi_h\in \Psi} \sum_{(s,a,s',y)\in \cD_h^k} \log \Pr{}_h^f(y|s,a,b,s') = \max_{\phi_h, \psi_h} \sum_{(s,a,s',y)\in \cD_h^k} \log \Pr{}_h^f(y,s'|s,a,b).
\end{align}
We denote the solution of \eqref{eq:equi-solution-mg} as $\tilde{\phi}_h^k$ and $\tilde{\psi}_h^k$ such that 
\begin{align*}
 \hat{f}_h^k(s,a,b,s') = \tilde{\psi}^k_h(s')^\top \tilde{\phi}^k_h(s,a,b).   
\end{align*}

%Without specification here, we let $\{\Gamma_h^{k'}\}_{k'=1}^k$ be an equivalent sequence of distributions for sampling dataset $\{\cD_h^k\}_{h=1}^H$. We will further give detailed description on such a distribution in \hyperref[sec:proof-thm-main]{Proof of Theorem \ref{thm:main}} based on our design of the sampling algorithm,  i.e., Algorithm \ref{alg:sample}. 
%Now we characterize the upper bound of $\zeta_h^k$ and $\xi_h^k$ as defined in \eqref{eq:tran-err-def}. 
According to Algorithm \ref{alg:sample-mg}, for any $h\geq 2$ and $k'\in [k]$, the data $(s,a,b)$ is sampled from both $\tilde{d}_h^{\sigma^{k'}}(\cdot,\cdot,\cdot)$ and $\breve{d}_h^{\sigma^{k'}}(\cdot,\cdot,\cdot)$. Then, by Lemma \ref{lem:recover-mle}, solving the contrastive loss in \eqref{eq:contra-loss} with letting $z = (s,a,b)$ gives, with probability at least $1-\delta$, for all $h\geq 2$,
\begin{align*}
\sum_{k'=1}^k  \Bigg[ &\EE_{(s,a,b)\sim \tilde{d}_h^{\sigma^{k'}}(\cdot,\cdot,\cdot)} \left\|\Pr{}_h^{\hat{f}^k}(\cdot,\cdot|s,a,b) - \Pr{}_h^{f^*}(\cdot,\cdot|s,a,b) \right\|_{\TV}^2 \\
&+ \EE_{(s,a,b)\sim \breve{d}_h^{\sigma^{k'}}(\cdot,\cdot,\cdot)} \left\|\Pr{}_h^{\hat{f}^k}(\cdot,\cdot|s,a,b) - \Pr{}_h^{f^*}(\cdot,\cdot|s,a,b) \right\|_{\TV}^2 \Bigg] \leq 2\log (2kH|\cF|/\delta),
\end{align*}
which is equivalent to 
\begin{align}
\begin{aligned}\label{eq:ave-mle-bound1-mg} 
&\EE_{(s,a,b)\sim \tilde{\rho}_h^k(\cdot,\cdot,\cdot)} \left\|\Pr{}_h^{\hat{f}^k}(\cdot,\cdot|s,a,b) - \Pr{}_h^{f^*}(\cdot,\cdot|s,a,b) \right\|_{\TV}^2 \\
&\qquad + \EE_{(s,a,b)\sim \breve{\rho}_h^k(\cdot,\cdot,\cdot)} \left\|\Pr{}_h^{\hat{f}^k}(\cdot,\cdot|s,a,b) - \Pr{}_h^{f^*}(\cdot,\cdot|s,a,b) \right\|_{\TV}^2  \leq 2\log (2kH|\cF|/\delta)/k, \quad \forall h\geq 2,
\end{aligned}
\end{align}
where $\tilde{\rho}^k_h(s,a,b) = \frac{1}{k}\sum_{k'=0}^{k-1} \tilde{d}^{\pi^{k'}}_h(s,a,b)$ and $\breve{\rho}^k_h(s,a,b) = \frac{1}{k}\sum_{k'=0}^{k-1} \breve{d}^{\pi^{k'}}_h(s,a,b)$. Moreover, since for $h=1$, the data is only sampled from $\tilde{d}^{\pi^{k'}}_1(\cdot,\cdot,\cdot)$ for any $k'\in [k]$, then we analogously have
\begin{align} \label{eq:ave-mle-bound2-mg} 
&\EE_{(s,a,b)\sim \tilde{\rho}_1^k(\cdot,\cdot,\cdot)} \left\|\Pr{}_1^{\hat{f}^k}(\cdot,\cdot|s,a,b) - \Pr{}_1^{f^*}(\cdot,\cdot|s,a,b) \right\|_{\TV}^2  \leq 2\log (2k|\cF|/\delta)/k.
\end{align}
Thus, combining \eqref{eq:ave-mle-bound1-mg} and \eqref{eq:ave-mle-bound2-mg}, with probability at least $1-2\delta$, we have
\begin{align}
\begin{aligned}\label{eq:ave-mle-bound-mg} 
%&\EE_{(s,a,b)\sim \tilde{\rho}_1^k(\cdot,\cdot,\cdot)} \left\|\Pr{}_1^{\hat{f}^k}(\cdot,\cdot|s,a,b) - \Pr{}_1^{f^*}(\cdot,\cdot|s,a,b) \right\|_{\TV}^2 \leq 2\log (2k|\cF|/\delta)/k, \\
&\EE_{(s,a,b)\sim \tilde{\rho}_h^k(\cdot,\cdot,\cdot)} \left\|\Pr{}_h^{\hat{f}^k}(\cdot,\cdot|s,a,b) - \Pr{}_h^{f^*}(\cdot,\cdot|s,a,b) \right\|_{\TV}^2 \leq 2\log (2kH|\cF|/\delta)/k, \quad \forall h\geq 1,\\
&\EE_{(s,a,b)\sim \breve{\rho}_h^k(\cdot,\cdot,\cdot)} \left\|\Pr{}_h^{\hat{f}^k}(\cdot,\cdot|s,a,b) - \Pr{}_h^{f^*}(\cdot,\cdot|s,a,b) \right\|_{\TV}^2  \leq 2\log (2kH|\cF|/\delta)/k, \quad \forall h\geq 2,
\end{aligned}
\end{align}
Next, we show the recovery error bound of the transition model based on \eqref{eq:ave-mle-bound-mg}. We have
\begin{align*}
 &\left\|\Pr{}_h^{\hat{f}^k}(\cdot,\cdot|s,a,b) - \Pr{}_h^{f^*}(\cdot,\cdot|s,a,b) \right \|_{\TV}^2\\
 &\qquad = \bigg( \left\|\Pr{}_h^{\hat{f}^k}(y=0,\cdot|s,a,b) - \Pr{}_h^{f^*}(y=0,\cdot|s,a,b) \right \|_{\TV} \\
 &\qquad \quad + \left\|\Pr{}_h^{\hat{f}^k}(y=1,\cdot|s,a,b) - \Pr{}_h^{f^*}(y=1,\cdot|s,a,b) \right \|_{\TV}   \bigg)^2 \\
&\qquad = 4\left\|\frac{\Pr{}_h(\cdot|s,a,b)}{1+\hat{f}_h^k(s,a,b,\cdot)} - \frac{\Pr{}_h(\cdot|s,a,b)}{1+f_h^*(s,a,b,\cdot)} \right \|_{\TV}^2\\
&\qquad = 2 \left[\int_{s'\in \cS}\frac{\Pr{}_h(s'|s,a,b)\cdot |f_h^*(s,a,b,s')-\hat{f}_h^k(s,a,b,s')|}{[1+\hat{f}_h^k(s,a,b,s')]\cdot[1+f_h^*(s,a,b,s')]}\mathrm{d}s' \right]^2,
\end{align*}
where $f^*(s,a,b,s') = \frac{\PP(s'|s,a,b)}{\cPS(s')}  ~~\text{with}~~ \cPS(s') \geq \CS,~~\forall s'\in\cS$ and the second equation is due to $\big\|\Pr{}_h^{\hat{f}^k}(y=0,\cdot|s,a,b) - \Pr{}_h^{f^*}(y=0,\cdot|s,a,b) \big \|_{\TV} = \big\|\Pr{}_h^{\hat{f}^k}(y=1,\cdot|s,a,b) - \Pr{}_h^{f^*}(y=1,\cdot|s,a,b) \big \|_{\TV} = \Big\|\frac{\Pr{}_h(\cdot|s,a,b)}{1+\hat{f}_h^k(s,a,b,\cdot)} - \frac{\Pr{}_h(\cdot|s,a,b)}{1+f_h^*(s,a,b,\cdot)} \Big \|_{\TV}$ . 
Moreover, according to Lemma \ref{lem:opt-contra-loss} and \eqref{eq:tran-lower}, we have 
\begin{align*}
&\frac{\Pr{}_h(s'|s,a,b)\cdot |f_h^*(s,a,b,s')-\hat{f}_h^k(s,a,b,s')|}{[1+\hat{f}_h^k(s,a,b,s')]\cdot[1+f_h^*(s,a,b,s')]}\\
&\qquad =\frac{1/2 \cdot [\PP_h(s'|s,a,b) + \cPS(s')]\cdot |\PP_h(s'|s,a,b)/\cPS(s')-\hat{f}_h^k(s,a,b,s')|}{[1+\hat{f}_h^k(s,a,b,s')]\cdot[1+\PP_h(s'|s,a,b)/\cPS(s')]}\\
&\qquad =\frac{1/2 \cdot |\PP_h(s'|s,a,b)-\cPS(s') \hat{f}_h^k(s,a,b,s')|}{1+\hat{f}_h^k(s,a,b,s')} \geq  \frac{|\PP_h(s'|s,a,b)-\cPS(s') \hat{f}_h^k(s,a,b,s')|}{4\sqrt{d}/\CS},
\end{align*}
where the inequality is due to $[1+\hat{f}_h^k(s,a,b,s')]\leq (1+\sqrt{d})\leq 2\sqrt{d}/\CS$ since $\hat{f}_h^k(s,a,b,s')\leq \sqrt{d}/\CS$ with $d\geq 1$ and $0<\CS\leq 1$. Thus, combining this inequality with \eqref{eq:ave-mle-bound-mg}, we further have, $\forall h\geq 2$,
\begin{align}
\begin{aligned}\label{eq:init-P-diff-1-mg}
    &\mathbb{E}_{(s,a,b)\sim \tilde{\rho}_h^k(\cdot,\cdot,\cdot)} \left\|\PP_h(\cdot|s,a,b)-\cPS(\cdot) \tilde{\phi}_h^k(s,a,b)^\top \tilde{\psi}_h^k (\cdot) \right\|_{\TV}^2  \leq 8d/(\CS)^2\cdot\log (2kH|\cF|/\delta)/k . 
\end{aligned}
\end{align}
Similarly, we can obtain
\begin{align}
\begin{aligned}\label{eq:init-P-diff-2-mg}
&\EE_{(s,a,b)\sim \tilde{\rho}_1^k(\cdot,\cdot,\cdot)} \left\|\PP_h(\cdot|s,a,b)-\cPS(\cdot) \tilde{\phi}_h^k(s,a,b)^\top \tilde{\psi}_h^k (\cdot)\right\|_{\TV}^2 \leq 8d/(\CS)^2\cdot\log (2k|\cF|/\delta)/k, \\
&\EE_{(s,a,b)\sim \breve{\rho}_h^k(\cdot,\cdot,\cdot)} \left\|\PP_h(\cdot|s,a,b)-\cPS(\cdot) \tilde{\phi}_h^k(s,a,b)^\top \tilde{\psi}_h^k (\cdot)\right\|_{\TV}^2  \leq 8d/(\CS)^2\cdot\log (2kH|\cF|/\delta)/k, \quad \forall h\geq 2.
\end{aligned}
\end{align}
Now we define 
\begin{align*}
\hat{g}_h^k(s,a,b,s') := \cPS(s') \tilde{\phi}_h^k(s,a,b)^\top \tilde{\psi}_h^k (s').
\end{align*} 
Since $\int_{s'\in\cS}\hat{g}_h^k(s,a,b,s')\mathrm{d}s'$ may not be guaranteed to be $1$, to obtain an approximator of the transition model $\PP_h$ lying on a probability simplex, we should further normalize $\hat{g}_h^k(s,a,b,s')$. Thus, we define for all $(s,a,b,s')\in \cS\times\cA\times\cB\times\cS$,
\begin{align*}
\hat{\PP}_h^k(s'|s,a,b) := \frac{\hat{g}_h^k(s,a,b,s')}{\|\hat{g}_h^k(s,a,b,\cdot)\|_1}=  \frac{\hat{g}_h^k(s,a,b,s')}{\int_{s'\in\cS}\hat{g}_h^k(s,a,b,s')\mathrm{d}s'} = \frac{\cPS(s') \tilde{\phi}_h^k(s,a,b)^\top \tilde{\psi}_h^k (s')}{\int_{s'\in\cS}\cPS(s') \tilde{\phi}_h^k(s,a,b)^\top \tilde{\psi}_h^k (s')\mathrm{d}s'}.
\end{align*}
We further let 
\begin{align*}
    &\hat{\phi}_h^k(s,a,b) :=  \frac{ \tilde{\phi}_h^k(s,a,b)}{ \int_{s'\in\cS}\cPS(s') \tilde{\phi}_h^k(s,a,b)^\top \tilde{\psi}_h^k (s')\mathrm{d}s'},\\
    & \hat{\psi}_h^k(s') :=\cPS(s')  \tilde{\psi}_h^k (s'),
\end{align*}
such that
\begin{align*}
    \hat{\PP}_h^k(s'|s,a,b) = \hat{\psi}_h^k(s')^\top \hat{\phi}_h^k(s,a,b).
\end{align*}
Next, we give the upper bound of the approximation error $\mathbb{E}_{(s,a,b)\sim \tilde{\rho}_h^k(\cdot,\cdot,\cdot)}\|\hat{\PP}_h^k(\cdot|s,a,b)- \PP_h(\cdot|s,a,b)\|_{\TV}^2$. We have
\begin{align}
\begin{aligned}\label{eq:P-diff0-mg}
&\mathbb{E}_{(s,a,b)\sim \tilde{\rho}_h^k(\cdot,\cdot,\cdot)}\|\hat{\PP}_h^k(\cdot|s,a,b)- \PP_h(\cdot|s,a,b)\|_{\TV}^2 \\
&\qquad \leq 2\mathbb{E}_{(s,a,b)\sim \tilde{\rho}_h^k(\cdot,\cdot,\cdot)}\|\hat{\PP}_h^k(\cdot|s,a,b)- \hat{g}_h^k(s,a,b,\cdot) \|_{\TV}^2 \\
&\qquad\quad + 2\mathbb{E}_{(s,a,b)\sim \tilde{\rho}_h^k(\cdot,\cdot,\cdot)}\|\hat{g}_h^k(s,a,b,\cdot) - \PP_h(\cdot|s,a,b)\|_{\TV}^2  \\
&\qquad \leq 2\mathbb{E}_{(s,a,b)\sim \tilde{\rho}_h^k(\cdot,\cdot,\cdot)}\|\hat{\PP}_h^k(\cdot|s,a,b)- \hat{g}_h^k(s,a,b,\cdot) \|_{\TV}^2 \\
&\qquad\quad + 16d/(\CS)^2\cdot\log (2kH|\cF|/\delta)/k,  
\end{aligned}
\end{align}
where the first inequality is by $(x+y)^2\leq 2x^2 + 2y^2$ and the last inequality is by \eqref{eq:init-P-diff-1-mg}. Moreover, we have
\begin{align*}
    &\mathbb{E}_{(s,a,b)\sim \tilde{\rho}_h^k(\cdot,\cdot,\cdot)}\|\hat{\PP}_h^k(\cdot|s,a,b)- \hat{g}_h^k(s,a,b,\cdot) \|_{\TV}^2\\
    &\qquad = \mathbb{E}_{(s,a,b)\sim \tilde{\rho}_h^k(\cdot,\cdot,\cdot)}\left\|\frac{\hat{g}_h^k(s,a,b,s')}{\|\hat{g}_h^k(s,a,b,\cdot)\|_1}- \hat{g}_h^k(s,a,b,\cdot)\right\|_{\TV}^2\\
    &\qquad = \frac{1}{4}\mathbb{E}_{(s,a,b)\sim \tilde{\rho}_h^k(\cdot,\cdot,\cdot)}\left(\|\hat{g}_h^k(s,a,b,\cdot)\|_1- 1\right)^2\\
    &\qquad \leq \frac{1}{4}\mathbb{E}_{(s,a,b)\sim \tilde{\rho}_h^k(\cdot,\cdot,\cdot)}\left(\|\hat{g}_h^k(s,a,b,\cdot)-\PP_h(\cdot|s,a,b)\|_1 + \|\PP_h(\cdot|s,a,b)\|_1- 1\right)^2\\
    &\qquad \leq \frac{1}{4}\mathbb{E}_{(s,a,b)\sim\tilde{\rho}_h^k(\cdot,\cdot,\cdot)}\|\hat{g}_h^k(s,a,b,\cdot)-\PP_h(\cdot|s,a,b)\|_1^2 \\ &\qquad = \mathbb{E}_{(s,a,b)\sim \tilde{\rho}_h^k(\cdot,\cdot,\cdot)}\|\hat{g}_h^k(s,a,b,\cdot)-\PP_h(\cdot|s,a,b)\|_{\TV}^2\leq 8d/(\CS)^2\cdot\log (2kH|\cF|/\delta)/k.
\end{align*}
Combining the above inequality with \eqref{eq:P-diff0}, we eventually obtain
\begin{align*}
%\begin{aligned}
\mathbb{E}_{(s,a,b)\sim \tilde{\rho}_h^k(\cdot,\cdot,\cdot)}\|\hat{\PP}_h^k(\cdot|s,a,b)- \PP_h(\cdot|s,a,b)\|_{\TV}^2 \leq 32d/(\CS)^2\cdot\log (2kH|\cF|/\delta)/k, \quad \forall h\geq 1.
%\end{aligned}
\end{align*}
Thus, we similarly have
\begin{align*}
%&\mathbb{E}_{(s,a,b)\sim\tilde{\rho}_1^k(\cdot,\cdot,\cdot)}\|\hat{\PP}_1^k(\cdot|s,a,b)- \PP_1(\cdot|s,a,b)\|_{\TV}^2 \leq 32d\log (2k|\cF|/\delta)/k,\\
&\mathbb{E}_{(s,a,b)\sim\breve{\rho}_h^k(\cdot,\cdot,\cdot)}\|\hat{\PP}_h^k(\cdot|s,a,b)- \PP_h(\cdot|s,a,b)\|_{\TV}^2 \leq 32d/(\CS)^2\cdot\log (2kH|\cF|/\delta)/k, \quad \forall h\geq 2.
\end{align*}
The above three inequalities hold with probability at least $1-2\delta$. This completes the proof.
\end{proof}

\subsection{Proof of Theorem \ref{thm:main-mg}} \label{sec:proof-thm-main-mg}
\begin{proof} We define two auxiliary MGs respectively by reward function $r+\beta^k$ and transition model $\hat{\PP}^k$, and $r-\beta^k$, $\hat{\PP}^k$. Then for any joint policy $\sigma$, let $\overline{V}_{k,h}^{\sigma}$ and $\underline{V}_{k,h}^{\sigma}$ be the associated value functions on the two auxiliary MGs respectively. We first decompose the instantaneous regret term $V_1^{\mathrm{br}(\nu^k), \nu^k}(s_1) -  V_1^{\pi^k, \mathrm{br}(\pi^k)}(s_1)$ as follows
\begin{align}
\begin{aligned} \label{eq:decomp-mg-init}
&    V_1^{\mathrm{br}(\nu^k), \nu^k}(s_1) -  V_1^{\pi^k, \mathrm{br}(\pi^k)}(s_1) \\
&\qquad =  \underbrace{V_1^{\mathrm{br}(\nu^k), \nu^k}(s_1) - \overline{V}_{k,1}^{\mathrm{br}(\nu^k), \nu^k}(s_1)}_{(i)} + \underbrace{\overline{V}_{k,1}^{\mathrm{br}(\nu^k), \nu^k}(s_1) - \overline{V}_1^k(s_1)}_{(ii)} + \underbrace{\overline{V}_1^k(s_1) - \underline{V}_1^k(s_1)}_{(iii)} \\
&\qquad \quad + \underbrace{\underline{V}_1^k(s_1)-  \underline{V}_{k,1}^{\pi^k, \mathrm{br}(\pi^k)}(s_1)}_{(iv)} + \underbrace{\underline{V}_{k,1}^{\pi^k, \mathrm{br}(\pi^k)}(s_1) - V_1^{\pi^k, \mathrm{br}(\pi^k)}(s_1)}_{(v)}.
\end{aligned}
\end{align}
Terms $(ii)$ and $(iv)$ depict the planning error on two auxiliary Markov games. According to Lemma \ref{lem:plan-mg}, we have
\begin{align*}
&\overline{V}_{k,1}^{\mathrm{br}(\nu^k), \nu^k}(s_1) \leq \overline{V}_1^k(s_1)+H\iota_k, \\
&\underline{V}_{k,1}^{\pi^k, \mathrm{br}(\pi^k)}(s_1) \geq \underline{V}_1^k(s_1)-H\iota_k,
\end{align*}
where $\iota_k$ is the learning accuracy of CCE. Thus, together with \eqref{eq:decomp-mg-init}, we have
\begin{align}
\begin{aligned}\label{eq:decomp-mg-init-2}
&V_1^{\mathrm{br}(\nu^k), \nu^k}(s_1) -  V_1^{\pi^k, \mathrm{br}(\pi^k)}(s_1) \\
&\qquad =  \underbrace{V_1^{\mathrm{br}(\nu^k), \nu^k}(s_1) - \overline{V}_{k,1}^{\mathrm{br}(\nu^k), \nu^k}(s_1)}_{(i)}  + \underbrace{\overline{V}_1^k(s_1) - \underline{V}_1^k(s_1)}_{(iii)} \\
&\qquad\quad + \underbrace{\underline{V}_{k,1}^{\pi^k, \mathrm{br}(\pi^k)}(s_1) - V_1^{\pi^k, \mathrm{br}(\pi^k)}(s_1)}_{(v)} + 2H\iota_k.
\end{aligned}
\end{align}
Thus, to bound the term $V_1^{\mathrm{br}(\nu^k), \nu^k}(s_1) -  V_1^{\pi^k, \mathrm{br}(\pi^k)}(s_1)$, we only need to bound the terms $(i)$, $(iii)$, and $(v)$ as in \eqref{eq:decomp-mg-init-2}.

To bound term $(i)$, by Lemma \ref{lem:diff1-mg}, we have
\begin{align}
\begin{aligned}\label{eq:term-i-decomp-mg}
   (i)&= V_1^{\mathrm{br}(\nu^k), \nu^k}(s_1) - \overline{V}_{k,1}^{\mathrm{br}(\nu^k), \nu^k}(s_1)\\
   &= \EE \left[ \sum_{h=1}^H \left(-\beta^k_h(s_h,a_h,b_h) + (\PP_h - \hat{\PP}^k_h )V^{\mathrm{br}(\nu^k), \nu^k}_{h+1}(s_h,a_h,b_h)\right) \Bigggiven\mathrm{br}(\nu^k), \nu^k, \hat{\PP}^k \right]\\
    &\leq  \EE \left[ \sum_{h=1}^H \left(-\beta^k_h(s_h,a_h,b_h) + H \|\PP_h(\cdot|s_h,a_h,b_h) - \hat{\PP}^k_h(\cdot|s_h,a_h,b_h) \|_1\right) \Bigggiven \mathrm{br}(\nu^k), \nu^k, \hat{\PP}^k \right], 
\end{aligned}
\end{align}
where the first inequality is by the fact $\sup_{s\in\cS} \big|V^{\mathrm{br}(\nu^k), \nu^k}_{h+1}(s)\big| \leq H$. Next, we bound $\EE \big[ \sum_{h=1}^H  \|\PP_h(\cdot|s_h,a_h,b_h) \allowbreak- \hat{\PP}^k_h(\cdot|s_h,a_h,b_h) \|_1 \biggiven \mathrm{br}(\nu^k), \nu^k, \hat{\PP}^k \big]$. Note that for $\|\PP_h(\cdot|s_h,a_h,b_h) - \hat{\PP}^k_h(\cdot|s_h,a_h,b_h)  \|_1$, we have a trivial bound $\|\PP_h(\cdot|s_h,a_h,b_h) - \hat{\PP}^k_h(\cdot|s_h,a_h,b_h)\|_1 \leq 2$. Furthermore, by Lemma \ref{lem:expand1-mg}, we have
\begin{align*}
&\EE \left[ \sum_{h=1}^H \|\PP_h(\cdot|s_h,a_h,b_h) - \hat{\PP}^k_h(\cdot|s_h,a_h,b_h)\|_1 \Bigggiven\mathrm{br}(\nu^k), \nu^k, \hat{\PP}^k \right] \\
& \qquad = \sum_{h=1}^H \EE_{(s_h, a_h, b_h)\sim d_h^{\mathrm{br}(\nu^k), \nu^k, \hat{\PP}^k}(\cdot,\cdot,\cdot)} [  \|\PP_h(\cdot|s_h,a_h,b_h) - \hat{\PP}^k_h(\cdot|s_h,a_h,b_h) \|_1] \\
& \qquad= \sum_{h=2}^H \sqrt{8k  \zeta^k_{h-1} + 2k  |\cA||\cB|\xi_h^k + 4\lambda_k  d/(\CS)^2}  \EE_{d^{\mathrm{br}(\nu^k), \nu^k, \hat{\PP}^k}_{h-1}}\left\|\hat{\phi}^k_{h-1}\right\|_{\Sigma_{\tilde{\rho}^k_{h-1}, \hat{\phi}^k_{h-1}}^{-1}}  + \sqrt{|\cA||\cB|  \zeta_1^k },
\end{align*}
where the last equation is by the below definitions for all $(h,k)\in [H]\times[K]$,
\begin{align}
\begin{aligned} \label{eq:tran-err-def-mg}
&\zeta_h^k:=\EE_{(s,a,b)\sim \tilde{\rho}_h^k(\cdot,\cdot,\cdot)}[\|\PP_1(\cdot|s
,a,b) - \hat{\PP}^k_1(\cdot|s,a,b) \|_1^2],\\
&\xi_h^k:=\EE_{(s,a,b)\sim\breve{\rho}^k_h(\cdot,\cdot,\cdot)}[ \|\PP_h(\cdot|s,a,b) - \hat{\PP}^k_h(\cdot|s,a,b) \|_1^2],
\end{aligned}
\end{align}
whose upper bound will be characterized later. Thus, the above results imply that
\begin{small}
\begin{align*}
&\EE \left[ \sum_{h=1}^H H \|\PP_h(\cdot|s_h,a_h,b_h) - \hat{\PP}^k_h(\cdot|s_h,a_h,b_h) \|_1 \Bigggiven \mathrm{br}(\nu^k), \nu^k, \hat{\PP}^k \right] \\
&\leq \min\bigg\{ H\sqrt{|\cA||\cB|  \zeta_1^k } + \sum_{h=2}^H H\sqrt{8k  \zeta^k_{h-1} + 2k  |\cA||\cB|\xi_h^k + 4\lambda_k  d/(\CS)^2}\cdot \EE_{d^{\mathrm{br}(\nu^k), \nu^k, \hat{\PP}^k}_{h-1}}\left\|\hat{\phi}^k_{h-1}\right\|_{\Sigma_{\tilde{\rho}^k_{h-1}, \hat{\phi}^k_{h-1}}^{-1}}  , ~~ 2H^2\bigg\}.
\end{align*}
\end{small}
On the other hand, we bound $\EE [ \sum_{h=1}^H -\beta^k_h(s_h,a_h,b_h)  \given \mathrm{br}(\nu^k), \nu^k, \hat{\PP}^k ]$ in \eqref{eq:term-i-decomp-mg}. We have 
\begin{align}
\begin{aligned}\label{eq:bonus-mg-1}
&\EE \left[ \sum_{h=1}^H -\beta^k_h(s_h,a_h,b_h)  \bigggiven \mathrm{br}(\nu^k), \nu^k, \hat{\PP}^k \right]\\ 
&\qquad  =\EE \left[ \sum_{h=1}^H - \min\{\gamma_k \|\hat{\phi}^k_h(s_h,a_h,b_h)\|_{(\hat{\Sigma}_h^k)^{-1}}, 2H\} \bigggiven \mathrm{br}(\nu^k), \nu^k, \hat{\PP}^k \right]\\
&\qquad  \leq\EE \left[ \sum_{h=1}^H - \min\left\{\frac{3}{5}\gamma_k \|\hat{\phi}^k_h(s_h,a_h,b_h)\|_{\Sigma_{\tilde{\rho}^k_h, \hat{\phi}^k_h}^{-1}}, 2H\right\} \bigggiven \mathrm{br}(\nu^k), \nu^k, \hat{\PP}^k \right]\\
&\qquad  = - \min\left\{\frac{3}{5}\gamma_k \sum_{h=1}^H \EE_{ d^{\mathrm{br}(\nu^k), \nu^k, \hat{\PP}^k}_h}\|\hat{\phi}^k_h\|_{\Sigma_{\tilde{\rho}^k_h, \hat{\phi}^k_h}^{-1}}, 2H^2\right\} \\
&\qquad  \leq - \min\left\{\frac{3}{5}\gamma_k \sum_{h=1}^{H-1} \EE_{d^{\mathrm{br}(\nu^k), \nu^k, \hat{\PP}^k}_h}\|\hat{\phi}^k_h\|_{\Sigma_{\tilde{\rho}^k_h, \hat{\phi}^k_h}^{-1}}, 2H^2\right\} ,
\end{aligned}
\end{align} 
when  $\lambda_k\geq c_0 d \log(H|\Phi|k/\delta)$ with probability at least $1-\delta$. The first inequality is by Lemma \ref{lem:con-inverse} for all $h\in [H]$. Thus, plugging in the above results into \eqref{eq:term-i-decomp-mg}, for a sufficient large $c_0$, setting 
\begin{align}
\lambda_k\geq  c_0 d \log(H|\Phi|k/\delta), \qquad
\gamma_k\geq  \frac{5}{3}H\sqrt{8k  \zeta^k_{h-1} + 2k  |\cA||\cB|\xi_h^k + 4\lambda_k  d/(\CS)^2}, \label{eq:extra-condition-mg}
\end{align}
we have that
\begin{align}
(i) =  V_1^{\mathrm{br}(\nu^k), \nu^k}(s_1) - \overline{V}_1^{\mathrm{br}(\nu^k), \nu^k}(s_1)  &\leq \sqrt{|\cA||\cB|  \zeta_1^k }, \label{eq:near-optim-mg-1}
\end{align}
where the inequality is due to  $\min\{x+y,2H^2\}-\min\{y, 2H^2\} \leq x, \forall x,y\geq 0$. 

On the other hand, we prove the upper bound for term $(v)$. Specifically, by Lemma \ref{lem:diff1-mg}, we have
\begin{align}
\begin{aligned}\label{eq:term-v-decomp-mg}
   (v)&= \underline{V}_1^{\pi^k, \mathrm{br}(\pi^k)}(s_1)-V_1^{\pi^k, \mathrm{br}(\pi^k)}(s_1)\\
   &= \EE \left[ \sum_{h=1}^H \left(-\beta^k_h(s_h,a_h,b_h) - (\PP_h - \hat{\PP}^k_h )V^{\pi^k, \mathrm{br}(\pi^k)}_{h+1}(s_h,a_h,b_h)\right) \Bigggiven \pi^k, \mathrm{br}(\pi^k), \hat{\PP}^k \right]\\
    &\leq  \EE \left[ \sum_{h=1}^H \left(-\beta^k_h(s_h,a_h,b_h) + H \|\PP_h(\cdot|s_h,a_h,b_h) - \hat{\PP}^k_h(\cdot|s_h,a_h,b_h) \|_1\right) \Bigggiven \pi^k, \mathrm{br}(\pi^k), \hat{\PP}^k \right],
\end{aligned}
\end{align}
where the first inequality is by the fact $\sup_{s\in\cS} \big|V^{\pi^k, \mathrm{br}(\pi^k)}_{h+1}(s)\big| \leq H$. Next, for $\|\PP_h(\cdot|s_h,a_h,b_h) - \hat{\PP}^k_h(\cdot|s_h,a_h,b_h)  \|_1$, we have a trivial bound $\|\PP_h(\cdot|s_h,a_h,b_h) - \hat{\PP}^k_h(\cdot|s_h,a_h,b_h)\|_1 \leq 2$. In addition, by Lemma \ref{lem:expand1-mg}, we obtain
\begin{align*}
&\EE \left[ \sum_{h=1}^H \|\PP_h(\cdot|s_h,a_h,b_h) - \hat{\PP}^k_h(\cdot|s_h,a_h,b_h)\|_1 \Bigggiven\pi^k, \mathrm{br}(\pi^k), \hat{\PP}^k \right] \\
& = \sum_{h=1}^H \EE_{(s_h, a_h, b_h)\sim d_h^{\pi^k, \mathrm{br}(\pi^k), \hat{\PP}^k}(\cdot,\cdot,\cdot)} [  \|\PP_h(\cdot|s_h,a_h,b_h) - \hat{\PP}^k_h(\cdot|s_h,a_h,b_h) \|_1] \\
& = \sum_{h=2}^H \sqrt{8k  \zeta^k_{h-1} + 2k  |\cA||\cB|\xi_h^k + 4\lambda_k  d} \cdot \EE_{d^{\pi^k, \mathrm{br}(\pi^k), \hat{\PP}^k}_{h-1}}\left\|\hat{\phi}^k_{h-1}\right\|_{\Sigma_{\tilde{\rho}^k_{h-1}, \hat{\phi}^k_{h-1}}^{-1}}  + \sqrt{|\cA||\cB|  \zeta_1^k },
\end{align*}
where the last equation is by the definitions of $\zeta_h^k$ and $\xi_h^k$ in \eqref{eq:tran-err-def-mg}. Thus, the above results imply that
\begin{align*}
&\EE \left[ \sum_{h=1}^H H \|\PP_h(\cdot|s_h,a_h,b_h) - \hat{\PP}^k_h(\cdot|s_h,a_h,b_h) \|_1 \Bigggiven \pi^k, \mathrm{br}(\pi^k), \hat{\PP}^k \right] \\
& \leq \min\bigg\{ H\sqrt{|\cA||\cB|  \zeta_1^k } + \sum_{h=2}^H H\sqrt{8k  \zeta^k_{h-1} + 2k  |\cA||\cB|\xi_h^k + 4\lambda_k  d}\cdot \EE_{d^{\pi^k, \mathrm{br}(\pi^k), \hat{\PP}^k}_{h-1}}\left\|\hat{\phi}^k_{h-1}\right\|_{\Sigma_{\tilde{\rho}^k_{h-1}, \hat{\phi}^k_{h-1}}^{-1}}  , ~~ 2H^2\bigg\}.
\end{align*}
On the other hand, we bound $\EE [ \sum_{h=1}^H -\beta^k_h(s_h,a_h,b_h)  \given \pi^k, \mathrm{br}(\pi^k), \hat{\PP}^k ]$ in \eqref{eq:term-v-decomp-mg}. Analogous to \eqref{eq:bonus-mg-1}, we can obtain 
\begin{align*}
\EE \left[ \sum_{h=1}^H -\beta^k_h(s_h,a_h,b_h)  \bigggiven \pi^k, \mathrm{br}(\pi^k), \hat{\PP}^k \right]  \leq - \min\left\{\frac{3}{5}\gamma_k \sum_{h=1}^{H-1} \EE_{d^{\pi^k, \mathrm{br}(\pi^k), \hat{\PP}^k}_h}\|\hat{\phi}^k_h\|_{\Sigma_{\tilde{\rho}^k_h, \hat{\phi}^k_h}^{-1}}, 2H^2\right\} ,
\end{align*} 
when  $\lambda_k\geq c_0 d \log(H|\Phi|k/\delta)$ with probability at least $1-\delta$.  Thus, plugging in the above results into \eqref{eq:term-v-decomp-mg}, setting $\lambda_k$ and $\gamma_k$ as in \eqref{eq:extra-condition-mg},
we have
\begin{align}
(v) =  \underline{V}_1^{\pi^k, \mathrm{br}(\pi^k)}(s_1) - V_1^{\pi^k, \mathrm{br}(\pi^k)}(s_1)  &\leq \sqrt{|\cA||\cB|  \zeta_1^k }, \label{eq:near-optim-mg-2}
\end{align}
where the inequality is due to  $\min\{x+y,2H^2\}-\min\{y, 2H^2\} \leq x, \forall x,y\geq 0$. 

Now we have proved the near-optimism and near-pessimism in \eqref{eq:near-optim-mg-1} and \eqref{eq:near-optim-mg-2} respectively, which extends the related result for single-agent MDPs.

Next, we show the upper bound of the term $(iii)$ in \eqref{eq:decomp-mg-init}. By Lemma \ref{lem:diff2-mg}, we have
\begin{small}
\begin{align}
\begin{aligned}\label{eq:term-iii-decomp}
(iii) &= \overline{V}_1^k(s_1) - \underline{V}_1^k(s_1)  =   \EE \left[ \sum_{h=1}^H 2 \beta^k_h(s_h,a_h,b_h) + (\hat{\PP}_h^k - \PP_h)  \big(\overline{V}^k_{h+1} - \underline{V}^k_{h+1} \big) (s_h,a_h,b_h)\Bigggiven \sigma^k, \PP \right]\\
&\leq  2\sum_{h=1}^H \EE_{(s,a,b)\sim d_h^{\sigma^k, \PP}(\cdot,\cdot,\cdot)}  [\beta^k_h(s,a,b) ] +  6H^2\sum_{h=1}^H \EE_{(s,a,b)\sim d_h^{\sigma^k, \PP}(\cdot,\cdot,\cdot)}  [\|\PP^k_h(\cdot|s,a,b) - \PP_h(\cdot|s,a,b)\|_1 ]
\end{aligned}
\end{align}
\end{small}
where the above inequality is due to $\sup_{s\in\cS}|\overline{V}^k_h(s)| \leq H(1+2H)\leq 3H^2$ and $\sup_{s\in\cS}|\underline{V}^k_h(s)| \leq H(1+2H)\leq 3H^2$. By Lemma \ref{lem:expand2-mg}, since $\sup_{s\in\cS, a\in\cA, b\in\cB}\beta_h^k(s,a,b)\leq 2H$ according to the definition of $\beta_h^k$ in Algorithm \ref{alg:contrastive-mg}, we have
\begin{align*}
%\begin{aligned}
&\sum_{h=1}^H \EE_{(s,a,b)\sim d_h^{\sigma^k, \PP}(\cdot,\cdot,\cdot)} [ \beta^k_h(s,a,b) ]\\
&\leq   \sqrt{|\cA||\cB|  \EE_{(a,b)\sim \tilde{\rho}_1^k(s_1,\cdot,\cdot)}[\beta^k_1(s_1,a,b)^2] } \\
&\quad + \sum_{h=2}^H \sqrt{k  |\cA||\cB|  \EE_{(s,a,b)\sim\tilde{\rho}^k_h(\cdot,\cdot,\cdot)}[ \beta^k_h(s,a,b)^2] + 4H^2\lambda_k  d} ~ \EE_{d^{\sigma^k, \PP}_{h-1}}\left\|\phi^*_{h-1}\right\|_{\Sigma_{\rho^k_{h-1}, \phi^*_{h-1}}^{-1}}\\
&\leq   \sqrt{|\cA||\cB|\gamma_k^2  \EE_{(a,b)\sim \tilde{\rho}_1^k(s_1,\cdot,\cdot)} \|\hat{\phi}^k_1(s_1,a,b)\|_{(\hat{\Sigma}_1^k)^{-1}}^2 } \\
&\quad + \sum_{h=2}^H \sqrt{k  |\cA||\cB| \gamma_k^2 \EE_{(s,a,b)\sim\tilde{\rho}^k_h(\cdot,\cdot,\cdot)} \|\hat{\phi}^k_h(s,a,b)\|_{(\hat{\Sigma}_h^k)^{-1}}^2 + 4H^2\lambda_k  d} ~ \EE_{d^{\sigma^k, \PP}_{h-1}}\left\|\phi^*_{h-1}\right\|_{\Sigma_{\rho^k_{h-1}, \phi^*_{h-1}}^{-1}}
,
%\end{aligned}
\end{align*}
where the second inequality is due to $\beta_h^k(s,a,b) \leq \|\hat{\phi}^k_h(s,a,b)\|_{(\hat{\Sigma}_h^k)^{-1}}$. Furthermore, we have that with $\lambda_k \geq c_0 d\log(H|\Phi|k/\delta)$, with probability at least $1-\delta$, for all $h\in [H]$, 
\begin{align*}
\EE_{(s,a,b)\sim\tilde{\rho}^k_h(\cdot,\cdot,\cdot)} \|\hat{\phi}^k_h(s,a,b)\|_{(\hat{\Sigma}_h^k)^{-1}}^2 &\leq 3\EE_{(s,a,b)\sim\tilde{\rho}^k_h(\cdot,\cdot,\cdot)} \|\hat{\phi}^k_h(s,a,b)\|_{\Sigma_{\tilde{\rho}^k_h, \hat{\phi}^k_h}^{-1}}^2\\
&=3\EE_{\tilde{\rho}^k_h}\left[ \hat{\phi}^k_h{}^\top \left(k\EE_{\tilde{\rho}^k_h}[ \hat{\phi}^k_h (\hat{\phi}^k_h)^\top] + \lambda_k I \right)^{-1}\hat{\phi}^k_h \right]\\
&=\frac{3}{k}\tr\left\{ k\EE_{\tilde{\rho}^k_h}[ \hat{\phi}^k_h\hat{\phi}^k_h{}^\top] \left(k\EE_{\tilde{\rho}^k_h}[ \hat{\phi}^k_h (\hat{\phi}^k_h)^\top] + \lambda_k I \right)^{-1}  \right\}\leq \frac{3}{k}\tr(I) = \frac{3d}{k},
\end{align*}
where the first inequality is by Lemma \ref{lem:con-inverse}. Combining the above results, we have the following inequality holds with probability at least $1-\delta$,
\begin{align}
\begin{aligned}\label{eq:term-iii-decomp-1}
&\sum_{h=1}^H \EE_{(s,a,b)\sim d_h^{\sigma^k, \PP}(\cdot,\cdot,\cdot)} [ \beta^k_h(s,a,b) ]\\
&\qquad \leq   \sqrt{3d|\cA||\cB|\gamma_k^2  / k } + \sum_{h=2}^H \sqrt{3d  |\cA||\cB| \gamma_k^2  + 4H^2\lambda_k  d} ~ \EE_{d^{\sigma^k, \PP}_{h-1}}\left\|\phi^*_{h-1}\right\|_{\Sigma_{\rho^k_{h-1}, \phi^*_{h-1}}^{-1}}.
\end{aligned}
\end{align}
Further by Lemma \ref{lem:expand2-mg}, due to $\|\PP_h(\cdot|s,a,b) - \hat{\PP}^k_h(\cdot|s,a,b) \|_1 \leq 2$, we have
\begin{align}
%\begin{aligned}
&\sum_{h=1}^H \EE_{(s,a,b)\sim d_h^{\sigma^k, \PP}(\cdot,\cdot,\cdot)} [ \|\PP_h(\cdot|s,a,b) - \hat{\PP}^k_h(\cdot|s,a,b) \|_1 ]\nonumber\\
% &\quad \leq   \sqrt{|\cA| |\cB|  \EE_{a\sim \tilde{\rho}_1^k(s_1,\cdot)}[\|\PP_h(\cdot|s,a,b) - \hat{\PP}^k_h(\cdot|s,a,b) \|_1^2] } \nonumber\\
% &\quad \quad + \sum_{h=2}^H \sqrt{k  |\cA||\cB|  \EE_{(s,a,b)\sim\tilde{\rho}^k_h(\cdot,\cdot,\cdot)}[ \|\PP_h(\cdot|s,a,b) - \hat{\PP}^k_h(\cdot|s,a,b) \|_1^2] + 4\lambda_k  d} ~ \EE_{(s,a,b)\sim d^{\sigma^k, \PP}_{h-1}(\cdot,\cdot,\cdot)}\left\|\phi^*_{h-1}(s,a,b)\right\|_{\Sigma_{\rho^k_{h-1}, \phi^*_{h-1}}^{-1}} \nonumber\\
&\quad =   \sqrt{|\cA| |\cB| \zeta_1^k }  + \sum_{h=2}^H \sqrt{k  |\cA| |\cB| \zeta_h^k + 4\lambda_k  d} ~ \EE_{d^{\sigma^k, \PP}_{h-1}}\left\|\phi^*_{h-1}\right\|_{\Sigma_{\rho^k_{h-1}, \phi^*_{h-1}}^{-1}}. \label{eq:term-iii-decomp-2}
%\end{aligned}
\end{align}
Therefore, combining \eqref{eq:term-iii-decomp}, \eqref{eq:term-iii-decomp-1}, and \eqref{eq:term-iii-decomp-2}, we obtain 
\begin{align}
%\begin{aligned}
(iii) &\leq \left( 2\sqrt{3d|\cA||\cB|\gamma_k^2  / k }  + 6H^2 \sqrt{|\cA| |\cB| \zeta_1^k } \right) \label{eq:term-iii-bound} \\
&\quad  + \sum_{h=2}^H  \left( 2\sqrt{3d  |\cA||\cB| \gamma_k^2  + 4H^2\lambda_k  d} +  6H^2\sqrt{k  |\cA| |\cB| \zeta_h^k + 4\lambda_k  d} \right)~ \EE_{ d^{\sigma^k, \PP}_{h-1}}\left\|\phi^*_{h-1}\right\|_{\Sigma_{\rho^k_{h-1}, \phi^*_{h-1}}^{-1}}.\nonumber
%\end{aligned}
\end{align}
We characterize the upper bound of $\zeta_h^k$ and $\xi_h^k$ as defined in \eqref{eq:tran-err-def-mg}.  According to Lemma \ref{lem:stat-err-mg}, we have with probability at least $1-2\delta$,
\begin{align}
\begin{aligned}\label{eq:stat-err-bound-mg}
&\zeta_h^k \leq  32d/(\CS)^2\cdot\log (2kH|\cF|/\delta)/k, \quad \forall h\geq 1,\\
& \xi_h^k  \leq 32d/(\CS)^2\cdot\log (2kH|\cF|/\delta)/k, \quad \forall h\geq 2,%\\
%&\zeta_1^k  \leq 32d\log (2k|\cF|/\delta)/k. 
\end{aligned}
\end{align}
Plugging \eqref{eq:stat-err-bound-mg} and \eqref{eq:extra-condition-mg} into \eqref{eq:near-optim-mg-1},\eqref{eq:near-optim-mg-2}, and \eqref{eq:term-iii-bound}, we obtain
\begin{align*}
&(i) =  V_1^{\mathrm{br}(\nu^k), \nu^k}(s_1) - \overline{V}_{k,1}^{\mathrm{br}(\nu^k), \nu^k}(s_1)  \lesssim \sqrt{d|\cA||\cB| /(\CS)^2\cdot\log (KH|\cF|/\delta)/k }, \\
&(v) =  \underline{V}_{k,1}^{\pi^k, \mathrm{br}(\pi^k)}(s_1) - V_1^{\pi^k, \mathrm{br}(\pi^k)}(s_1) \lesssim \sqrt{d|\cA||\cB| /(\CS)^2\cdot\log (KH|\cF|/\delta)/k }, \\
&(iii) =   \overline{V}_1^k(s_1) - \underline{V}_1^k(s_1) \lesssim \sqrt{C_1 \log(H|\cF|K/\delta)/k}  +  \sqrt{(C_1 +C_2)\log(H|\cF|K/\delta)}\sum_{h=1}^{H-1} \EE_{ d^{\sigma^k, \PP}_h}\left\|\phi^*_h \right\|_{\Sigma_{\rho^k_h, \phi^*_h}^{-1}},
\end{align*}
where we let $C_1 = H^2d^3|\cA||\cB|/(\CS)^2 +  H^2d^2|\cA|^2|\cB|^2/(\CS)^2+  H^4d|\cA||\cB|/(\CS)^2 $ and $C_2 = H^4d^2$. Further by \eqref{eq:decomp-mg-init-2}, we have
\begin{align*}
\frac{1}{K}\sum_{k=1}^K \left[ V_1^{\mathrm{br}(\nu^k), \nu^k}(s_1)- V_1^{\pi^k, \mathrm{br}(\pi^k)}(s_1)\right]&\lesssim    \sqrt{(C_1 +C_2)\log(H|\cF|K/\delta)}/K\cdot \sum_{h=1}^{H-1} \sum_{k=1}^K\EE_{ d^{\sigma^k, \PP}_h}\left\|\phi^*_h\right\|_{\Sigma_{\rho^k_h, \phi^*_h}^{-1}}\\
&\quad +\sqrt{C_1 \log(H|\cF|K/\delta)/K} + \frac{H}{K}\sum_{k=1}^K\iota_k.
\end{align*}
Moreover, we have
\begin{align*}
&\frac{1}{K}\sum_{k=1}^K\EE_{(s,a,b)\sim d^{\sigma^k, \PP}_h(\cdot,\cdot,\cdot)}\left\|\phi^*_h(s,a,b)\right\|_{\Sigma_{\rho^k_h, \phi^*_h}^{-1}} \\
&\qquad\leq \sqrt{\frac{1}{K}\sum_{k=1}^K\EE_{(s,a,b)\sim d^{\sigma^k, \PP}_h(\cdot,\cdot,\cdot)}\left\|\phi^*_h(s,a,b)\right\|^2_{\Sigma_{\rho^k_h, \phi^*_h}^{-1}}}\\
&\qquad= \sqrt{\frac{1}{K}\sum_{k=1}^K\tr\left(\EE_{(s,a,b)\sim d^{\sigma^k, \PP}_h(\cdot,\cdot,\cdot)}\left(\phi^*_h(s,a,b)\phi^*_h(s,a,b)^\top\right)\Sigma_{\rho^k_h, \phi^*_h}^{-1}\right) }\\
&\qquad\leq \sqrt{d \log(1+kd/\lambda_k)/K} \leq \sqrt{d \log(1+c_1 K)/K}.
\end{align*}
where the first inequality is by Jensen's inequality and the second inequality is by Lemma \ref{lem:logdet-tele} with $c_1$ being some absolute constant. Thus, we have
\begin{align*}
\frac{1}{K}\sum_{k=1}^K \left[ V_1^{\mathrm{br}(\nu^k), \nu^k}(s_1)- V_1^{\pi^k, \mathrm{br}(\pi^k)}(s_1)\right]&\lesssim   \sqrt{H^2/K} + \sqrt{H^2d(C_1 +C_2)\log(H|\cF|K/\delta)  \log(1+c_1 K)/K }.
\end{align*}
Taking union bound for all events in this proof, due to $|\cF| \geq |\Phi|$, letting 
\begin{align*}
\lambda_k= c_0 d \log(H|\cF|k/\delta), \quad
\gamma_k=  4H \big( 12\sqrt{  |\cA||\cB|d} + \sqrt{c_0} d\big)/\CS\cdot \sqrt{\log (2Hk|\cF|/\delta)}, \quad \iota_k \leq \cO( \sqrt{1/k}),
\end{align*}
we have with probability at least $1-3\delta$,
\begin{align*}
\frac{1}{K}\sum_{k=1}^K \left[ V_1^{\mathrm{br}(\nu^k), \nu^k}(s_1)- V_1^{\pi^k, \mathrm{br}(\pi^k)}(s_1)\right]\lesssim \sqrt{C\log(H|\cF|K/\delta) \log(c_0' K)/K },
\end{align*}
where $C = H^4d^4|\cA||\cB|/(\CS)^2 +  H^4d^3|\cA|^2|\cB|^2/(\CS)^2+  H^6d^2|\cA||\cB|/(\CS)^2 + H^6d^3$ and $c_0,c_0'$ are absolute constants.
This completes the proof.
% when
% \begin{align*}
% K\geq \tilde{\cO}\left(C/\varepsilon^2\right),
% \end{align*}
% there is
% \begin{align*}
% \frac{1}{K}\sum_{k=1}^K \left[ V_1^{\mathrm{br}(\nu^k), \nu^k}(s_1)- V_1^{\pi^k, \mathrm{br}(\pi^k)}(s_1)\right]\leq \varepsilon,
% \end{align*}
% where $C = H^4d^4|\cA||\cB|/(\CS)^2 +  H^4d^3|\cA|^2|\cB|^2/(\CS)^2+  H^6d^2|\cA||\cB|/(\CS)^2 + H^6d^3$ and $\tilde{\cO}$ hides logarithmic dependence on $H,|\cF|,K,1/\delta, 1/\varepsilon$ and some numerical constant factors. This completes the proof.
\end{proof}

\section{Other Supporting Lemmas}
\begin{lemma}[Concentration of Inverse Covariances \citep{zanette2021cautiously}]\label{lem:con-inverse} Let $\mu_i$ be the conditional distribution of $\phi$ given the sampled $\phi_1, \cdots, \phi_{i-1}$ with $\|\phi_i\|_2 \leq 1$ holding for $\phi_i$ as the realization of $\phi$. Let $\Lambda = \frac{1}{k}\sum_{i=1}^k\EE_{\phi\sim \mu_i}[\phi \phi^\top]$. Then there exists an absolute constant $c_0>0$. If $\lambda \geq c_0 d \log(|\Phi| k/\delta)$, we have with probability at least $1-\delta$, for all $k\geq 1$,
\begin{align*}
\frac{3}{5} (k\Lambda + \lambda I)^{-1} \preceq \left( \sum_{i=1}^k \phi_i \phi_i^\top + \lambda I \right)^{-1} \preceq 3 (k\Lambda + \lambda I)^{-1}.
\end{align*}
\end{lemma}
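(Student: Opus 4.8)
The plan is to reduce the stated bound on inverses to a multiplicative sandwich on the regularized covariances themselves, and to establish that sandwich by a matrix martingale (Freedman-type) concentration. Write $V_k := \sum_{i=1}^k \phi_i\phi_i^\top$ for the empirical Gram matrix and $M_k := k\Lambda = \sum_{i=1}^k \EE_{\phi\sim\mu_i}[\phi\phi^\top]$ for the sum of conditional second moments. Since $\EE_{\mu_i}[\phi\phi^\top]$ is measurable with respect to the history $\phi_1,\dots,\phi_{i-1}$, the sequence $D_i := \phi_i\phi_i^\top - \EE_{\mu_i}[\phi\phi^\top]$ is a matrix martingale difference adapted to that filtration, and each increment obeys $\|D_i\|_{op}\le \|\phi_i\|_2^2 + \EE_{\mu_i}\|\phi\|_2^2 \le 2$ by the assumption $\|\phi_i\|_2\le 1$. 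I note that once I prove, for a fixed feature map and with probability $1-\delta$,
\begin{align}\label{eq:additive-mult}
\tfrac12 M_k - c\log(d k/\delta)\, I \preceq V_k \preceq \tfrac32 M_k + c\log(d k/\delta)\, I \quad \text{for all } k\ge 1,
\end{align}
the conclusion follows: taking $\lambda \ge c_0 d\log(|\Phi|k/\delta)$ large enough that $c\log(dk/\delta) \le \tfrac23\lambda$, adding $\lambda I$ to \eqref{eq:additive-mult} and using $\tfrac12 M_k \succeq \tfrac13 M_k$ and $\tfrac32 M_k \preceq \tfrac53 M_k$ yields $\tfrac13(M_k+\lambda I)\preceq V_k + \lambda I \preceq \tfrac53(M_k+\lambda I)$; inverting this chain (order-reversing on the PSD cone) gives exactly $\tfrac35 (k\Lambda+\lambda I)^{-1} \preceq (V_k+\lambda I)^{-1} \preceq 3(k\Lambda+\lambda I)^{-1}$.

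To establish \eqref{eq:additive-mult}, I would apply the matrix Freedman inequality (Tropp) to the martingale $\{D_i\}$. The crucial self-bounding feature is that the predictable quadratic variation is controlled by the mean itself: conditionally, $\EE_{\mu_i}[(\phi\phi^\top)^2] = \EE_{\mu_i}[\|\phi\|_2^2\,\phi\phi^\top]\preceq \EE_{\mu_i}[\phi\phi^\top]$ because $\|\phi\|_2\le 1$, so $\sum_{i=1}^k \EE_{\mu_i}[D_i^2]\preceq M_k$. Feeding the increment bound $R = 2$ and the variation proxy $\|M_k\|$ into Freedman and optimizing the resulting Bernstein trade-off (equivalently, applying the scalar Freedman bound to each fixed direction $u^\top D_i u$, whose conditional variance is $\le u^\top M_k u$, and taking a union over an $\epsilon$-net of the sphere, which contributes the factor $d$) produces a deviation of order $\sqrt{\|M_k\|\log(d/\delta)} + \log(d/\delta)$; an AM--GM split converts this into the multiplicative form \eqref{eq:additive-mult}. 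A union bound over $k\ge 1$ (using a summable allocation $\delta_k \propto \delta/k^2$, which only inflates the logarithm by $\log k$) makes the statement simultaneous in $k$.

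Finally I would remove the dependence on a fixed feature map: in the intended application $\phi$ is the learned representation, one of $|\Phi|$ candidates in the finite class, so I prove \eqref{eq:additive-mult} for each fixed $\phi\in\Phi$ and take a union bound over the class, replacing $\log(dk/\delta)$ by $\log(d|\Phi|k/\delta)$ and thereby explaining the $|\Phi|$ appearing in the condition on $\lambda$. The main obstacle is the adaptivity: because the $\mu_i$ are chosen online, the normalizer $M_k$ is random and predictable rather than deterministic, so a naive Chernoff/net argument with a fixed whitening matrix is not valid; the resolution is precisely the martingale Freedman bound, whose predictable-quadratic-variation term accommodates the random variance proxy, together with the self-bounding inequality $\sum_i\EE_{\mu_i}[D_i^2]\preceq M_k$ that ties the fluctuations to the quantity being estimated. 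Tracking the absolute constants so that the crude factors $\tfrac12,\tfrac32$ sharpen into the stated $\tfrac13,\tfrac53$ (hence $\tfrac35,3$ after inversion) once $\lambda\ge c_0 d\log(|\Phi|k/\delta)$ is then routine.
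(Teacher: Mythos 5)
Your proposal is correct, but note that the paper itself does essentially no work here: its entire proof of this lemma consists of citing Lemma 39 of \citet{zanette2021cautiously} and taking a union bound over the finite class $\Phi$ --- which are exactly the outer reductions you perform at the start and end. What you have done, in effect, is reconstruct the internals of the cited lemma: the order-reversal under inversion on the PSD cone, the passage from the additive--multiplicative sandwich to the constants $(1/3,5/3)$ and hence $(3/5,3)$ once $c\log(\cdot)\le \frac{2}{3}\lambda$, the self-bounding inequality $\EE_{\mu_i}[D_i^2]\preceq \EE_{\mu_i}[\phi\phi^\top]$ from $\|\phi\|_2\le 1$, and the $\delta/k^2$ allocation over $k$ all check out, and the directional martingale argument is indeed the mechanism behind the cited result. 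One caution on your parenthetical ``equivalently,'' which conflates two non-equivalent routes: the raw matrix Freedman inequality controls only $\lambda_{\max}(V_k-M_k)$ (with $V_k:=\sum_{i\le k}\phi_i\phi_i^\top$, $M_k:=k\Lambda$), yielding $V_k\preceq M_k+tI$ with $t\approx \sqrt{\|M_k\|\log(d/\delta)}+\log(d/\delta)$; since $\|M_k\|I\succeq M_k$, no AM--GM split converts this into $\frac{3}{2}M_k+cLI$ in directions $u$ with $u^\top M_k u\ll\|M_k\|$, which is precisely the ill-conditioned regime the lemma must handle. The scalar-Freedman-per-direction branch (conditional variance of $u^\top D_i u$ bounded by $u^\top \EE_{\mu_i}[\phi\phi^\top]u$, multiplicative form via peeling) plus a sphere net is the route that works, and there the net must have resolution of order $1/k$ --- using $\|V_k-M_k\|\le 2k$ to transfer both $u^\top(V_k-M_k)u$ and $u^\top M_k u$ from net points --- so the additive error is of order $d\log(|\Phi|k/\delta)$ rather than $\log(d|\Phi|k/\delta)$. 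That factor of $d$ is exactly why the threshold on $\lambda$ reads $c_0\, d\log(|\Phi|k/\delta)$, so your working branch lands on the right statement even though the matrix-Freedman branch, taken literally, would not.
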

\begin{proof}
The proof of this lemma is adapted from Lemma 39 in \citet{zanette2021cautiously}.  Further applying Lemma 39 of \citet{zanette2021cautiously} to all the elements in the function class $\Phi$, we obtain Lemma \ref{lem:con-inverse}. This completes the proof.
\end{proof}

\begin{lemma}[\citet{agarwal2020flambe}]\label{lem:recover-mle} Let $\cF$ be a function class with $|\cF| < \infty$ and $f^* \in \cF$ where 
\begin{align*}
f^*(x, z) = P^*(z|x) 
\end{align*} 
is some conditional distribution. Given a dataset $\cD := \{(x_i,z_i)\}_{i=0}^{k-1}$, let $\cT_i$ be some distribution that is dependent on $\{(x_{i'},z_{i'})\}_{i'=0}^{i-1}$ for all $i \leq k$. Suppose $x_i\sim\cT_i$ and $z_i\sim P^*(\cdot|x) = f^*(x,\cdot)$ for all $i \leq k$. Then, we have with probability at least $1-\delta$,
\begin{align*}
\sum_{i=0}^{k-1}  \EE_{x\sim \cT_i} \|\hat{f}(x,\cdot) - f^*(x,\cdot) \|_{\TV}^2 \leq 2\log (k|\cF|/\delta),
\end{align*}
where
\begin{align*}
\hat{f}:=\argmax_{f\in \cF} \sum_{(x,z)\in \cD} \log f(x, z).
\end{align*}
\end{lemma}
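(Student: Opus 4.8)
The plan is to prove this as the standard finite-class maximum-likelihood concentration guarantee, via an exponential (likelihood-ratio) supermartingale argument that tolerates the adaptively chosen sampling distributions $\cT_i$. The first move is a distributional reduction from total variation to the squared Hellinger distance $D_{\mathrm H}^2(p,q) := \int(\sqrt p - \sqrt q)^2$: by Cauchy--Schwarz together with $\int(\sqrt p + \sqrt q)^2 \le 4$, one has $\|p-q\|_{\TV}^2 \le D_{\mathrm H}^2(p,q)$, so it suffices to control $\sum_{i=0}^{k-1}\EE_{x\sim\cT_i} D_{\mathrm H}^2\bigl(\hat f(x,\cdot),f^*(x,\cdot)\bigr)$.

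The core is a per-function tail bound. Fix $f\in\cF$ and let $\mathcal{G}_{i-1}$ be the $\sigma$-field generated by $\{(x_{i'},z_{i'})\}_{i'<i}$; by hypothesis $\cT_i$ is $\mathcal{G}_{i-1}$-measurable. Setting $Z_i := \tfrac12\log\bigl(f(x_i,z_i)/f^*(x_i,z_i)\bigr)$, I would compute the conditional moment factor, using $z_i\sim f^*(x_i,\cdot)$ and the Hellinger-affinity identity $\int\sqrt{pq}=1-\tfrac12 D_{\mathrm H}^2(p,q)$:
\begin{align*}
\EE\bigl[e^{Z_i}\,\big|\,\mathcal{G}_{i-1}\bigr]
= \EE_{x\sim\cT_i}\EE_{z\sim f^*(x,\cdot)}\Bigl[\sqrt{\tfrac{f(x,z)}{f^*(x,z)}}\Bigr]
= \EE_{x\sim\cT_i}\!\int\!\sqrt{f(x,z)f^*(x,z)}\,\mathrm dz
= 1 - \frac12\,\EE_{x\sim\cT_i}\,D_{\mathrm H}^2\bigl(f(x,\cdot),f^*(x,\cdot)\bigr).
\end{align*}
Writing $\bar D_i(f):=\EE_{x\sim\cT_i} D_{\mathrm H}^2\bigl(f(x,\cdot),f^*(x,\cdot)\bigr)$ and applying $1-u\le e^{-u}$, this factor is at most $e^{-\bar D_i(f)/2}$. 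The process $W_t := \exp\bigl(\sum_{i<t} Z_i - \sum_{i<t}\log\EE[e^{Z_i}\mid\mathcal{G}_{i-1}]\bigr)$ is then a nonnegative martingale with $\EE[W_k]=1$, so Markov's inequality gives $W_k\le 1/\delta$ with probability at least $1-\delta$; rearranging and using $-\log\EE[e^{Z_i}\mid\mathcal{G}_{i-1}]\ge \tfrac12\bar D_i(f)$ yields
\begin{align*}
\frac12\sum_{i=0}^{k-1}\bar D_i(f) \le \log(1/\delta) + \frac12\sum_{i=0}^{k-1}\log\frac{f^*(x_i,z_i)}{f(x_i,z_i)}.
\end{align*}

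To finish I would take a union bound over the finite class $\cF$ (replacing $\delta$ by $\delta/|\cF|$), so the displayed inequality holds simultaneously for every $f\in\cF$, in particular for the random maximizer $\hat f$. Since $\hat f$ maximizes $\sum_{(x,z)\in\cD}\log f(x,z)$, we have $\sum_i\log\bigl(f^*(x_i,z_i)/\hat f(x_i,z_i)\bigr)\le 0$, which kills the likelihood term and leaves $\sum_i\bar D_i(\hat f)\le 2\log(|\cF|/\delta)$; combined with the reduction $\|\cdot\|_{\TV}^2\le D_{\mathrm H}^2$ this gives the claim (the clean argument in fact yields the sharper $2\log(|\cF|/\delta)$, of which the stated $2\log(k|\cF|/\delta)$ is a conservative weakening). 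The step requiring the most care, and the main obstacle, is the martingale setup under adaptive data: one must verify that conditioning on $\mathcal{G}_{i-1}$ correctly decouples the past so that the affinity identity applies with $\cT_i$ treated as fixed, and that the union bound legitimately covers the data-dependent maximizer $\hat f$. Finiteness of $\cF$ resolves the latter, but the whole argument genuinely rests on the supermartingale structure rather than on any independence across the index $i$.
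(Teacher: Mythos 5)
Your proof is correct and is essentially the same argument as the one the paper relies on: the paper does not prove this lemma itself but cites \citet{agarwal2020flambe}, whose proof (their Theorem 21) is precisely your route --- a likelihood-ratio exponential supermartingale with the Hellinger-affinity computation $\EE[e^{Z_i}\mid\cG_{i-1}] = 1-\tfrac12\bar D_i(f)$, Markov plus a union bound over the finite class $\cF$, the argmax property killing the empirical likelihood term, and the reduction $\|\cdot\|_{\TV}^2 \le D_{\mathrm H}^2$. You are also right that this yields the sharper bound $2\log(|\cF|/\delta)$, of which the stated $2\log(k|\cF|/\delta)$ is a harmless weakening (used in the paper to absorb a union bound over episodes).
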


\begin{lemma}[\citet{uehara2021representation,jin2020provably}]\label{lem:logdet-tele}  For $i = 1, \ldots , k$, $\Sigma_i := \Sigma_{i-1} + G_i$ where $\Sigma_0 = \lambda I$ with $\lambda > 0$ and $G_i\in \RR^{d\times d}$ is
a positive semidefinite matrix with eigenvalues upper bounded by $1$ and $\tr(G_i) \leq  C^2$ for some $C>0$. Then, we have the following inequality
\begin{align*}
\sum_{i=1}^k \tr(G_i\Sigma_{i-1}^{-1}) \leq 2 \log\det(\Sigma_k)  - 2 \log\det(\lambda I)  \leq d \log(1 + kC^2d/\lambda).
\end{align*} 
\end{lemma}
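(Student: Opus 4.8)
The plan is to establish the two displayed inequalities in turn: first the \emph{telescoping} bound that controls the trace sum by a log-determinant difference, and then the \emph{determinant--trace} bound that makes the log-determinant explicit in $k,d,C,\lambda$.

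For the first inequality, I would proceed one index at a time, exploiting the additive update $\Sigma_i = \Sigma_{i-1} + G_i$. Since $\Sigma_{i-1} \succeq \lambda I \succ 0$, the matrix $M_i := \Sigma_{i-1}^{-1/2} G_i \Sigma_{i-1}^{-1/2}$ is well defined and positive semidefinite, and a direct computation gives
\[
\log\det(\Sigma_i) - \log\det(\Sigma_{i-1}) = \log\det\bigl(\Sigma_{i-1}^{-1/2}\Sigma_i\Sigma_{i-1}^{-1/2}\bigr) = \log\det(I + M_i).
\]
Writing $\mu_{i,1},\dots,\mu_{i,d}\ge 0$ for the eigenvalues of $M_i$, the right-hand side equals $\sum_{j}\log(1+\mu_{i,j})$, while by the cyclic property of the trace $\tr(G_i\Sigma_{i-1}^{-1}) = \tr(M_i) = \sum_j \mu_{i,j}$. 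The crux is the elementary scalar inequality $x \le 2\log(1+x)$, valid on $[0,1]$; applying it to each $\mu_{i,j}$ yields $\tr(G_i\Sigma_{i-1}^{-1}) \le 2\bigl[\log\det(\Sigma_i)-\log\det(\Sigma_{i-1})\bigr]$. Summing over $i=1,\dots,k$ telescopes to $2\log\det(\Sigma_k) - 2\log\det(\Sigma_0)$ with $\Sigma_0 = \lambda I$, which is exactly the first inequality.

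For the second inequality, I would bound $\log\det(\Sigma_k)$ through the trace. Because $\Sigma_k = \lambda I + \sum_{i=1}^k G_i$ with $\tr(G_i)\le C^2$, we have $\tr(\Sigma_k)\le d\lambda + kC^2$; applying AM--GM to the (all positive) eigenvalues of $\Sigma_k$---equivalently, the concavity bound $\log\det(I+A)\le d\log\bigl(1+\tr(A)/d\bigr)$ with $A=\lambda^{-1}\sum_i G_i$---gives $\log\det(\Sigma_k) - \log\det(\lambda I) \le d\log\bigl(1 + kC^2/(d\lambda)\bigr)$. Combining this with the telescoping step and an elementary relaxation of the logarithm yields the stated explicit bound $d\log(1+kC^2 d/\lambda)$.

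The step I expect to be the main obstacle is justifying the per-index inequality $\tr(G_i\Sigma_{i-1}^{-1}) \le 2\bigl[\log\det(\Sigma_i)-\log\det(\Sigma_{i-1})\bigr]$, since $x\le 2\log(1+x)$ is only valid for bounded $x$ and so one must control the size of the eigenvalues $\mu_{i,j}$. Here I would use both hypotheses on $G_i$ and the floor on $\Sigma_{i-1}$: since $\|G_i\|_{\mathrm{op}}\le 1$ (eigenvalues bounded by one) and $\Sigma_{i-1}\succeq \lambda I$, we get $\|M_i\|_{\mathrm{op}}\le \|G_i\|_{\mathrm{op}}/\lambda_{\min}(\Sigma_{i-1}) \le 1/\lambda$, so in the regime $\lambda\ge 1$ relevant to the algorithms (where $\lambda_k = c_0 d\log(\cdots)\ge 1$) every eigenvalue $\mu_{i,j}\in[0,1]$ and the scalar inequality applies. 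All remaining manipulations---the telescoping sum, the cyclic trace identity, and the AM--GM/concavity bound---are routine.
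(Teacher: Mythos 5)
The paper itself gives no proof of this lemma --- it is stated with a citation to \citet{uehara2021representation,jin2020provably} --- so the comparison is with the standard elliptical-potential argument in those references, and your proposal reproduces exactly that argument: factor out $\Sigma_{i-1}^{1/2}$, use $\det(\Sigma_i)=\det(\Sigma_{i-1})\det(I+M_i)$ with $M_i=\Sigma_{i-1}^{-1/2}G_i\Sigma_{i-1}^{-1/2}$, apply $x\le 2\log(1+x)$ eigenvalue-by-eigenvalue, and telescope. Your handling of the main obstacle is also the right one, and in fact your caveat is \emph{necessary}, not optional: with only $\lambda>0$ as hypothesized, the first inequality is false. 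Take $d=1$, $k=1$, $G_1=1$, $\lambda=0.01$: the left side is $1/\lambda=100$ while the middle term is $2\log(101)\approx 9.23$. The statement is only correct under $\lambda\ge 1$ (so that $\|M_i\|_{\mathrm{op}}\le \|G_i\|_{\mathrm{op}}/\lambda_{\min}(\Sigma_{i-1})\le 1/\lambda\le 1$, putting every eigenvalue of $M_i$ in $[0,1]$), which is precisely the regime you identify and the one in which the paper invokes the lemma, since $\lambda_k=c_0 d\log(H|\cF|k/\delta)\ge 1$.

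The one genuine gap is your last sentence on the second inequality, and it cannot be closed as written. Your AM--GM/concavity step correctly yields
\begin{align*}
2\log\det(\Sigma_k)-2\log\det(\lambda I)\le 2d\log\bigl(1+kC^2/(d\lambda)\bigr),
\end{align*}
but the passage from this to the printed bound $d\log(1+kC^2d/\lambda)$ is not an ``elementary relaxation of the logarithm'': it requires $(1+u/d)^2\le 1+ud$ with $u=kC^2/\lambda$, which holds only when $u\le d^3-2d$ and fails outright for large $k$ (and for all $u>0$ when $d=1$). So the second inequality of the lemma as displayed is itself inexact --- the right-hand side should carry the factor $2d$ in front of the logarithm (or equivalent constants) --- and no proof can recover the printed form in general. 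You should stop at the bound $2d\log(1+kC^2/(d\lambda))$ and note that it differs from the stated one only by constants inside and outside a logarithm; since the lemma enters the proofs of Theorems \ref{thm:main} and \ref{thm:main-mg} only through $\tilde{\cO}$-type bounds of the form $\sqrt{d\log(1+c_1K)/K}$, nothing downstream in the paper is affected.
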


\section{Additional Experimental Results}\label{app:exp}
In this section, we present the additional experimental results. In Table \ref{tab:res}, we report the human normalized scores for all the algorithms under all the tasks of Atari 100K. In Figure \ref{fig:aggregates}, we follow \citet{agarwal2021deep} and report the stratified bootstrap of experiments, which consists of the $95\%$ confidence intervals (CIs) of median, interquartile mean (IQM), mean, and optimality gap, over the $26$ Atari 100K tasks. Here IQM is the $25\%$ trimmed mean obtained by discarding the top and bottom $25\%$ score and calculating the mean. See \citet{agarwal2021deep} for details.
According to Figure \ref{fig:aggregates}, our proposed SPR-UCB performs similarly to SPR on average, without the top $5\%$ scores. Nevertheless, we remark that SPR-UCB outperforms SPR significantly on some hard exploration tasks \citep{Taiga2020On}, including \emph{PrivateEye}, \emph{Frostbite}, and \emph{Freeway}, as shown in Table \ref{tab:res}.

\begin{figure}[h]
\centering
\includegraphics[width=1.05\columnwidth]{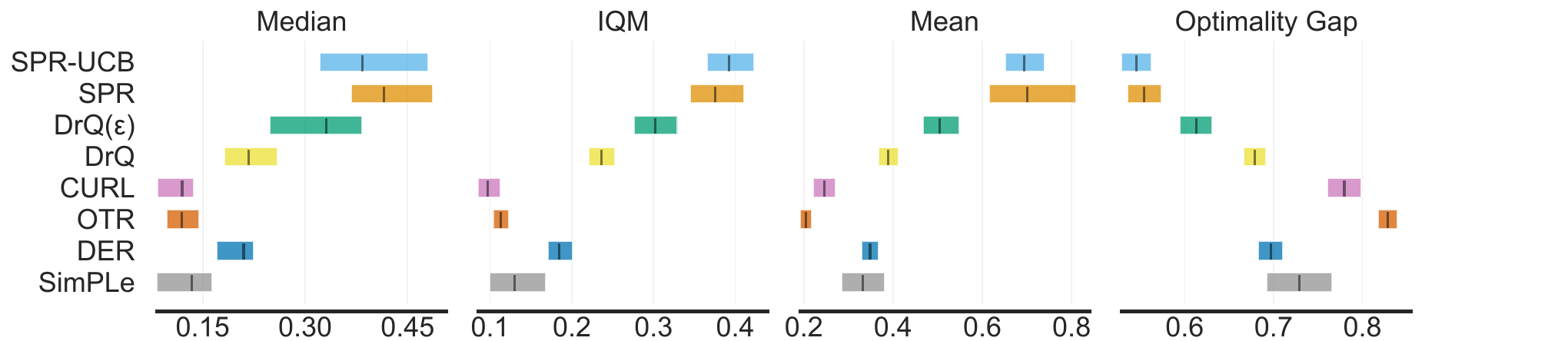}
\vspace{-0.6cm}
\caption{Stratified Bootstrap \citep{agarwal2021deep} of experiments, with $95\%$ confidence intervals (CIs) based on $26$ Atari 100K tasks. Higher mean, median, interquartile mean (IQM), and lower optimality gap a better. See \citet{agarwal2021deep} for details. The results for baseline algorithms are collected from the report by \citet{agarwal2021deep}. The results for SPR-UCB  are based on $10$ runs per game.} 
\label{fig:aggregates}
\end{figure}

% Please add the following required packages to your document preamble:
% \usepackage[table,xcdraw]{xcolor}
% If you use beamer only pass "xcolor=table" option, i.e. \documentclass[xcolor=table]{beamer}
\begin{table}[h]
\vspace{-0.39cm}
\caption{Table of the comparison of human normalized scores over tasks of Atari 100K. The scores of baselines are adopted from \citet{agarwal2021deep}, which runs each method over 100 seeds. We follow \citet{agarwal2021deep} and evaluate the scores of SPR-UCB by evaluating the final policy obtained by SPR-UCB over 100 episodes. Highlighted scores are the highest and second highest among all algorithms.}
\label{tab:res}
\vskip 0.15in
\begin{center}
\begin{footnotesize}
\begin{sc}
\begin{tabular}{lllllllr@{\hspace{-0.5pt}}lr@{\hspace{-0.5pt}}l}
\toprule
               & CURL            & OTR             & DER             & SimPLe          & DrQ             & DrQ($\epsilon$)    & \multicolumn{2}{c}{SPR}       & \multicolumn{2}{c}{\textbf{SPR-UCB}}  \\
\midrule
Alien          & 0.0700          & 0.0497          & 0.0833          & 0.0564          & 0.0734          & \textbf{0.0924} & 0.0890          & {\color[HTML]{525252} $\pm$0.03}          & \textbf{0.0997} & {\color[HTML]{525252} $\pm$0.02}        \\
Amidar         & 0.0630          & 0.0420          & 0.0701          & 0.0399          & 0.0516          & 0.0770          & \textbf{0.1015} & {\color[HTML]{525252} $\pm$0.02}          & \textbf{0.0973} & {\color[HTML]{525252} $\pm$0.02}        \\
Assault        & 0.5360          & 0.2088          & 0.6525          & 0.5866          & 0.4949          & \textbf{0.6875} & 0.6605          & {\color[HTML]{525252} $\pm$0.11}          & \textbf{0.6729} & {\color[HTML]{525252} $\pm$0.07}        \\
Asterix        & 0.0431          & 0.0150          & 0.0392          & \textbf{0.1107} & 0.0393          & 0.0668          & 0.0907          & {\color[HTML]{525252} $\pm$0.02}          & \textbf{0.0965} & {\color[HTML]{525252} $\pm$0.01}        \\
BankHeist      & 0.0692          & 0.0552          & 0.2318          & 0.0271          & 0.1884          & 0.2960          & \textbf{0.4483} & {\color[HTML]{525252} $\pm$0.29}          & \textbf{0.3011} & {\color[HTML]{525252} $\pm$0.36}        \\
BattleZone     & 0.1906          & 0.0798          & 0.1900          & 0.0480          & 0.2355          & 0.2241          & \textbf{0.3582} & {\color[HTML]{525252} $\pm$0.14}          & \textbf{0.3663} & {\color[HTML]{525252} $\pm$0.09}        \\
Boxing         & 0.0708          & 0.1284          & -0.0340         & 0.6375          & 0.5443          & 0.7452          & \textbf{2.9667} & {\color[HTML]{525252} $\pm$1.19}          & \textbf{3.4332} & {\color[HTML]{525252} $\pm$0.94}        \\
Breakout       & 0.0297          & 0.2216          & 0.2609          & 0.5099          & 0.4759          & \textbf{0.6272} & 0.6208          & {\color[HTML]{525252} $\pm$0.46}          & \textbf{0.7245} & {\color[HTML]{525252} $\pm$0.47}        \\
ChopperCommand & -0.0042         & 0.0003          & 0.0175          & \textbf{0.0256} & -0.0028         & 0.0051          & \textbf{0.0206} & {\color[HTML]{525252} $\pm$0.04}          & 0.0041          & {\color[HTML]{525252} $\pm$0.04}        \\
CrazyClimber   & -0.0649         & 0.1684          & 0.9473          & \textbf{2.0681} & 0.4476          & 0.4295          & 1.0348          & {\color[HTML]{525252} $\pm$0.48}          & \textbf{1.2936} & {\color[HTML]{525252} $\pm$0.62}        \\
DemonAttack    & 0.2718          & 0.2911          & 0.2614          & 0.0308          & \textbf{0.5445} & \textbf{0.6429} & 0.2010          & {\color[HTML]{525252} $\pm$0.07}          & 0.2214          & {\color[HTML]{525252} $\pm$0.10}        \\
Freeway        & \textbf{0.9550} & 0.3877          & 0.7046          & 0.5637          & 0.6006          & 0.6843          & 0.6512          & {\color[HTML]{525252} $\pm$0.47}          & \textbf{0.9592} & {\color[HTML]{525252} $\pm$0.11}        \\
Frostbite      & \textbf{0.2720} & 0.0374          & 0.1887          & 0.0402          & 0.1037          & 0.2223          & 0.2589          & {\color[HTML]{525252} $\pm$0.26}          & \textbf{0.5591} & {\color[HTML]{525252} $\pm$0.15}        \\
Gopher         & 0.0665          & 0.1308          & 0.0972          & 0.1574          & 0.1673          & \textbf{0.1689} & \textbf{0.1870} & {\color[HTML]{525252} $\pm$0.11}          & 0.1666          & {\color[HTML]{525252} $\pm$0.05}        \\
Hero           & 0.1329          & 0.1654          & \textbf{0.1745} & 0.0547          & 0.0905          & 0.1054          & 0.1621          & {\color[HTML]{525252} $\pm$0.07}          & \textbf{0.2096} & {\color[HTML]{525252} $\pm$0.09}        \\
Jamesbond      & 1.1032          & 0.2156          & 0.9009          & 0.2610          & 0.8136          & 1.1691          & \textbf{1.2326} & {\color[HTML]{525252} $\pm$0.23}          & \textbf{1.2124} & {\color[HTML]{525252} $\pm$0.20}        \\
Kangaroo       & 0.2307          & 0.0994          & 0.1776          & -0.0003         & 0.3092          & 0.3474          & \textbf{1.1952} & {\color[HTML]{525252} $\pm$1.08}          & \textbf{1.0553} & {\color[HTML]{525252} $\pm$0.96}        \\
Krull          & 1.3595          & 1.9278          & 1.5540          & 0.5685          & 2.3732          & \textbf{2.6268} & 1.9519          & {\color[HTML]{525252} $\pm$0.43}          & \textbf{2.4225} & {\color[HTML]{525252} $\pm$0.23}        \\
KungFuMaster   & 0.3513          & 0.2848          & 0.2812          & \textbf{0.6497} & 0.3068          & 0.4987          & 0.6462          & {\color[HTML]{525252} $\pm$0.32}          & \textbf{0.8126} & {\color[HTML]{525252} $\pm$0.27}        \\
MsPacman       & 0.1139          & 0.0904          & 0.1325          & \textbf{0.1765} & 0.1047          & 0.1371          & 0.1522          & {\color[HTML]{525252} $\pm$0.05}          & \textbf{0.1557} & {\color[HTML]{525252} $\pm$0.06}        \\
Pong           & 0.0627          & \textbf{0.5168} & 0.3113          & \textbf{0.9495} & 0.1827          & 0.3298          & 0.4331          & {\color[HTML]{525252} $\pm$0.30}          & 0.4007          & {\color[HTML]{525252} $\pm$0.23}        \\
PrivateEye     & 0.0008          & 0.0005          & 0.0007          & 0.0001          & 0.0000          & -0.0003         & \textbf{0.0009} & {\color[HTML]{525252} $\pm$0.00}          & \textbf{0.0011} & {\color[HTML]{525252} $\pm$0.00}        \\
Qbert          & 0.0424          & 0.0292          & \textbf{0.1211} & 0.0846          & 0.0580          & \textbf{0.1239} & 0.0528          & {\color[HTML]{525252} $\pm$0.04}          & 0.0606          & {\color[HTML]{525252} $\pm$0.04}        \\
RoadRunner     & 0.6376          & 0.3313          & 1.5104          & 0.7186          & 1.1123          & 1.4297          & \textbf{1.5576} & {\color[HTML]{525252} $\pm$0.64}          & \textbf{2.0051} & {\color[HTML]{525252} $\pm$0.53}        \\
Seaquest       & 0.0059          & 0.0049          & 0.0056          & \textbf{0.0146} & 0.0058          & 0.0068          & 0.0117          & {\color[HTML]{525252} $\pm$0.00}          & \textbf{0.0134} & {\color[HTML]{525252} $\pm$0.00}        \\
UpNDown        & 0.1893          & 0.1611          & 0.2277          & 0.2524          & 0.2765          & 0.3397          & \textbf{0.9253} & {\color[HTML]{525252} $\pm$1.44}          & \textbf{0.6941} & {\color[HTML]{525252} $\pm$0.59}        \\
\midrule
\textbf{Average}           & 0.2615          & 0.2171          & 0.3503          & 0.3320          & 0.3691          & 0.4647          & \textbf{0.6158} & {\color[HTML]{525252} $\pm$0.32} & \textbf{0.6938} & {\color[HTML]{525252} $\pm$0.24}\\
\bottomrule
\end{tabular}
\end{sc}
\end{footnotesize}
\end{center}
\vskip -0.1in
\end{table}

\end{appendices}

\end{document}